\newcommand{\A}{\ensuremath{\mathbf{A}}}
\newcommand{\B}{\ensuremath{\mathbf{B}}}
\newcommand{\I}{\ensuremath{\mathbf{I}}}
\newcommand{\Q}{\ensuremath{\mathbf{Q}}}
\newcommand{\RR}{\ensuremath{\mathbf{R}}}
\newcommand{\V}{\ensuremath{\mathbf{V}}}
\newcommand{\W}{\ensuremath{\mathbf{W}}}
\newcommand{\X}{\ensuremath{\mathbf{X}}}
\newcommand{\Y}{\ensuremath{\mathbf{Y}}}
\newcommand{\Z}{\ensuremath{\mathbf{Z}}}
\renewcommand{\b}{\ensuremath{\mathbf{b}}}
\newcommand{\e}{\ensuremath{\mathbf{e}}}
\newcommand{\f}{\ensuremath{\mathbf{f}}}
\newcommand{\h}{\ensuremath{\mathbf{h}}}
\newcommand{\q}{\ensuremath{\mathbf{q}}}
\newcommand{\w}{\ensuremath{\mathbf{w}}}
\newcommand{\x}{\ensuremath{\mathbf{x}}}
\newcommand{\y}{\ensuremath{\mathbf{y}}}
\newcommand{\z}{\ensuremath{\mathbf{z}}}
\newcommand{\0}{\ensuremath{\mathbf{0}}}
\newcommand{\bbN}{\ensuremath{\mathbb{N}}}
\newcommand{\bbR}{\ensuremath{\mathbb{R}}}
\newcommand{\calO}{\ensuremath{\mathcal{O}}}
\newcommand{\abs}[1]{\left\lvert#1\right\rvert}
\newcommand{\norm}[1]{\left\lVert#1\right\rVert}
\newcommand{\caja}[4][1]{{%
    \renewcommand{\arraystretch}{#1}%
    \begin{tabular}[#2]{@{}#3@{}}%
      #4%
    \end{tabular}%
    }}
\newcommand{\sgnop}{\operatorname{sgn}}
\newcommand{\sgn}[1]{\ensuremath{\sgnop\left(#1\right)}}
\newcommand{\diagop}{\operatorname{diag}}
\newcommand{\diag}[1]{\ensuremath{\diagop\left(#1\right)}}
\theoremstyle{plain}
\newtheorem{thm}{Theorem}[section]
\newtheorem*{lemma*}{Lemma}
\newtheorem*{prop*}{Proposition}
\theoremstyle{definition}
\newtheorem*{defn*}{Definition}
\newtheorem*{exmp*}{Example}
\newtheorem*{conj*}{Conjecture}
\theoremstyle{remark}
\newtheorem*{rmk*}{Remark}
\title{\vspace*{-1cm}Hashing with Binary Autoencoders}
\author{
  Miguel \'A. Carreira-Perpi\~n\'an\hspace{5ex} Ramin Raziperchikolaei \\
  Electrical Engineering and Computer Science, University of California, Merced \\
  {\url{http://eecs.ucmerced.edu}}
}
\date{January 4, 2015}
\begin{document}

\maketitle

\begin{abstract}
  
  An attractive approach for fast search in image databases is binary hashing, where each high-dimensional, real-valued image is mapped onto a low-dimensional, binary vector and the search is done in this binary space. Finding the optimal hash function is difficult because it involves binary constraints, and most approaches approximate the optimization by relaxing the constraints and then binarizing the result. Here, we focus on the binary autoencoder model, which seeks to reconstruct an image from the binary code produced by the hash function. We show that the optimization can be simplified with the method of auxiliary coordinates. This reformulates the optimization as alternating two easier steps: one that learns the encoder and decoder separately, and one that optimizes the code for each image. Image retrieval experiments, using precision/recall and a measure of code utilization, show the resulting hash function outperforms or is competitive with state-of-the-art methods for binary hashing.

\end{abstract}

\section{Introduction}

We consider the problem of binary hashing, where given a high-dimensional vector $\x \in \bbR^D$, we want to map it to an $L$-bit vector $\z = \h(\x) \in \{0,1\}^L$ using a hash function \h, while preserving the neighbors of \x\ in the binary space. Binary hashing has emerged in recent years as an effective technique for fast search on image (and other) databases. While the search in the original space would cost $\calO(ND)$ in both time and space, using floating point operations, the search in the binary space costs $\calO(NL)$ where $L \ll D$ and the constant factor is much smaller. This is because the hardware can compute binary operations very efficiently and the entire dataset ($NL$ bits) can fit in the main memory of a workstation. And while the search in the binary space will produce some false positives and negatives, one can retrieve a larger set of neighbors and then verify these with the ground-truth distance, while still being efficient.

Many different hashing approaches have been proposed in the last few years. They formulate an objective function of the hash function \h\ or of the binary codes that tries to capture some notion of neighborhood preservation. Most of these approaches have two things in common: \h\ typically performs dimensionality reduction ($L < D$) and, as noted, it outputs binary codes ($\h\mathpunct{:} \bbR^D \rightarrow \{0,1\}^L$). The latter implies a step function or binarization applied to a real-valued function of the input \x. Optimizing this is difficult. In practice, most approaches follow a two-step procedure: first they learn a real hash function ignoring the binary constraints and then the output of the resulting hash function is binarized (e.g.\ by thresholding or with an optimal rotation). For example, one can run a continuous dimensionality reduction algorithm (by optimizing its objective function) such as PCA and then apply a step function. This procedure can be seen as a ``filter'' approach \citep{KohaviJohn98a} and is suboptimal: in the example, the thresholded PCA projection is not necessarily the best thresholded linear projection (i.e., the one that minimizes the objective function under all thresholded linear projections). To obtain the latter, we must optimize the objective jointly over linear mappings and thresholds, respecting the binary constraints while learning \h; this is a ``wrapper'' approach \citep{KohaviJohn98a}. In other words, optimizing real codes and then projecting them onto the binary space is not the same as optimizing the codes in the binary space.

In this paper we show that this joint optimization, respecting the binary constraints during training, can actually be carried out reasonably efficiently. The idea is to use the recently proposed \emph{method of auxiliary coordinates (MAC)} \citep{CarreirWang12a,CarreirWang14a}. This is a general strategy to transform an original problem involving a nested function into separate problems without nesting, each of which can be solved more easily. In our case, this allows us to reduce drastically the complexity due to the binary constraints. We focus on \emph{binary autoencoders}, i.e., where the code layer is binary. We believe we are the first to apply MAC to this model and construct an efficient optimization algorithm. Section~\ref{s:BA} describes the binary autoencoder model and objective function. Section~\ref{s:MAC} derives a training algorithm using MAC and explains how, with carefully implemented steps, the optimization in the binary space can be carried out efficiently, and parallelizes well. Our hypothesis is that constraining the optimization to the binary space results in better hash functions and we test this in experiments (section~\ref{s:expts}), using several performance measures: the traditional precision/recall, as well as the reconstruction error and an entropy-based measure of code utilization (which we propose in section~\ref{s:entropy}). These show that linear hash functions resulting from optimizing a binary autoencoder using MAC are consistently competitive with the state-of-the-art, even when the latter uses nonlinear hash functions or more sophisticated objective functions for hashing.

\enlargethispage*{0.5cm}

\section{Related work}

The most basic hashing approaches are data-independent, such as Locality-Sensitive Hashing (LSH) \citep{AndoniIndyk08a}, which is based on random projections and thresholding, and kernelized LSH \citep{KulisGrauman12a}. Generally, they are outperformed by data-dependent methods, which learn a specific hash function for a given dataset in an unsupervised or supervised way. We focus here on unsupervised, data-dependent approaches. These are typically based on defining an objective function (usually based on dimensionality reduction) either of the hash function or the binary codes, and optimizing it. However, this is usually achieved by relaxing the binary codes to a continuous space and thresholding the resulting continuous solution. For example, spectral hashing \citep{Weiss_09a} is essentially a version of Laplacian eigenmaps where the binary constraints are relaxed and approximate eigenfunctions are computed that are then thresholded to provide binary codes. Variations of this include using AnchorGraphs \citep{Liu_11a} to define the eigenfunctions, or obtaining the hash function directly as a binary classifier using the codes from spectral hashing as labels \citep{Zhang_10e}. Other approaches optimize instead a nonlinear embedding objective that depends on a continuous, parametric hash function, which is then thresholded to define a binary hash function \citep{Torral_08b,SalakhHinton09b}; or an objective that depends on a thresholded hash function, but the threshold is relaxed during the optimization \citep{NorouzFleet11a}. Some recent work has tried to respect the binary nature of the codes or thresholds by using alternating optimization directly on an objective function, over one entry or one row of the weight matrix in the hash function \citep{KulisDarrel09a,Neyshab_13a}, or over a subset of the binary codes \citep{Lin_13a,Lin_14b}. Since the objective function involves a large number of terms and all the binary codes or weights are coupled, the optimization is very slow. Also, \citet{Lin_13a,Lin_14b} learn the hash function after the codes have been fixed, which is suboptimal.

The closest model to our binary autoencoder is Iterative Quantization (ITQ) \citep{Gong_13a}, a fast and competitive hashing method. ITQ first obtains continuous low-dimensional codes by applying PCA to the data and then seeks a rotation that makes the codes as close as possible to binary. The latter is based on the optimal discretization algorithm of \citet{YuShi03a}, which finds a rotation of the continuous eigenvectors of a graph Laplacian that makes them as close as possible to a discrete solution, as a postprocessing for spectral clustering. The ITQ objective function is
\begin{equation}
  \label{e:ITQ}
  \min_{\B,\RR}{ \norm{\B - \V\RR}^2 } \qquad \text{s.t.} \qquad \B\in\{-1,+1\}^{NL},\ \RR\ \text{ orthogonal}
\end{equation}
where \V\ of $N \times L$ are the continuous codes obtained by PCA. This is an NP-complete problem, and a local minimum is found using alternating optimization over \B, with solution $\B=\sgn{\V\RR}$ elementwise, and over \RR, which is a Procrustes alignment problem with a closed-form solution based on a SVD. The final hash function is $\h(\x) = \sgn{\W\x}$, which has the form of a thresholded linear projection. Hence, ITQ is a postprocessing of the PCA codes, and it can be seen as a suboptimal approach to optimizing a binary autoencoder, where the binary constraints are relaxed during the optimization (resulting in PCA), and then one ``projects'' the continuous codes back to the binary space. Semantic hashing \citep{SalakhHinton09b} also uses an autoencoder objective with a deep encoder (consisting of stacked RBMs), but again its optimization uses heuristics that are not guaranteed to converge to a local optimum: either training it as a continuous problem with backpropagation and then applying a threshold to the encoder \citep{SalakhHinton09b}, or rounding the encoder output to 0 or 1 during the backpropagation forward pass but ignoring the rounding during the backward pass \citep{KrizhevHinton11a}.

\section{Our hashing models: binary autoencoder and binary factor analysis}
\label{s:BA}

We consider a well-known model for continuous dimensionality reduction, the (continuous) autoencoder, defined in a broad sense as the composition of an encoder $\h(\x)$ which maps a real vector $\x\in\bbR^D$ onto a real code vector $\z\in\bbR^L$ (with $L<D$), and a decoder $\f(\z)$ which maps \z\ back to $\bbR^D$ in an effort to reconstruct \x. Although our ideas apply more generally to other encoders, decoders and objective functions, in this paper we mostly focus on the least-squares error with a linear encoder and decoder. As is well known, the optimal solution is PCA.

For hashing, the encoder maps continuous inputs onto \emph{binary} code vectors with $L$ bits, $\z\in\{0,1\}^L$. Let us write $\h(\x) = \sigma(\W\x)$ (\W\ includes a bias by having an extra dimension $x_0=1$ for each \x) where $\W\in\bbR^{L\times (D+1)}$ and $\sigma(t)$ is a step function applied elementwise, i.e., $\sigma(t) = 1$ if $t\ge 0$ and $\sigma(t) = 0$ otherwise (we can fix the threshold at 0 because the bias acts as a threshold for each bit). Our \emph{desired hash function} will be \h, and it should minimize the following problem, given a dataset of high-dimensional patterns $\X = (\x_1,\dots,\x_N)$:
\begin{equation}
  \label{e:nested}
  E_{\text{BA}}(\h,\f) = \sum^N_{n=1}{ \norm{\x_n - \f(\h(\x_n))}^2 }
\end{equation}
which is the usual least-squares error but where the code layer is binary. Optimizing this nonsmooth function is difficult and NP-complete. Where the gradients do exist wrt \W\ they are zero nearly everywhere. We call this a \emph{binary autoencoder (BA)}.

We will also consider a related model (see later):
\begin{equation}
  \label{e:BFA}
  E_{\text{BFA}}(\Z,\f) = \sum^N_{n=1}{ \norm{\x_n - \f(\z_n)}^2 } \qquad \text{s.t.} \qquad \z_n\in\{0,1\}^L,\ n=1,\dots,N
\end{equation}
where \f\ is linear and we optimize over the decoder \f\ and the binary codes $\Z = (\z_1,\dots,\z_N)$ of each input pattern. Without the binary constraint, i.e., $\Z\in\bbR^{L\times N}$, this model dates back to the 50s and is sometimes called least-squares factor analysis \citep{Whittl52a}, and its solution is PCA. With the binary constraints, the problem is NP-complete because it includes as particular case solving a linear system over $\{0,1\}^L$, which is an integer LP feasibility problem. We call this model (least-squares) \emph{binary factor analysis (BFA)}. We believe this model has not been studied before, at least in hashing. A hash function \h\ can be obtained from BFA by fitting a binary classifier of the inputs to each of the $L$ code bits. It is a filter approach, while the BA is the optimal (wrapper) approach, since it optimizes jointly over \f\ and \h.

Let us compare BFA (without bias term) with ITQ in eq.~\eqref{e:ITQ}. BFA takes the form $\min_{\Z,\A} \smash{\norm{\X - \A\Z}^2}$ s.t.\ $\Z\in\{0,1\}^{NL}$. The main difference is that in BFA the binary variables do not appear in separable form and so the step over them cannot be solved easily.

\section{Optimization of BA and BFA using the method of auxiliary coordinates (MAC)}
\label{s:MAC}

We use the recently proposed \emph{method of auxiliary coordinates (MAC)} \citep{CarreirWang12a,CarreirWang14a}. The idea is to break nested functional relationships judiciously by introducing variables as equality constraints. These are then solved by optimizing a penalized function using alternating optimization over the original parameters and the coordinates, which results in a coordination-minimization (CM) algorithm. Recall eq.~\eqref{e:nested}, this is our nested problem, where the model is $\y = \f(\h(\x))$. We introduce as auxiliary coordinates the outputs of \h, i.e., the codes for each of the $N$ input patterns, and obtain the following equality-constrained problem:
\begin{equation}
  \label{e:MAC-constrained}
  \min_{\h,\f,\Z}{ \sum^N_{n=1}{ \norm{\x_n - \f(\z_n)}^2 } } \qquad \text{s.t.} \qquad \z_n = \h(\x_n)\in\{0,1\}^L,\ n=1,\dots,N.
\end{equation}
Note the codes are binary. We now apply the quadratic-penalty method (it is also possible to apply the augmented Lagrangian method instead; \citealp{NocedalWright06a}) and minimize the following objective function while progressively increasing $\mu$, so the constraints are eventually satisfied:
\begin{equation}
  \label{e:MAC-QP}
  E_Q(\h,\f,\Z;\mu) = \sum^N_{n=1}{ \left( \norm{\x_n - \f(\z_n)}^2 + \mu \norm{\z_n - \h(\x_n)}^2 \right) } \qquad \text{s.t.} \qquad \z_n\in\{0,1\}^L,\ n=1,\dots,N.
\end{equation}
Now we apply alternating optimization over \Z\ and $(\h,\f)$. This results in the following two steps:
\begin{itemize}
\item Over \Z\ for fixed $(\h,\f)$, the problem separates for each of the $N$ codes. The optimal code vector for pattern $\x_n$ tries to be close to the prediction $\h(\x_n)$ while reconstructing $\x_n$ well.
\item Over $(\h,\f)$ for fixed \Z, we obtain $L+1$ independent problems for each of the $L$ single-bit hash functions (which try to predict \Z\ optimally from \X), and for \f\ (which tries to reconstruct \X\ optimally from \Z).
\end{itemize}
We can now see the advantage of the auxiliary coordinates: the individual steps are (reasonably) easy to solve (although some work is still needed, particularly for the \Z\ step), and besides they exhibit significant parallelism. We describe the steps in detail below. The resulting algorithm alternates steps over the encoder ($L$ classifications) and decoder (one regression) and over the codes ($N$ binary proximal operators; \citealp{Rockaf76b,CombetPesquet11a}). During the iterations, we allow the encoder and decoder to be mismatched, since the encoder output does not equal the decoder input, but they are coordinated by \Z\ and as $\mu$ increases the mismatch is reduced. The overall MAC algorithm to optimize a BA is in fig.~\ref{f:BA-alg}.

\begin{figure}[t]
  \begin{center}
    \setlength{\fboxsep}{1ex}
    \framebox{%
      \begin{minipage}[c]{0.70\columnwidth}
        \begin{tabbing}
          n \= n \= n \= n \= n \= \kill
          \underline{\textbf{input}} $\X_{D \times N} = (\x_1,\dots,\x_N)$, $L \in \bbN$ \\
          Initialize $\Z_{L \times N} = (\z_1,\dots,\z_N) \in \{0,1\}^{LN}$ \\
          \underline{\textbf{for}} $\mu = 0 < \mu_1 < \dots < \mu_{\infty}$ \+ \\
          \underline{\textbf{for}} $l = 1,\dots,L$ \` {\small\textsf{\h\ step}} \+ \\
          $h_l \leftarrow$ fit SVM to $(\X,\Z_{\cdot l})$ \- \\
          $\f \leftarrow$ least-squares fit to $(\Z,\X)$ \` {\small\textsf{\f\ step}} \\
          \underline{\textbf{for}} $n = 1,\dots,N$ \` {\small\textsf{\Z\ step}} \+ \\
          $\z_n \leftarrow \arg\min_{\z_n\in\{0,1\}^L}{\norm{\y_n-\f(\z_n)}^2 + \mu \norm{\z_n-\h(\x_n)}^2}$ \- \\
          \underline{\textbf{if}} $\Z = \h(\X)$ \underline{\textbf{then}} stop \- \\
          \underline{\textbf{return}} \h, $\Z = \h(\X)$
        \end{tabbing}
      \end{minipage}
    }
    \caption{Binary autoencoder MAC algorithm.}
    \label{f:BA-alg}
  \end{center}
\end{figure}

Although a MAC algorithm can be shown to produce convergent algorithms as $\mu\rightarrow\infty$ with a differentiable objective function, we cannot apply the theorem of \citet{CarreirWang12a} because of the binary nature of the problem. Instead, we show that our algorithm converges to a local minimum for a \emph{finite} $\mu$, where ``local minimum'' is understood as in $k$-means: a point where \Z\ is globally minimum given $(\h,\f)$ and vice versa. The following theorem is valid for any choice of \h\ and \f, not just linear.
\begin{thm}
  \label{th:finite-mu}
  Assume the steps over \h\ and \f\ are solved exactly. Then the MAC algorithm for the binary autoencoder stops at a finite $\mu$.
\end{thm}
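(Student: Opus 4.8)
The plan is to show that once the penalty weight $\mu$ is large enough, the \Z-step in fig.~\ref{f:BA-alg} is \emph{forced} to return exactly $\h(\X)$, which is precisely the stopping condition. The mechanism is a discrete gap in the penalty. Since \h\ maps into $\{0,1\}^L$, for each pattern $\x_n$ the prediction $\h(\x_n)$ is itself a feasible code, and any \emph{other} feasible code $\z_n\in\{0,1\}^L$ differs from it in at least one bit, so $\norm{\z_n-\h(\x_n)}^2\ge 1$ whenever $\z_n\ne\h(\x_n)$. Hence in the \Z-step objective $\norm{\x_n-\f(\z_n)}^2+\mu\norm{\z_n-\h(\x_n)}^2$, choosing $\z_n=\h(\x_n)$ pays no penalty and only the reconstruction term, whereas any other choice pays a penalty of at least $\mu$. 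So it is enough to bound the reconstruction term by a constant independent of $\mu$.

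To get such a bound, I would observe that only finitely many decoders can arise during the run. Because the \f-step is solved exactly, the decoder at any iteration is an exact least-squares fit to $(\Z,\X)$ (for linear \f\ we may fix it, say, as the minimum-norm solution), and the code matrix \Z\ always lies in the finite set $\{0,1\}^{L\times N}$. Thus the set of attainable decoders is finite, and
\begin{equation*}
  C \bydef \max\bigl\{\,\norm{\x_n-\f(\z)}^2 : 1\le n\le N,\ \z\in\{0,1\}^L,\ \f\text{ an attainable decoder}\,\bigr\}
\end{equation*}
is a maximum of finitely many finite numbers, hence finite and independent of $\mu$.

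Now take any iteration with $\mu>C$. For each $n$ the \Z-step objective at $\z_n=\h(\x_n)\in\{0,1\}^L$ equals $\norm{\x_n-\f(\h(\x_n))}^2\le C$, while at any $\z_n\ne\h(\x_n)$ it is at least $\mu$, which is larger than $C$. Since the \Z-step minimizes exactly over the finite set $\{0,1\}^L$, it returns $\z_n=\h(\x_n)$ for every $n$, i.e.\ $\Z=\h(\X)$, and the algorithm stops. Because the schedule $0<\mu_1<\mu_2<\cdots$ increases without bound, some $\mu_k$ exceeds $C$ after finitely many outer iterations, so the algorithm stops at a finite $\mu$ (possibly well before $\mu_k$).

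The part I expect to require the most care is the finiteness of $C$, and this is exactly where the hypothesis ``the \f-step is solved exactly'' is essential: without it, the decoder, and hence the reconstruction error, could grow with the iterations and compete with the penalty $\mu$, destroying the gap argument. (The assumed exactness of the \h-step is not actually needed here; all that is used about \h\ is that its output is binary.) Finally, $C$ is a crude worst-case bound, so in practice the algorithm will stop at a $\mu$ far smaller than the one this analysis certifies.
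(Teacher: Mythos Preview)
Your argument is correct and is essentially the paper's own proof: both exploit the integer gap $\norm{\z_n-\h(\x_n)}^2\ge 1$ for $\z_n\ne\h(\x_n)$ (the paper packages this as Theorem~\ref{th:h-bound}), and both bound the reconstruction term uniformly by noting that only finitely many decoders can ever arise, since the exact \f-step is a deterministic function of $(\Z,\X)$ with $\Z$ ranging over the finite set $\{0,1\}^{L\times N}$. Your remark that the exactness of the \h-step is not actually used in the argument is a valid sharpening; the paper invokes it only in an opening sentence that is not needed for the stopping criterion $\Z=\h(\X)$.
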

\begin{proof}
  $\mu$ appears only in the \Z\ step, and if \Z\ does not change there, \f\ and \h\ will not change either, since the \h\ and \f\ steps are exact. The \Z\ step over $\z_n$ minimizes $\smash{\norm{\x_n - \f(\z_n)}^2} + \mu \smash{\norm{\z_n - \h(\x_n)}^2}$, and from theorem~\ref{th:h-bound} we have that $\h(\x_n)$ is a global minimizer if $\mu > \smash{\norm{\x_n - \f(\h(\x_n))}^2}$. The statement follows from the fact that $\smash{\norm{\x_n - \f(\z)}^2}$ is bounded over all $n$, \z\ and \f. Let us prove this fact. Clearly this holds for fixed \f\ because $(n,\z_n)$ take values on a finite set, namely $\{1,\dots,N\} \times \{0,\dots,2^L-1\}$. As for \f, even if the set of functions \f\ is infinite, the number of different functions \f\ that are possible is finite, because \f\ results from an exact fit to $(\Z,\X)$, where \X\ is fixed and the set of possible \Z\ is finite (since each $z_{nl}$ is binary).
\end{proof}
The minimizers of $E_Q(\h,\f,\Z;\mu)$ trace a path as a function of $\mu \ge 0$ in the $(\h,\f,\Z)$ space. BA and BFA can be seen as the limiting cases of $E_Q(\h,\f,\Z;\mu)$ when $\mu\rightarrow \infty$ and $\mu\rightarrow 0^+$, respectively (for BFA, \f\ and \Z\ can be optimized independently from \h, but \h\ must optimally fit the resulting \Z). Figure~\ref{f:path} shows graphically the connection between the BA and BFA objective functions, as the two ends of the continuous path in $(\h,\f,\Z)$ space in which the quadratic-penalty function $E_Q(\h,\f,\Z;\mu)$ is defined.

\begin{figure}[t]
  \centering
  \psfrag{lim0}[][]{BFA: $\mu\rightarrow 0^+$}
  \psfrag{limInf}[r][r]{BA: $\mu\rightarrow \infty$}
  \psfrag{hfZ_mu}[][]{$(\h,\f,\Z)(\mu)$}
  \psfrag{h}{\h}
  \psfrag{f}[r][r]{\f}
  \psfrag{Z}[r][r]{\Z}
  \psfrag{BFAEquation}[lb][l]{$E_{\text{BFA}}(\Z,\f) = \displaystyle\sum^N_{n=1}{ \norm{\x_n - \f(\z_n)}^2 }$}
  \psfrag{BAEquation}[r][r]{$E_{\text{BA}}(\h,\f) = \displaystyle\sum^N_{n=1}{ \norm{\x_n - \f(\h(\x_n))}^2 }$}
  $E_Q(\h,\f,\Z;\mu) = \displaystyle\sum^N_{n=1}{ \left( \norm{\x_n - \f(\z_n)}^2 + \mu \norm{\z_n - \h(\x_n)}^2 \right) }$ \\
  \includegraphics[width=0.8\linewidth]{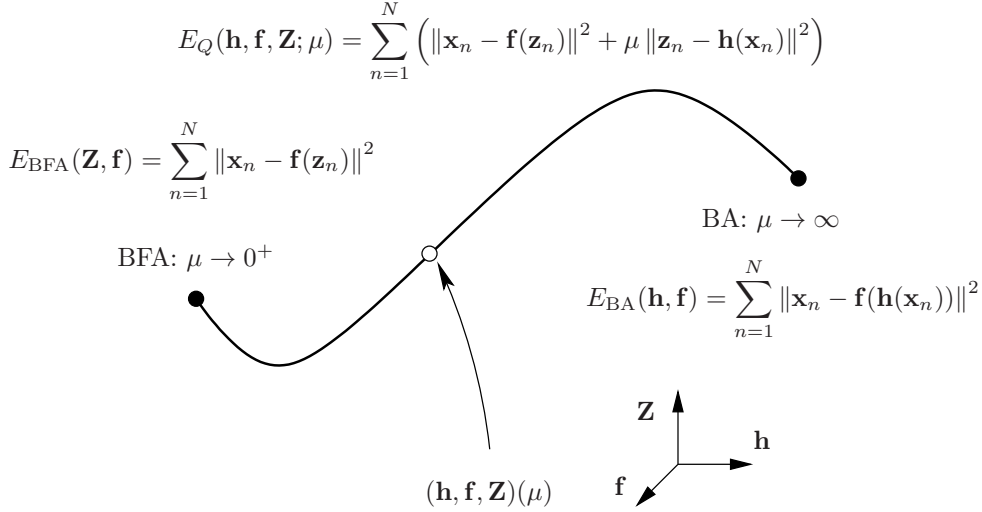}
  \caption{A continuous path induced by the quadratic-penalty objective function as a function of $\mu$ from BFA to BA. The objective functions of BA, BFA and the quadratic-penalty objective are related as shown in fig.~\ref{f:path}. The minimizers of the quadratic-penalty objective $E_Q(\h,\f,\Z;\mu)$ trace a path $(\h,\f,\Z)(\mu)$ continuously as a function of $\mu \ge 0$ in the $(\h,\f,\Z)$ space. The BFA and BA objective functions correspond to the limits $\mu\rightarrow 0^+$ and $\mu\rightarrow \infty$, respectively.}
  \label{f:path}
\end{figure}

In practice, to learn the BFA model we set $\mu$ to a small value and keep it constant while running the BA algorithm. As for BA itself, we increase $\mu$ (times a constant factor, e.g.\ $2$) and iterate the \Z\ and $(\h,\f)$ steps for each $\mu$ value. Usually the algorithm stops in 10 to 15 iterations, when no further changes to the parameters occur.

\subsection{\f\ step}
\label{s:f-step}

With a linear decoder this is a simple linear regression
\begin{equation*}
  \min_{\A,\b}{\sum^N_{n=1}{\norm{\x_n - \A\z_n - \b}^2}}
\end{equation*}
with data $(\Z,\X)$. A bias parameter \b\ is necessary, for example to be able to use $\{-1,+1\}$ instead of $\{0,1\}$ equivalently as bit values. The solution of this regression is (ignoring the bias for simplicity) $\A = \X\Z^T(\Z\Z^T)^{-1}$ and can be computed in $\calO(NDL)$. Note the constant factor in the \calO-notation is small because \Z\ is binary, e.g.\ $\X\Z^T$ involves only sums, not multiplications.

\subsection{\h\ step}
\label{s:h-step}

This has the following form
\begin{equation}
  \label{e:h-step}
  \min_{\h}{ \sum^N_{n=1}{ \norm{\z_n - \h(\x_n)}^2 } } = \min_{\W}{ \sum^N_{n=1}{ \norm{\z_n - \sigma(\W\x_n)}^2 } } = \sum^L_{l=1}{ \min_{\w_l}{ \sum^N_{n=1}{ (z_{nl} - \sigma(\w^T_l\x_n))^2 } } }.
\end{equation}
Since \Z\ and $\sigma(\cdot)$ are binary, $\smash{\norm{\cdot}^2}$ is the Hamming distance and the objective function is the number of misclassified patterns, so it separates for each bit. So it is a classification problem for each bit, using as labels the auxiliary coordinates, where $h_l$ is a linear classifier (a perceptron). However, rather than minimizing this, we will solve an easier, closely related problem: fit a linear SVM $h_l$ to $(\X,\Z_{\cdot l})$ where we use a high penalty for misclassified patterns but optimize the margin plus the slack. Besides being easier (by reusing well-developed codes for SVMs), this surrogate loss has the advantage of making the solution unique (no local optima) and generalizing better to test data (maximum margin). Also, although we used a quadratic penalty, the spirit of penalty methods is to penalize constraint violations ($\z_n - \h(\x_n)$) increasingly. Since in the limit $\mu\rightarrow\infty$ the constraints are satisfied exactly, the classification error using \h\ is zero, hence the linear SVM will find an optimum of the nested problem anyway. We use LIBLINEAR \citep{Fan_08a} with warm start (i.e., the SVM optimization is initialized from the previous iteration's SVM). Note the $L$ SVMs and the decoder function \f\ can be trained in parallel.

\subsection{\Z\ step}
\label{s:Z-step}

From eq.~\eqref{e:MAC-QP}, this is a binary optimization on $NL$ variables, but it separates into $N$ independent optimizations each on only $L$ variables, with the form of a binary proximal operator \citep{Moreau62a,Rockaf76b,CombetPesquet11a} (where we omit the index $n$):
\begin{equation*}
  \label{e:z-step}
  \textstyle\min_{\z}{ e(\z) = \smash{\norm{\x - \f(\z)}^2} + \mu \smash{\norm{\z - \h(\x)}^2} } \ \text{ s.t.\ } \ \z\in\{0,1\}^L.
\end{equation*}
Thus, although the problem over each $\z_n$ is binary and NP-complete, a good or even exact solution may be obtained, because practical values of $L$ are small (typically 8 to 32 bits). Further, because of the intensive computation and large number of independent problems, this step can take much advantage of parallel processing.

We have spent significant effort into making this step efficient while yielding good, if not exact, solutions. Before proceeding, let us show%
\footnote{This result can also be derived by expanding the norms into an $L \times L$ matrix $\A^T\A$ and computing its Cholesky decomposition. However, the product $\smash{\A^T\A}$ squares the singular values of \A\ and loses precision because of roundoff error (\citealp[p.~237ff]{GolubLoan96a}; \citealp[pp.~251]{NocedalWright06a}).}
how to reduce the problem, which as stated uses a matrix of $D \times L$, to an equivalent problem using a matrix of $L \times L$.
\begin{thm}
  \label{th:reduced-Z}
  Let $\x\in\bbR^D$ and $\A\in\bbR^{D\times L}$, with QR factorisation $\A = \Q\RR$, where \Q\ is of $D\times L$ with $\Q^T\Q = \I$ and \RR\ is upper triangular of $L\times L$, and $\y = \Q^T\x \in \bbR^L$. The following two problems have the same minima over \z:
  \begin{equation}
    \label{e:equiv-Z}
    \smash{\textstyle\min_{\z\in\{0,1\}^L}{ \norm{\x-\A\z}^2 } \quad \min_{\z\in\{0,1\}^L}{ \norm{\y-\RR\z}^2 }.}
  \end{equation}
\end{thm}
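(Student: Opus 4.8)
The plan is to show that the two objectives in~\eqref{e:equiv-Z} differ by an additive constant that does not depend on \z, so they are minimized over $\{0,1\}^L$ by exactly the same binary vectors. Everything follows from the single identity $\Q^T\Q = \I$; note that we do \emph{not} assume $\Q\Q^T = \I$ (which fails when $D > L$), nor that \RR\ is invertible, so the argument also covers rank-deficient \A.

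Concretely, I would expand the first squared norm,
\begin{equation*}
  \norm{\x - \A\z}^2 = \norm{\x - \Q\RR\z}^2 = \x^T\x - 2\,\x^T\Q\RR\z + \z^T\RR^T\Q^T\Q\RR\z = \norm{\x}^2 - 2\,\y^T\RR\z + \z^T\RR^T\RR\z,
\end{equation*}
using $\y = \Q^T\x$ and $\Q^T\Q = \I$ in the last step, and likewise
\begin{equation*}
  \norm{\y - \RR\z}^2 = \y^T\y - 2\,\y^T\RR\z + \z^T\RR^T\RR\z = \norm{\y}^2 - 2\,\y^T\RR\z + \z^T\RR^T\RR\z.
\end{equation*}
Subtracting gives $\norm{\x - \A\z}^2 - \norm{\y - \RR\z}^2 = \norm{\x}^2 - \norm{\y}^2$, which is independent of \z. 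Equivalently — and this is the picture worth stating — one may complete \Q\ to a full orthonormal basis $[\,\Q\ \ \Q_\perp\,]$ of $\bbR^D$, so that $\norm{\x - \Q\RR\z}^2 = \norm{\Q^T\x - \RR\z}^2 + \norm{\Q_\perp^T\x}^2 = \norm{\y - \RR\z}^2 + \norm{\Q_\perp^T\x}^2$, exhibiting the constant gap as $\norm{\Q_\perp^T\x}^2 = \norm{\x}^2 - \norm{\y}^2 \ge 0$, i.e.\ the squared distance from \x\ to the column space of \A.

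Since the two functions of \z\ differ only by this fixed constant, they attain their minima over the finite set $\{0,1\}^L$ at exactly the same \z\ (their optimal values differing by the same constant), which is the claim. There is no real obstacle here: the proof is a one-line algebraic identity, and the only points requiring care are using $\Q^T\Q = \I$ rather than $\Q\Q^T = \I$, and observing that the residual term $\norm{\x}^2 - \norm{\y}^2$ carries no dependence on \z. The value of the reduction is purely computational — the second problem uses only the $L\times L$ triangular matrix \RR\ and the $L$-vector \y, and obtaining them from a QR factorization of \A\ (rather than from $\A^T\A$) avoids the roundoff issue flagged in the footnote.
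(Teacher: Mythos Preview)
Your proposal is correct and essentially matches the paper's proof: your ``equivalently'' paragraph (completing \Q\ to $[\,\Q\ \Q_\perp\,]$ and using that orthogonal matrices preserve norms to get $\norm{\x-\A\z}^2 = \norm{\y-\RR\z}^2 + \norm{\Q_\perp^T\x}^2$) is exactly the argument the paper gives. Your first derivation by direct expansion of the squared norms is a valid alternative that reaches the same constant $\norm{\x}^2-\norm{\y}^2$ without invoking $\Q_\perp$ explicitly; both routes are one-line identities resting on $\Q^T\Q=\I$.
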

\begin{proof}
  \fussy
  Let $(\Q\ \Q_{\perp})$ of $D \times D$ be orthogonal, where the columns of $\Q_{\perp}$ are an orthonormal basis of the nullspace of $\Q^T$. Then, since orthogonal matrices preserve Euclidean distances, we have: $\smash{\norm{\x-\A\z}}^2 = \smash{\norm{(\Q\ \Q_{\perp})^T (\x-\A\z)}}^2 = \smash{\norm{\Q^T (\x-\A\z)}}^2 + \smash{\norm{\Q^T_{\perp} (\x-\A\z)}}^2 = \smash{\norm{\y-\RR\z}}^2 + \smash{\norm{\Q^T_{\perp} \x}}^2$, where the term $\smash{\norm{\Q^T_{\perp} \x}}^2$ does not depend on \z.
\end{proof}
This achieves a speedup of $2D/L$ (where the $2$ factor comes from the fact that the new matrix is triangular), e.g.\ this is $40\times$ if using $16$ bits with $D=320$ GIST features in our experiments. Henceforth, we redefine the \z\ step as $\min e(\z) = \smash{\norm{\y - \RR\z}^2} + \mu \smash{\norm{\z - \h(\x)}^2}$ s.t.\ $\z\in\{0,1\}^L$.

\paragraph{Enumeration}

For small $L$, this can be solved \emph{exactly} by enumeration, at a worst-case runtime cost $\calO(L^2 2^L)$, but with small constant factors in practice (see accelerations below). $L=16$ is perfectly practical in a workstation without parallel processing for the datasets in our experiments.

\paragraph{Alternating optimization}

For larger $L$, we use alternating optimization over groups of $g$ bits (where the optimization over a $g$-bit group is done by enumeration and uses the same accelerations). This converges to a local minimum of the \Z\ step, although we find in our experiments that it finds near-global optima \emph{if using a good initialization}. Intuitively, it makes sense to warm-start this, i.e., to initialize \z\ to the code found in the previous iteration's \Z\ step, since this should be close to the new optimum as we converge. However, empirically we find that the codes change a lot in the first few iterations, and that the following initialization works better (in leading to a lower objective value) in early iterations: we solve the relaxed problem on \z\ s.t.\ $\z \in [0,1]^L$ rather than $\{0,1\}^L$. This is a strongly convex bound-constrained quadratic program (QP) in $L$ variables for $\mu > 0$ and its unique minimizer can be found efficiently.

We can further speed up the solution by noting that we have $N$ QPs with some common, special structure. The objective is the sum of a term having the same matrix \RR\ for all QPs, and a term that is separable in \z. We have developed an ADMM algorithm \citep{Carreir14a} that is very simple, parallelizes or vectorizes very well, and reuses matrix factorizations over all $N$ QPs. It is $10$--$100\times$ faster than Matlab's \texttt{quadprog}. We warm-start it from the continuous solution of the QP in the previous \Z\ step.

In order to binarize the continuous minimizer for $\z_n$ we could simply round its $L$ elements, but instead we apply a greedy procedure that is efficient and better (though still suboptimal). We optimally binarize from bit 1 to bit $L$ by evaluating the objective function for bit $l$ in $\{0,1\}$ with all remaining elements fixed (elements $1$ to $l-1$ are already binary and $l+1$ to $L$ are still continuous) and picking the best. Essentially, this is one pass of alternating optimization but having continuous values for some of the bits.

Finally, we pick the best of the binarized relaxed solution or the warm-start value and run alternating optimization. This ensures that the quadratic-penalty function~\eqref{e:MAC-QP} decreases monotonically at each iteration.

\paragraph{Accelerations}

Naively, the enumeration involves evaluating $e(\z)$ for $2^L$ (or $2^g$) vectors, where evaluating $e(\z)$ for one \z\ costs on average roughly $L+1$ multiplications and $\frac{1}{4}L^2$ sums. This enumeration can be sped up or pruned while still finding a global minimum by using upper bounds on $e$, incremental computation of $e$, and necessary and sufficient conditions for the solution. Essentially, we need not evaluate every code vector, or every bit of every code vectors; we know the solution will be ``near'' $\h(\x)$; and we can recognize the solution when we find it.

Call $\z^*$ a global minimizer of $e(\z)$. An initial, good upper bound is $e(\h(\x)) = \smash{\norm{\y - \RR\,\h(\x)}^2}$. In fact, we have the following sufficient condition for $\h(\x)$ to be a global minimizer. (We give it generally for any decoder \f.)
\begin{thm}
  \label{th:h-bound}
  Let $e(\z) = \smash{\norm{\x - \f(\z)}^2} + \mu \smash{\norm{\z - \h(\x)}^2}$. Then: (1) A global minimizer $\z^*$ of $e(\z)$ is at a Hamming distance from $\h(\x)$ of $\frac{1}{\mu} \smash{\norm{\x - \f(\h(\x))}^2}$ or less. (2) If $\mu > \smash{\norm{\x - \f(\h(\x))}^2}$ then $\h(\x)$ is a global minimizer.
\end{thm}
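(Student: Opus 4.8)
The plan is to compare the value of $e$ at an arbitrary global minimizer $\z^*$ with its value at the specific point $\z = \h(\x)$, exploiting the fact that the penalty term vanishes there. First I would note that $e(\h(\x)) = \norm{\x - \f(\h(\x))}^2$, since $\norm{\h(\x) - \h(\x)}^2 = 0$; hence optimality of $\z^*$ gives immediately
\begin{equation*}
  e(\z^*) \;\le\; e(\h(\x)) \;=\; \norm{\x - \f(\h(\x))}^2 .
\end{equation*}

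Next I would discard the nonnegative reconstruction term in $e(\z^*)$ to obtain the lower bound $e(\z^*) \ge \mu\,\norm{\z^* - \h(\x)}^2$. Since $\z^*$ and $\h(\x)$ both lie in $\{0,1\}^L$, the quantity $\norm{\z^* - \h(\x)}^2$ is exactly the Hamming distance between the two codes, hence a nonnegative integer. Chaining the two inequalities yields $\mu\,\norm{\z^* - \h(\x)}^2 \le \norm{\x - \f(\h(\x))}^2$, i.e.\ the Hamming distance from $\z^*$ to $\h(\x)$ is at most $\tfrac{1}{\mu} \norm{\x - \f(\h(\x))}^2$, which is claim (1).

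For claim (2) I would use the integrality just noted: if $\mu > \norm{\x - \f(\h(\x))}^2$, then the bound in (1) forces the Hamming distance from $\z^*$ to $\h(\x)$ to be strictly less than $1$, and being a nonnegative integer it must equal $0$; therefore $\z^* = \h(\x)$, so $\h(\x)$ attains the global minimum of $e$.

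There is essentially no hard step here; the only point that requires a moment's care is the identification of $\norm{\cdot}^2$ on $\{0,1\}^L$ with the integer-valued Hamming distance, since it is precisely this integrality that makes the rounding argument in part (2) legitimate. I would also remark that $\z^*$ need not be unique, so part (2) should be read as ``$\h(\x)$ attains the global minimum'' rather than ``$\h(\x)$ is the unique minimizer.''
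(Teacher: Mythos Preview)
Your proof is correct and follows essentially the same argument as the paper: compare $e(\z^*)$ with $e(\h(\x))$, drop the nonnegative reconstruction term to bound the Hamming distance, and then use integrality of the Hamming distance for part~(2). Your additional remarks on the identification of $\norm{\cdot}^2$ with Hamming distance on $\{0,1\}^L$ and on non-uniqueness of $\z^*$ are apt but do not change the route.
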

\begin{proof}
  \fussy
  $e(\z^*) = \smash{\norm{\x - \f(\z^*)}^2} + \mu \smash{\norm{\z^* - \h(\x)}^2} \le e(\h(\x)) = \smash{\norm{\x - \f(\h(\x))}^2} \Rightarrow \smash{\norm{\z^* - \h(\x)}^2} \le \frac{1}{\mu} \smash{\norm{\x - \f(\h(\x))}^2}$. (2) follows because the Hamming distance is integer.
\end{proof}
As $\mu$ increases and \h\ improves, this bound becomes more effective and more of the $N$ patterns are pruned. Upon convergence, the \Z\ step costs only $\calO(NL^2)$. If we do have to search for a given $\z_n$, we keep a running bound (current best minimum) $\bar{e} = \smash{\norm{\y_n - \RR\bar{\z}}^2} + \mu \smash{\norm{\bar{\z} - \h(\x_n)}^2}$, and we scan codes in increasing Hamming distance to $\h(\x_n)$ up to a distance of $\bar{e}/\mu$. Thus, we try first the codes that are more likely to be optimal, and keep refining the bound as we find better codes.

Second, since $\smash{\norm{\y - \RR\z}^2}$ separates over dimensions $1,\dots,L$, we evaluate it incrementally (dimension by dimension) and stop as soon as we exceed the running bound.

Finally, there exist global optimality necessary and sufficient conditions for binary quadratic problems that are easy to evaluate \citep{BeckTeboul00a,Jeyakum_07a} (see appendix~\ref{s:binary-cond}). This allows us to recognize the solution as soon as we reach it and stop the search (rather than do a linear search of all values, keeping track of the minimum). These conditions can also determine whether the continuous solution to the relaxed QP is a global minimizer of the binary QP.

\subsection{Schedule for the penalty parameter $\mu$}
\label{s:schedule}

The only user parameters in our method are the initialization for the binary codes \Z\ and the schedule for the penalty parameter $\mu$ (sequence of values $0 < \mu_1 < \dots < \infty$), since we use a penalty or augmented Lagrangian method. In general with these methods, setting the schedule requires some tuning in practice. Fortunately, this is simplified in our case for two reasons. 1) We need not drive $\mu\rightarrow\infty$ because \emph{termination occurs at a finite $\mu$ and can be easily detected}: whenever $\Z = \h(\X)$ at the end of the \Z\ step, no further changes to the parameters can occur. This gives a practical stopping criterion. 2) In order to generalize well to unseen data, we stop iterating not when we (sufficiently) optimize $E_Q(\h,\f,\Z;\mu)$, but \emph{when the precision in a validation set decreases}. This is a form of early stopping that guarantees that we improve (or leave unchanged) the initial \Z, and besides is faster. The initialization for \Z\ and further details about the schedule for $\mu$ appear in section~\ref{s:expts}.

\section{Measuring code utilization using entropy: effective number of bits}
\label{s:entropy}

Here we propose an evaluation measure of binary hash functions that has not been used before as far as we know. Any binary hash function maps a population of high-dimensional real vectors onto a population of $L$-bit vectors (where $L$ is fixed). Intuitively, a good hash function should make best use of the available codes and use each bit equally (since no bit is preferable to any other bit). For example, if we have $L=32$ bits and $N=10^6$ distinct real vectors, a good hash function would ideally assign a different 32-bit code to each vector, in order to avoid collisions. Given an $L$-bit hash function \h\ and a dataset \X\ of $N$ real vectors $\x_1,\dots,\x_N \in \bbR^D$, we then obtain the $N$ $L$-bit binary codes $\z_1,\dots,\z_N \in \{0,1\}^L$. We can measure the \emph{code utilization} of the hash function \h\ for the dataset \X\ by the \emph{entropy of the code distribution}, $S(P(\h(\X)))$, defined as follows. Let $c_i$ be the number of vectors that map to binary code $i$ for $i = 0,\dots,2^L-1$. Then the (sample) code distribution $P$ is a discrete probability distribution defined over the $L$-bit integers $0,\dots,2^L-1$ and has probability $p_i = \frac{c_i}{N}$ for code $i$, since $p_0 + \dots + p_{2^L-1} = N$, i.e., the code probabilities are the normalized counts computed over the $N$ data points. This works whether $N$ is smaller or larger than $2^L$ (if $N < 2^L$ there will be unused codes). If the dataset is a sample of a distribution of high-dimensional vectors, then $P$ is an estimate of the code usage induced by the hash function \h\ for the distribution $P$ based on a sample of size $N$. The entropy of $P$ is
\begin{equation}
  \label{e:entropy}
  S(P) = - \sum^{2^L-1}_{i=0}{ p_i \log_2{p_i} },
\end{equation}
which is measured in bits. The entropy $S$ is a real number and satisfies $S \in [0,\min(L,\log_2{N})]$. It is $0$ when all $N$ codes are coincident (hence only one of the $2^L$ available codes are used). It is $L$ when $N \ge 2^L$ (the number of available codes is no more than the number of data points) and $p_i = 2^{-L}$ for all $i$ (uniform distribution). It is $\log_2{N}$ when $N < 2^L$ and all $N$ codes are distinct. Hence, the entropy is large the more available codes are used, and the more uniform their use is. Since the entropy is measured in bits and cannot be more than $L$, $S$ can be said to measure the \emph{effective number of bits} $L_{\text{eff}}$ of the code distribution induced by the hash function \h\ on dataset \X.

A good hash function will have a good code utilization, making use of the available codes to avoid collisions and preserve neighbors. However, it is crucial to realize that an optimal code utilization does by itself not necessarily result in an optimal hash function, and it is not a fully reliable proxy for precision/recall. Indeed, code utilization is not directly related to the distances between data vectors, and it is easy to construct hash functions that produce an optimal code utilization but are not necessarily very good in preserving neighbors. For example, we can pick the first hash function as a hyperplane that splits the space into two half spaces each with half the data points (this is a high-dimensional median). Then, we do this within each half to set the second hash function, etc. This generates a binary decision tree with oblique cuts, which is impractical because it has $2^L$ hyperplanes, one per internal node. If the real vectors follow a Gaussian distribution (or any other axis-symmetric distribution) in $\bbR^D$, then using as hash functions any $L$ thresholded principal components will give maximum entropy (as long as $L \le D$). This is because each of the $L$ hash functions is a hyperplane that splits the space into two half spaces containing half of the vectors, and the hyperplanes are orthogonal. This gives the thresholded PCA (tPCA) method mentioned earlier, which is generally not competitive with other methods, as seen in our experiments. More generally, we may expect tPCA and random projections through the mean to achieve high code utilization, because for most high-dimensional datasets (of arbitrary distribution), most low-dimensional projections are approximately Gaussian \citep{DiaconFreedm84a}.

With this caveat in mind, code utilization measured in effective number of bits $L_{\text{eff}}$ is still a useful evaluation measure for hash functions. It also has an important advantage: it does not depend on any user parameters. In particular, it does not depend on the ground truth size ($K$ nearest neighbors in data space) or retrieved set size (given by the $k$ nearest neighbors or the vectors with codes within Hamming distance $r$). This allows us to compare all binary hashing methods with a single number $L_{\text{eff}}$ (for a given number of bits $L$). We report values of $L_{\text{eff}}$ in the experiments section and show that it indeed correlates well with precision in terms of the ranking of methods, particularly for methods that use the same model and objective function (such as the binary autoencoder reconstruction error with ITQ and MAC).

\section{Experiments}
\label{s:expts}

We used three datasets in our experiments, commonly used as benchmarks for image retrieval. (1) CIFAR \citep{Krizhev09a} contains $60\,000$ $32\times 32$ color images in $10$ classes. We ignore the labels in this paper and use $N = 50\,000$ images as training set and $10\,000$ images as test set. We extract $D = 320$ GIST features \citep{OlivaTorral01a} from each image. (2) NUS-WIDE \citep{Chua_09a} contains $N=269\,648$ high-resolution color images and use $N = 161\,789$ for training and $107\,859$ for test. We extract $D = 128$ wavelet features \citep{OlivaTorral01a} from each image. In some experiments we also use the NUS-WIDE-LITE subset of this dataset, containing $N = 27\,807$ images for training and $27\,807$ images for test. (3) SIFT-1M \citep{Jegou_11a} contains $N = 1\,000\,000$ training high-resolution color images and $10\,000$ test images, each represented by $D=128$ SIFT features.

We report precision and recall (\%) in the test set using as true neighbors the $K$ nearest images in Euclidean distance in the original space, and as retrieved neighbors in the binary space we either use the $k$ nearest images in Hamming distance, or the images within a Hamming distance $r$ (if no images satisfy the latter, we report zero precision). We also compare algorithms using our entropy-based measure of code utilization described in section~\ref{s:entropy}.

Our experiments evaluate the effectiveness of our algorithm to minimize the BA objective and whether this translates into better hash functions (i.e., better image retrieval); its runtime and parallel speedup; and its precision and recall and code utilization compared to representative state-of-the-art algorithms.

\subsection{How much does respecting the binary constraints help?}

\begin{figure}[t]
  \centering
  \psfrag{rerror}[][t]{reconstruction error}
  \psfrag{bits}[][]{$L$}
  \psfrag{precision}[b][b]{precision}
  \begin{tabular}{@{}c@{\hspace{0\linewidth}}c@{\hspace{0\linewidth}}c@{}}
    \raisebox{0pt}[0pt][0pt]{\includegraphics[width=0.337\linewidth]{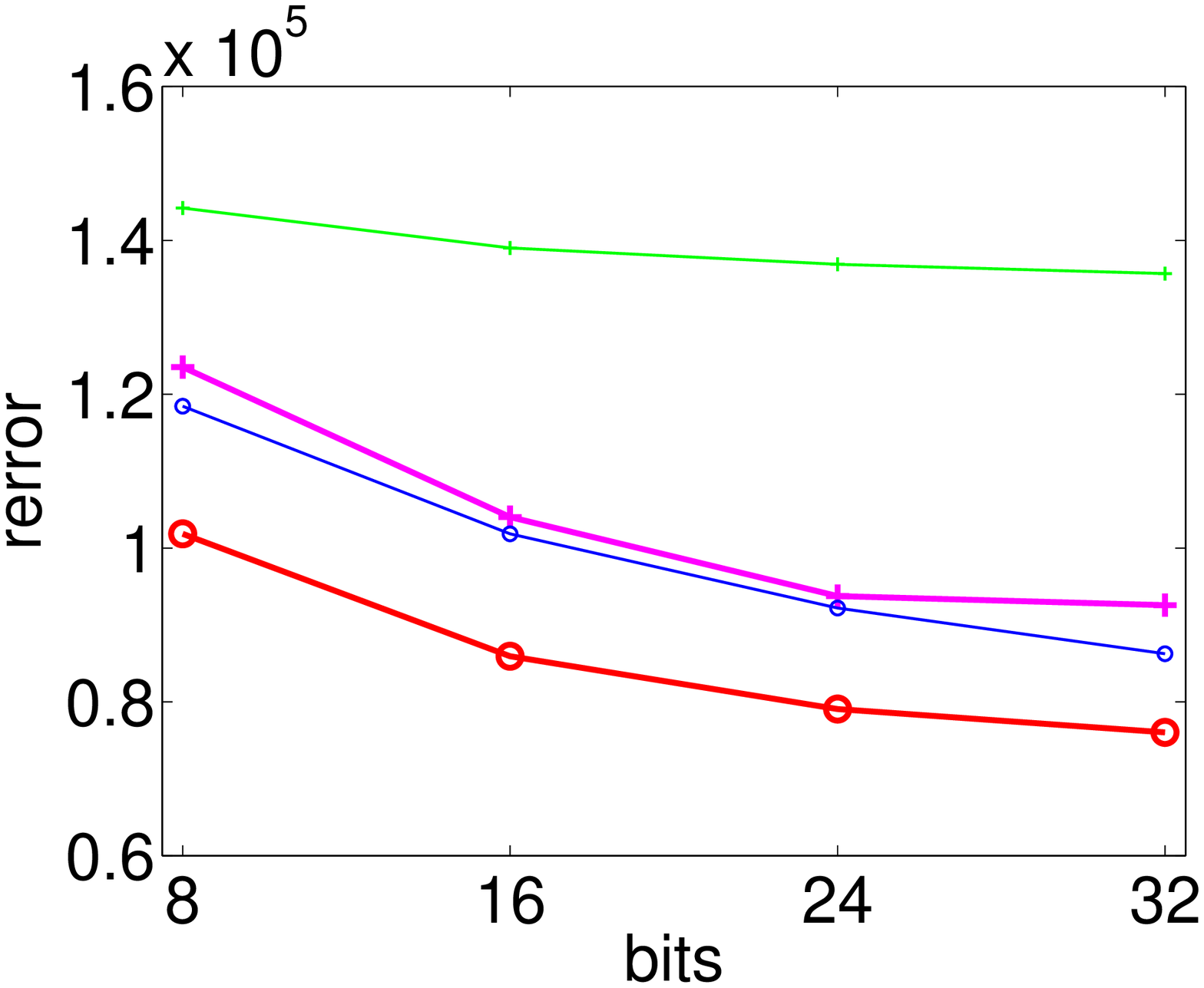}} &
    \includegraphics[width=0.33\linewidth]{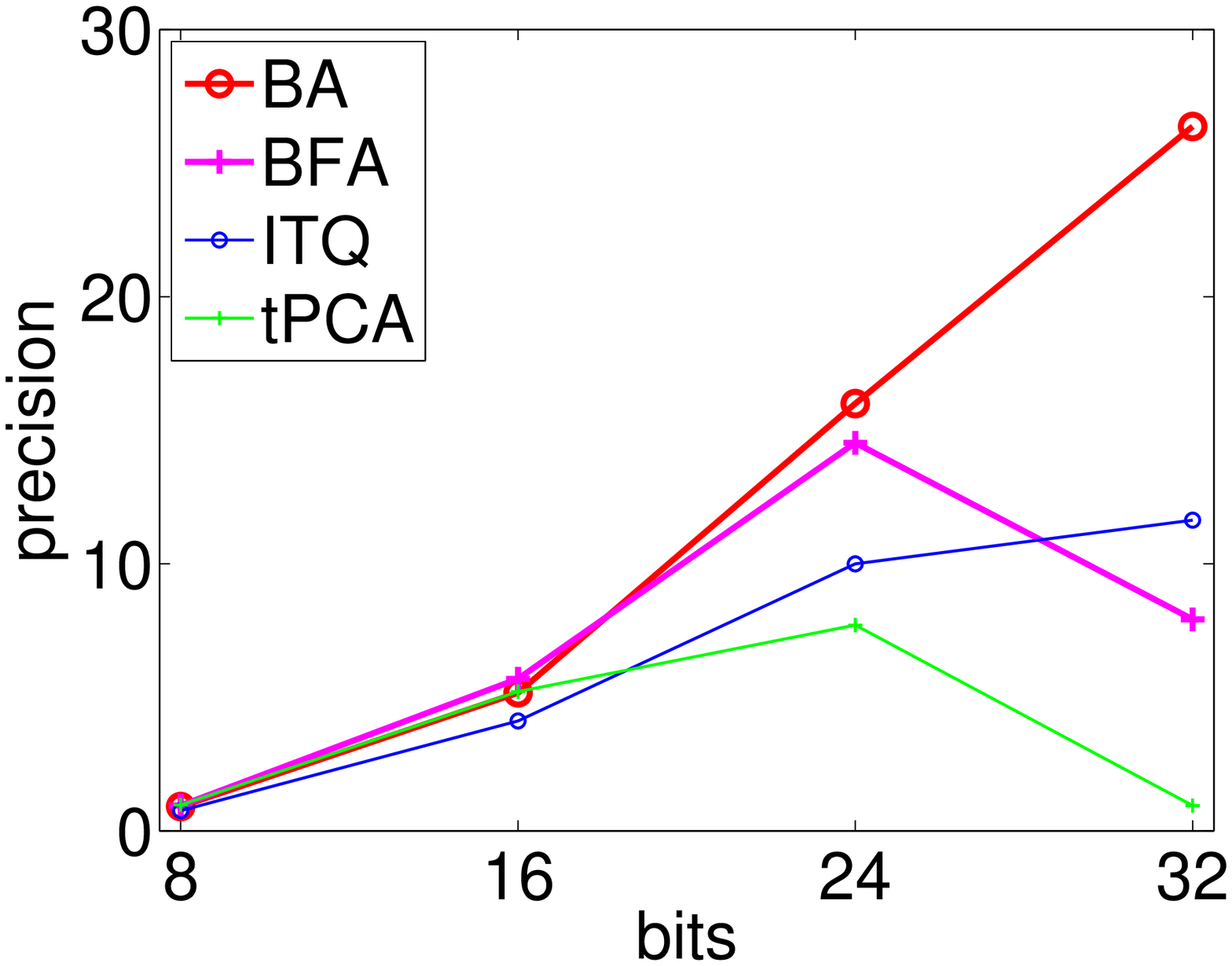} &
    \includegraphics[width=0.33\linewidth]{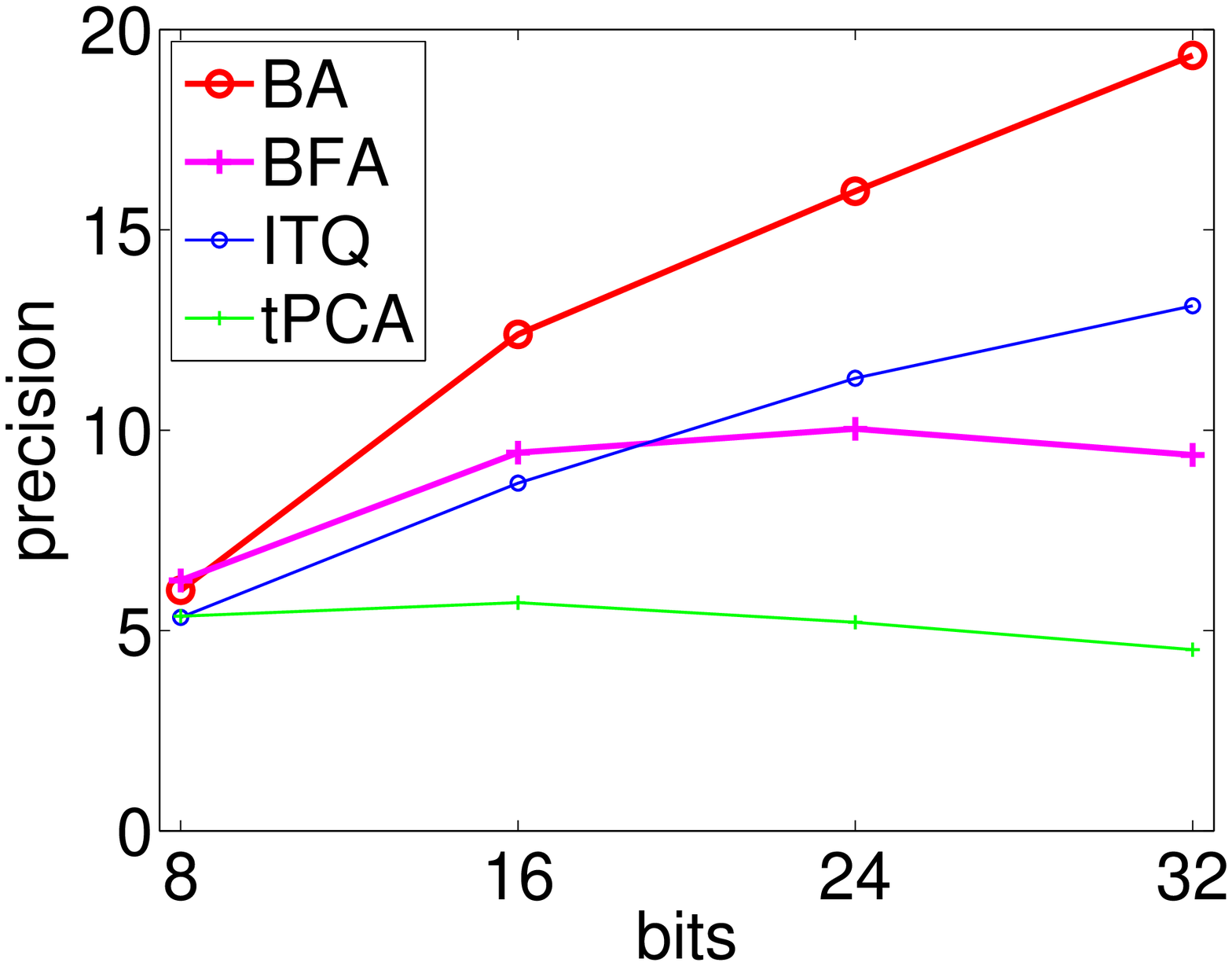}
  \end{tabular}
  \caption{Wrapper vs filter optimization: BA objective function (\emph{left}) and precision using neighbors within Hamming distance $r=2$ (\emph{middle}) and using $k=50$ nearest neighbors (\emph{right}).}
  \label{f:wrapper}
\end{figure}

We focus purely on the BA objective function (reconstruction error) and study the gain obtained by the MAC optimization, which respects the binary constraints, over the suboptimal, ``filter'' approach of relaxing the constraints (i.e., PCA) and then binarizing the result by thresholding at 0 (tPCA) or by optimal rotation (ITQ). To compute the reconstruction error for tPCA and ITQ we find the optimal mapping \f\ given their binary codes. We use the NUS-WIDE-LITE subset of the NUS-WIDE dataset. We initialize BA from AGH and BFA from tPCA and use alternating optimization in the \Z\ steps. We search for $K = 50$ true neighbors and report results over a range of $L = 8$ to $32$ bits in fig.~\ref{f:wrapper}. We can see that BA dominates all other methods in reconstruction error, as expected, and also in precision, as one might expect. tPCA is consistently the worst method by a significant margin, while ITQ and BFA are intermediate. Hence, the more we respect the binary constraints during the optimization, the better the hash function. Further experiments below consistently show that the BA precision significantly increases over the (AGH) initialization and is leading or competitive over other methods.

\subsection{\Z-step: alternating optimization and initialization}

We study the MAC optimization if doing an inexact \Z\ step by using alternating optimization over groups of $g$ bits. Specifically, we study the effect on the number of iterations and runtime of the group size $g$ and of the initialization (warm-start vs relaxed QP). Fig.~\ref{f:altopt} shows the results in the CIFAR dataset using $L=16$ bits (so using $g=16$ gives an exact optimization), without using a validation-based stopping criterion (so we do optimize the training objective).

Surprisingly, the warm-start initialization leads to worse BA objective function values than the binarized relaxed one. Fig.~\ref{f:altopt}(left) shows the dashed lines (warm-start for different $g$) are all above the solid lines (relaxed for different $g$). The reason is that, early during the optimization, the codes \Z\ undergo drastic changes from one iteration to the next, so the warm-start initialization is farther from a good optimum than the relaxed one. Late in the optimization, when the codes change slowly, the warm-start does perform well. The relaxed initialization resulting optima are almost the same as using the exact binary optimization.

Also surprisingly, different group sizes $g$ eventually converge to almost the same result as using the exact binary optimization if using the relaxed initialization. (If using warm-start, the larger $g$ the better the result, as one would expect.) Likewise, in fig.~\ref{f:wrapper}, if using alternating optimization in the \Z\ step rather than enumeration, the curves for BA and BFA barely vary. But, of course, the runtime per iteration grows exponentially on $g$ (middle panel).

Hence, it appears that using faster, inexact \Z\ steps does not impair the model learnt, and we settle on $g=1$ with relaxed initialization as default for all our remaining experiments (unless we use $L < 16$ bits, in which case we simply use enumeration).

\begin{figure}[t]
  \centering
  \psfrag{error}[][t]{reconstruction error}
  \psfrag{iterations}[][b]{number of iterations}
  \psfrag{exact}[][]{exact}
  \psfrag{warm start}[l][l]{\caja[0.9]{c}{c}{warm \\ start}}
  \psfrag{relaxed}[c][c]{relaxed}
  \psfrag{Block 1}[bl][bl]{{~~$g=1$}}
  \psfrag{Block 2}[bl][bl]{{~~$g=2$}}
  \psfrag{Block 4}[bl][bl]{{~~$g=4$}}
  \psfrag{Block 8}[bl][bl]{{~~$g=8$}}
  \psfrag{Block 16}[bl][bl]{{~~$g=16$}}
  \psfrag{time}[][b]{runtime (minutes)}
  \begin{tabular}{@{}c@{\hspace{0\linewidth}}c@{}}
    \includegraphics[width=0.53\linewidth]{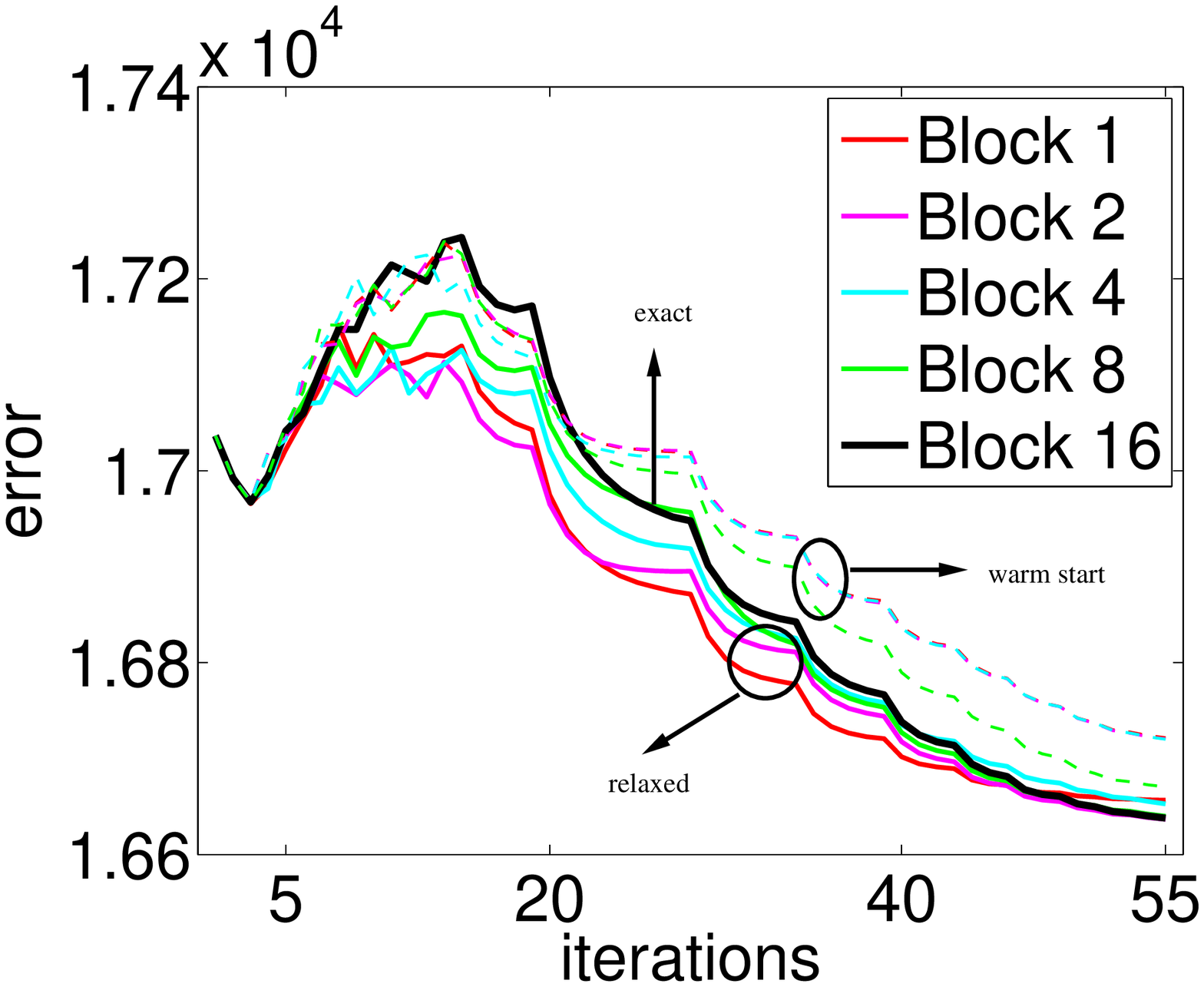} &
    \includegraphics[width=0.47\linewidth]{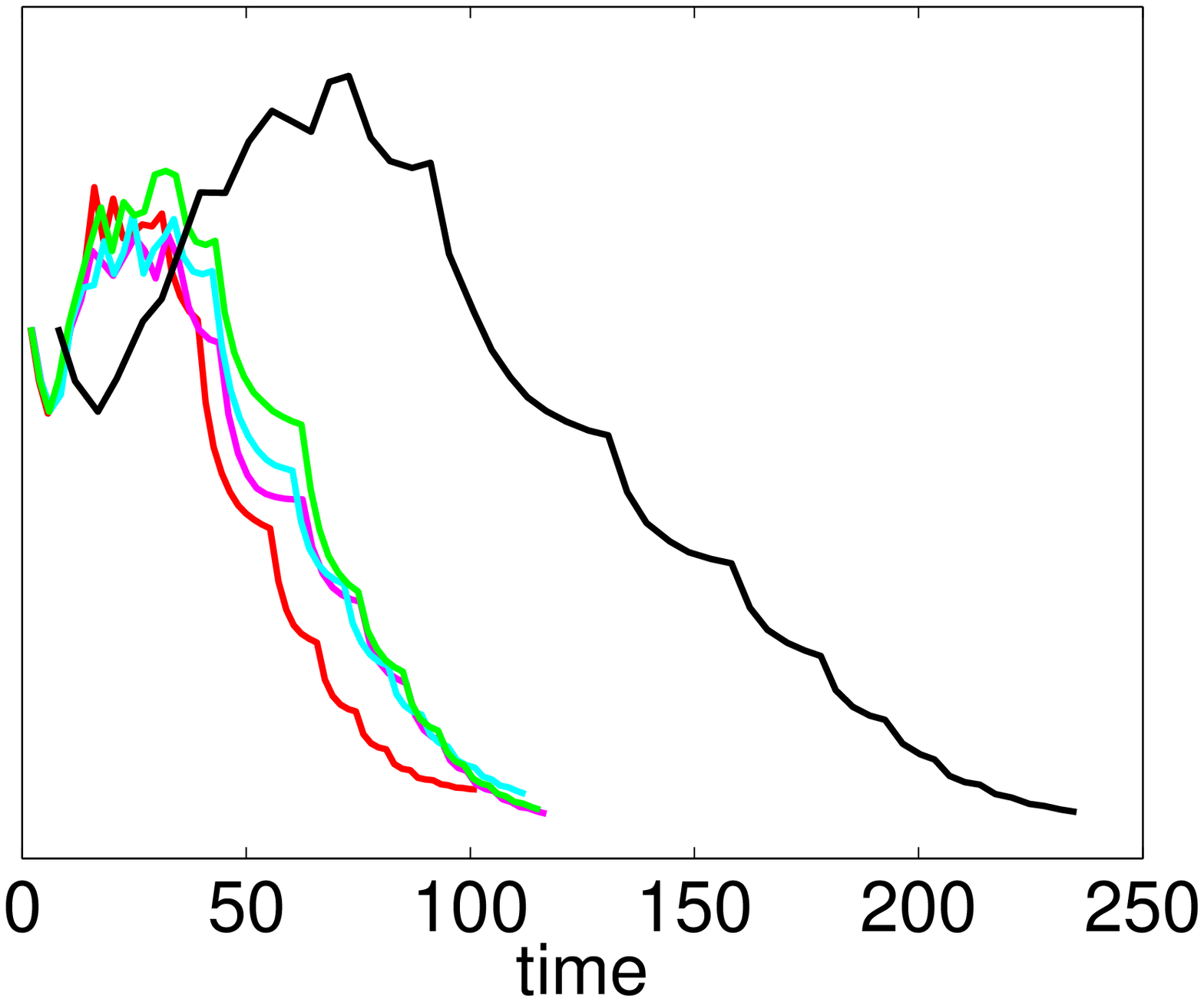}
  \end{tabular}
  \caption{Iterations (\emph{left}) and runtime (\emph{right}) of the MAC optimization of the BA objective function (at each iteration, we run one \Z\ and $(\f,\h)$ step). In the \Z\ step, we use alternating optimization in $g$-bit groups ($g=16$ means exact optimization), and a warm-start vs relaxed initialization of \Z.}
  \label{f:altopt}
\end{figure}

\subsection{Parallel processing}

Fig.~\ref{f:parallel} shows the BA training time speedup achieved with parallel processing, in CIFAR with $L=16$ bits. We use the Matlab Parallel Processing Toolbox with up to 12 processors and simply replace ``for'' with ``parfor'' loops so each iteration (over points in the \Z\ step, over bits in the \h\ step) is run in a different processor. We observe a nearly perfect scaling for this particular problem. Note that, the larger the number of bits, the larger the parallelization ability in the \h\ step. As a rough indication of runtimes for BA, training the $50\,000$ CIFAR images and $161\,789$ NUS-WIDE images using $L=32$ bits with alternating optimization in the \Z\ step takes 20' and 50', respectively (in a 4-core laptop).

\begin{figure}[t]
  \centering
  \psfrag{speedup}[][]{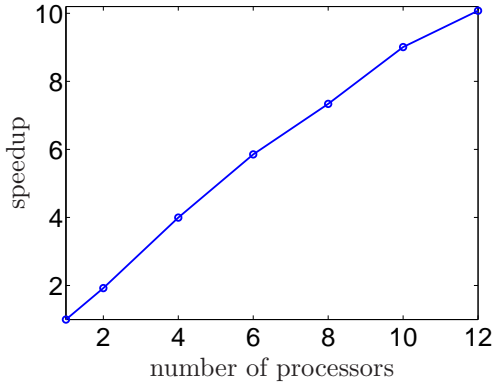}
  \psfrag{time}[][b]{number of processors}
  \includegraphics[width=0.40\linewidth]{CIFAR/speedup.eps}
  \caption{Parallel processing: speedup of the training algorithm as a function of the number of processors (CIFAR dataset with $L=16$ bits).}
  \label{f:parallel}
\end{figure}

\subsection{Schedule of $\mu$ and initial \Z}

Since the objective is nonconvex, our result does depend on the initial codes \Z\ (in the first iteration of the MAC algorithm), but we are guaranteed to improve or leave unchanged the precision (in the validation set) of the codes produced by any algorithm. We have observed that initializing from AGH \citep{Liu_11a} tends to produce best results overall, so we use this in all the experiments. Using ITQ \citep{Gong_13a} produces also good results (occasionally better but generally somewhat worse than AGH), and is a simpler and faster option if so desired. We initialize BFA from tPCA, since this seems to work best.

In order to be able to use a fixed schedule, we make the data zero-mean and rescale it so the largest feature range is 1. This does not alter the Euclidean distances and normalizes the scale. We start with $\mu_1 = 0.01$ and double it after each iteration (one \Z\ and $(\f,\h)$ step). As noted in section~\ref{s:MAC}, the algorithm will skip $\mu$ values that do not improve the precision in the validation set, and will stop at a finite $\mu$ value (past which no further changes occur).

It is of course possible to tweak all these settings ($\mu$ schedule and initial \Z) and obtain better results, but these defaults seem robust.

\subsection{Comparison with other algorithms in image retrieval using precision/recall}

We compare BA and BFA with the following algorithms: thresholded PCA (tPCA), Iterative Quantization (ITQ) \citep{Gong_13a}, Spectral Hashing (SH) \citep{Weiss_09a}, Kernelized Locality-Sensitive Hashing (KLSH) \citep{KulisGrauman12a}, AnchorGraph Hashing (AGH) \citep{Liu_11a}, and Spherical Hashing (SPH) \citep{Heo_12a}. Note several of these learn nonlinear hash functions and use more sophisticated error functions (that better approximate the nearest neighbor ideal), while our BA uses a linear hash function and simply minimizes the reconstruction error. All experiments use the output of AGH and tPCA to initialize BA and BFA, respectively.

It is known that the retrieval performance of a given algorithm depends strongly on the size of the neighbor set used, so we report experiments with small and large number of points in the ground truth set. For NUS-WIDE, we considered as ground truth $K=100$ and $K=1\,500$ neighbors of the query point, and as set of retrieved neighbors, we retrieve either $k$ nearest neighbors ($k=100$, $1500$) or neighbors within Hamming distance $r$ ($r=1$, $2$). Fig.~\ref{f:NUS-WIDE} shows the results. For ANNSIFT-1M, we considered ground truth $K=10\,000$ neighbors and set of retrieved neighbors for $k=10\,000$ or $r=1$ to $3$. Fig.~\ref{f:ANNSIFT-1M} shows the results. All curves are the average over the test set. We also show precision and recall results for the CIFAR dataset (fig.~\ref{f:CIFAR}) as well as top retrieved images for sample query images in the test set (fig.~\ref{f:CIFAR_sample}), and for the NUS-WIDE-LITE dataset (fig.~\ref{f:NUS-WIDE-LITE}), for different ground truth sizes.

Although the relative performance of the different methods varies depending on the reported set size, some trends are clear. Generally (though not always) BA beats all other methods, sometime by a significant margin. ITQ and SPH become close (sometimes comparable) to BA in CIFAR and NUS-WIDE dataset, respectively. BFA is also quite competitive, but consistently worse than BA. The only situation when the precision of BA and BFA appears to decrease is when $L$ is large and $r$ is small. The reason is that many test images have no neighbors at a Hamming distance of $r$ or less and we report zero precision for them. This suggests the hash function finds a way to avoid collisions as more bits are available. In practice, one would simply increase $r$ to retrieve sufficient neighbors.

Fig.~\ref{f:ANNSIFT-1M} shows the results of BA using two different initializations, from the codes of AGH and ITQ, respectively. Although the initialization does affect the local optimum that the MAC algorithm finds, this optimum is always better than the initialization and it tends to give competitive results with other algorithms.

\subsection{Comparison with other algorithms in image retrieval using code utilization}

Fig.~\ref{f:entropy} shows the effective number of bits $L_{\text{eff}}$ for all methods on two datasets, NUS-WIDE and ANNSIFT-1M, and should be compared with figure~\ref{f:NUS-WIDE} for NUS-WIDE and figure~\ref{f:ANNSIFT-1M} (top row, initialized from AGH) for ANNSIFT-1M. For each method, we show $L_{\text{eff}}$ for the training set (solid line) and test set (dashed line). We also show a diagonal-horizontal line to indicate the upper bound $L_{\text{eff}} \le \min(L,\log_2{N})$ (again, one such line for each of the training and test sets). All methods lie below this line, and the closer they are to it, the better their code utilization.

As mentioned in section~\ref{s:entropy}, tPCA consistently gives the largest $L_{\text{eff}}$, and $L_{\text{eff}}$ is by itself not a consistently reliable indicator of good precision/recall. However, there is a reasonably good correlation between $L_{\text{eff}}$ and precision (with a few exceptions, notably tPCA), in that the methods display a comparable order in both measures. This is especially true if comparing $L_{\text{eff}}$ with the precision plots corresponding to retrieving neighbors within a Hamming distance $r$. This is particularly clear with the ANNSIFT-1M dataset.

Methods for which the precision drops for high $L$ (if using a small Hamming distance $r$ to retrieve neighbors) also show $L_{\text{eff}}$ stagnating for those $L$ values. This is because the small number of points (hence possibly used codes) limits the entropy achievable. These methods are making good use of the available codes, having few collisions, so they need a larger Hamming distance to find neighbors. In particular, this explains the drop in precision of BA for small $r$ in fig.~\ref{f:ANNSIFT-1M}, which we also noted earlier. If the training or test set was larger, we would expect the precision and $L_{\text{eff}}$ to continue to increase for those same values of $L$.

The correlation in performance between $L_{\text{eff}}$ and precision is also seen if comparing methods that are using the same model and (approximately) optimizing the same objective, such as the binary autoencoder reconstruction error for ITQ and MAC. The consistent improvement of MAC over ITQ in precision is seen in $L_{\text{eff}}$ too. This further suggests that the better optimization effected by MAC is improving the hash function (in precision, and in code utilization).

We can see that if the number of points $N$ in the dataset (training or test) is small compared to the number of available codes $2^L$, then $L_{\text{eff}}$ stagnates as $L$ increases (since $L_{\text{eff}} \le \log_2{N}$). This is most obvious with the test set in ANNSIFT-1M, since it is small. Together with the precision/recall curves, $L_{\text{eff}}$ can be used to determine the number of bits $L$ to use with a given database. 

The leftmost plot of fig.~\ref{f:entropy} shows (for selected methods on NUS-WIDE) the actual code distribution as a histogram. That is, we plot the number of high-dimensional vectors $c_i$ that map to code $i$, for each binary code $i \in \{0,\dots,2^L-1\}$ that is used (i.e., that has at least one vector mapping to it). This is the code distribution $P$ of section~\ref{s:entropy} but unnormalized and without plotting the zero-count codes. The entropy of this distribution gives the $L_{\text{eff}}$ value in the middle plot. High entropy corresponds to a large number of used codes, and uniformity in their counts.

\section{Discussion}
\label{s:disc}

One contribution of this paper is to reveal a connection between ITQ \citep{Gong_13a} (a popular, effective hashing algorithm) and binary autoencoders. ITQ can be seen as a fast, approximate optimization of the BA objective function, using a ``filter'' approach (relax the problem to obtain continuous codes, iteratively quantize the codes, then fit the hash function). Our BA algorithm is a corrected version of ITQ.

Admittedly, there are objective functions that are more suited for information retrieval than the autoencoder, by explicitly encouraging distances in the original and Hamming space to match in order to preserve nearest neighbors \citep{Weiss_09a,Liu_11a,KulisGrauman12a,Lin_13a,Lin_14b}. However, autoencoders do result in good hash functions, as evidenced by the good performance of ITQ and our method (or of semantic hashing \citep{SalakhHinton09b}, using neural nets). The reason is that, with continuous codes, autoencoders can capture the data manifold in a smooth way and indirectly preserve distances, encouraging (dis)similar images to have (dis)similar codes---even if this is worsened to some extent because of the quantization introduced with discrete codes. Autoencoders are also faster and easier to optimize and scale up better to large datasets.

Note that, although similar in some respects, the binary autoencoder is not a graphical model, in particular it is not a stacked restricted Boltzmann machine (RBM). A binary autoencoder composes two \emph{deterministic} mappings: one (the encoder) with binary outputs, and another (the decoder) with binary inputs. Hence, the objective function for binary autoencoders is discontinuous. The objective function for RBMs is differentiable, but it involves a normalization factor that is computationally intractable. This results in a very different optimization: a nonsmooth optimization for binary autoencoders (with combinatorial optimization in the step over the codes if using MAC), and a smooth optimization using sampling and approximate gradients for RBMs. See \citet[p.~3372]{Vincen_10a}.

\section{Conclusion and future work}

Up to now, many hashing approaches have essentially ignored the binary nature of the problem and have approximated it through relaxation and truncation, possibly disregarding the hash function when learning the binary codes. The inspiration for this work was to capitalize on the decoupling introduced by the method of auxiliary coordinates to be able to break the combinatorial complexity of optimizing with binary constraints, and to introduce parallelism into the problem. Armed with this algorithm, we have shown that respecting the binary nature of the problem during the optimization is possible in an efficient way and that it leads to better hash functions, competitive with the state-of-the-art. This was particularly encouraging given that the autoencoder objective is not the best for retrieval, and that we focused on linear hash functions.

The algorithm has an intuitive form (alternating classification, regression and binarization steps) that can reuse existing, well-developed code. The extension to nonlinear hash and reconstruction mappings is straightforward and it will be interesting to see how much these can improve over the linear case. This paper is a step towards constructing better hash functions using the MAC framework. We believe it may apply more widely to other objective functions.

\section*{Acknowledgments}

Work supported by NSF award IIS--1423515. We thank Ming-Hsuan Yang and Yi-Hsuan Tsai (UC Merced) for helpful discussions about binary hashing.

\clearpage
\appendix

\section{Global optimality conditions for binary quadratic functions}
\label{s:binary-cond}

Consider the following quadratic optimization with binary variables:
\begin{equation}
  \label{e:quad-bin}
  \min_{\x}{ \frac{1}{2} \x^T \Q \x + \b^T \x } \quad \text{s.t.} \quad \x\in\{-1,1\}^n
\end{equation}
where \Q\ is a symmetric matrix of of $n \times n$. In general, this is an NP-hard problem \citep{GareyJohnson79a}. \citet{BeckTeboul00a} gave global optimality conditions for this problem that are simply expressed in terms of the problem's data (\Q, \b) involving only primal variables and no dual variables:
\begin{description}
\item[Sufficient] If $\lambda_{\text{min}} \e \ge \X \Q \X \e + \X \b$ then \x\ is a global optimizer for~\eqref{e:quad-bin}.
\item[Necessary] If \x\ is a global optimizer for~\eqref{e:quad-bin} then $\X \Q \X \e + \X \b \le \diag{\Q} \e$.
\end{description}
Here, $\lambda_{\text{min}}$ is the smallest eigenvalue of \Q, \e\ is a vector of ones, $\X = \diag{\x}$ is a diagonal matrix of $n \times n$ with diagonal entries as in vector \x, and $\diag{\Q}$ is a diagonal matrix of $n \times n$ with diagonal entries $q_{ii}$. Intuitively, if \Q\ is ``smaller'' than \b\ then we can disregard the quadratic term and trivially solve the separable, linear term.

\citet{Jeyakum_07a} further gave additional global optimality conditions:
\begin{description}
\item[Sufficient] If $\diag{\tilde{\q}} \e \ge \X \Q \X \e + \X \b$ then \x\ is a global optimizer for~\eqref{e:quad-bin}, where $\tilde{q}_i = q_{ii} - \sum^n_{j\neq i}{\abs{q_{ij}}},\ i=1,\dots,n$.
\item[Necessary] If \x\ is a global optimizer for~\eqref{e:quad-bin} then $\b^T \x \le \0$.
\end{description}
It is possible to give tighter conditions \citep{Xia09a} but which are more complicated to compute.

Furthermore, \citet{BeckTeboul00a} also gave conditions for the minimizer of the relaxed problem to be the global minimizer of the binary problem. Let the continuous relaxation be:
\begin{equation}
  \label{e:quad-relaxed}
  \min_{\x}{ \frac{1}{2} \x^T \Q \x + \b^T \x } \quad \text{s.t.} \quad \x\in[-1,1]^n
\end{equation}
and consider the case where \Q\ is positive semidefinite, so this problem is convex. Then, a necessary and sufficient condition for \x\ to be a solution of both~\eqref{e:quad-bin} and~\eqref{e:quad-relaxed} is:
\begin{equation*}
  \x\in\{-1,1\}^n \text{ is a global optimizer for~\eqref{e:quad-bin} and~\eqref{e:quad-relaxed}} \Longleftrightarrow \X \Q \X \e + \X \b \le \0.
\end{equation*}
Furthermore, we have a relation when the solution of the binary problem is ``close enough'' to that of the relaxed one. If \x\ is an optimizer of the relaxed problem~\eqref{e:quad-relaxed} and $\y = \sgn{\x}$ satisfies $\Y \Q (\y-\x) \le \lambda_{\text{min}} \e$ then \y\ is a global optimizer of the binary problem~\eqref{e:quad-bin} (where $\sgn{t} = 1$ if $t \ge 0$ and $-1$ if $t<0$).

In our case, we are particularly interested in the sufficient conditions, which we can combine as
\begin{equation*}
  \min{(\diag{\tilde{\q}},\lambda_{\text{min}})} \e \ge \X \Q \X \e + \X \b \Rightarrow \x\ \text{ is a global minimizer of~\eqref{e:quad-bin}.}
\end{equation*}
Computationally, these conditions have a comparable cost to evaluating the objective function $e(\z) = \smash{\norm{\x - \f(\z)}^2} + \mu \smash{\norm{\z - \h(\x)}^2}$ for an $L$-bit vector (in the main paper). Besides, some of the arithmetic operations are common to evaluating $e(\z)$ and the necessary and sufficient conditions for global optimality of the binary problem, so we can use the latter as a fast test to determine whether we have found the global minimizer and stop the search (when enumerating all $2^L$ vectors). Likewise, we can use the necessary and sufficient conditions to determine whether the solution to the relaxed problem is the global minimizer for the discrete problem, since in our $e(\z)$ function the quadratic function is convex. Also, note that computing $\diag{\tilde{\q}}$ and $\lambda_{\text{min}}$ is done just once for all $N$ data points in the training set, since all the problems (i.e., finding the binary codes for each data point) share the same matrix \A.


\begin{figure}[t!]
  \centering
  \begin{tabular}{@{}c@{\hspace{0\linewidth}}c@{}c@{}}
    $k=K$ neighbors are retrieved & Hamming distance $\le 1$ & Hamming distance $\le 2$ \\
    \psfrag{precision}[][t]{precision $K=100$}
    \psfrag{bits}[][]{}
    \includegraphics[width=.33\linewidth,height=0.265\linewidth]{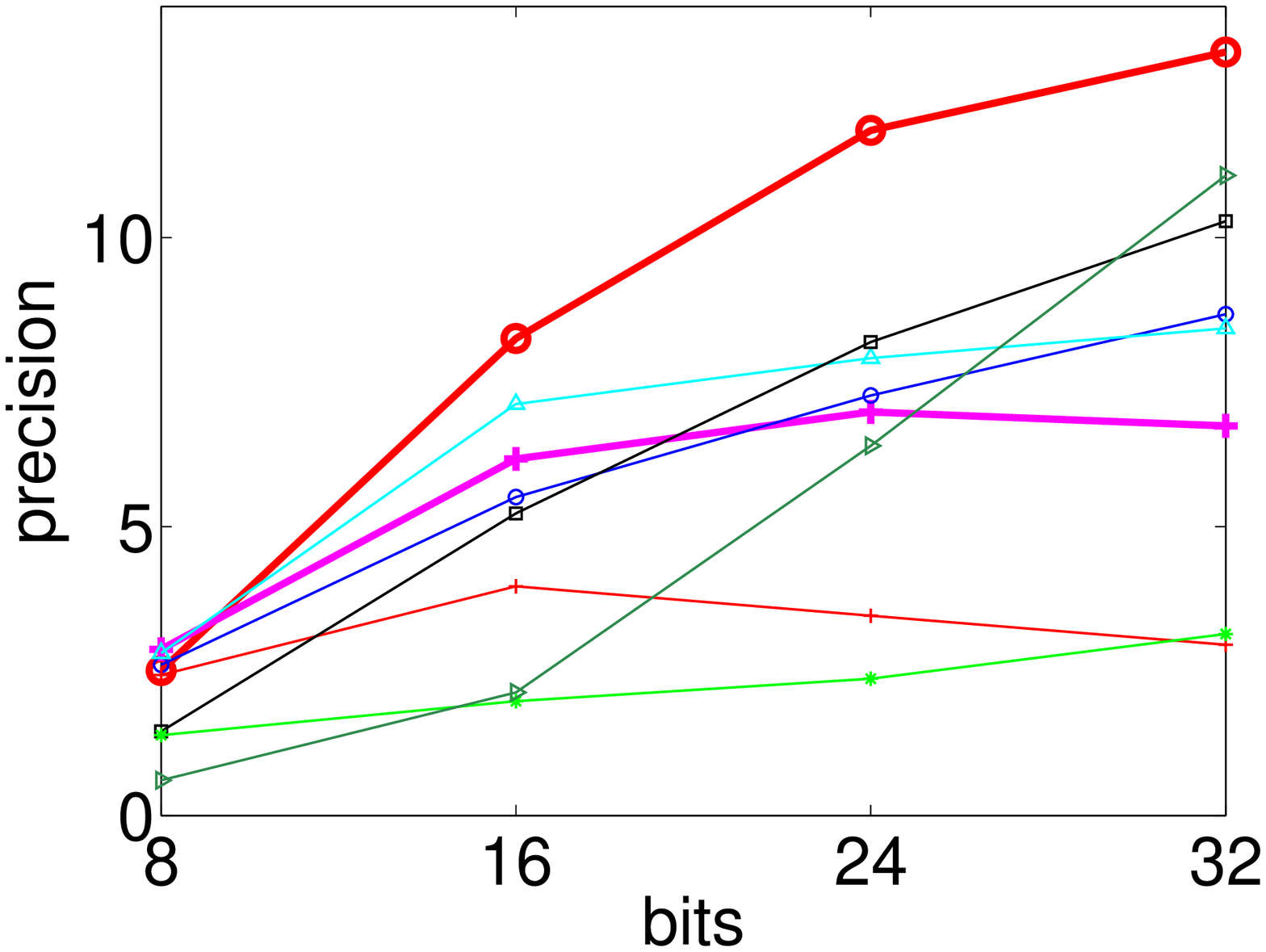} &
    \psfrag{bits}[][]{}
    \includegraphics[width=.33\linewidth]{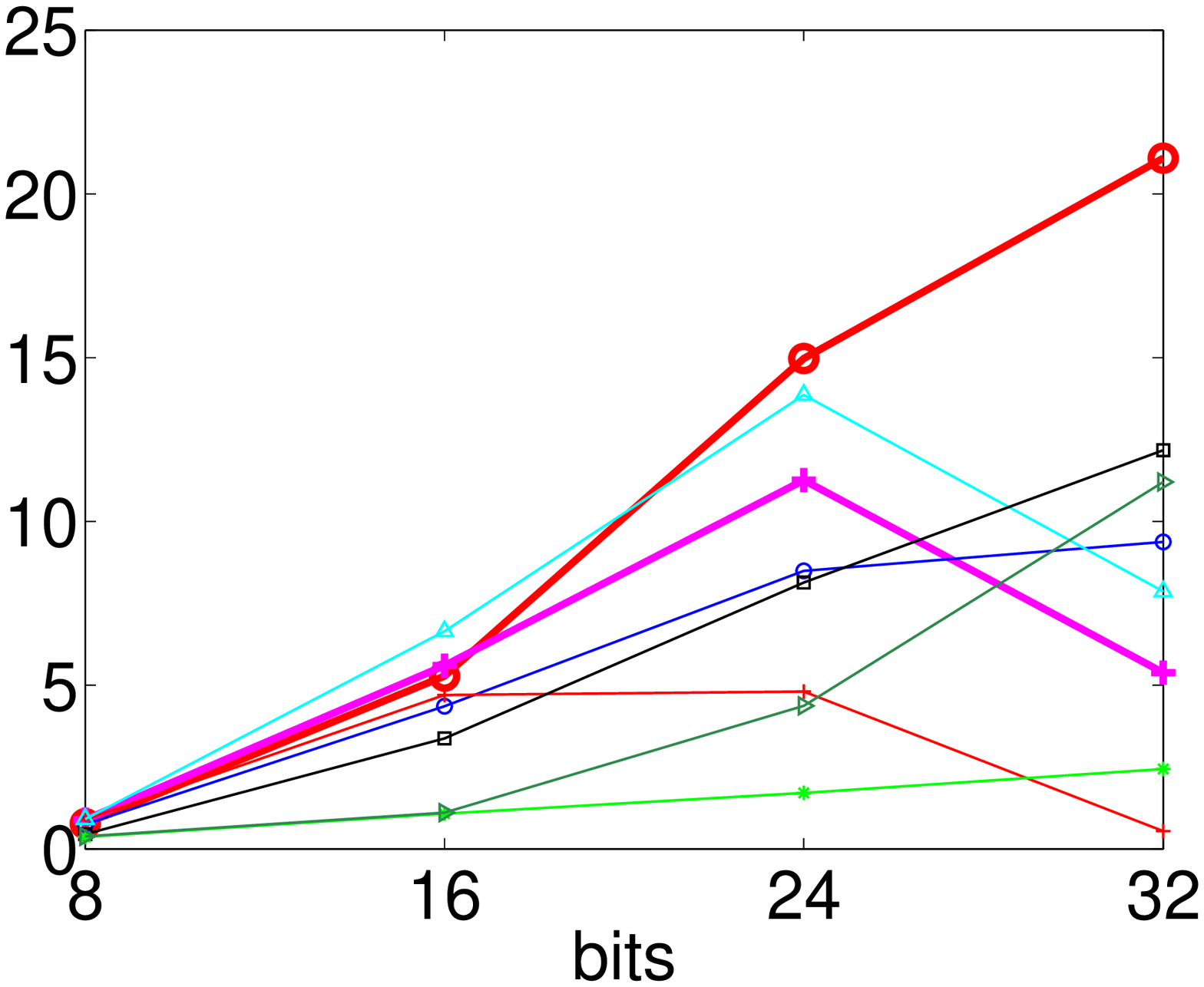} &
    \psfrag{bits}[][]{}
    \includegraphics[width=.33\linewidth]{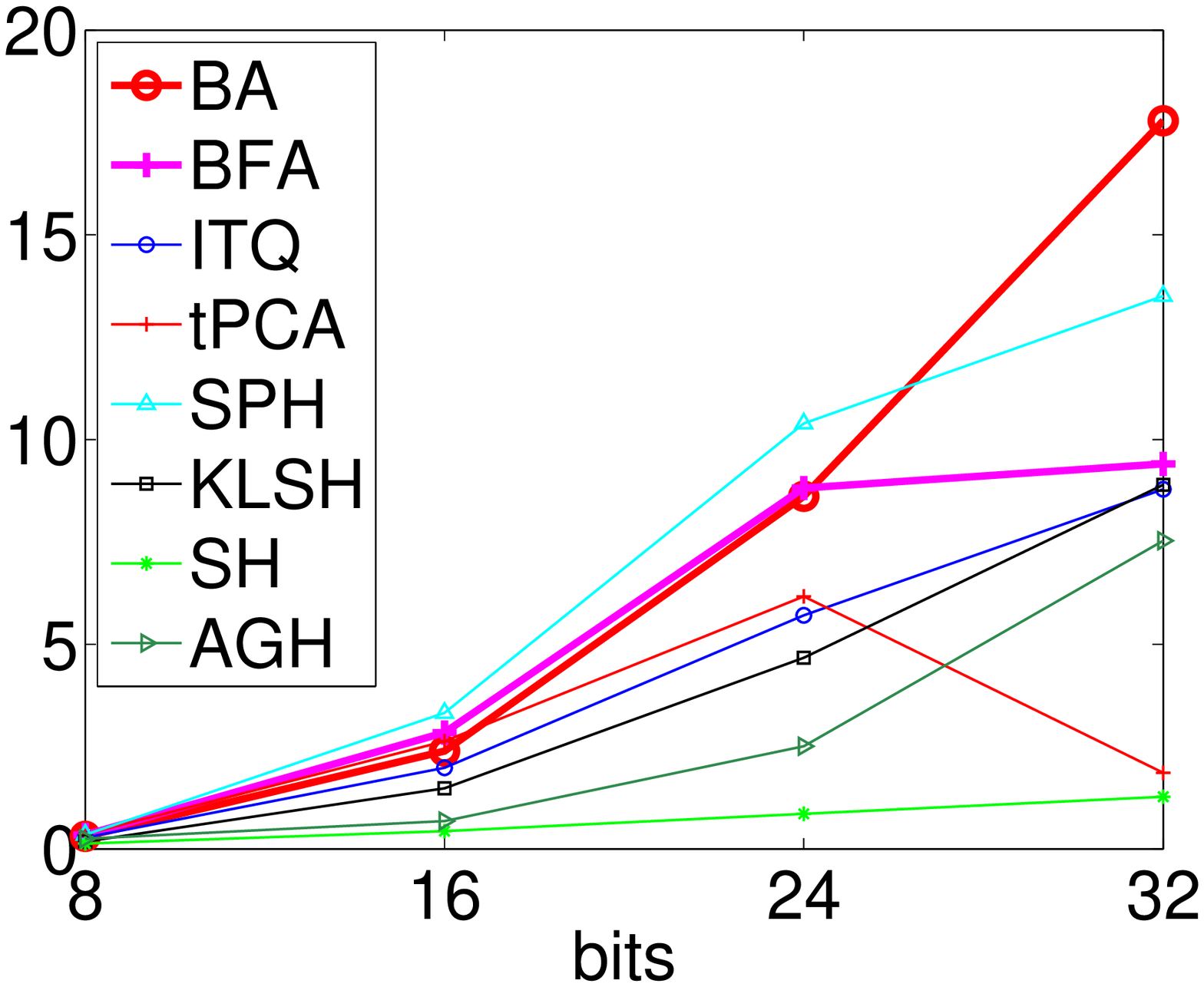} \\
    \psfrag{precision}[][t]{precision $K=500$}
    \psfrag{bits}[][]{}
    \includegraphics[width=.33\linewidth,height=0.265\linewidth]{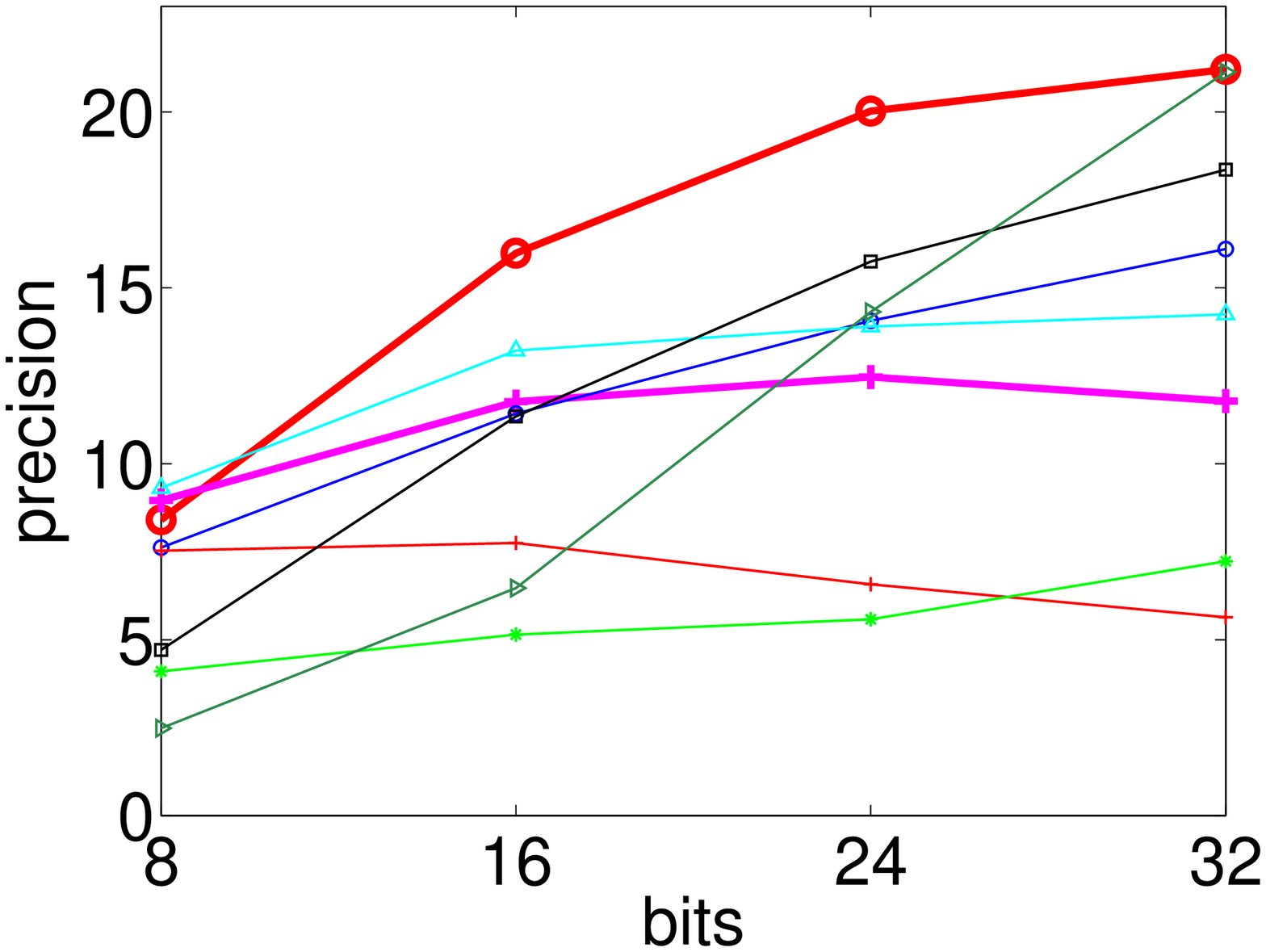} &
    \psfrag{bits}[][]{}
    \includegraphics[width=.33\linewidth]{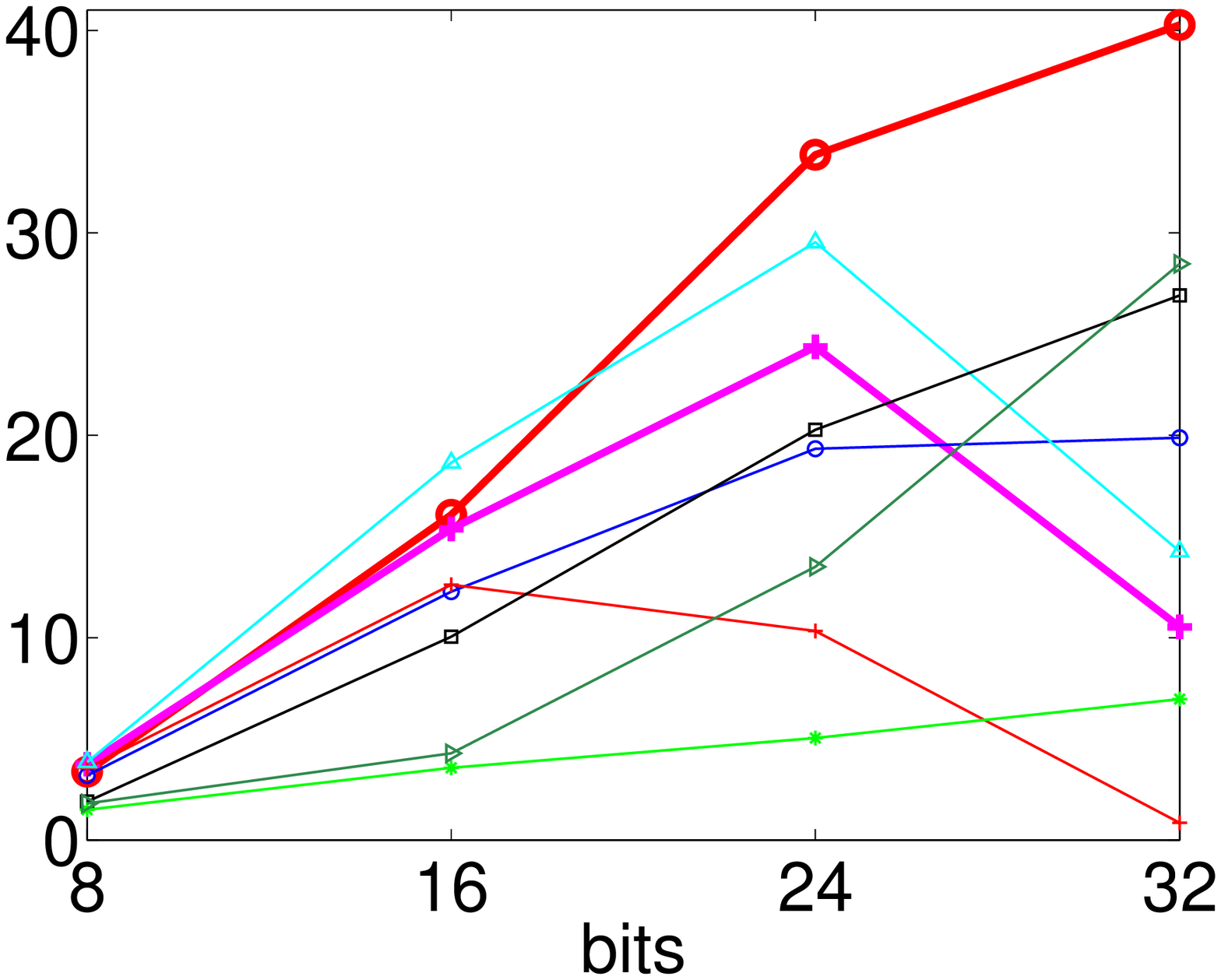} &
    \psfrag{bits}[][]{}
    \includegraphics[width=.33\linewidth]{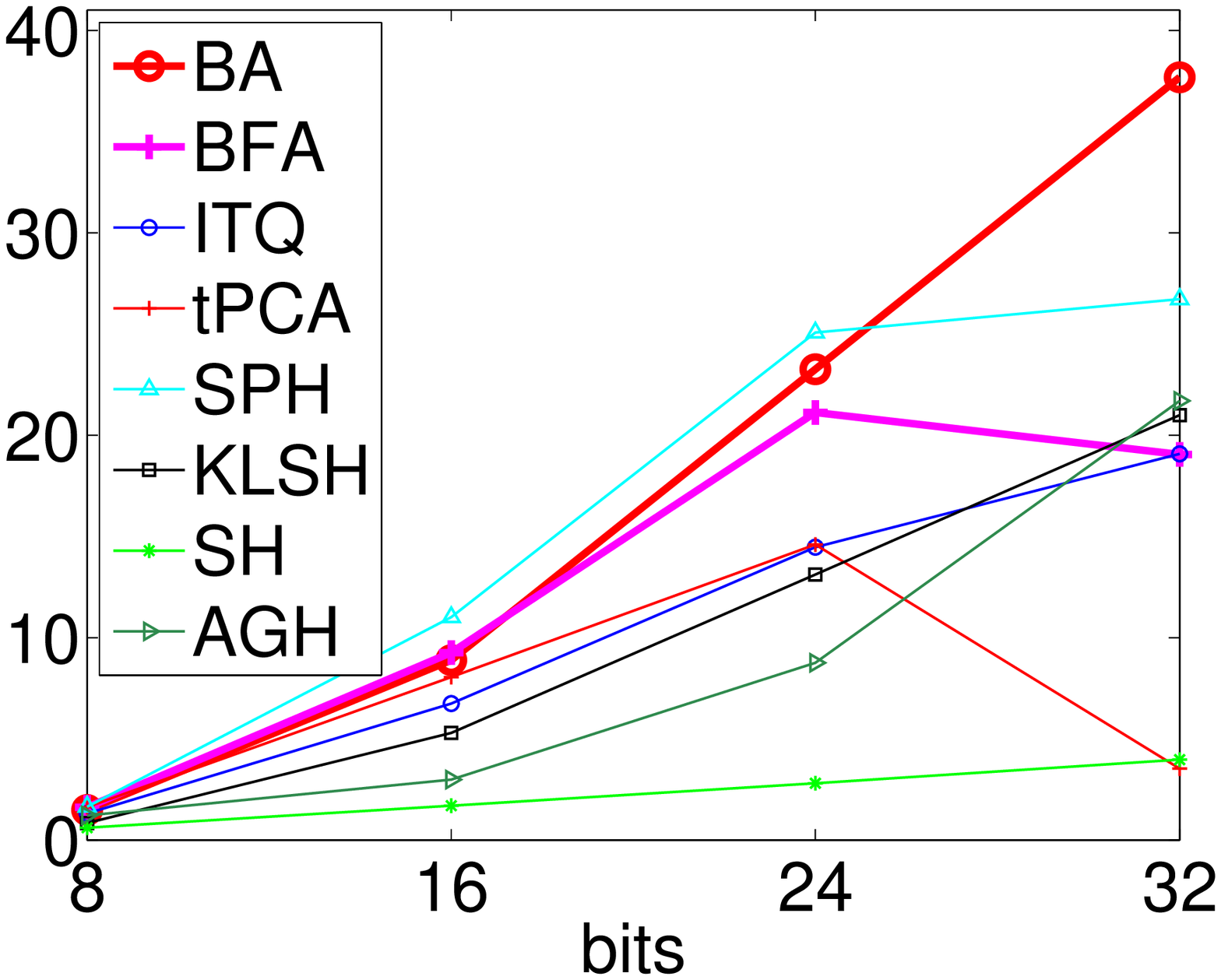} \\
    \psfrag{precision}[][t]{precision $K=1\,500$}
    \psfrag{bits}[][b]{$L$}
    \includegraphics[width=.33\linewidth,height=0.265\linewidth]{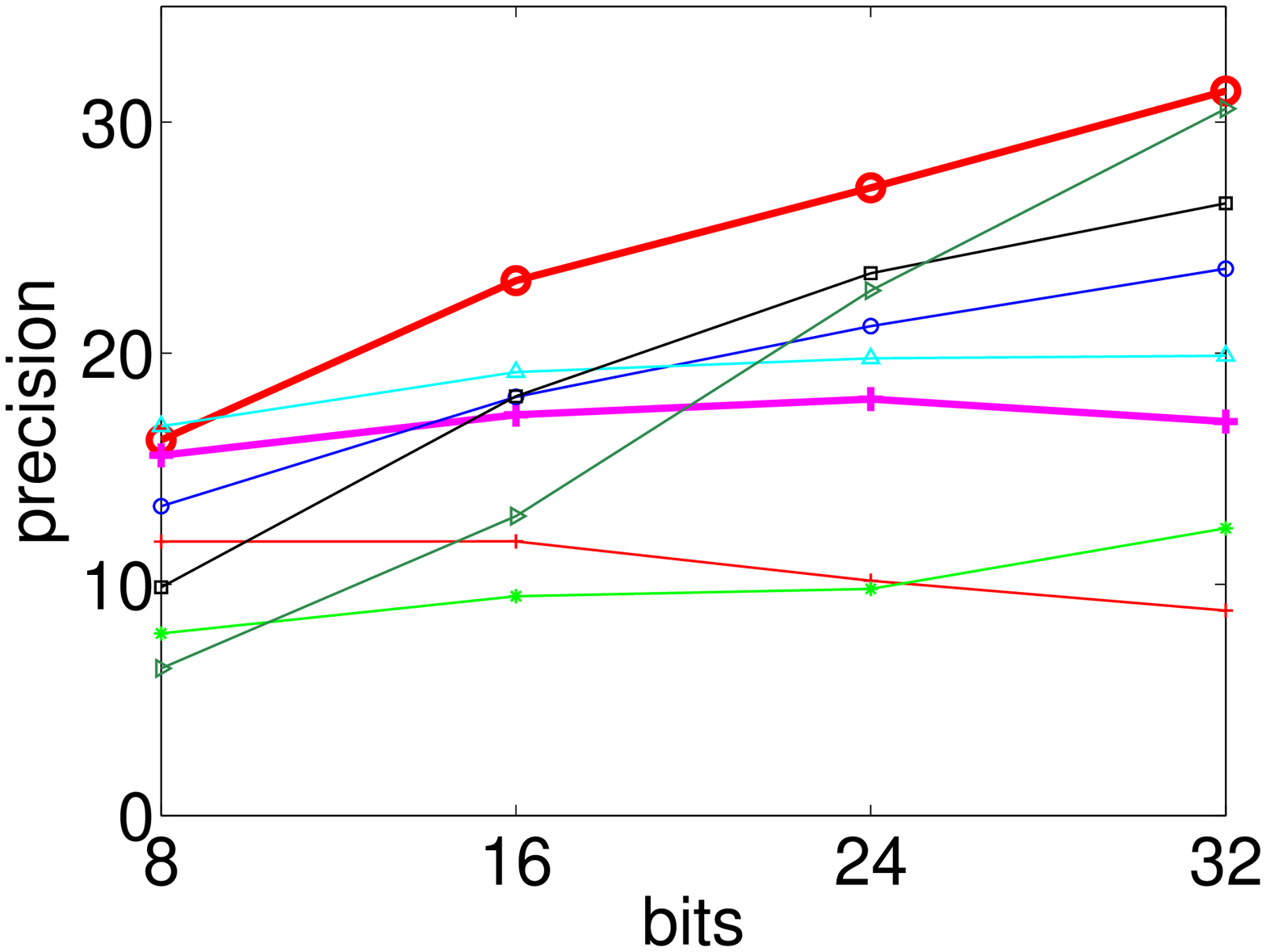} &
    \psfrag{bits}[][b]{$L$}
    \includegraphics[width=.33\linewidth]{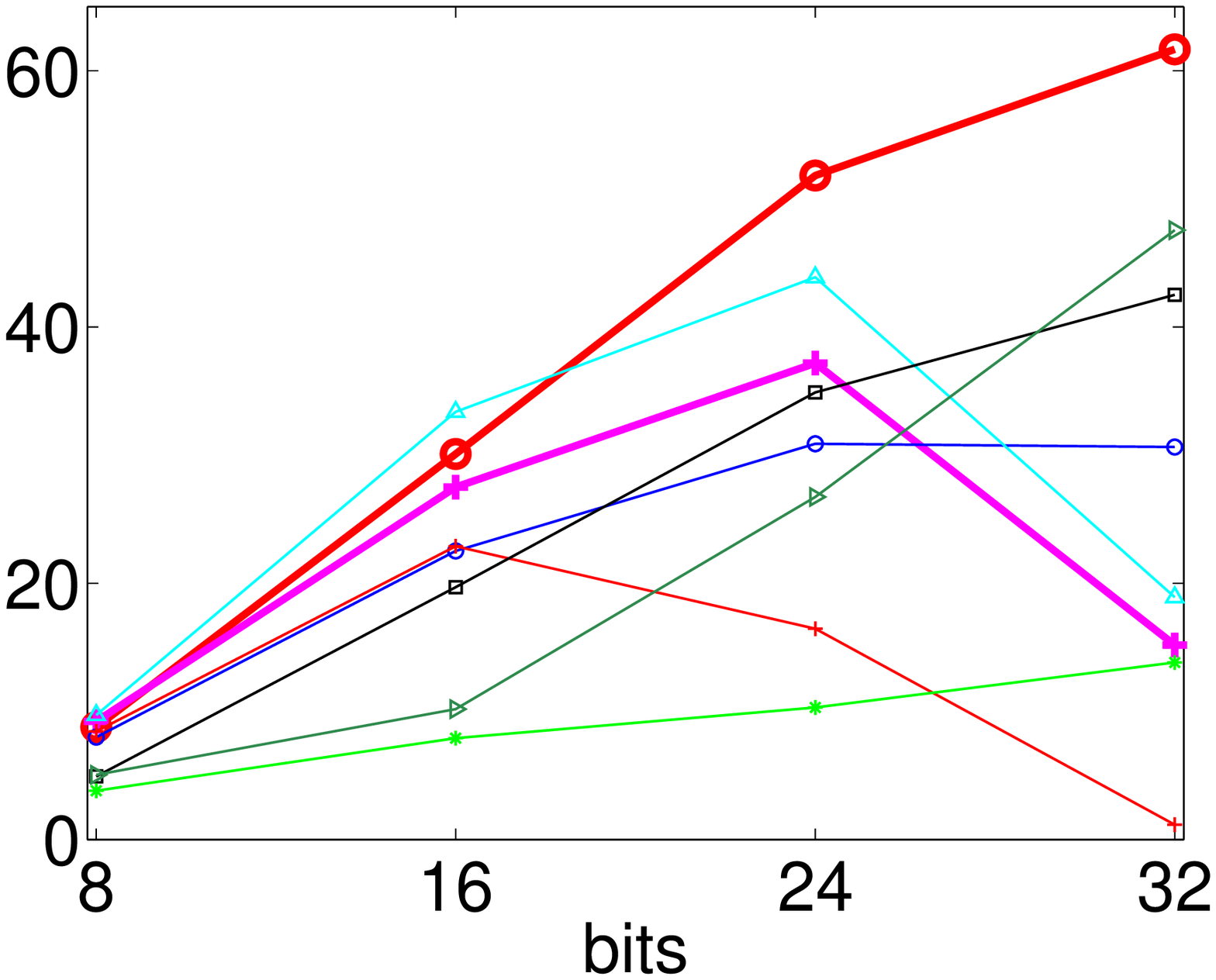} &
    \psfrag{bits}[][b]{$L$}
    \includegraphics[width=.33\linewidth]{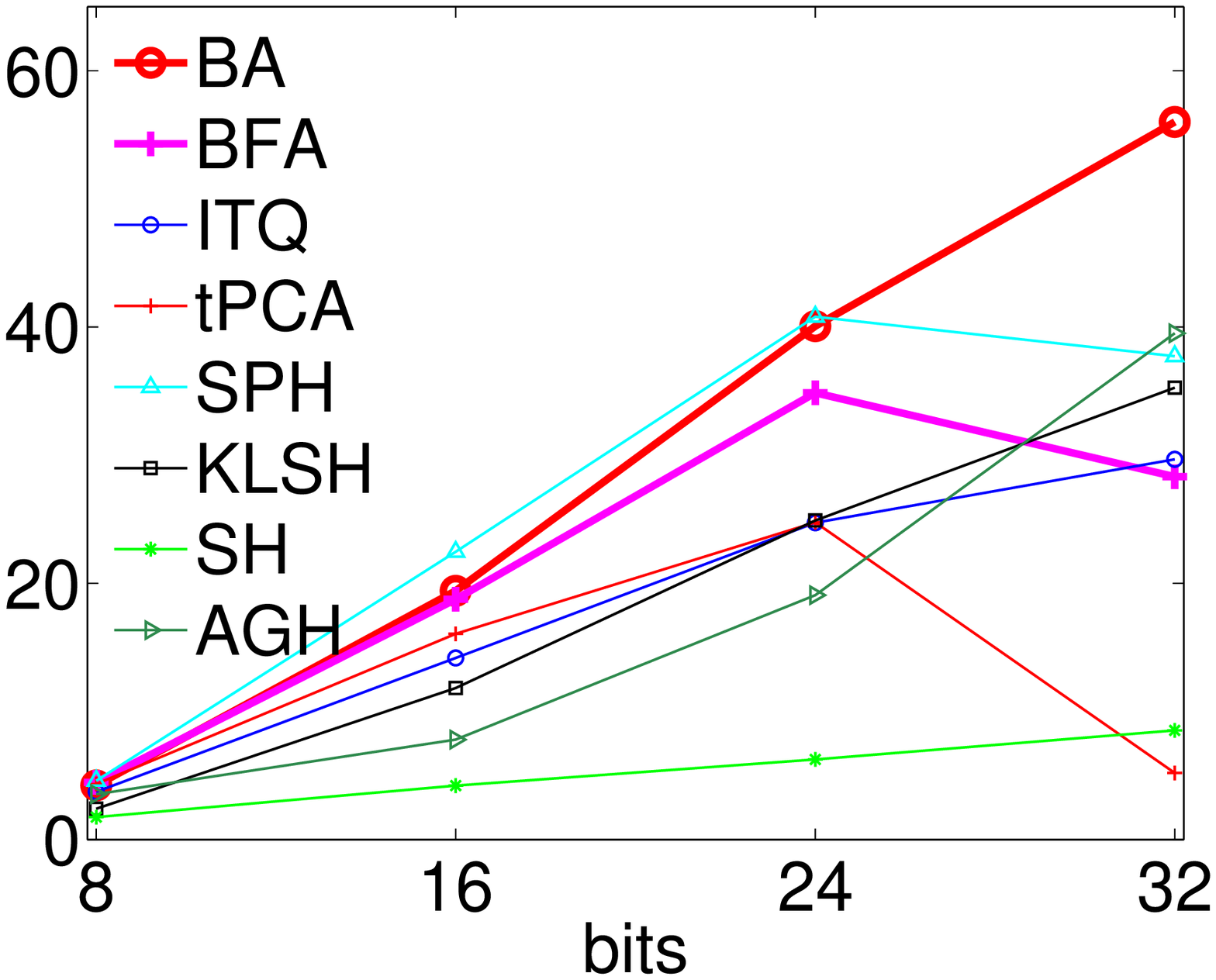}
  \end{tabular} \\[2ex]
  \psfrag{recall}[][b]{recall}
  \psfrag{precision}[][t]{precision}
  \begin{tabular}{@{}c@{\hspace{0\linewidth}}c@{\hspace{0\linewidth}}c@{\hspace{0\linewidth}}c@{\hspace{0\linewidth}}c@{}}
    & $L=8$ & $L=16$ & $L=24$ & $L=32$ \\
    \rotatebox{90}{\hspace{8ex}precision} &
    \includegraphics[width=0.240\linewidth]{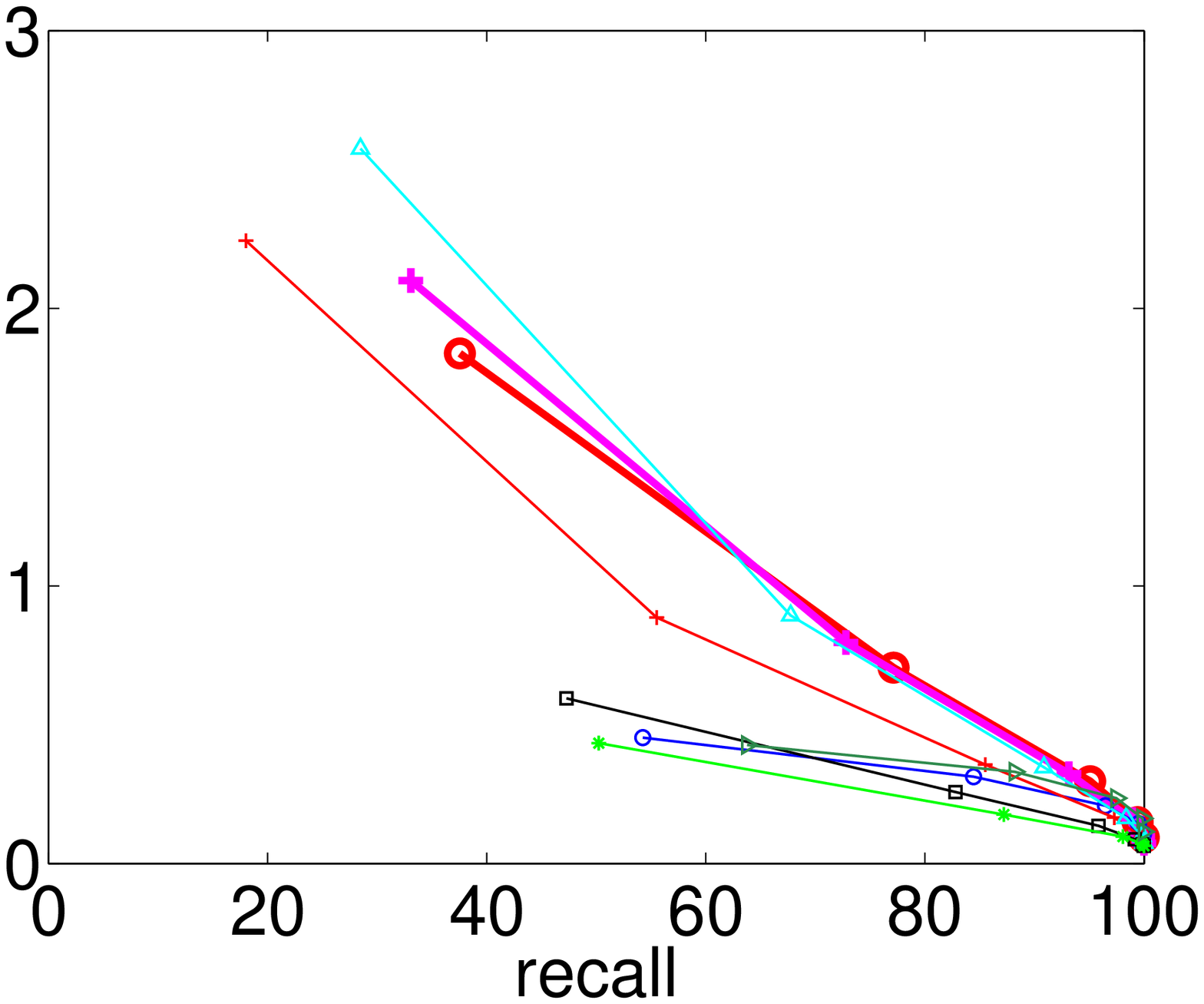} &
    \includegraphics[width=0.248\linewidth]{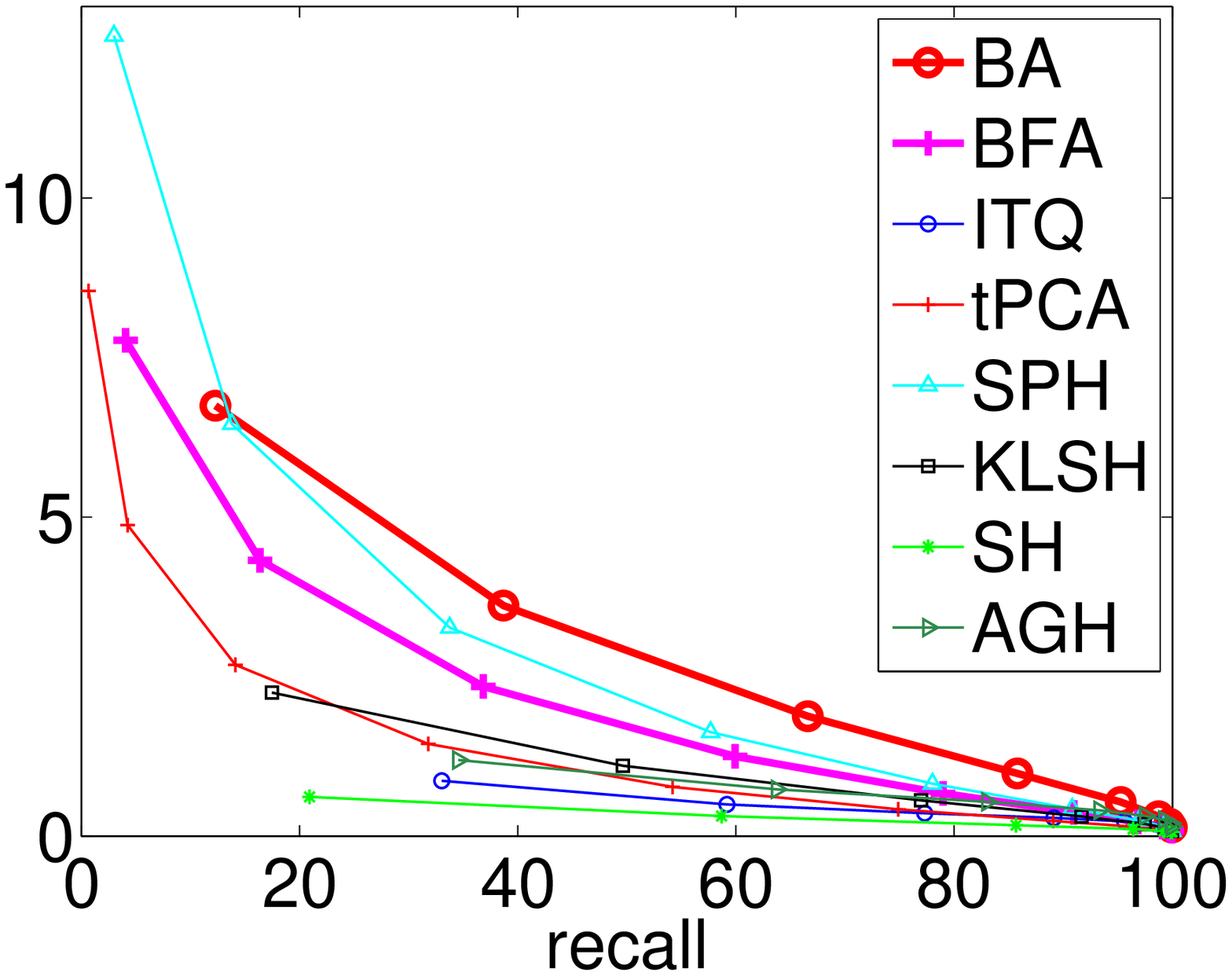} &
    \includegraphics[width=0.248\linewidth]{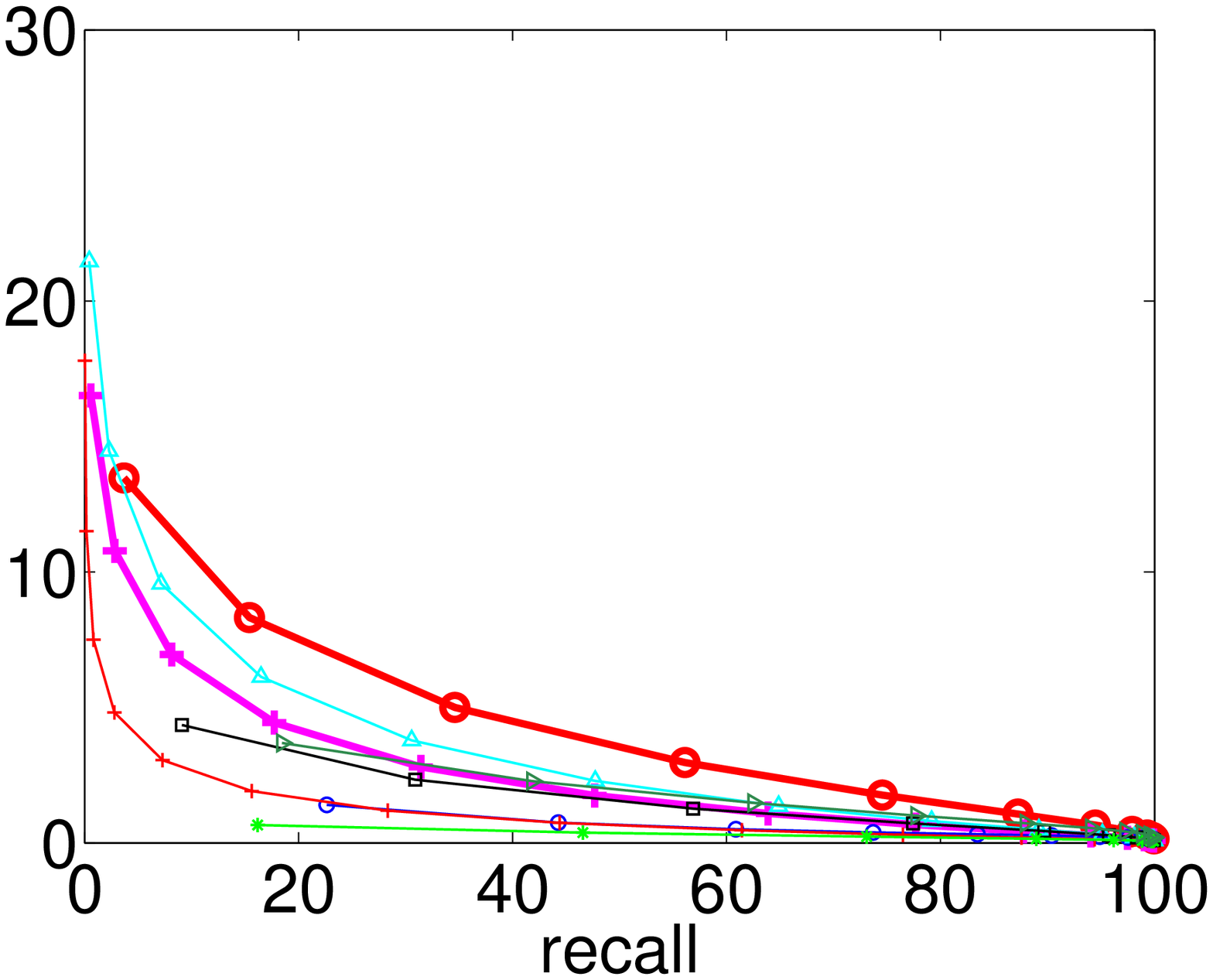} &
    \includegraphics[width=0.248\linewidth]{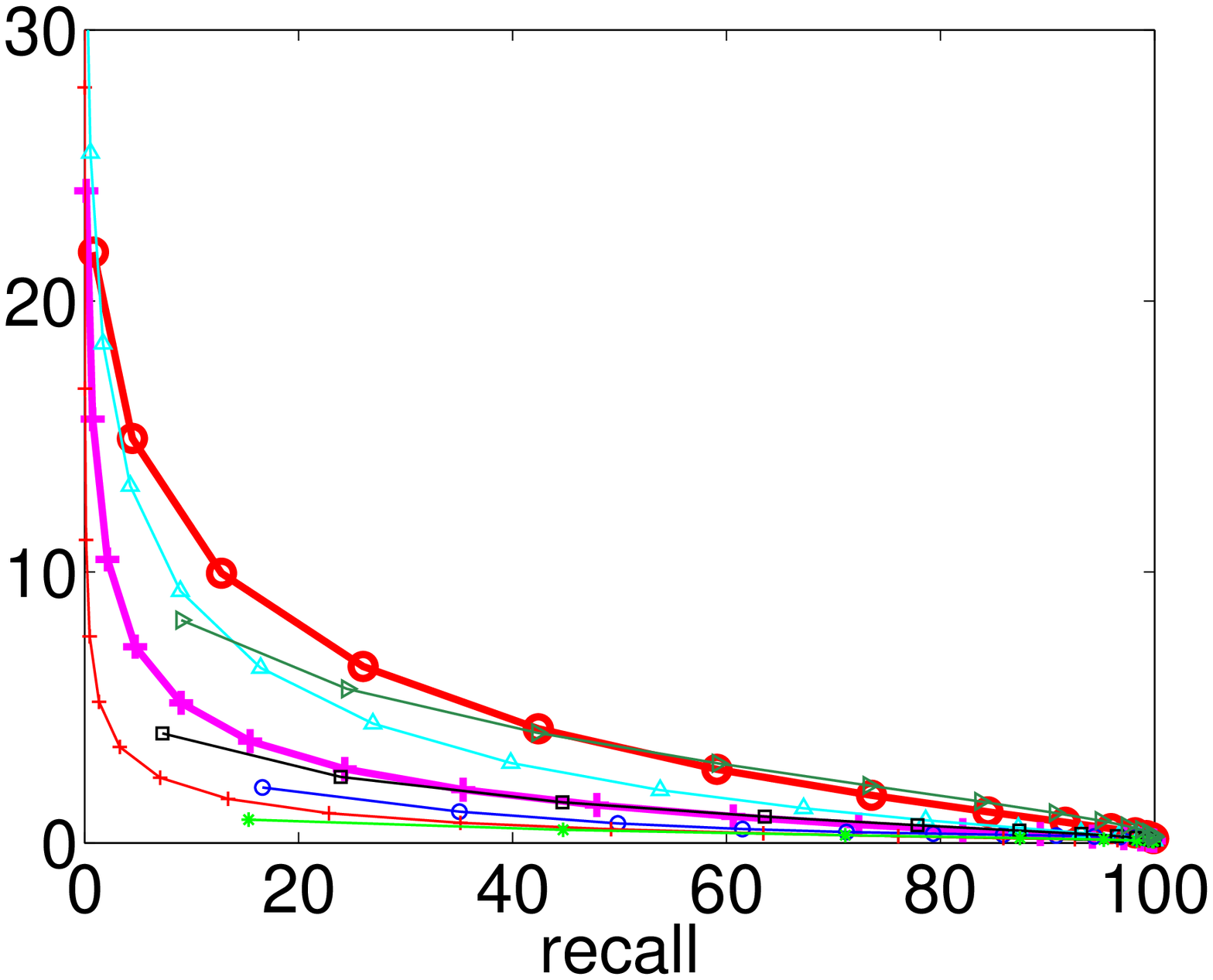}
  \end{tabular}
  \caption{Precision and precision/recall in NUS-WIDE dataset. \emph{Top block}: precision using as ground truth $K=100$ (\emph{top panels}), $500$ (\emph{middle panels}) and $1\,500$ (\emph{bottom panels}) nearest images to the query image in the training set. Retrieved neighbors: $k$ nearest images to the query image, where $k$ is equal to the size of the ground-truth set (\emph{left}), or neighbors at Hamming distance $\le r =1$ (\emph{middle}) or $\le r =2$ (\emph{right}), searching the training set binary codes. \emph{Bottom block}: precision/recall using $L=8$ to $32$ bits. Ground truth: $K=100$ nearest images to the query image in the training set. Retrieved neighbors: training images at Hamming distance $\le r$ of the query in binary space. Within each plot, the markers left to right along each curve show $r = 0,1,2\dots$. Test points not returning any neighbors are ignored in the precision/recall plots.}
  \label{f:NUS-WIDE}
\end{figure}

\begin{figure}[t]
  \centering
  \psfrag{precision}[][t]{precision}
  \begin{tabular}{@{}c@{\hspace{0.0\linewidth}}c@{\hspace{0.0\linewidth}}c@{\hspace{0.0\linewidth}}c@{}}
    $k=10\,000$ neighbors & Hamming distance $\le 1$ & Hamming distance $\le 2$ & Hamming distance $\le 3$ \\
    \psfrag{bits}[][b]{$L$}
    \includegraphics[width=0.265\linewidth]{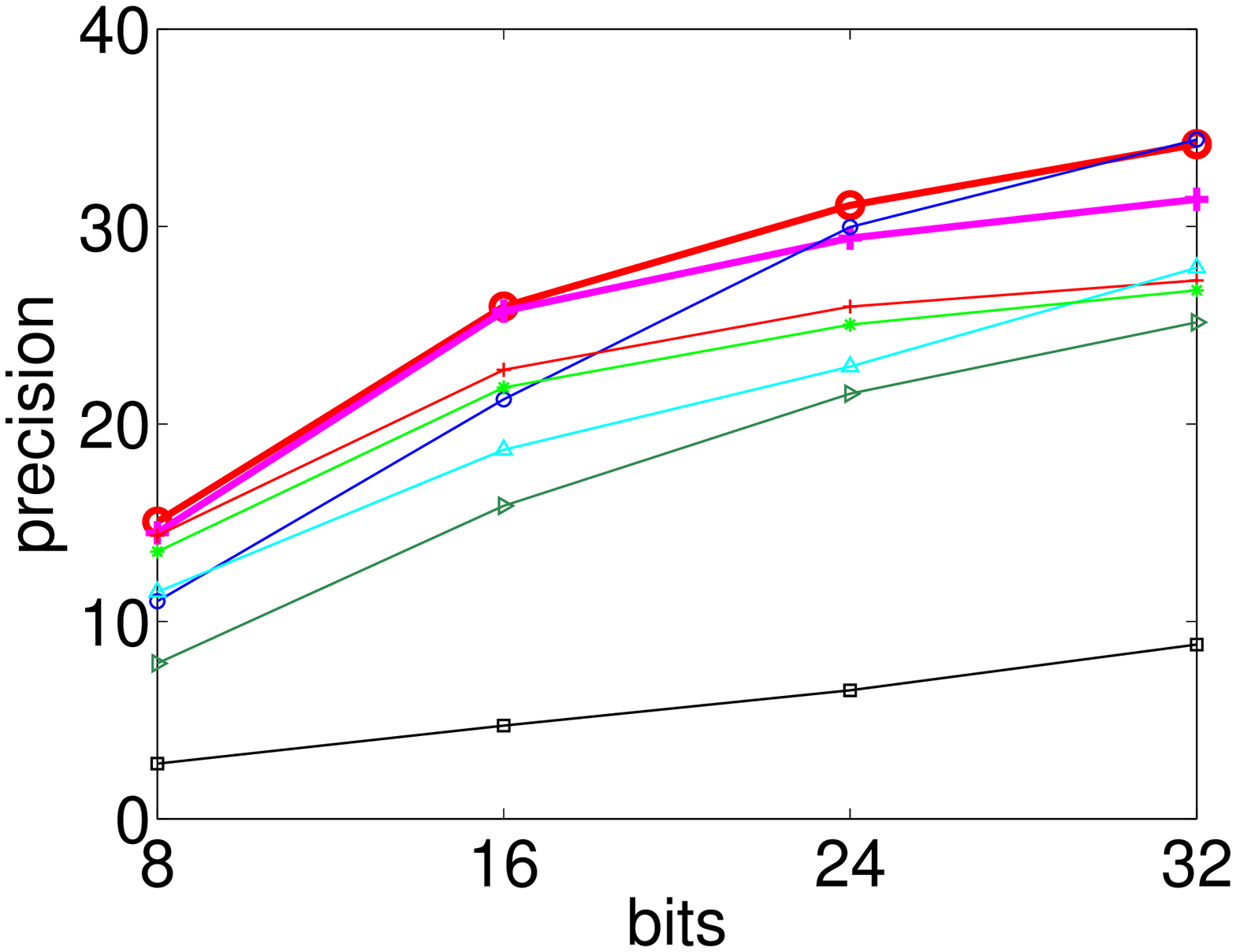} &
    \psfrag{bits}[][b]{$L$}
    \includegraphics[width=0.245\linewidth]{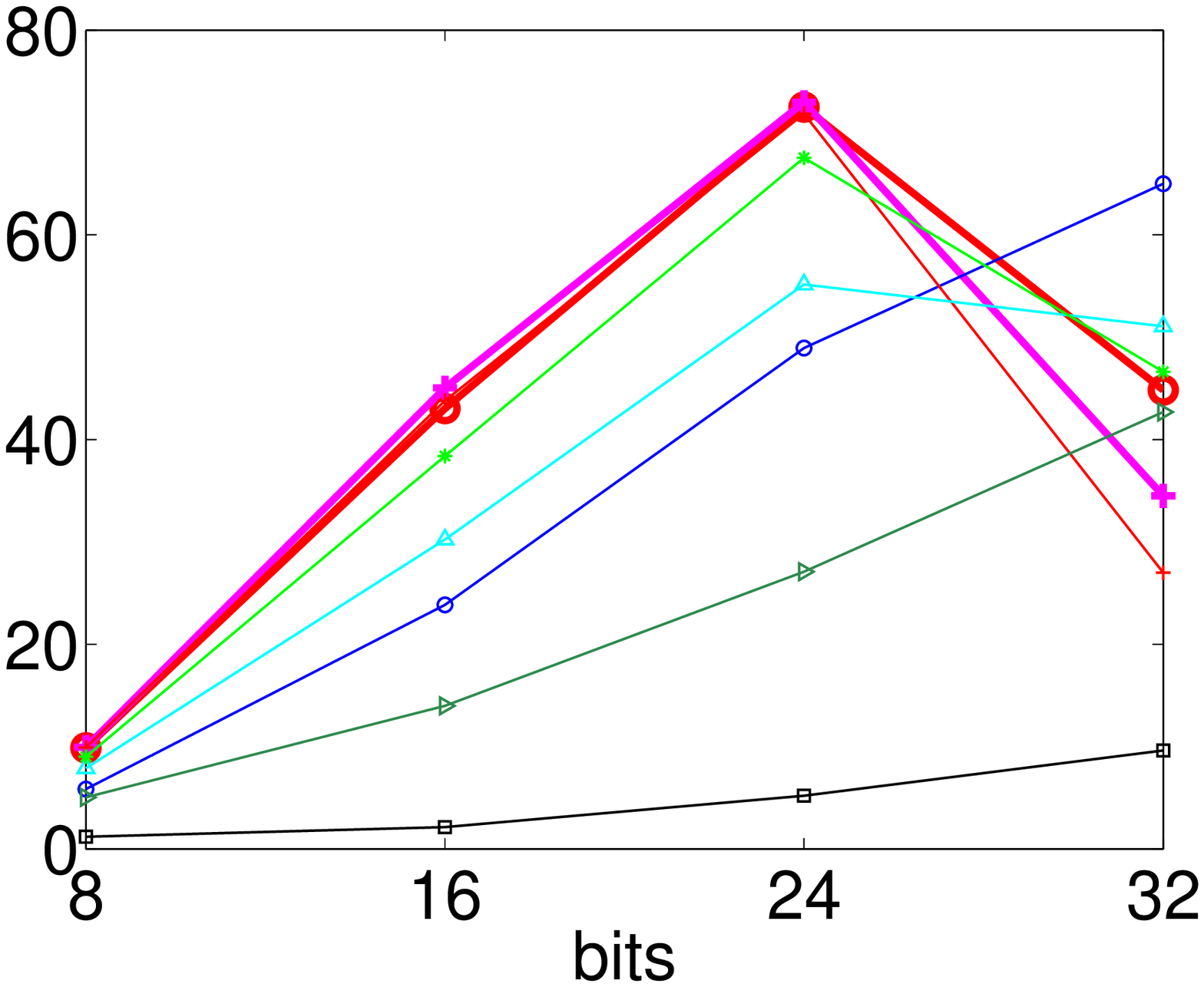} &
    \psfrag{bits}[][b]{$L$}
    \includegraphics[width=0.245\linewidth]{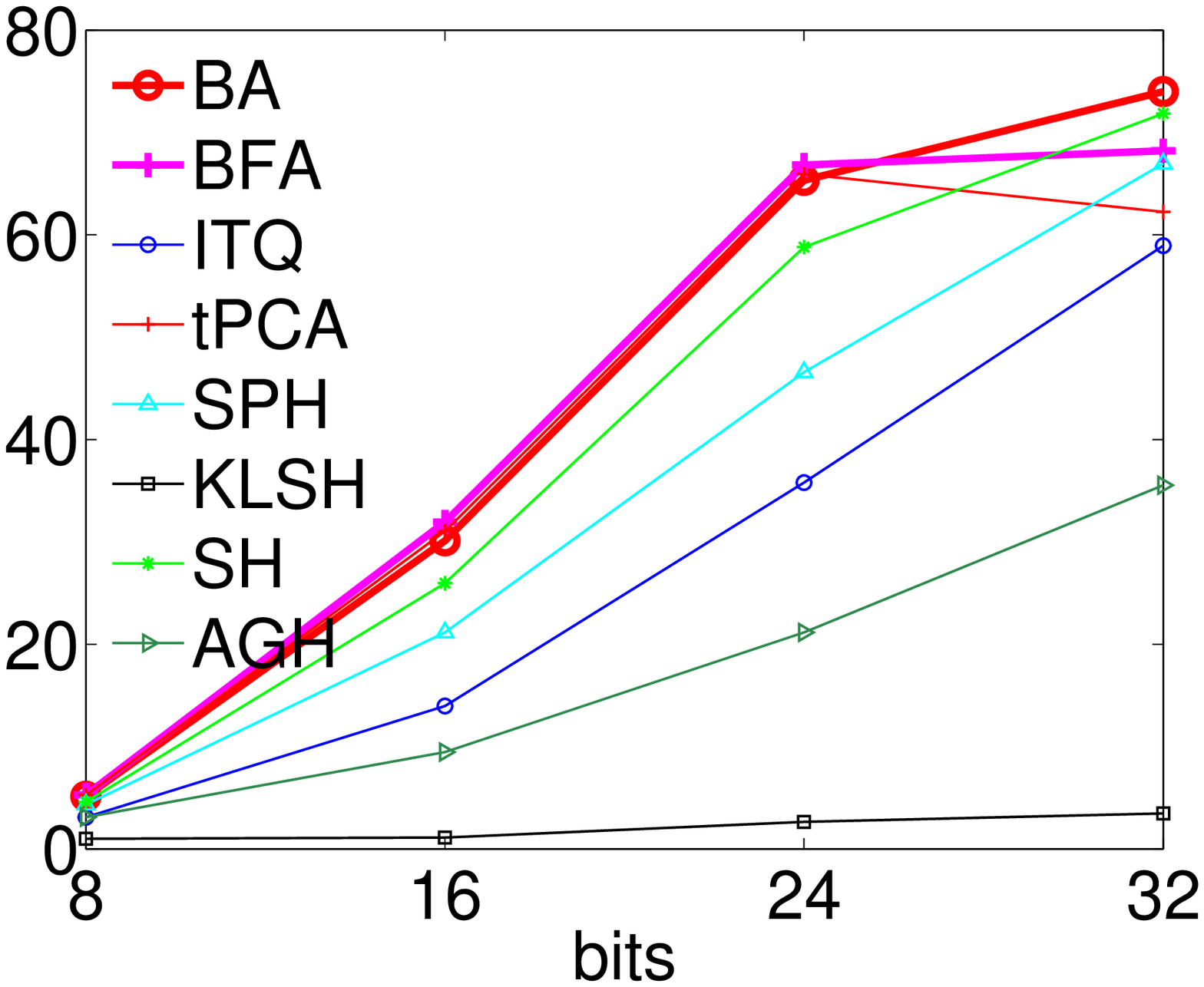} &
    \psfrag{bits}[][b]{$L$}
    \includegraphics[width=0.245\linewidth]{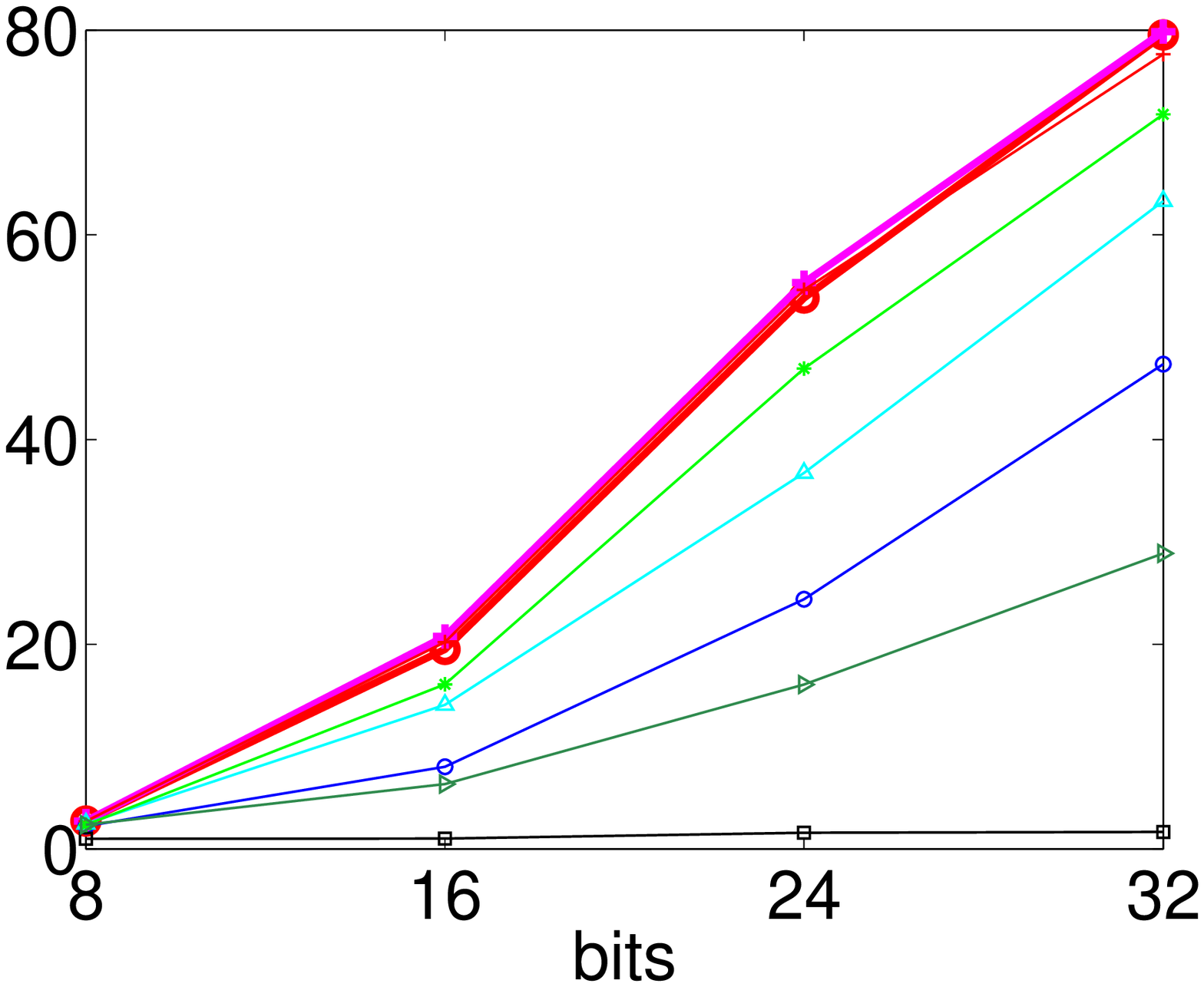} \\
    \psfrag{bits}{}
    \includegraphics[width=0.265\linewidth]{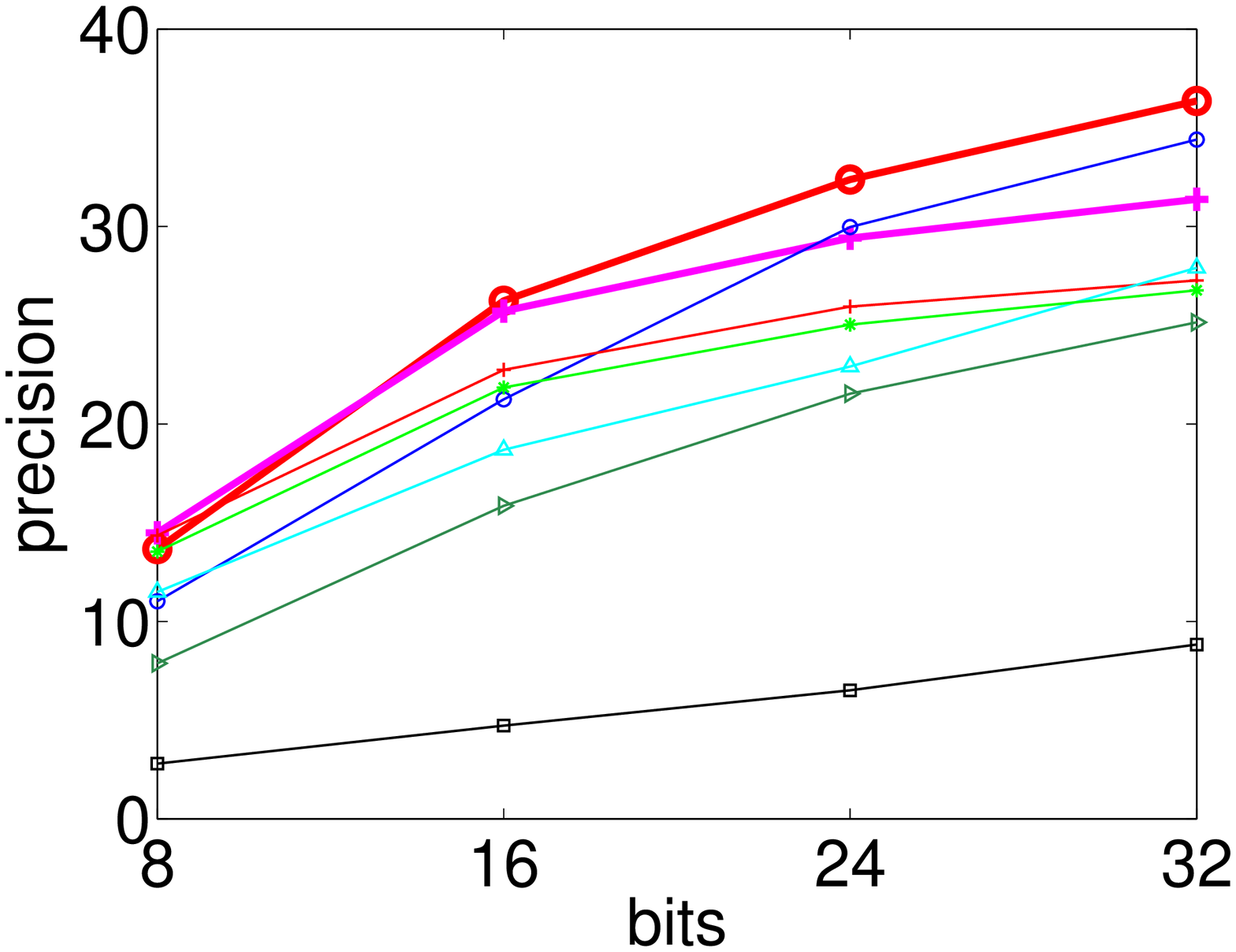} &
    \psfrag{bits}{}
    \includegraphics[width=0.245\linewidth]{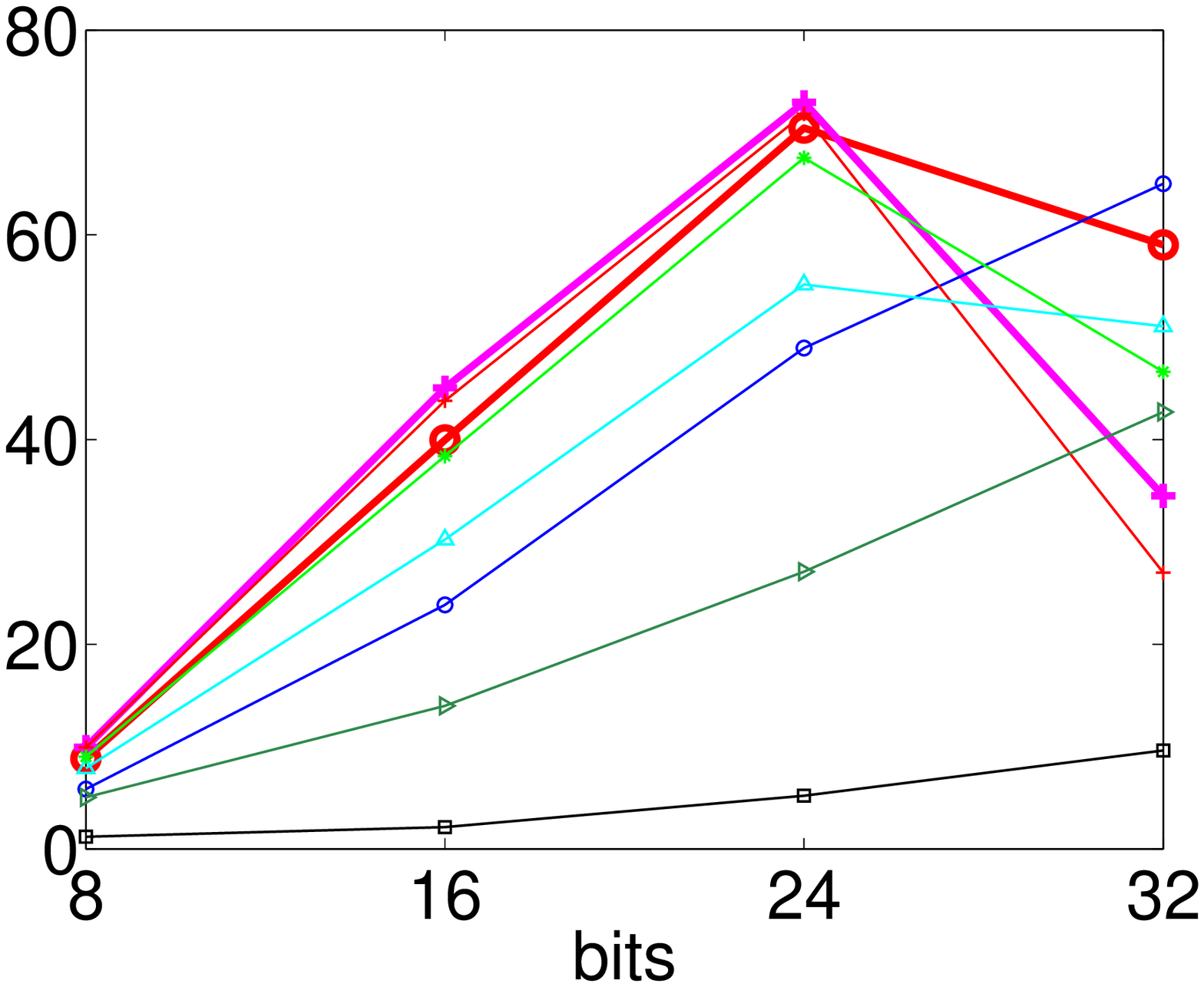} &
    \psfrag{bits}{}
    \includegraphics[width=0.245\linewidth]{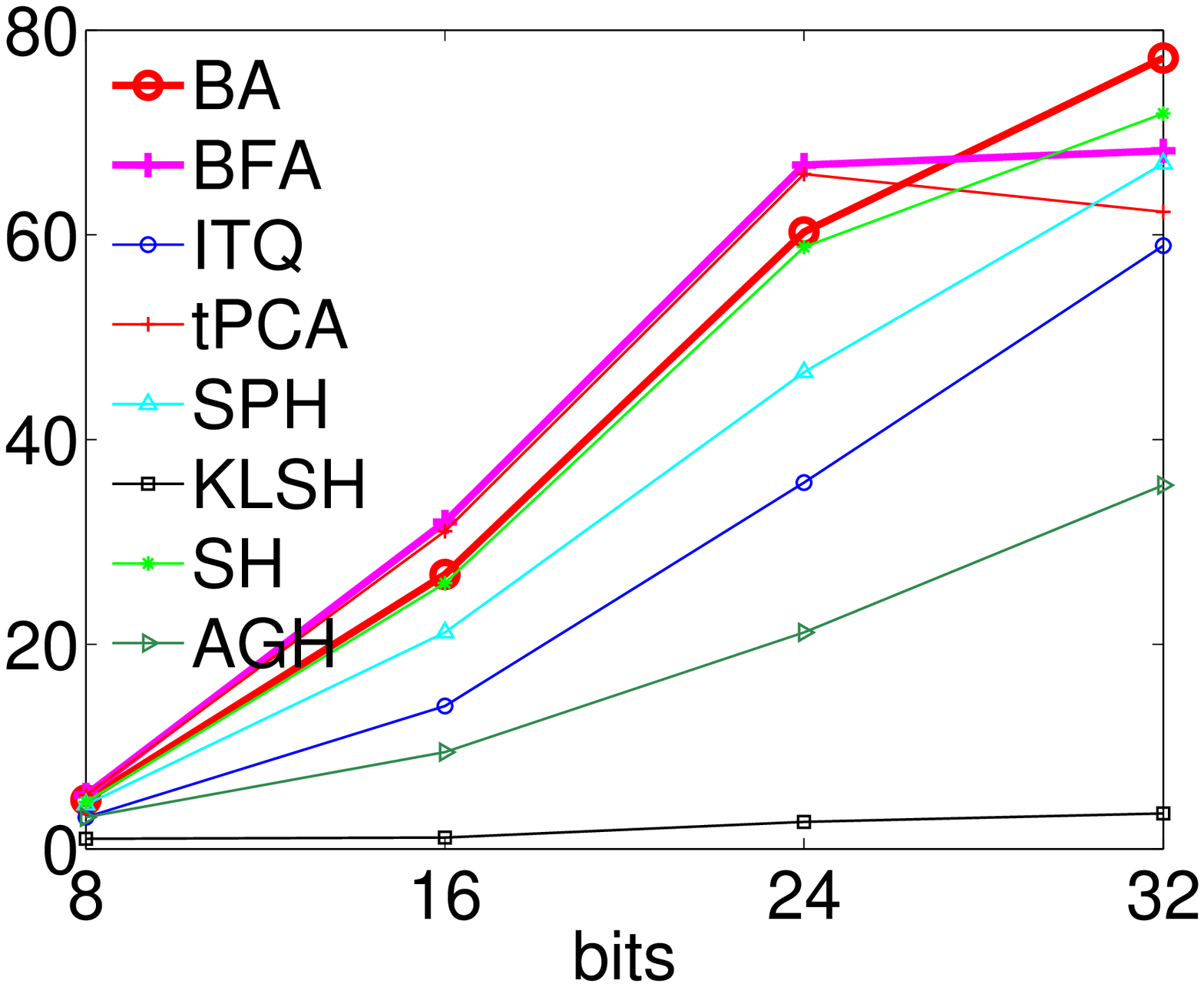} &
    \psfrag{bits}{}
    \includegraphics[width=0.245\linewidth]{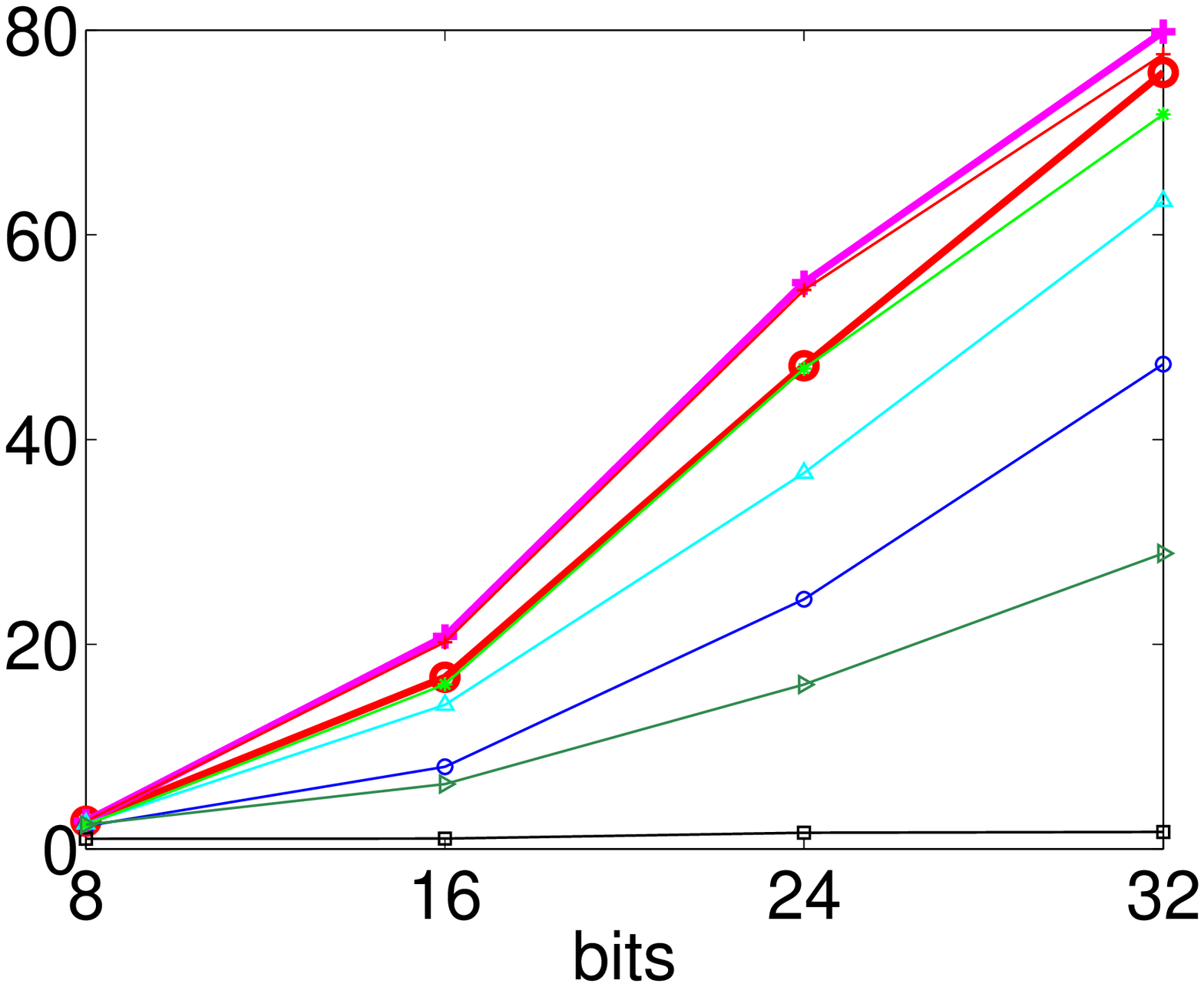}
  \end{tabular}
  \caption{Precision in ANNSIFT-1M using $L=8$ to $32$ bits. Like fig.~\ref{f:NUS-WIDE} (top block) but with ground truth $K=10\,000$ and retrieved neighbors $k=10\,000$ or Hamming distance $\le r =1$ to $3$. BFA is initialized with tPCA. BA is initialized with AGH (\emph{top panel}) and ITQ (\emph{bottom panel}).}
  \label{f:ANNSIFT-1M}
\end{figure}

\begin{figure}[t]
  \centering
  \psfrag{Recall}[][b]{recall}
  \psfrag{bits}[][b]{$L$}
  \begin{tabular}{@{}c@{\hspace{0\linewidth}}c@{\hspace{0\linewidth}}c@{}}
    $k=100$ neighbors retrieved & Hamming distance $\le 3$ & Hamming distance $\le 4$ \\
    \psfrag{precision}[][t]{precision $K=1\,000$}
    \includegraphics[width=0.35\linewidth]{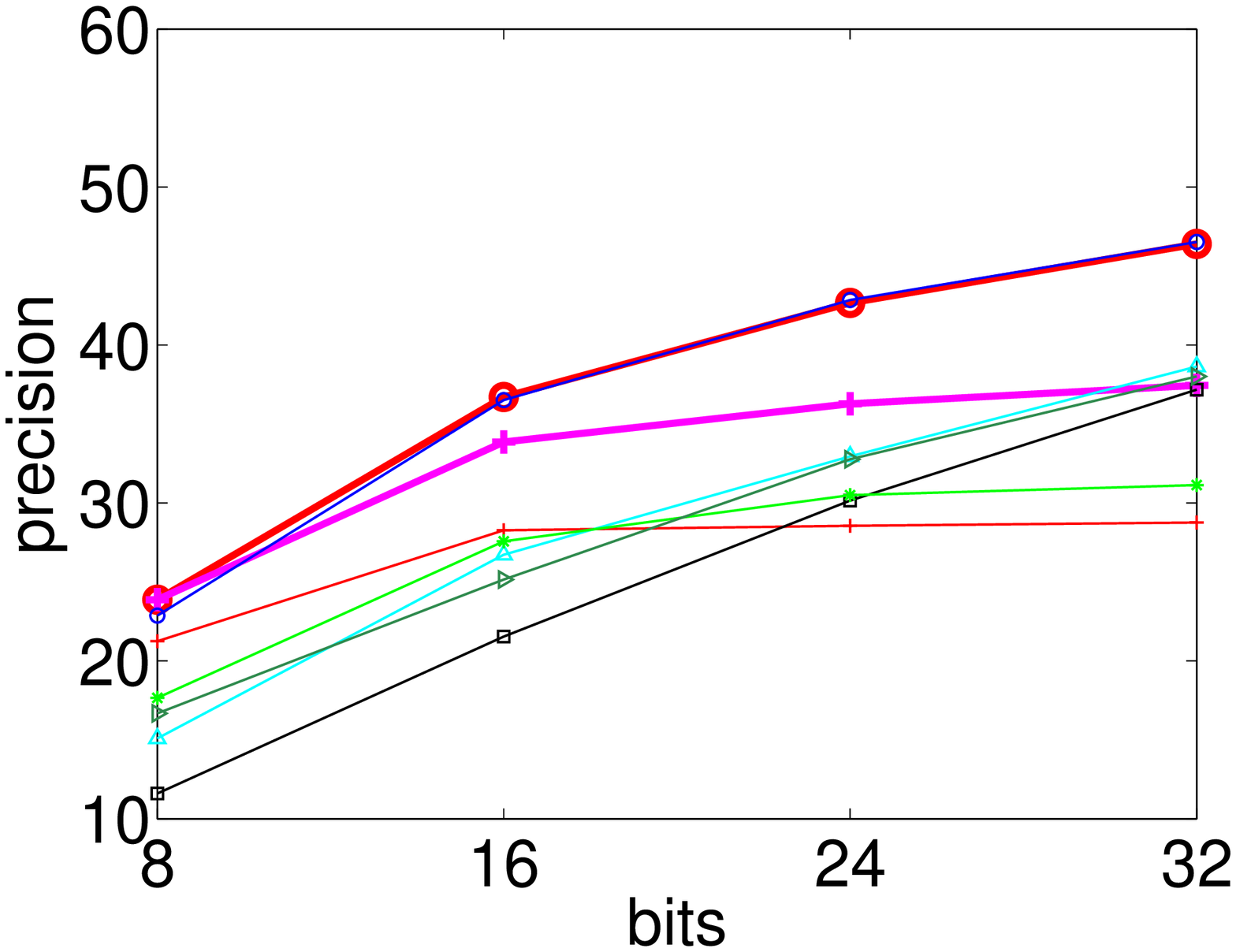} &
    \includegraphics[width=0.325\linewidth]{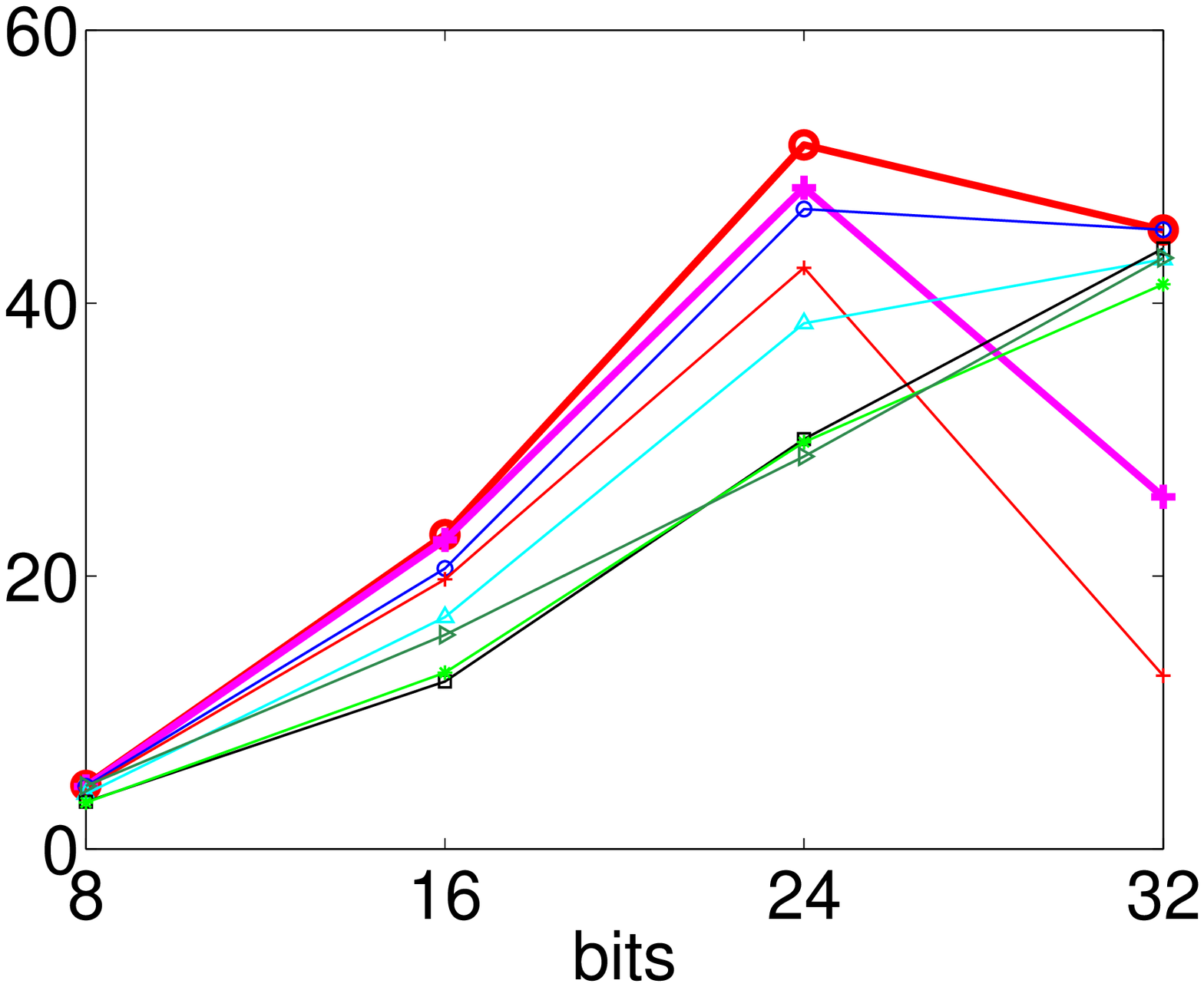} &
    \includegraphics[width=0.325\linewidth]{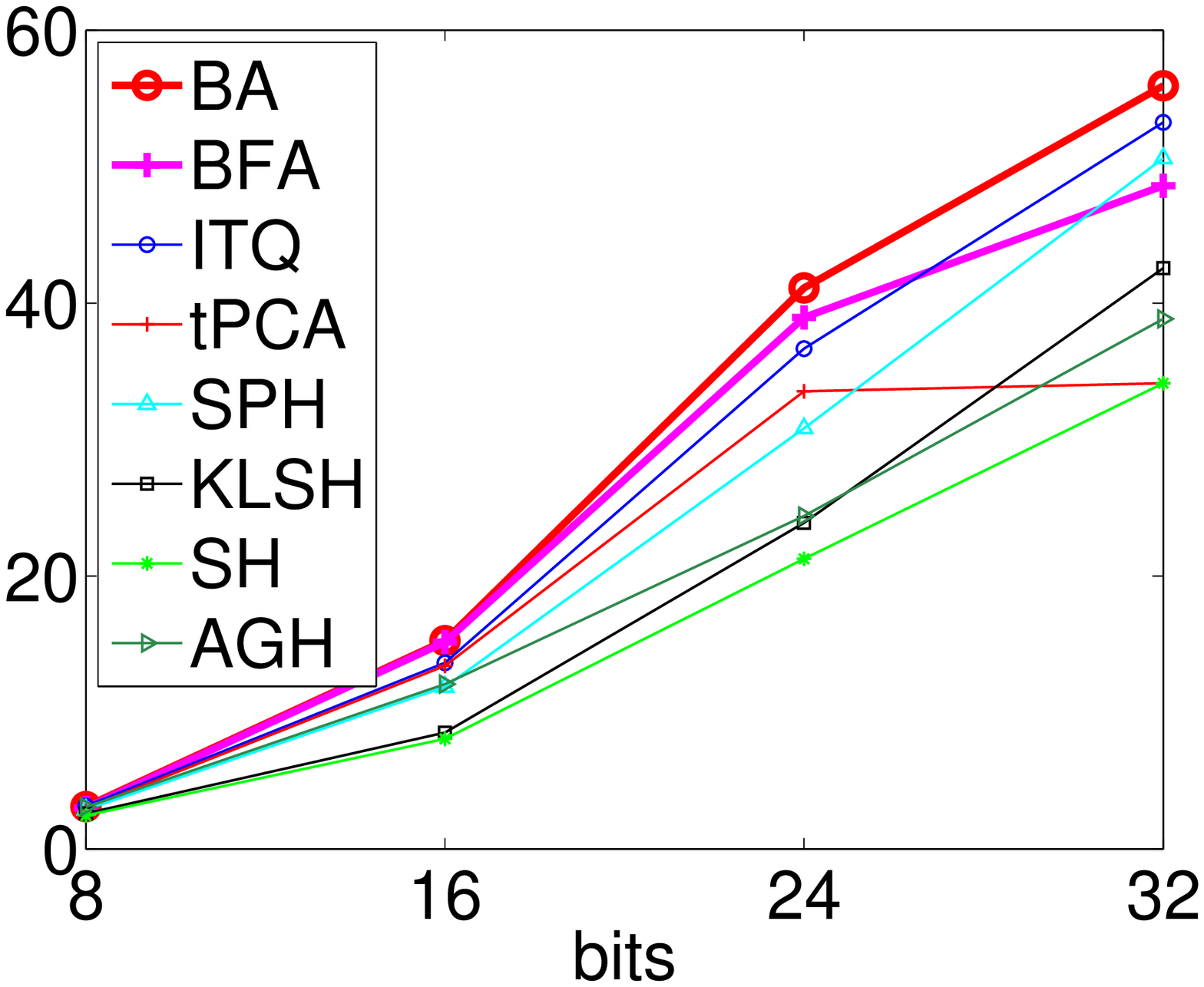}
  \end{tabular} \\[1ex]
  \begin{tabular}{@{}c@{}c@{}c@{}c@{}}
    $L = 8$ bits & $L = 16$ bits & $L = 24$ bits & $L = 32$ bits \\[-0.5ex]
    \psfrag{precision}[][t]{precision $K=1\,000$}
    \includegraphics[width=0.25\linewidth,height=0.202\linewidth]{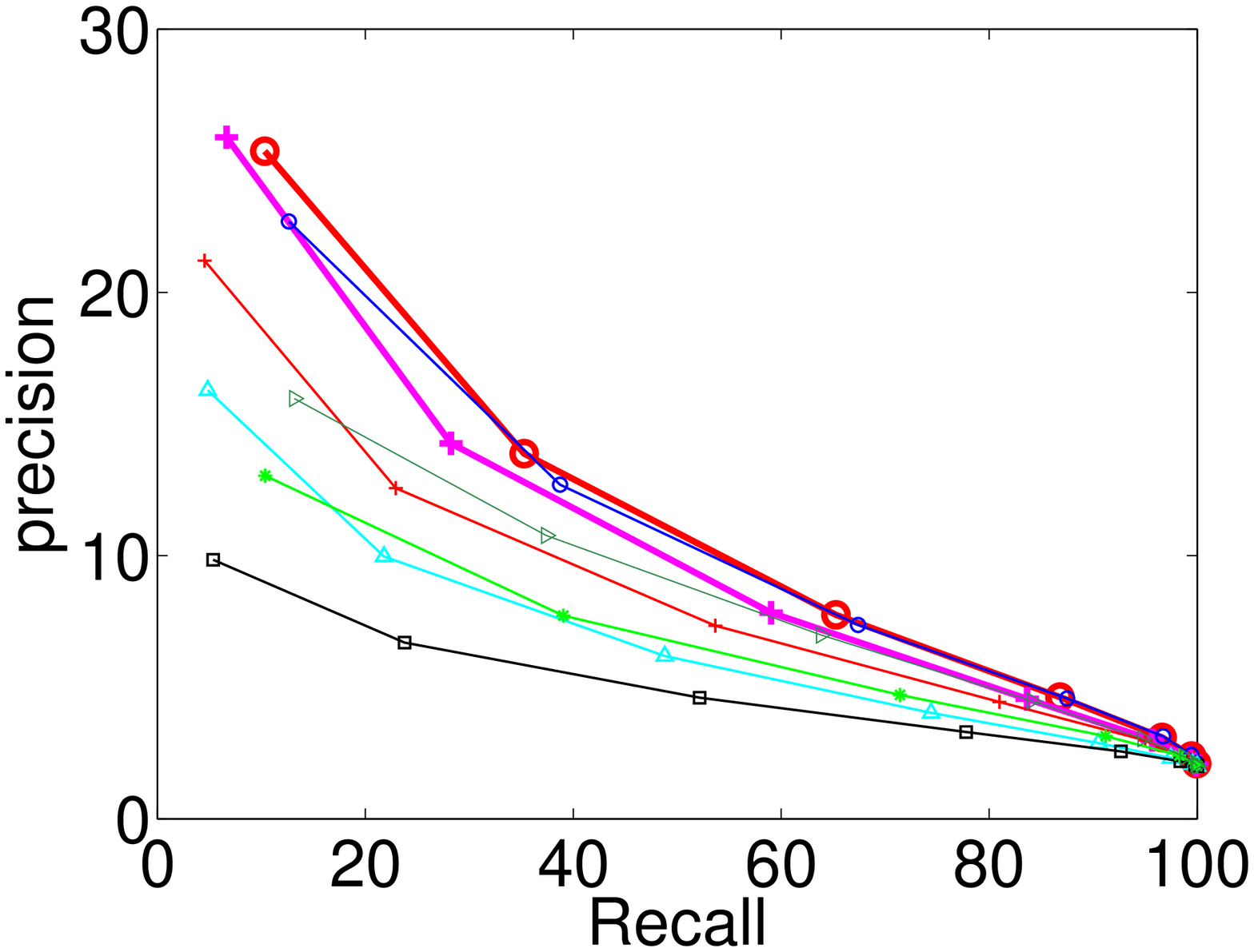} &
    \includegraphics[width=0.25\linewidth,height=0.202\linewidth]{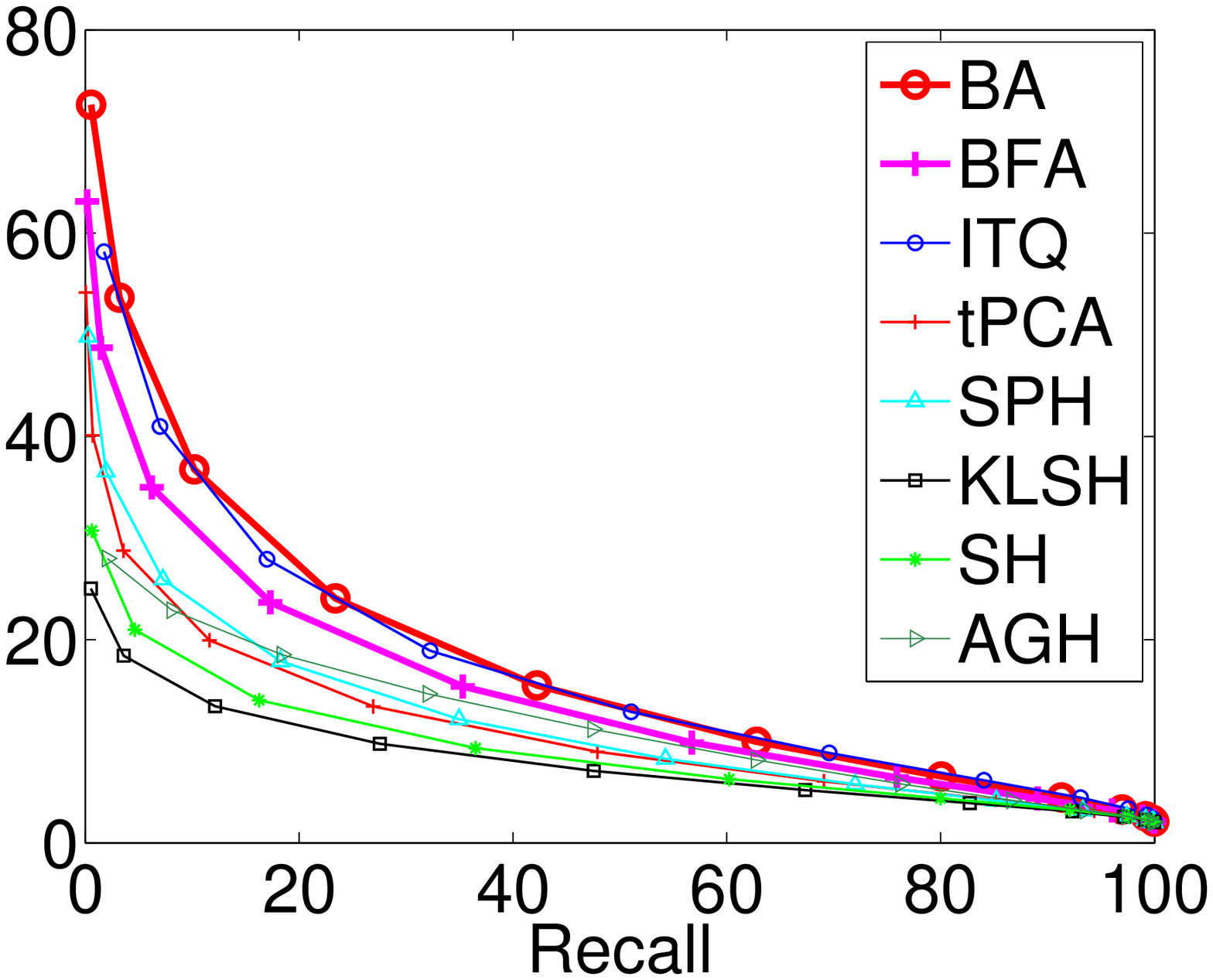} &
    \includegraphics[width=0.25\linewidth]{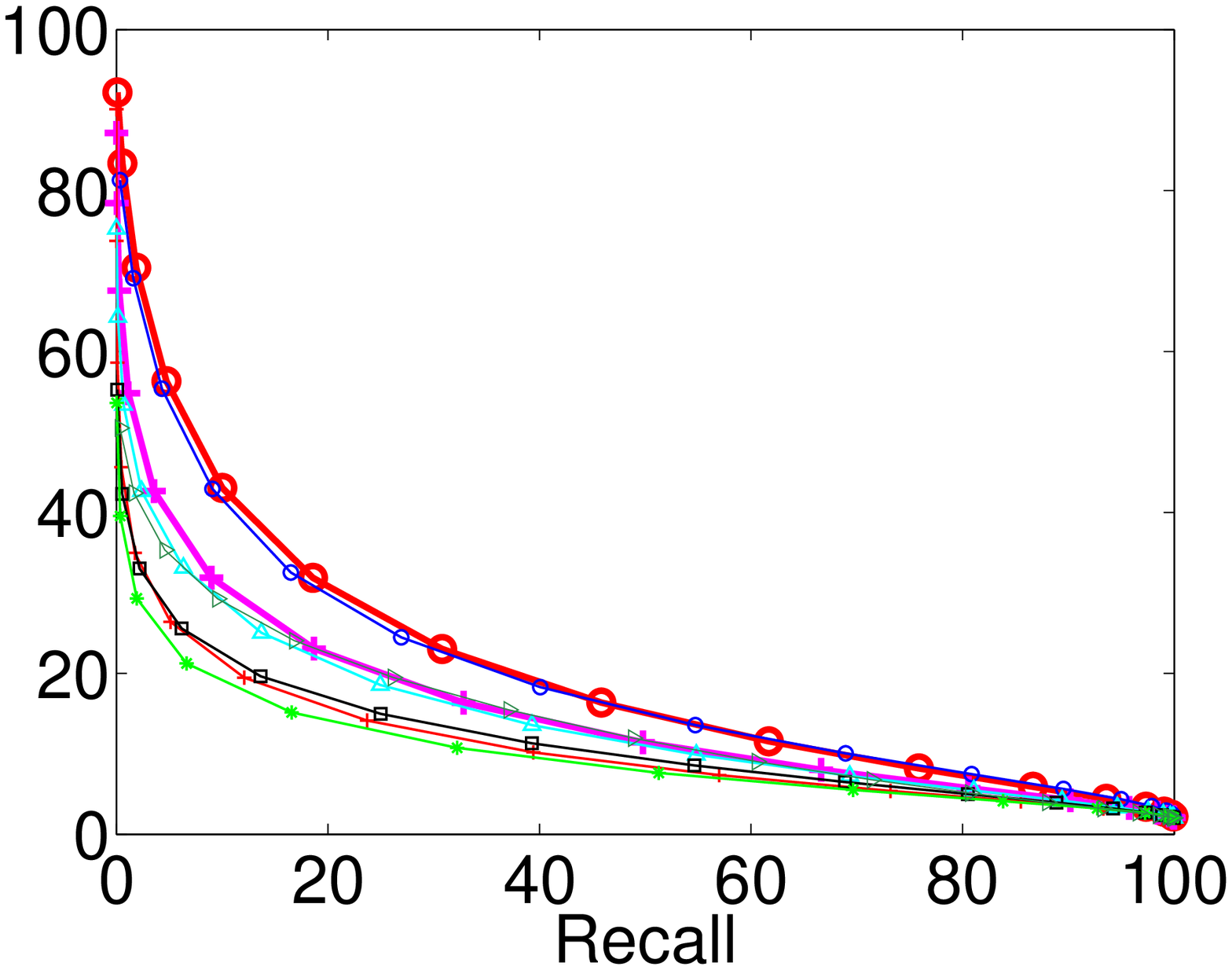} &
    \includegraphics[width=0.25\linewidth]{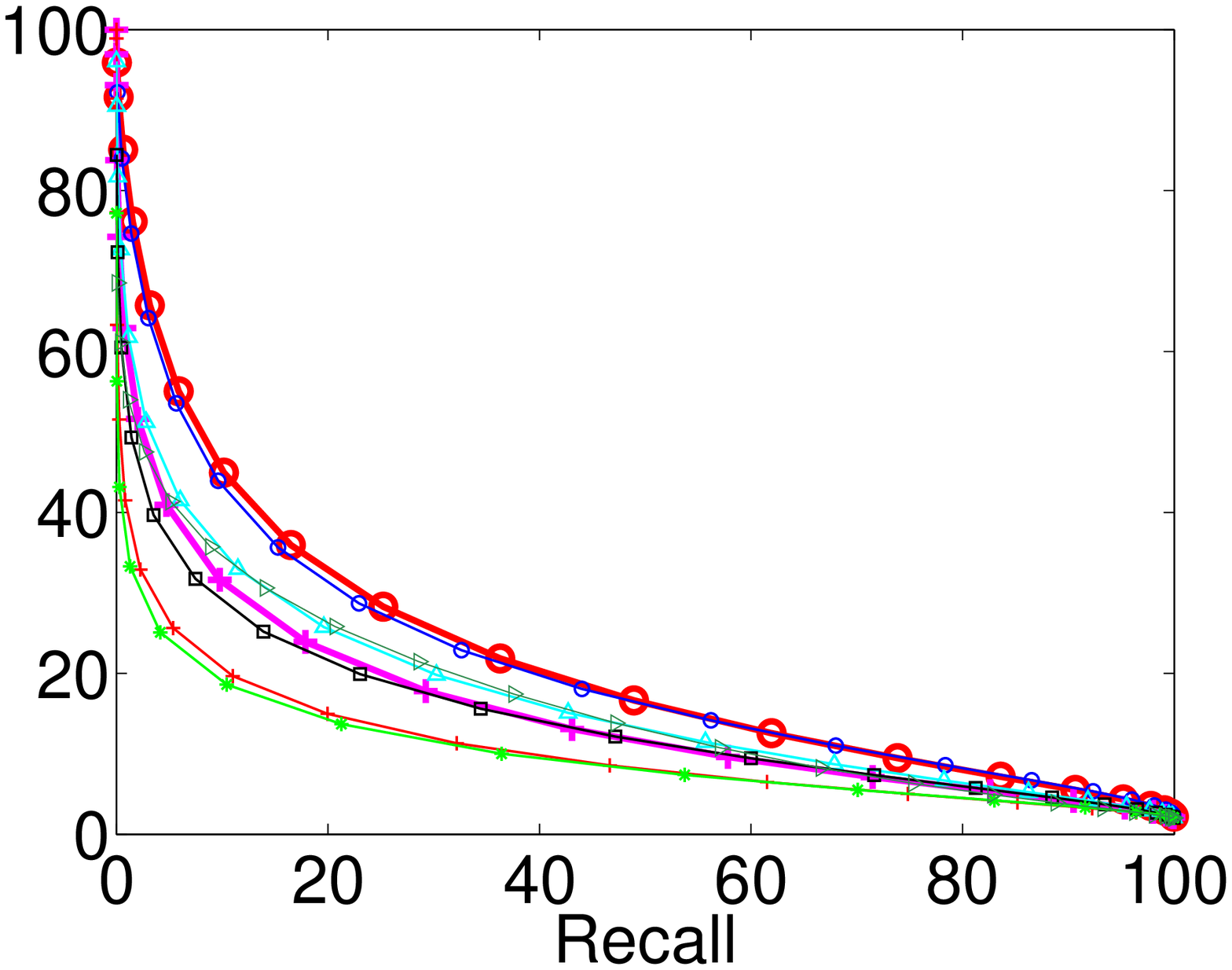}
  \end{tabular} \\[5ex]
  \begin{tabular}{@{}c@{\hspace{0\linewidth}}c@{\hspace{0\linewidth}}c@{}}
    $k=50$ neighbors retrieved & Hamming distance $\le 3$ & Hamming distance $\le 4$ \\
    \psfrag{precision}[][t]{precision $K=50$}
    \includegraphics[width=0.33\linewidth,height=0.276\linewidth]{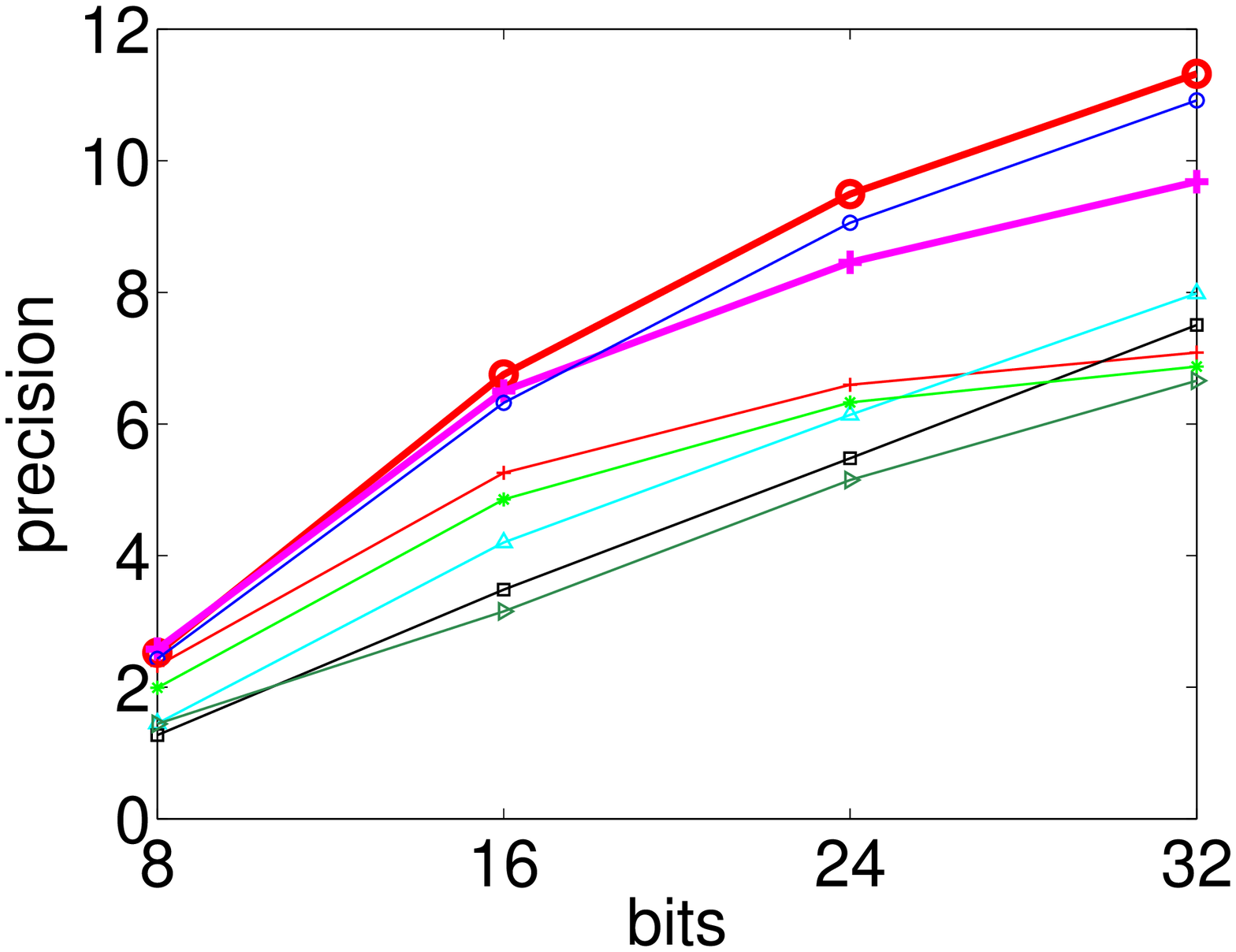} &
    \includegraphics[width=0.33\linewidth,height=0.271\linewidth]{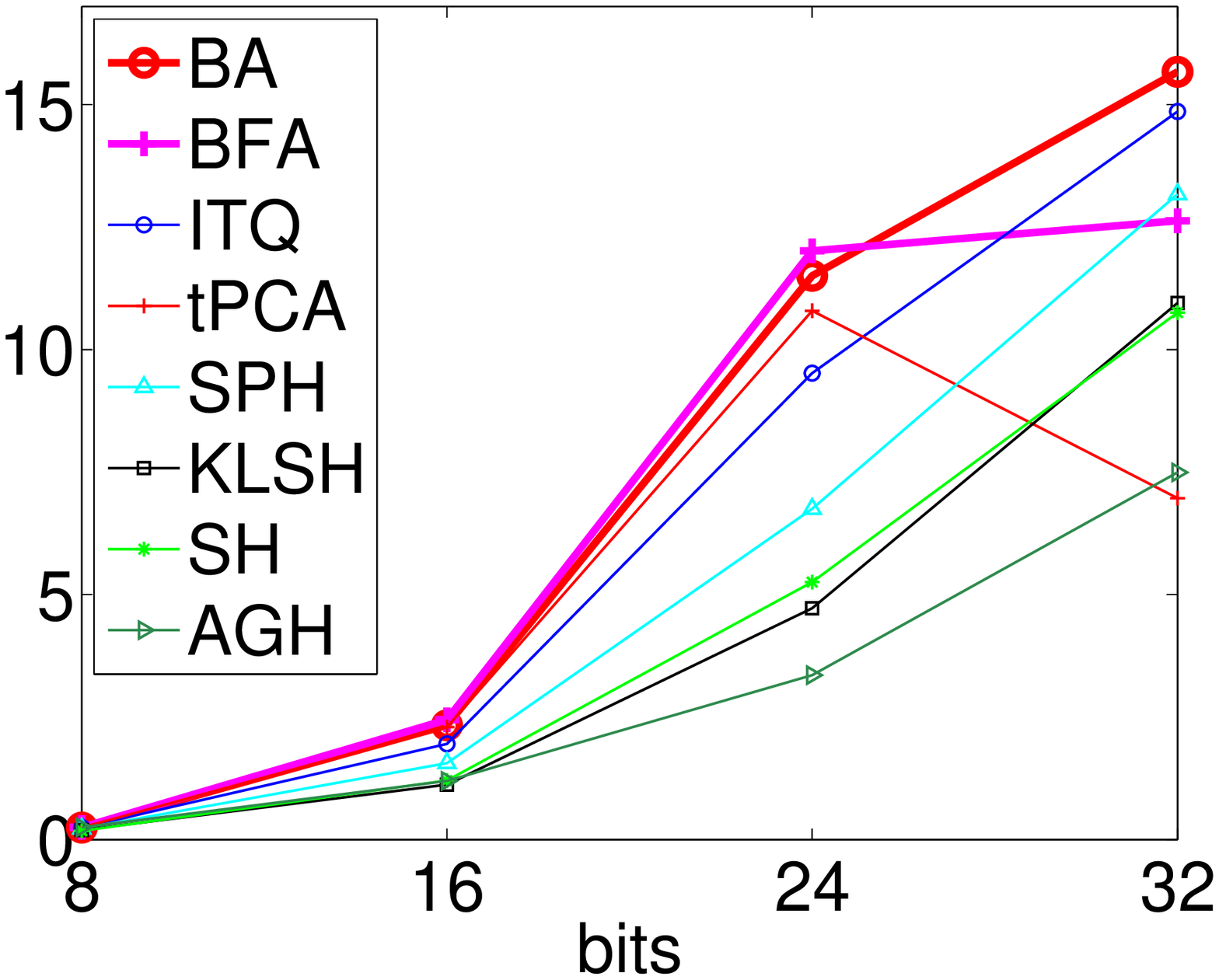} &
    \includegraphics[width=0.33\linewidth]{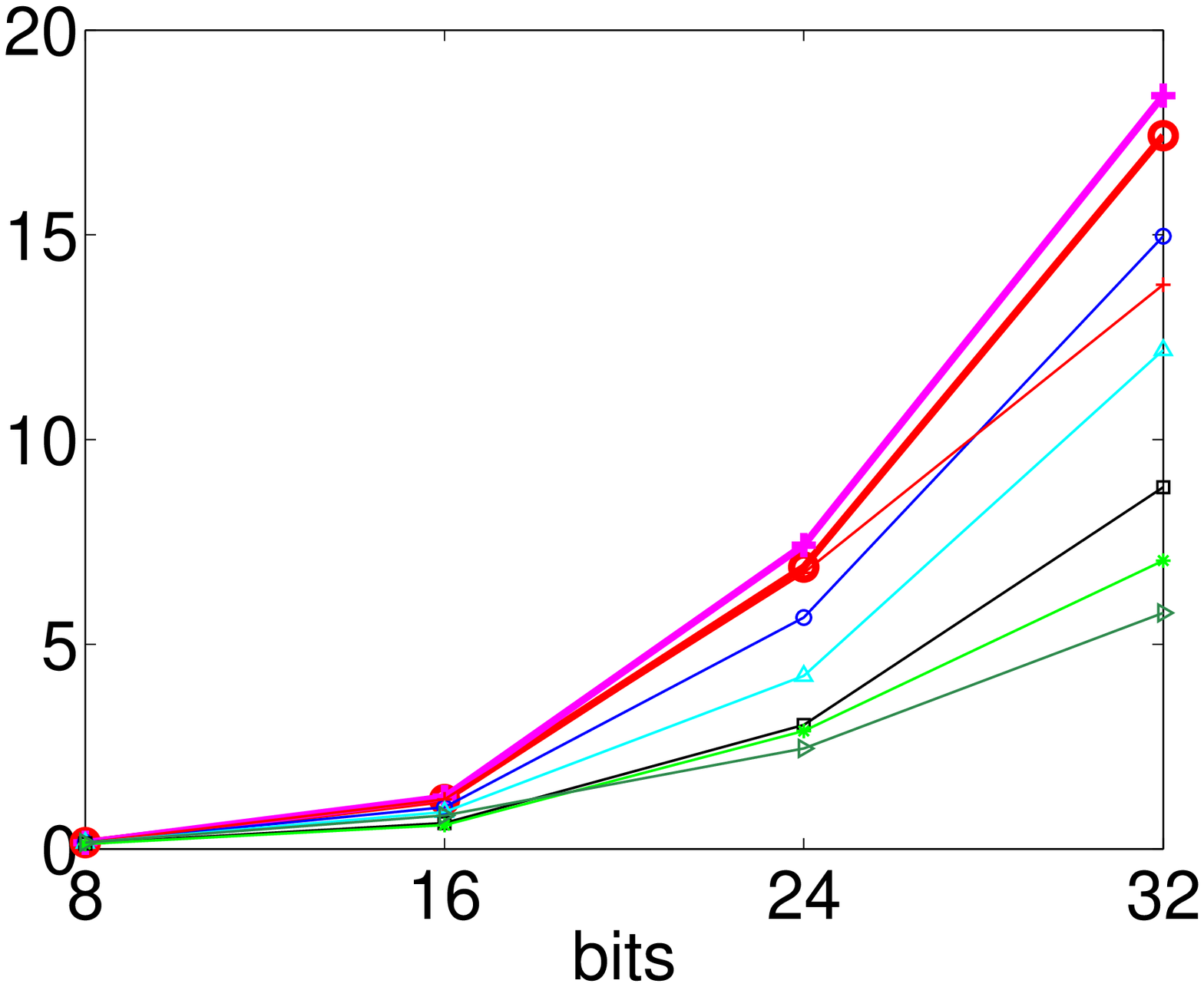}
  \end{tabular} \\
  \begin{tabular}{@{}c@{}c@{\hspace{0\linewidth}}c@{\hspace{0\linewidth}}c@{}}
    $L = 8$ bits & $L = 16$ bits & $L = 24$ bits & $L = 32$ bits \\
    \psfrag{precision}[][t]{precision $K=50$}
    \includegraphics[width=0.25\linewidth,height=0.203\linewidth]{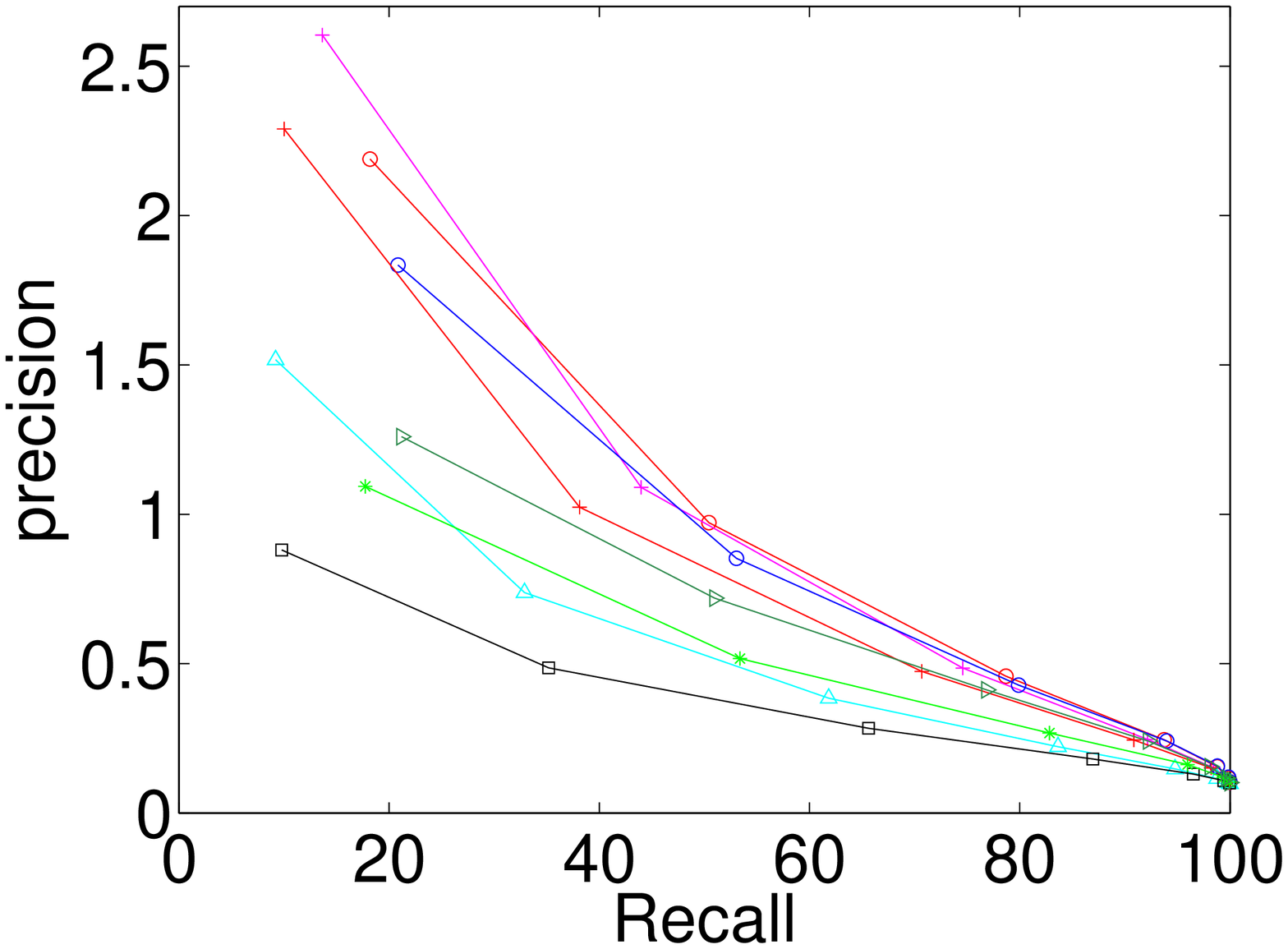} &
    \includegraphics[width=0.25\linewidth,height=0.206\linewidth]{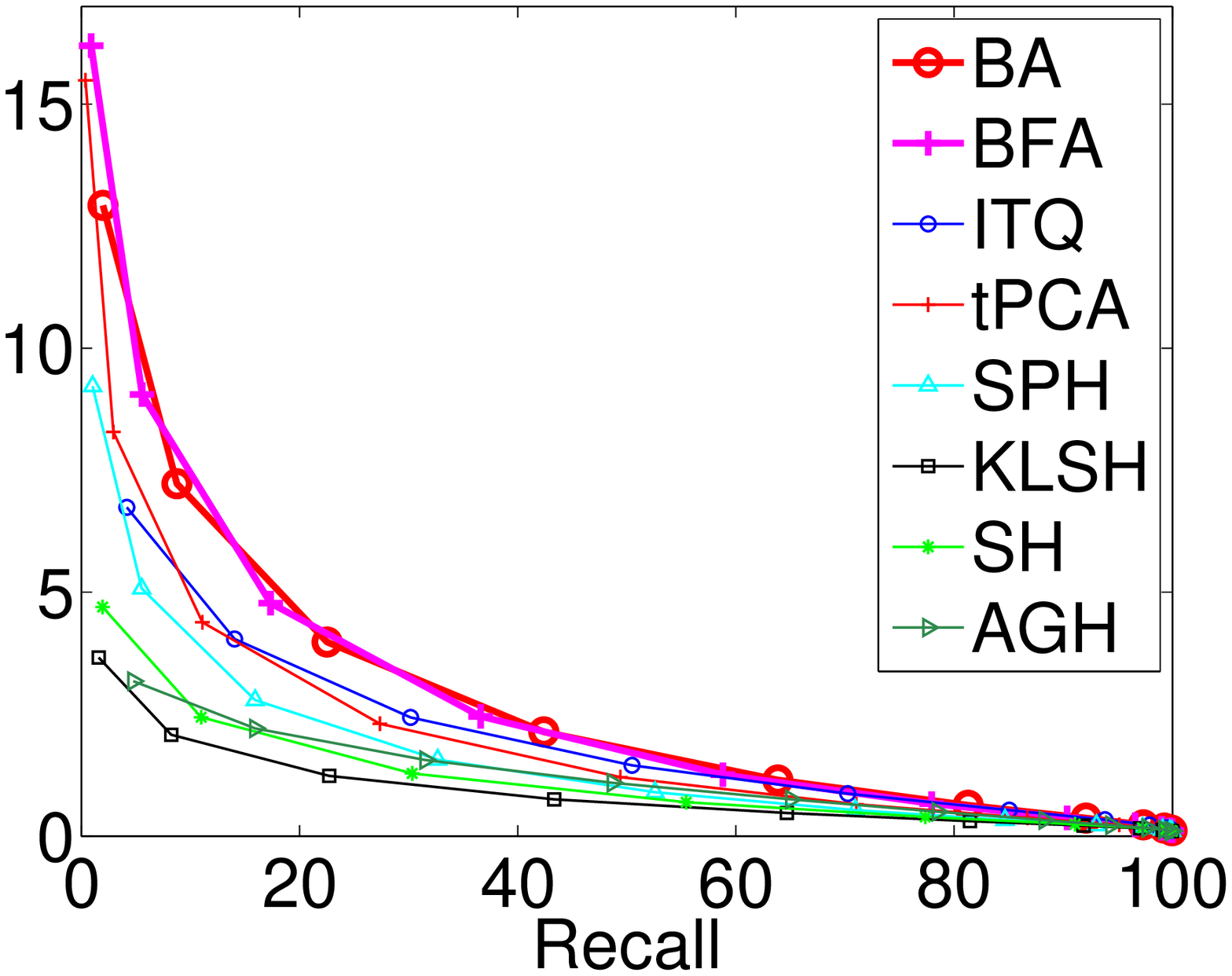} &
    \includegraphics[width=0.25\linewidth]{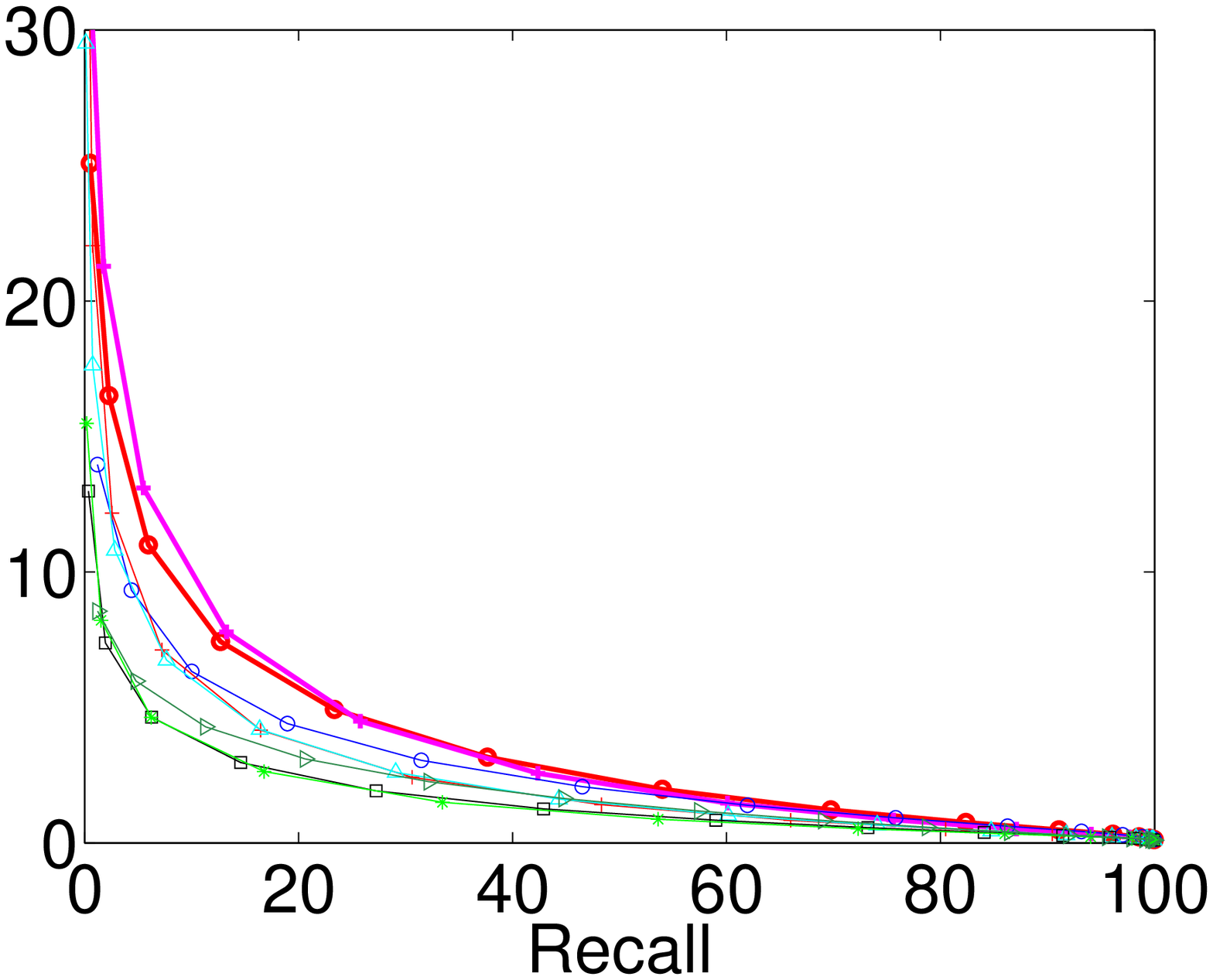} &
    \includegraphics[width=0.25\linewidth]{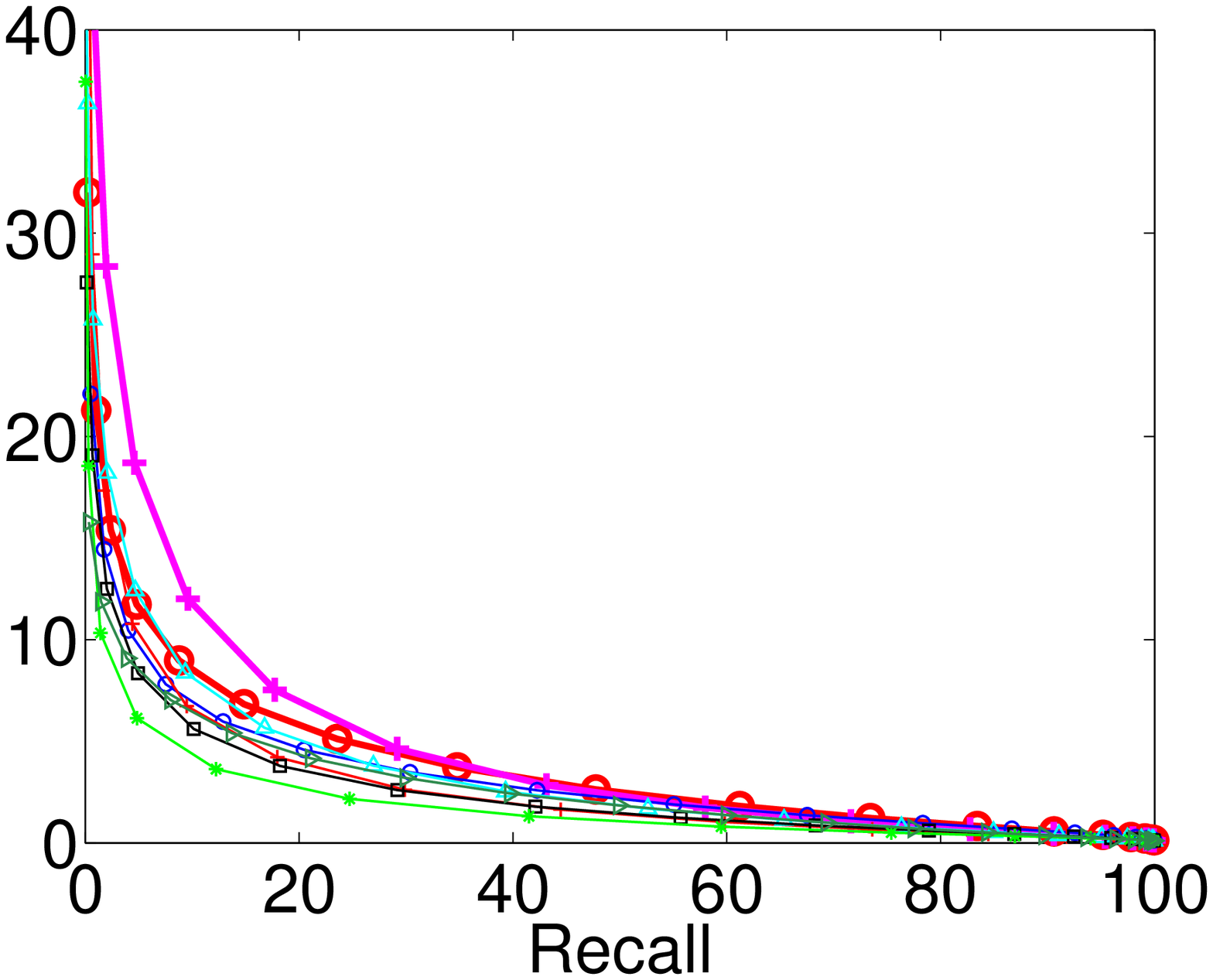}
  \end{tabular}
  \caption{Precision and precision/recall in CIFAR dataset (plotted as in fig.~\ref{f:NUS-WIDE}). Ground truth: $K=1\,000$ (\emph{top block}) and $K=50$ (\emph{bottom block}) nearest images to the query image in the training set. \emph{Top panels within each block}: precision depending on the retrieved set ($k$ nearest neighbors in Hamming distance, or images at Hamming distance $\le 3$ or $4$). \emph{Bottom panels within each block}: precision/recall curves for a retrieved set of images at Hamming distance $\le r$, using $L = 8$ to $32$ bits.}
  \label{f:CIFAR}
\end{figure}

\begin{figure}[t]
  \centering
  \begin{tabular}{@{}c@{\hspace{.01\linewidth}}c@{}c@{}c@{}c@{}c@{}c@{}c@{}c@{}c@{}c@{}}
    \dotfill Query\dotfill & \multicolumn{9}{c}{\dotfill Retrieved neighbors\dotfill}\\
    \includegraphics[width=.1\linewidth]{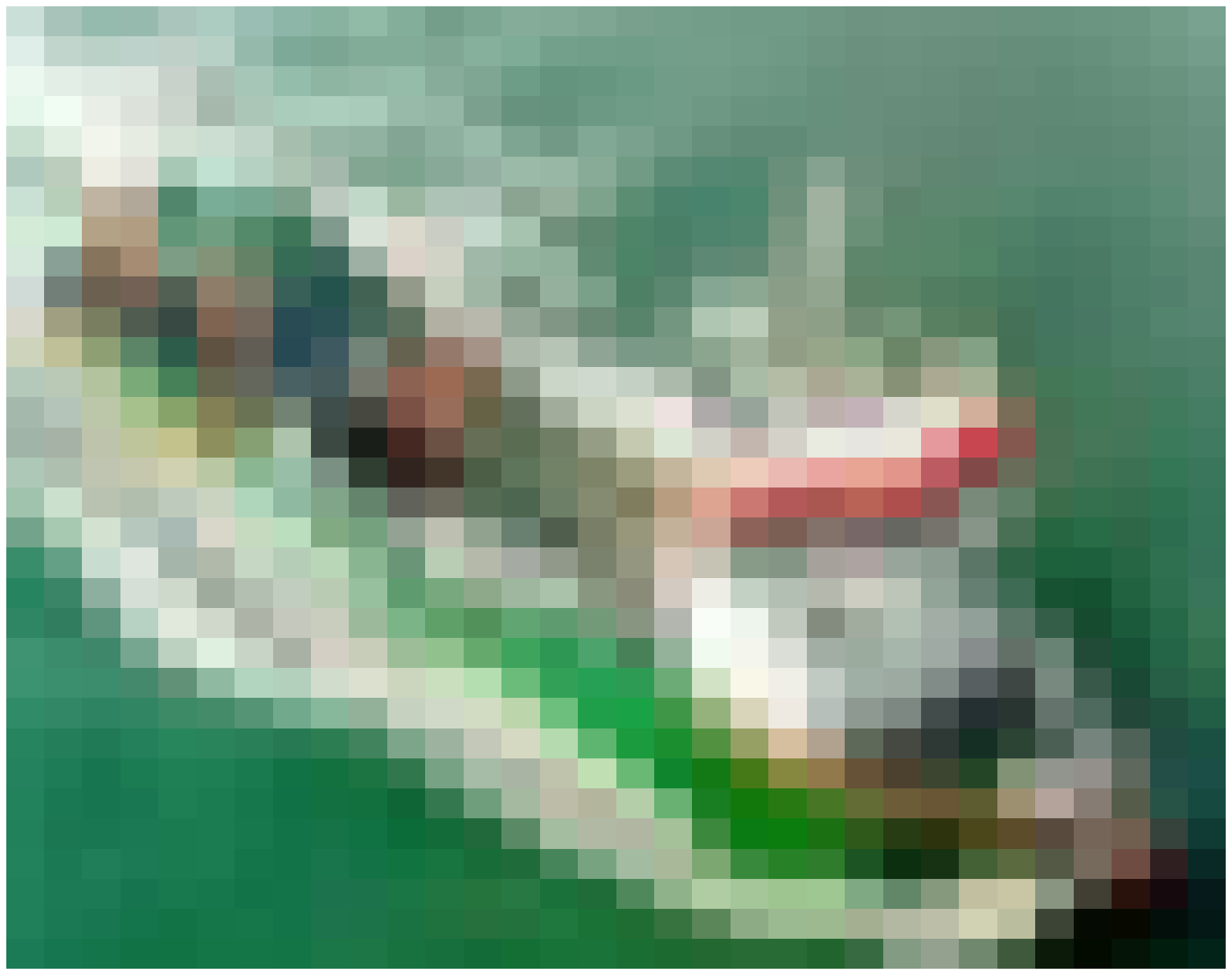}& \includegraphics[width=.1\linewidth]{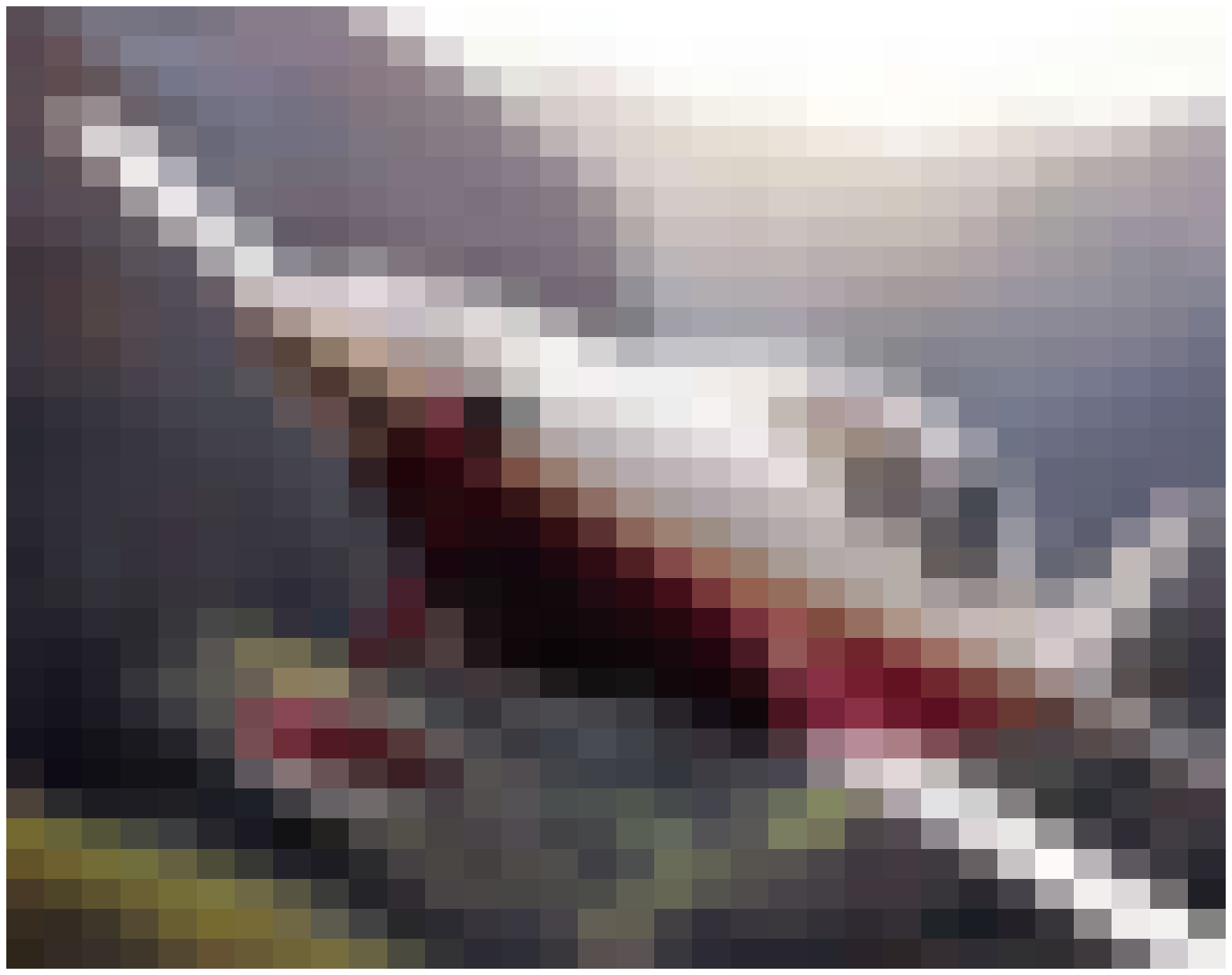}& \includegraphics[width=.1\linewidth]{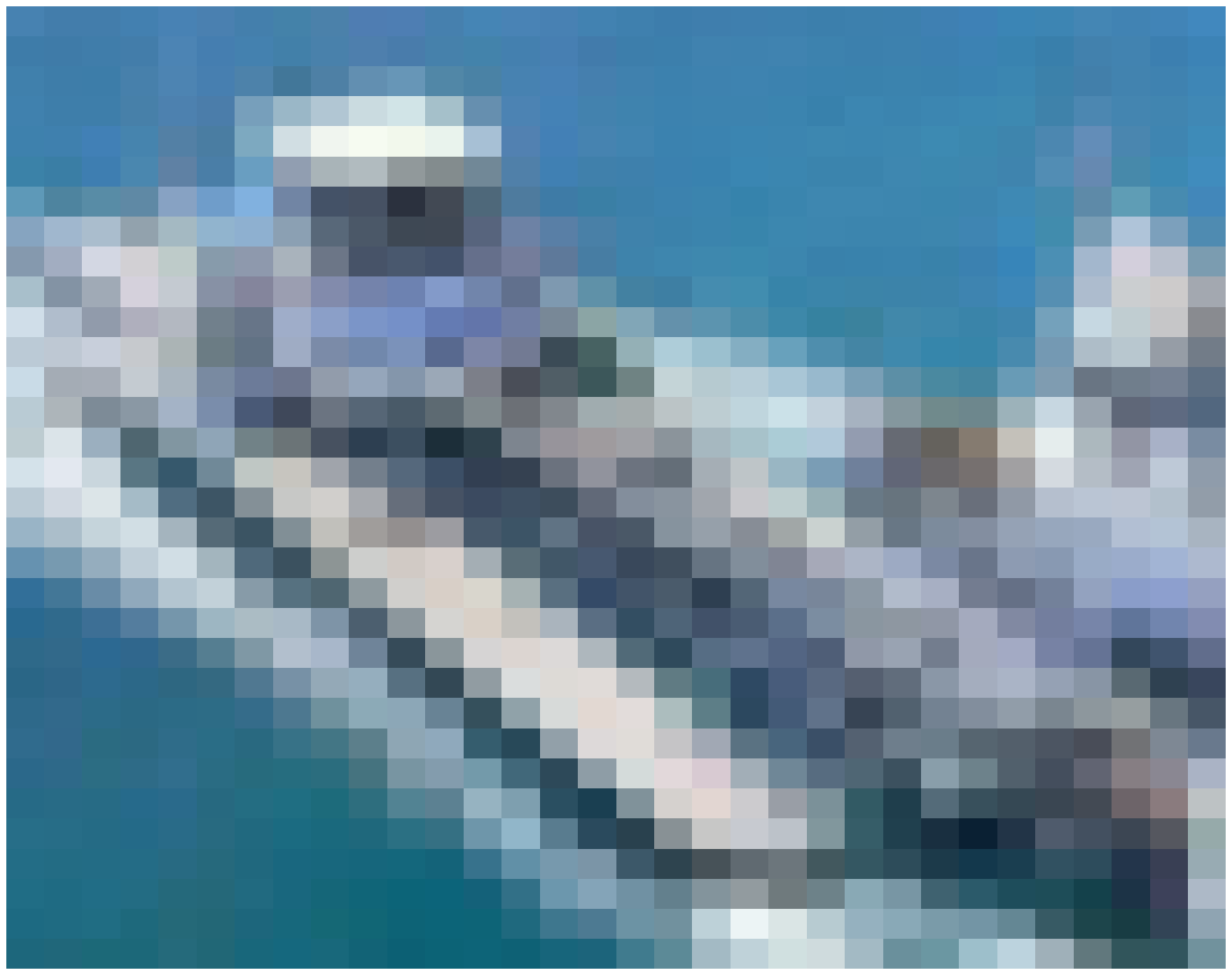}& \includegraphics[width=.1\linewidth]{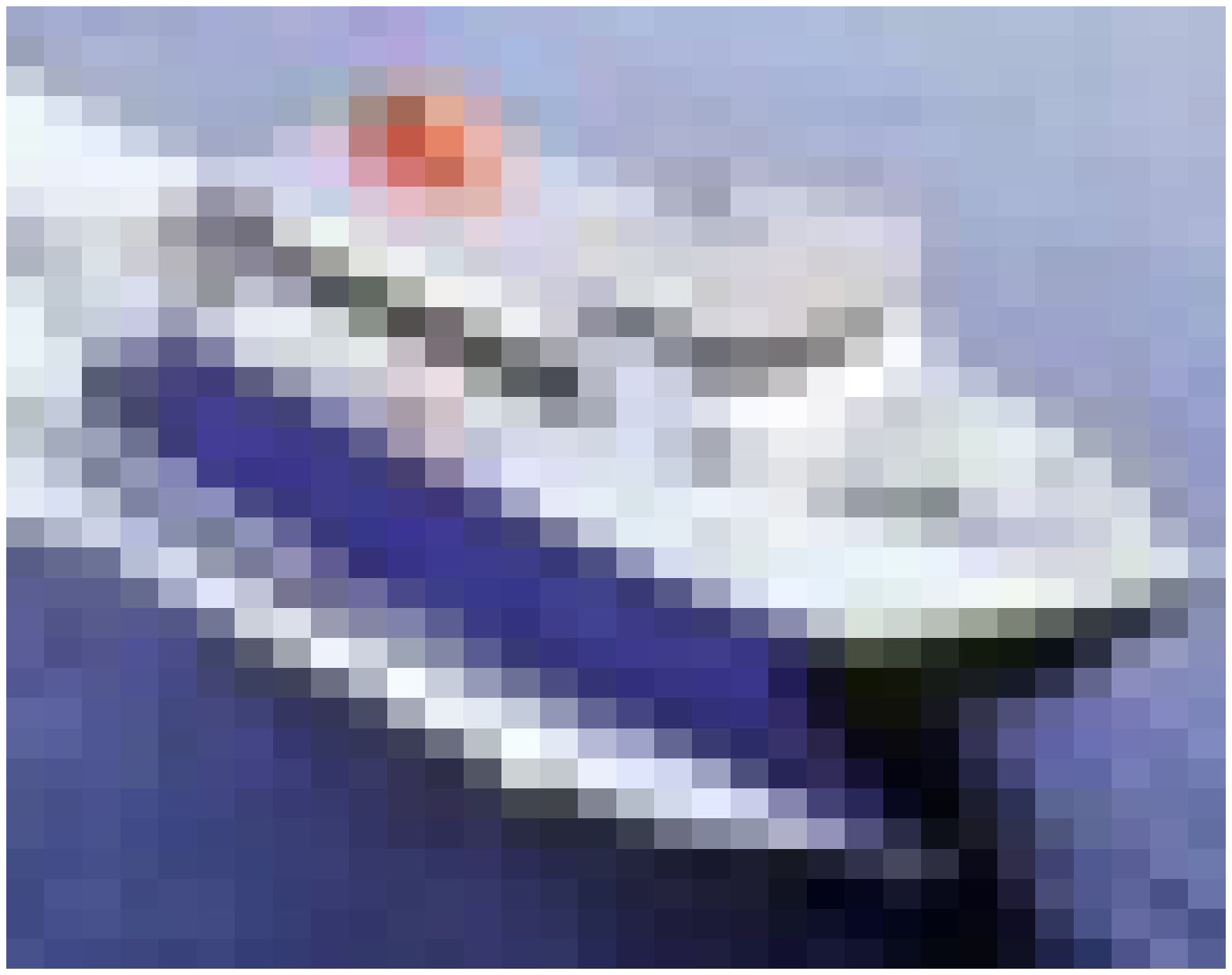}& \includegraphics[width=.1\linewidth]{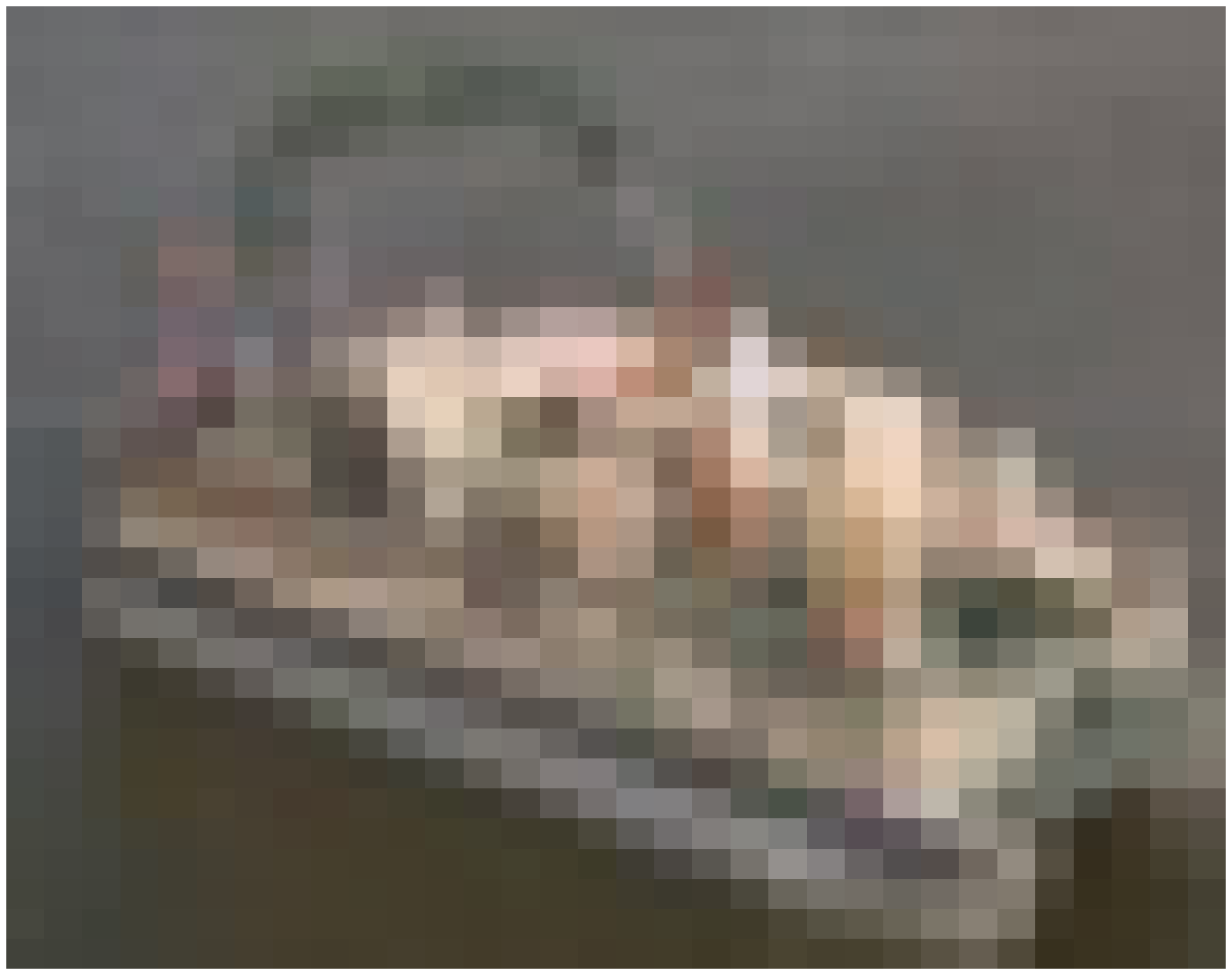}& \includegraphics[width=.1\linewidth]{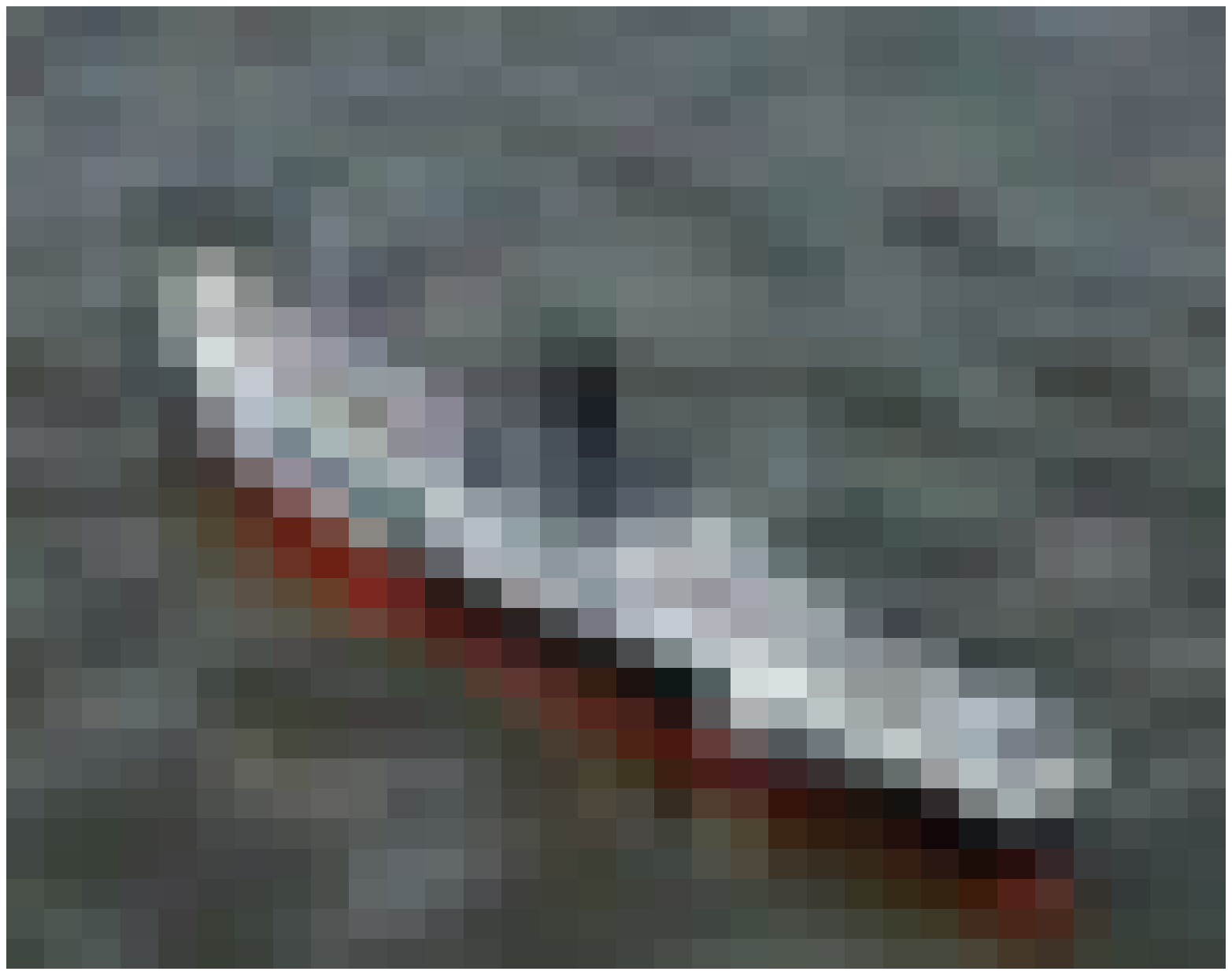}& \includegraphics[width=.1\linewidth]{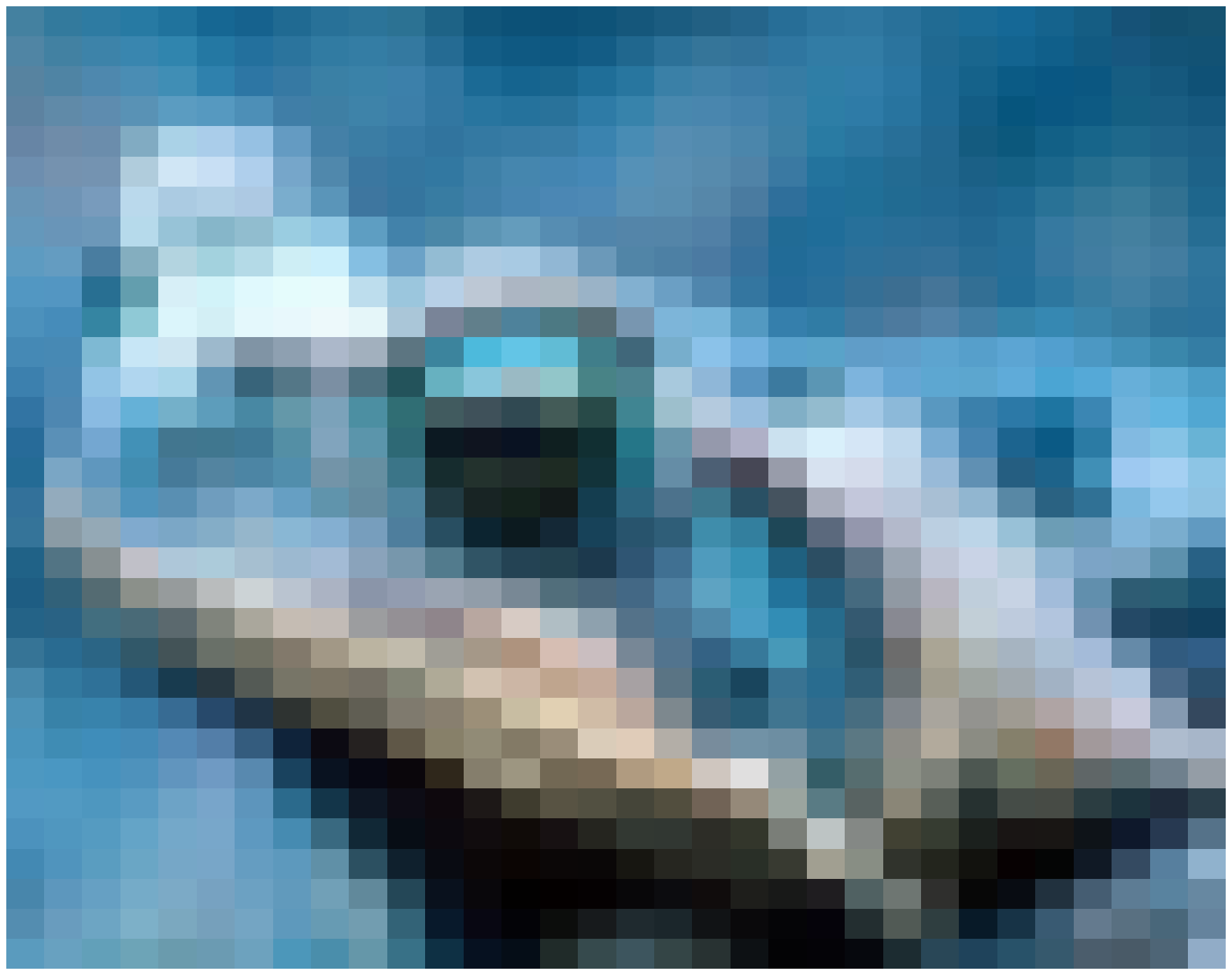}& \includegraphics[width=.1\linewidth]{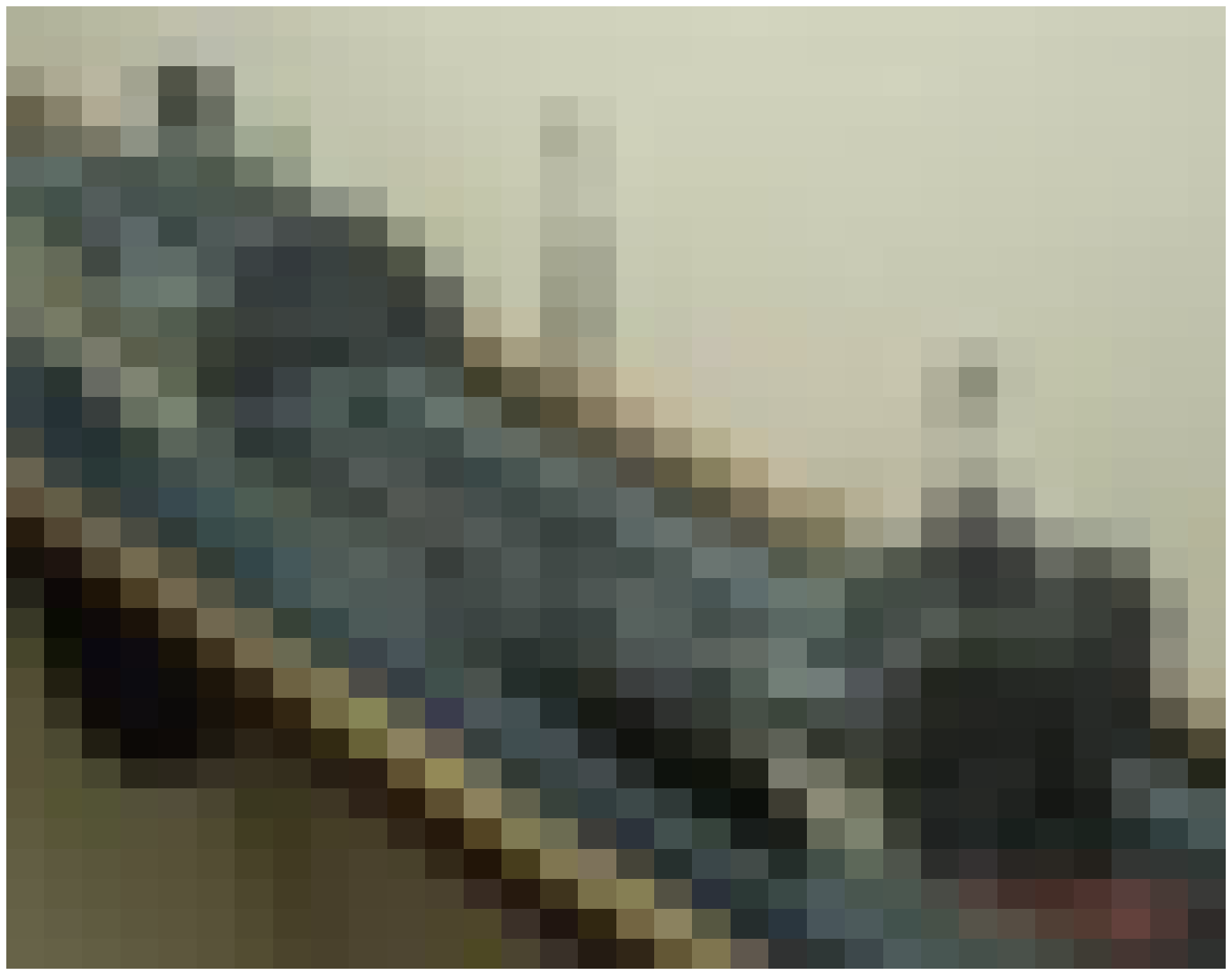} & \includegraphics[width=.1\linewidth]{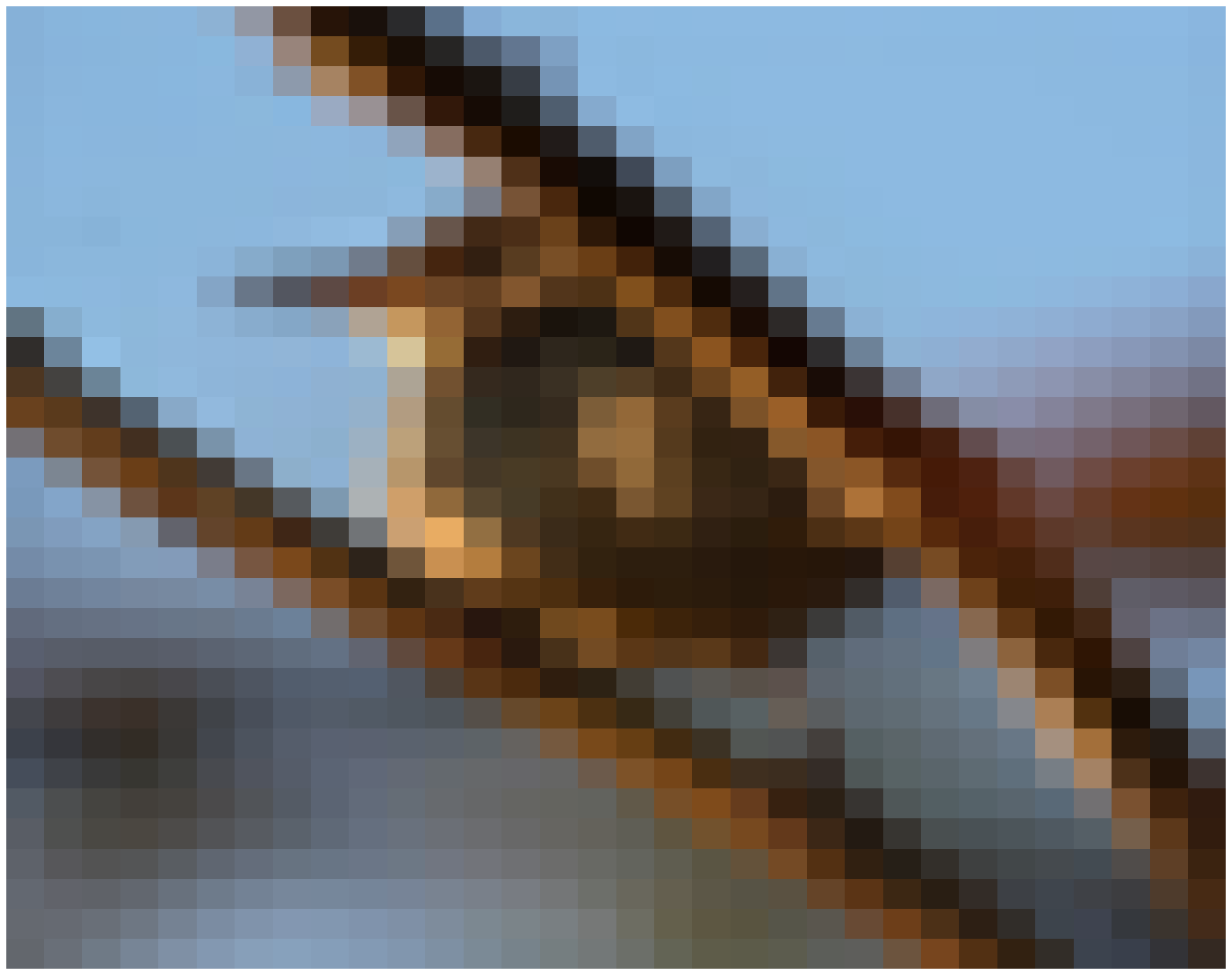}  & \includegraphics[width=.1\linewidth]{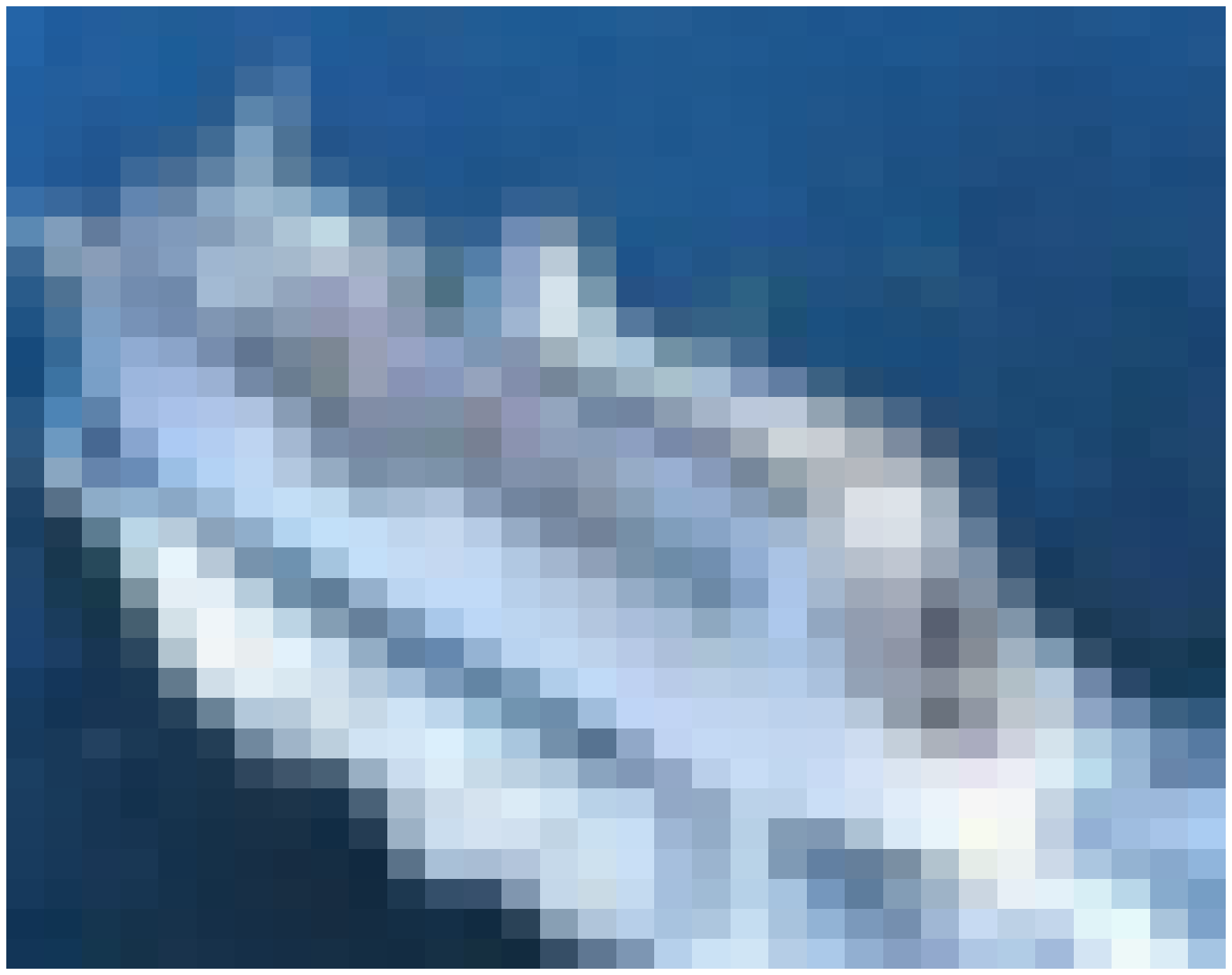}\\
    \includegraphics[width=.1\linewidth]{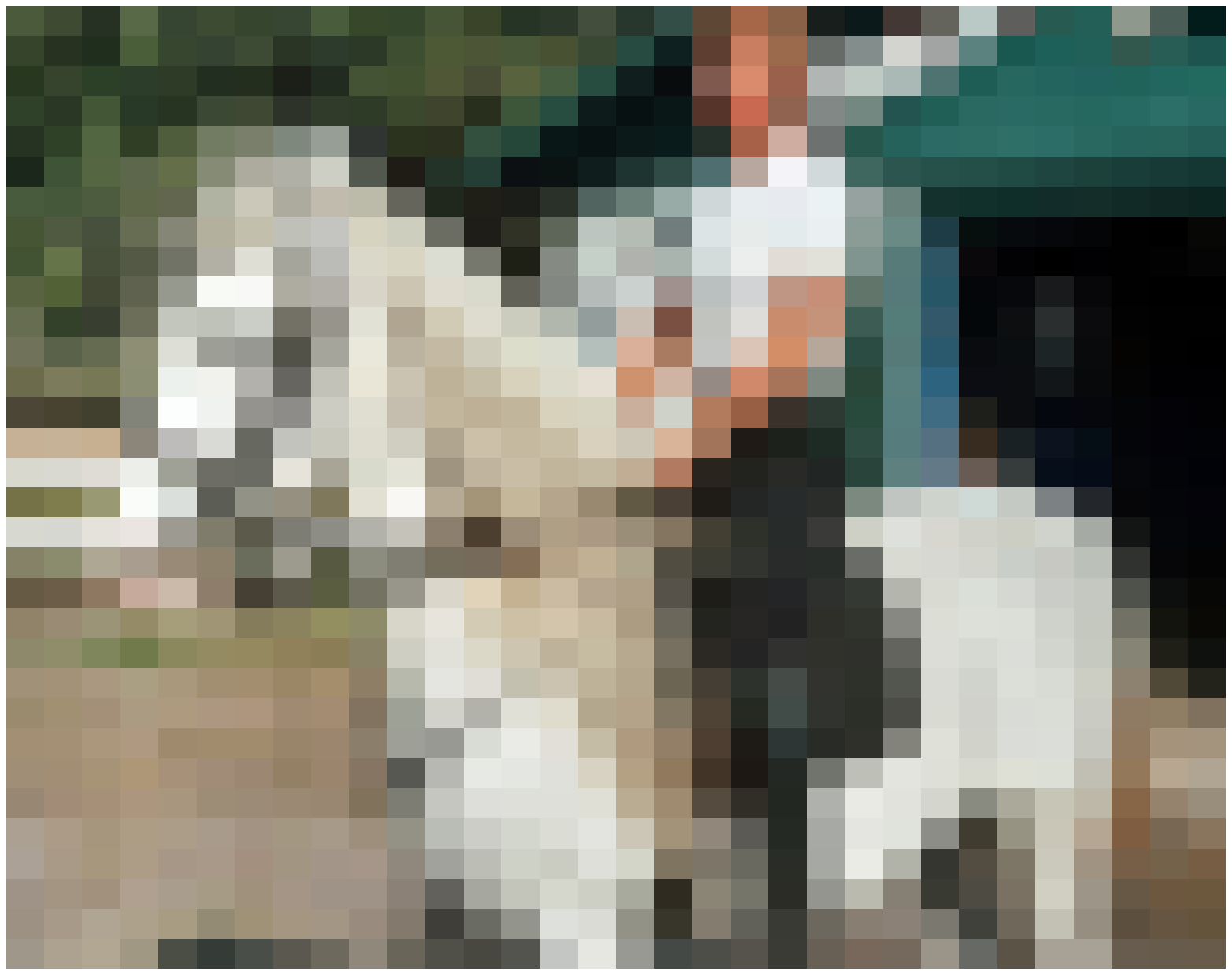}& \includegraphics[width=.1\linewidth]{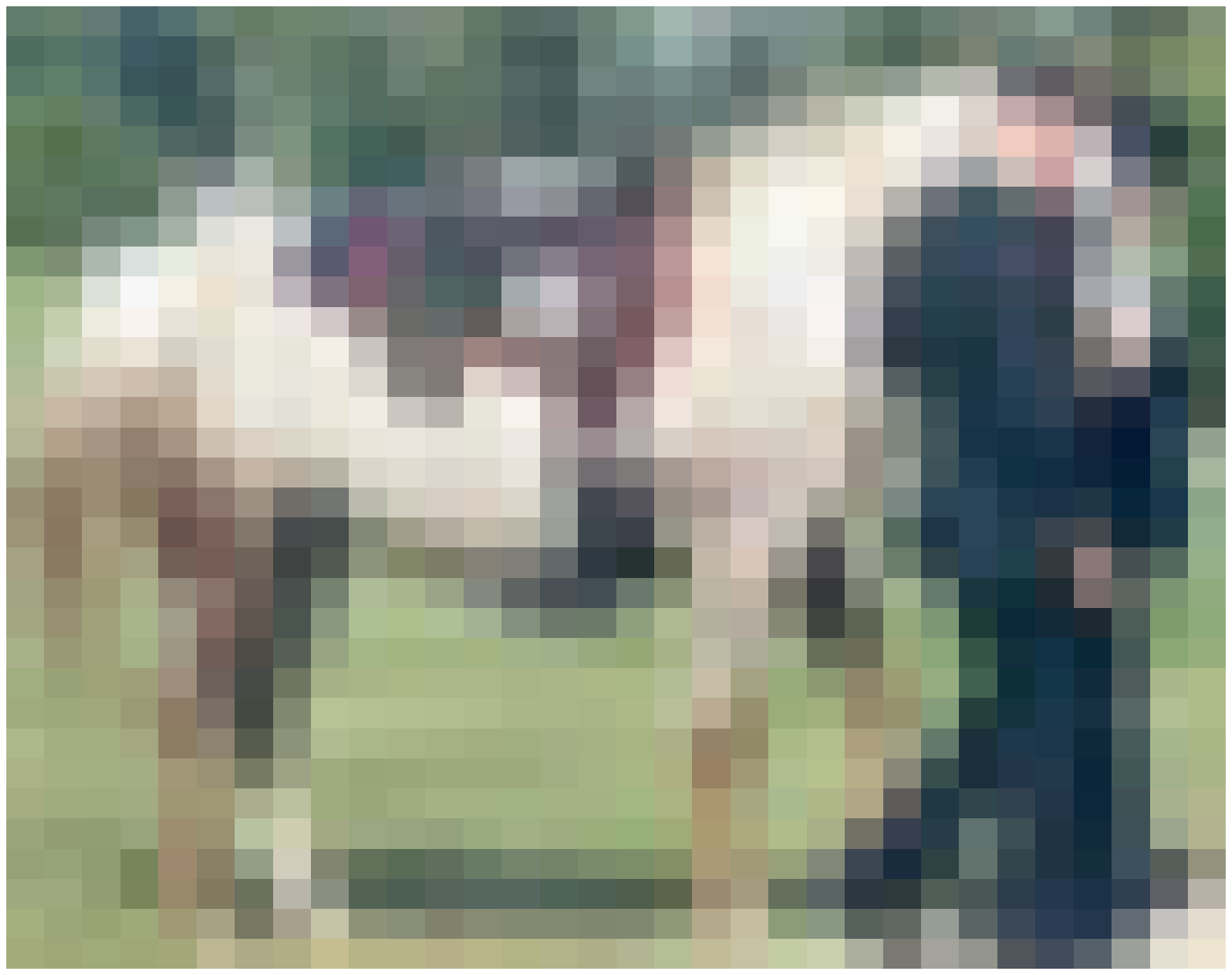}& \includegraphics[width=.1\linewidth]{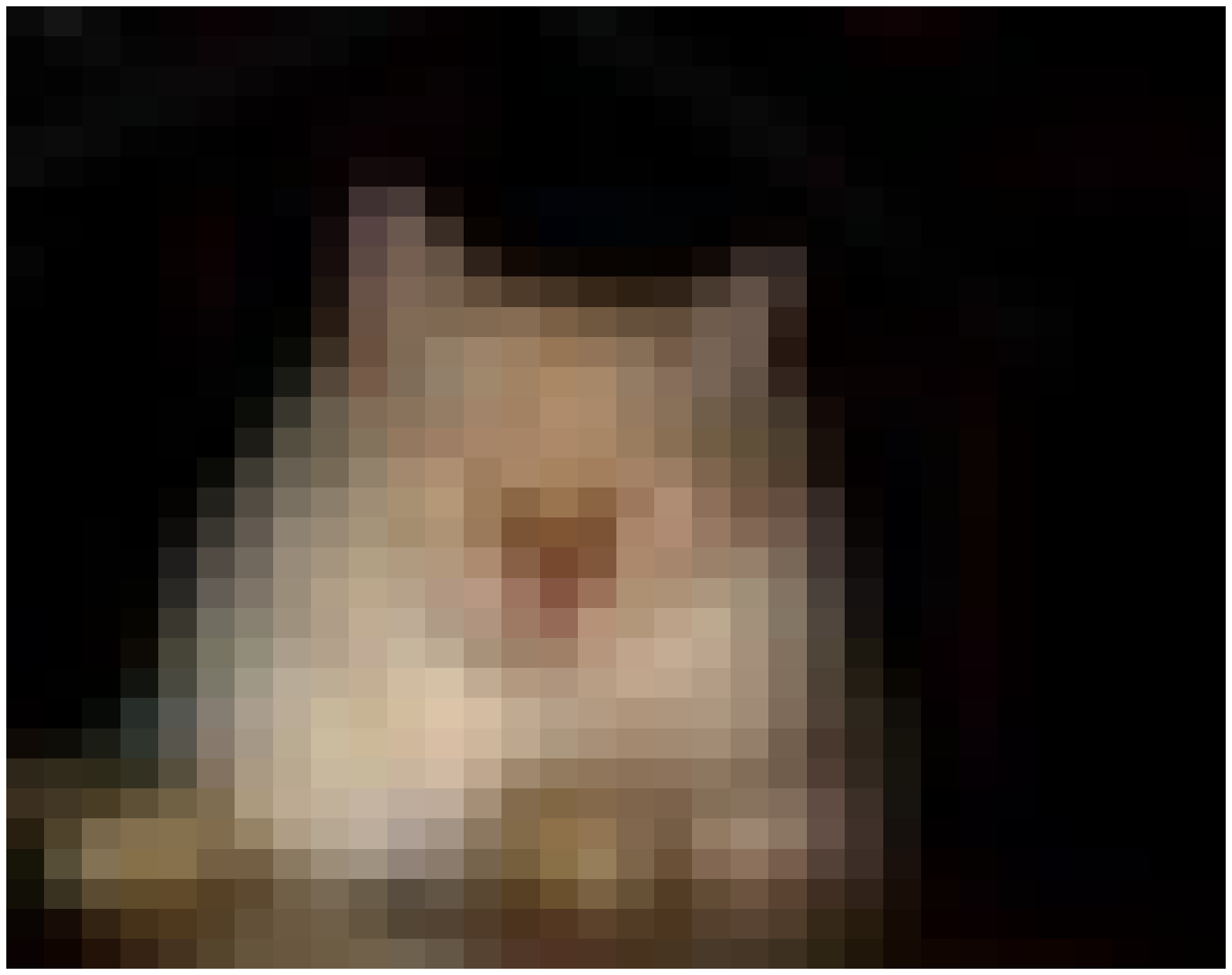}& \includegraphics[width=.1\linewidth]{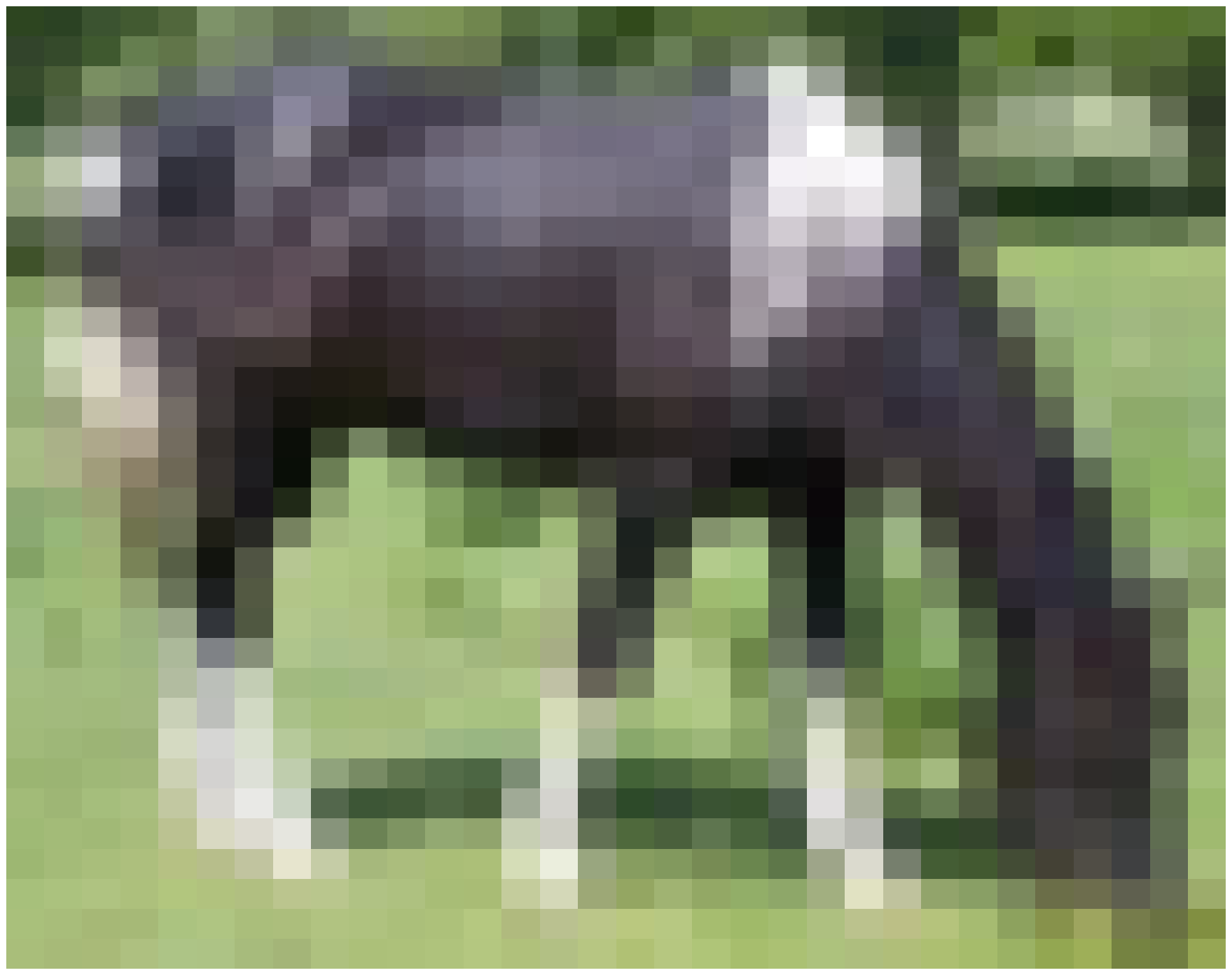}& \includegraphics[width=.1\linewidth]{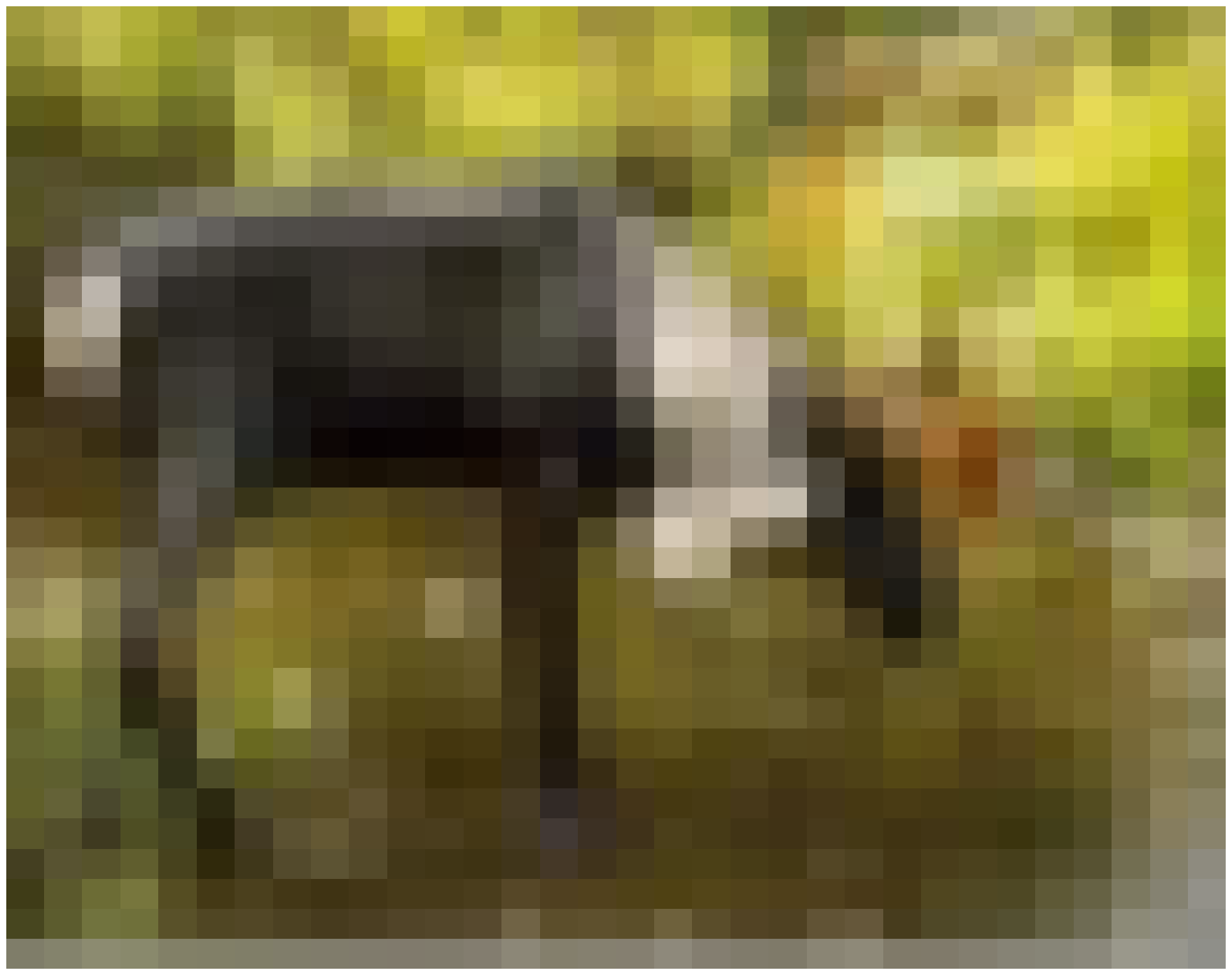}& \includegraphics[width=.1\linewidth]{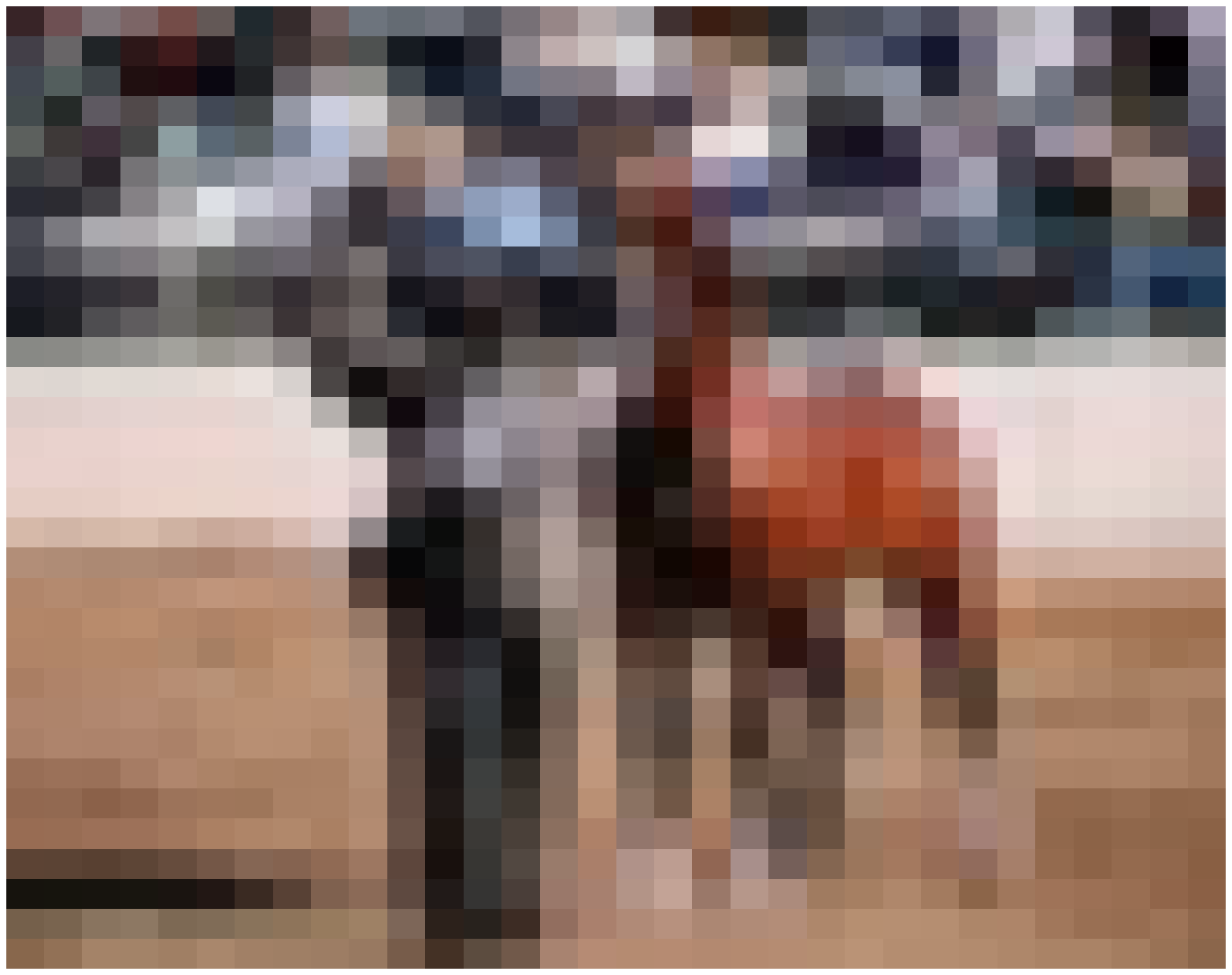}& \includegraphics[width=.1\linewidth]{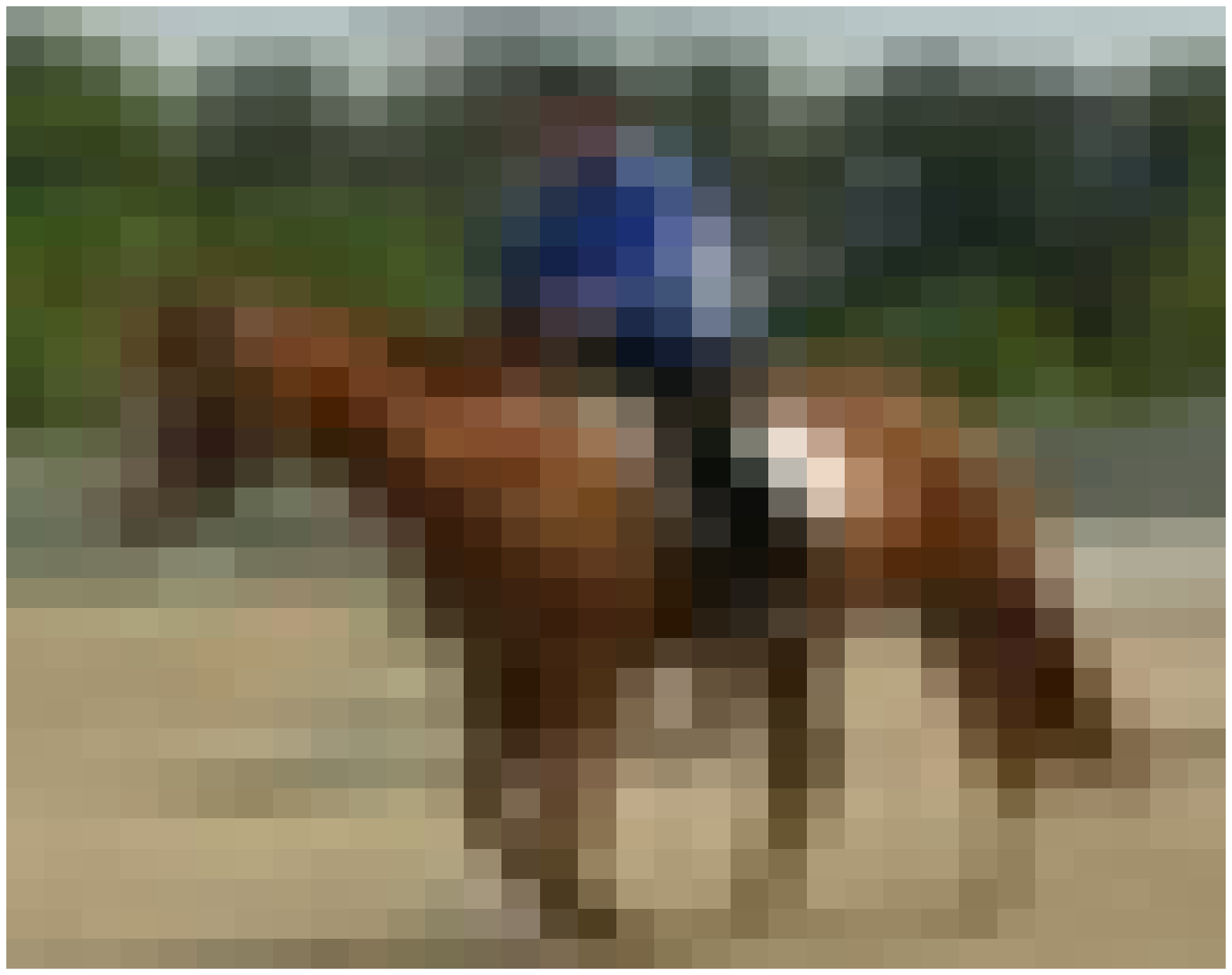}& \includegraphics[width=.1\linewidth]{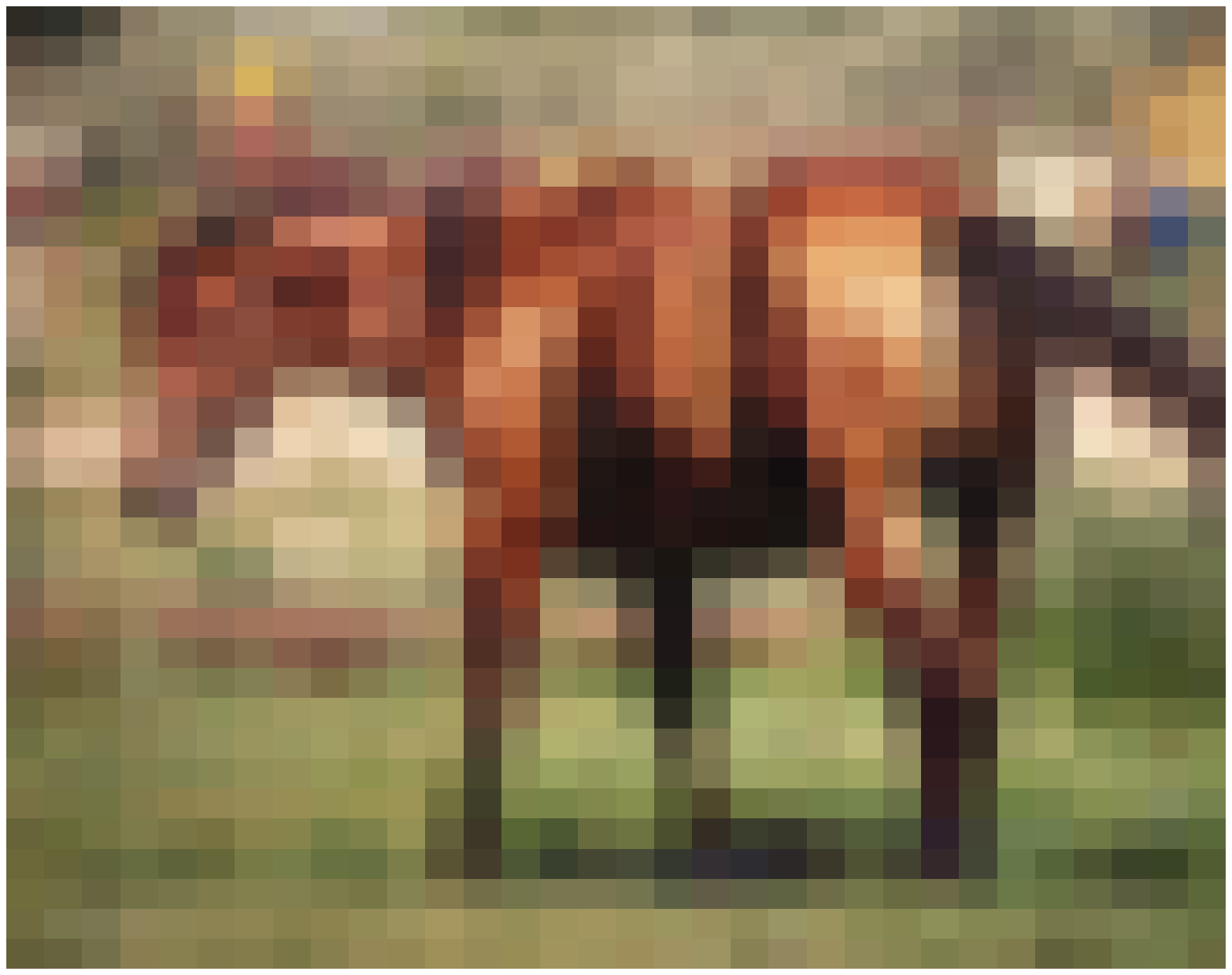}& \includegraphics[width=.1\linewidth]{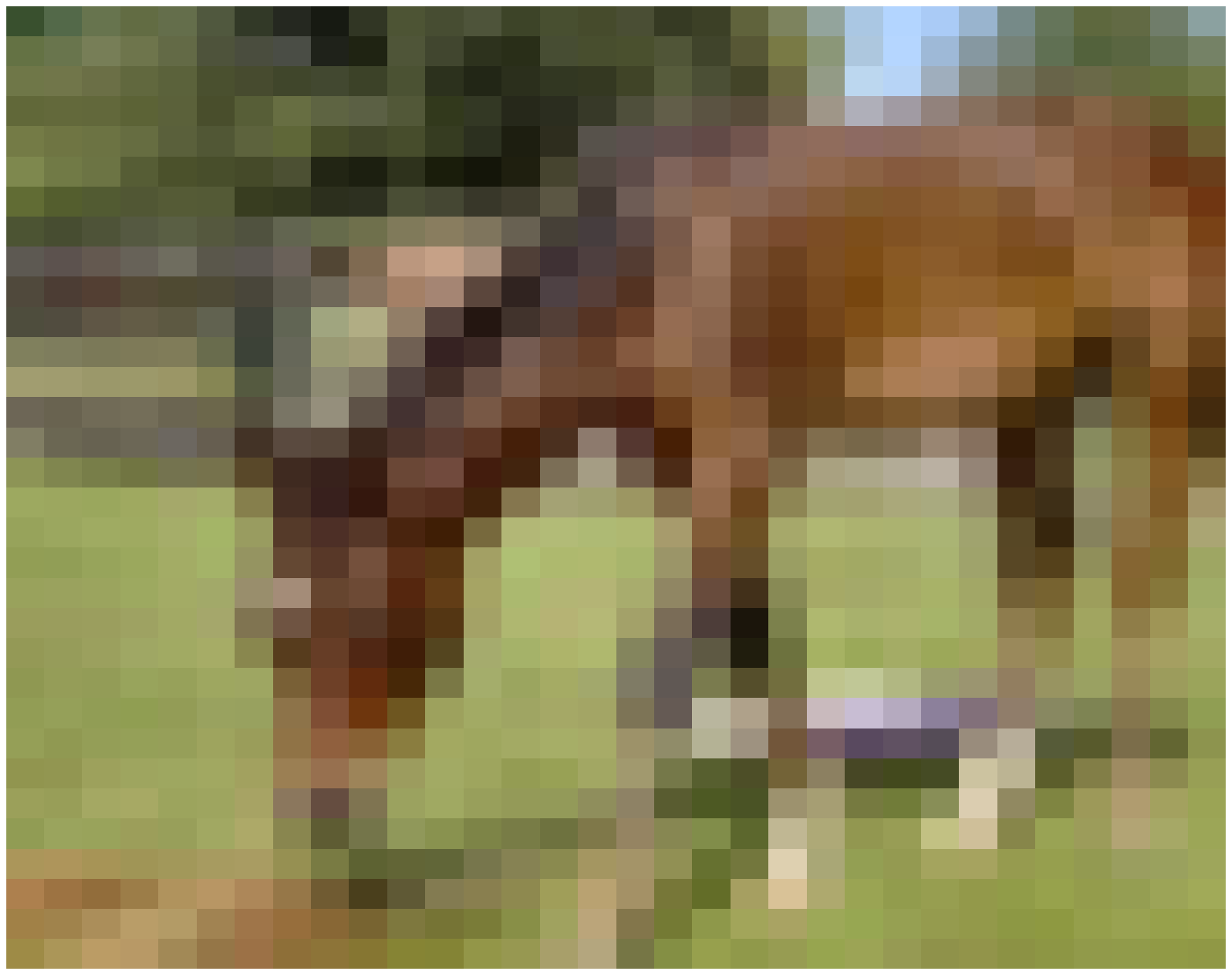} & \includegraphics[width=.1\linewidth]{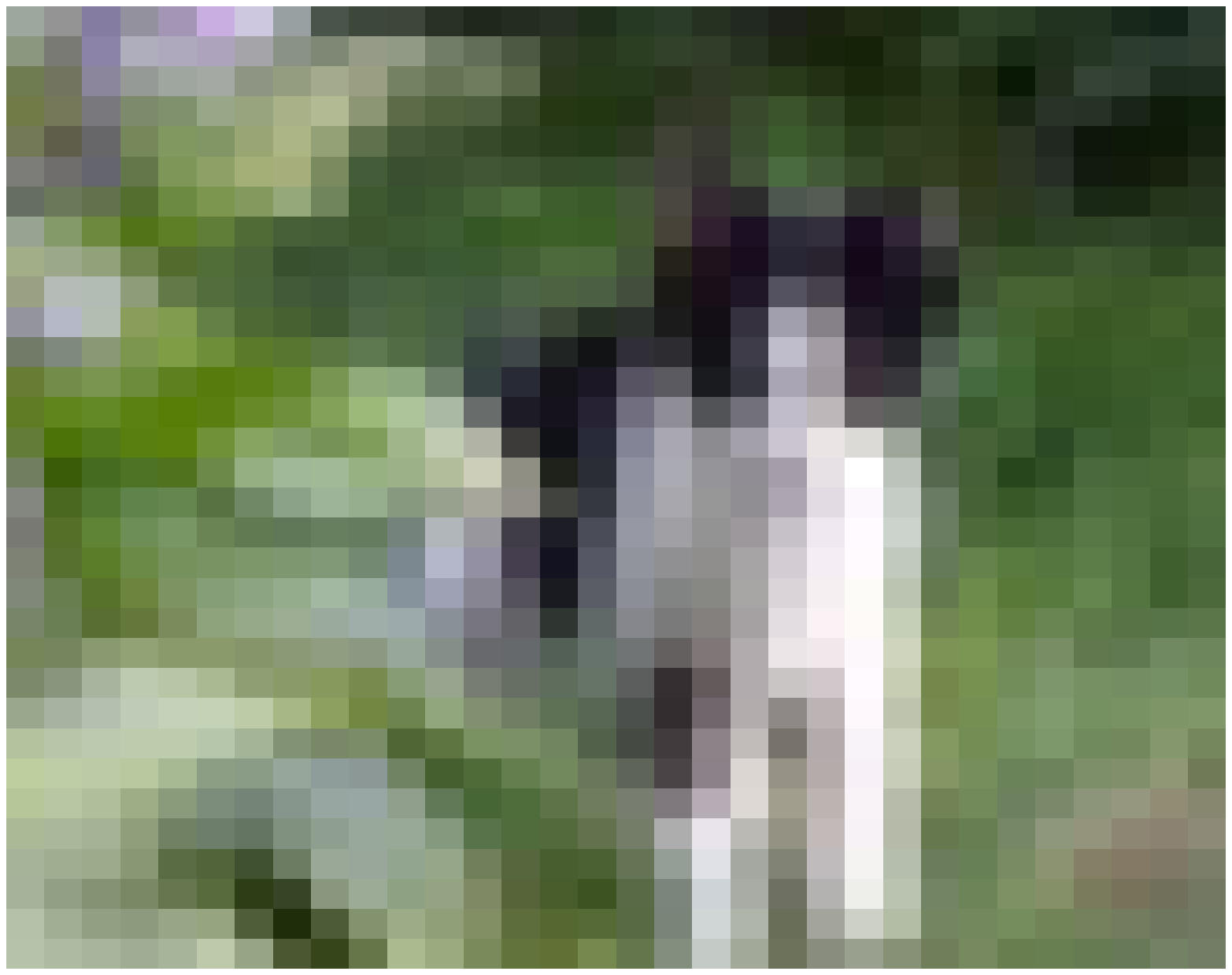}\\
    \includegraphics[width=.1\linewidth]{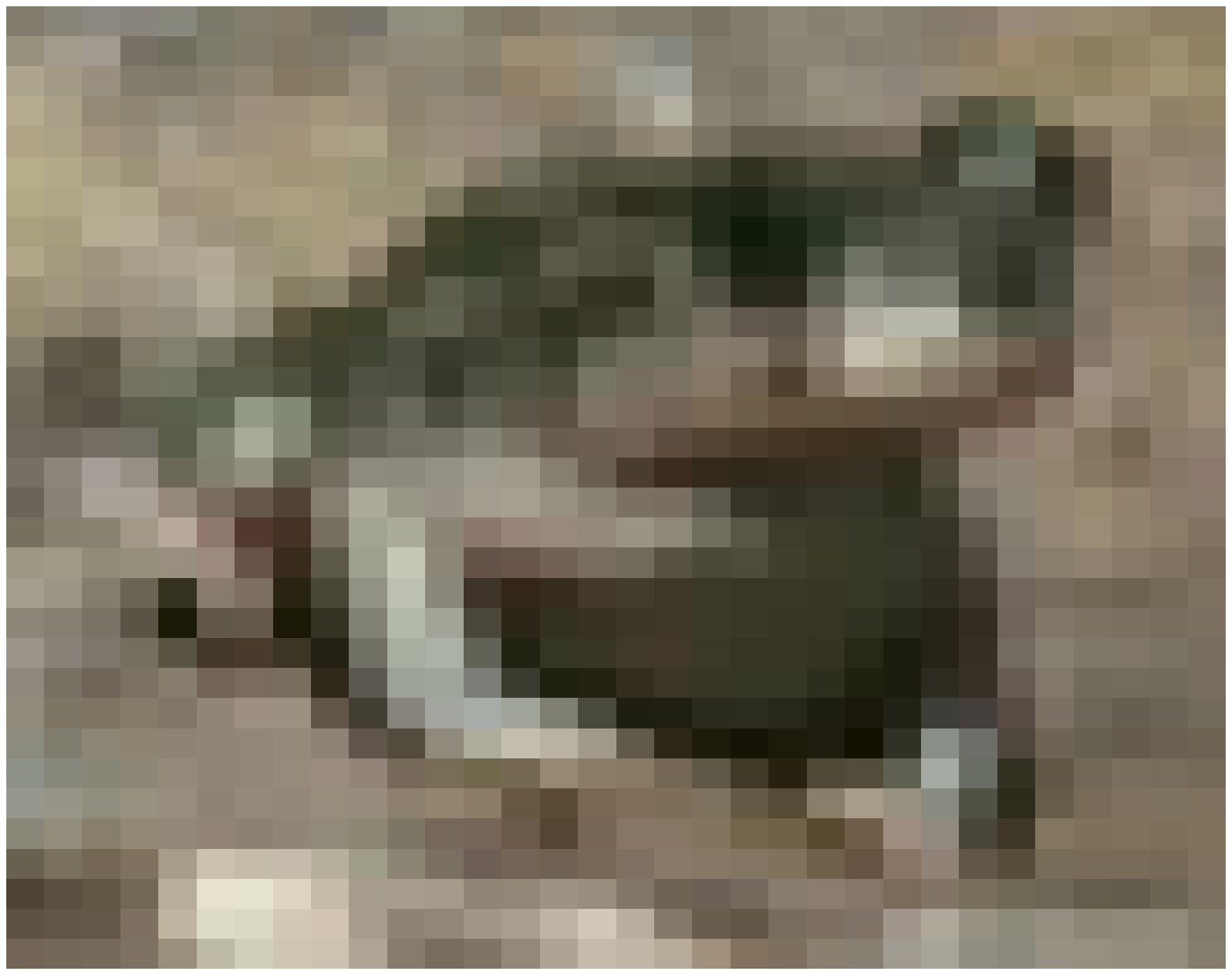}& \includegraphics[width=.1\linewidth]{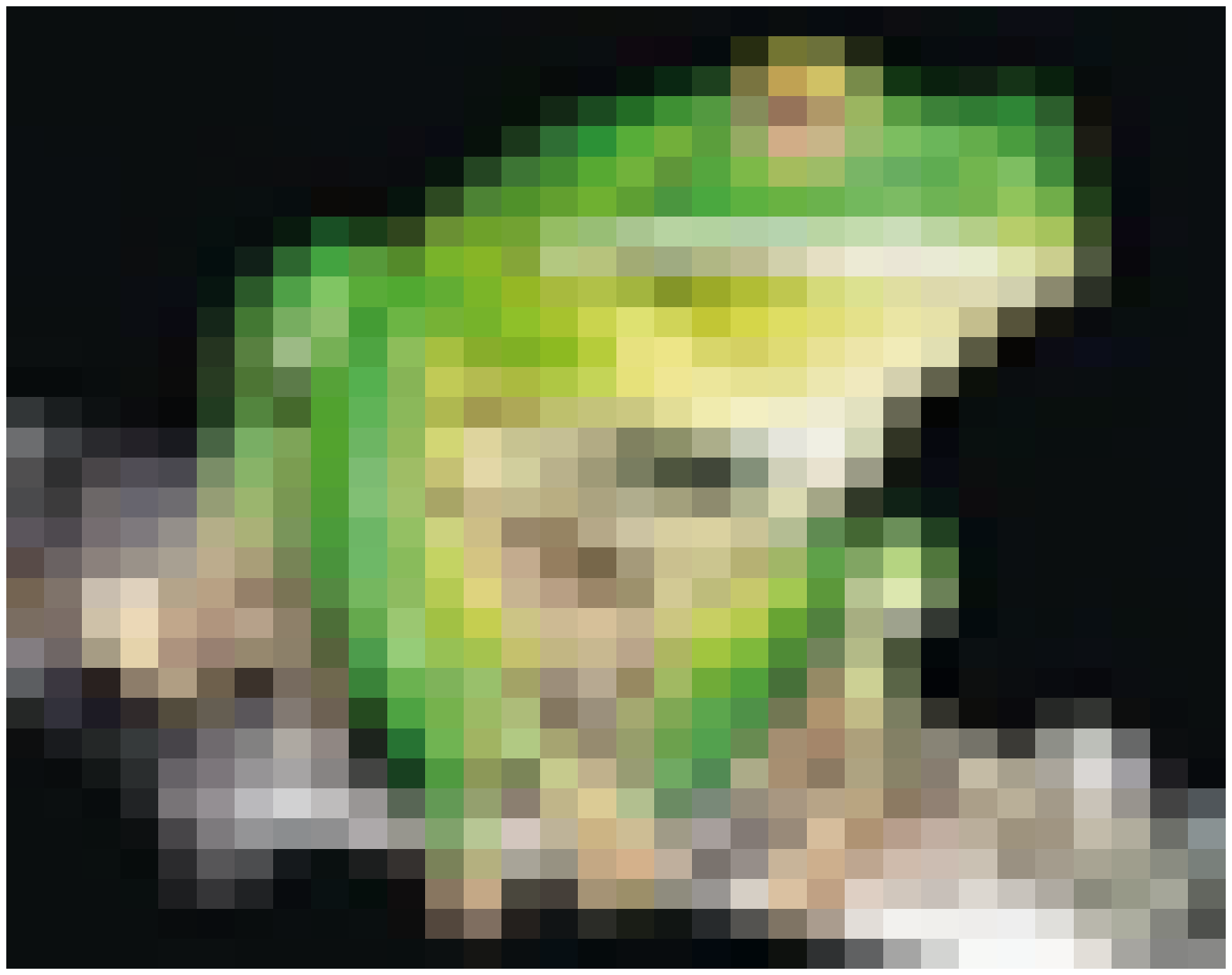}&\includegraphics[width=.1\linewidth]{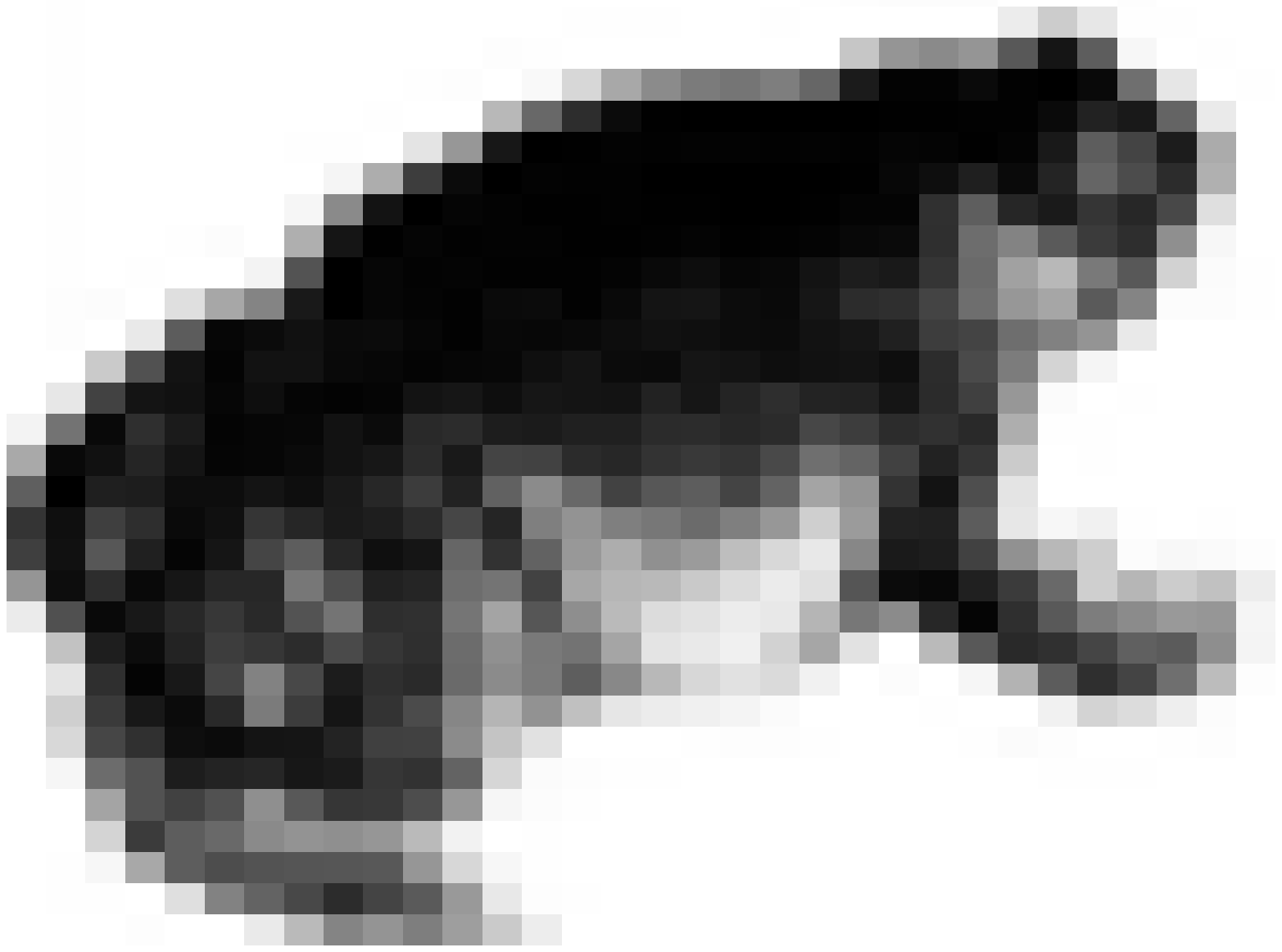}&\includegraphics[width=.1\linewidth]{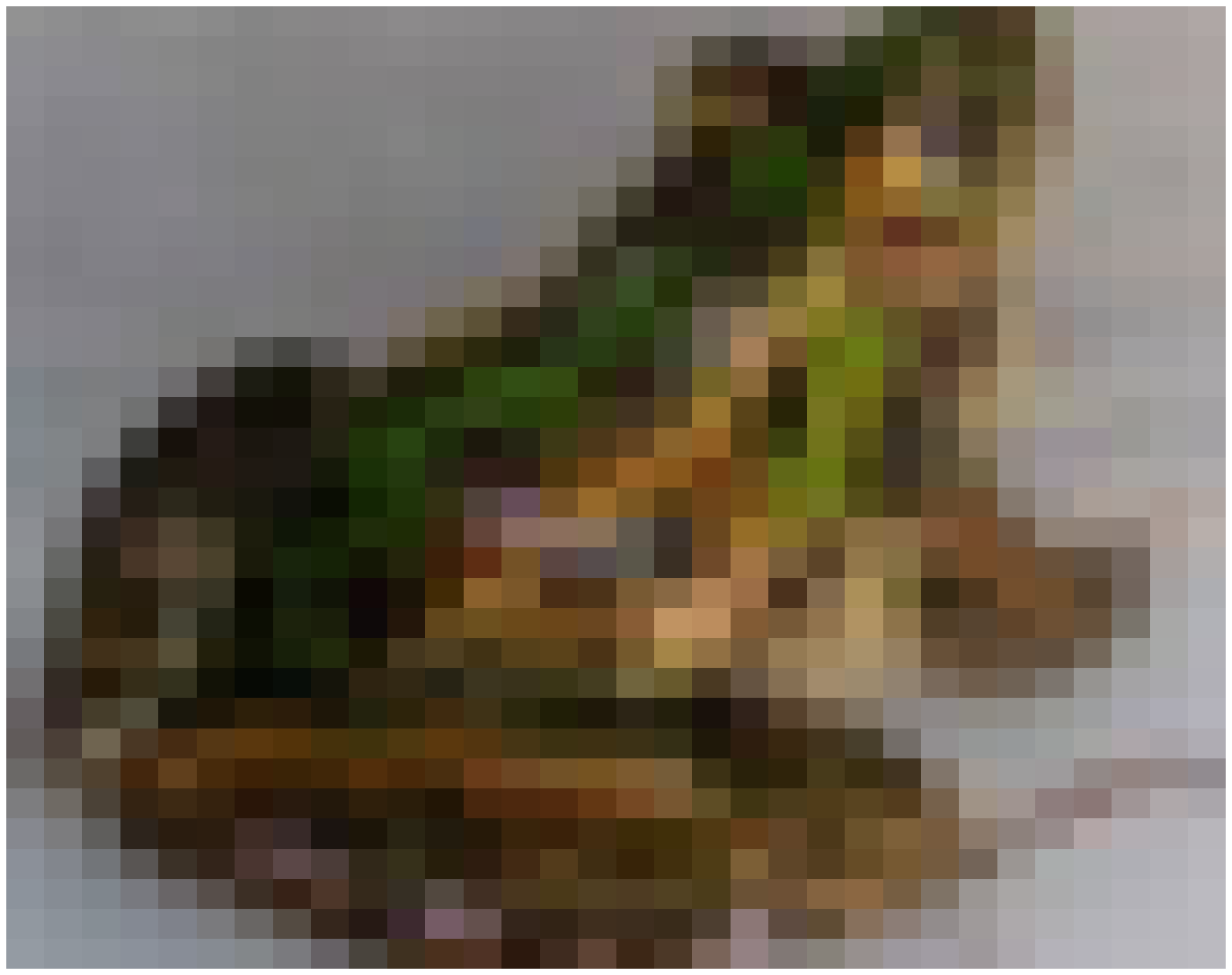}&\includegraphics[width=.1\linewidth]{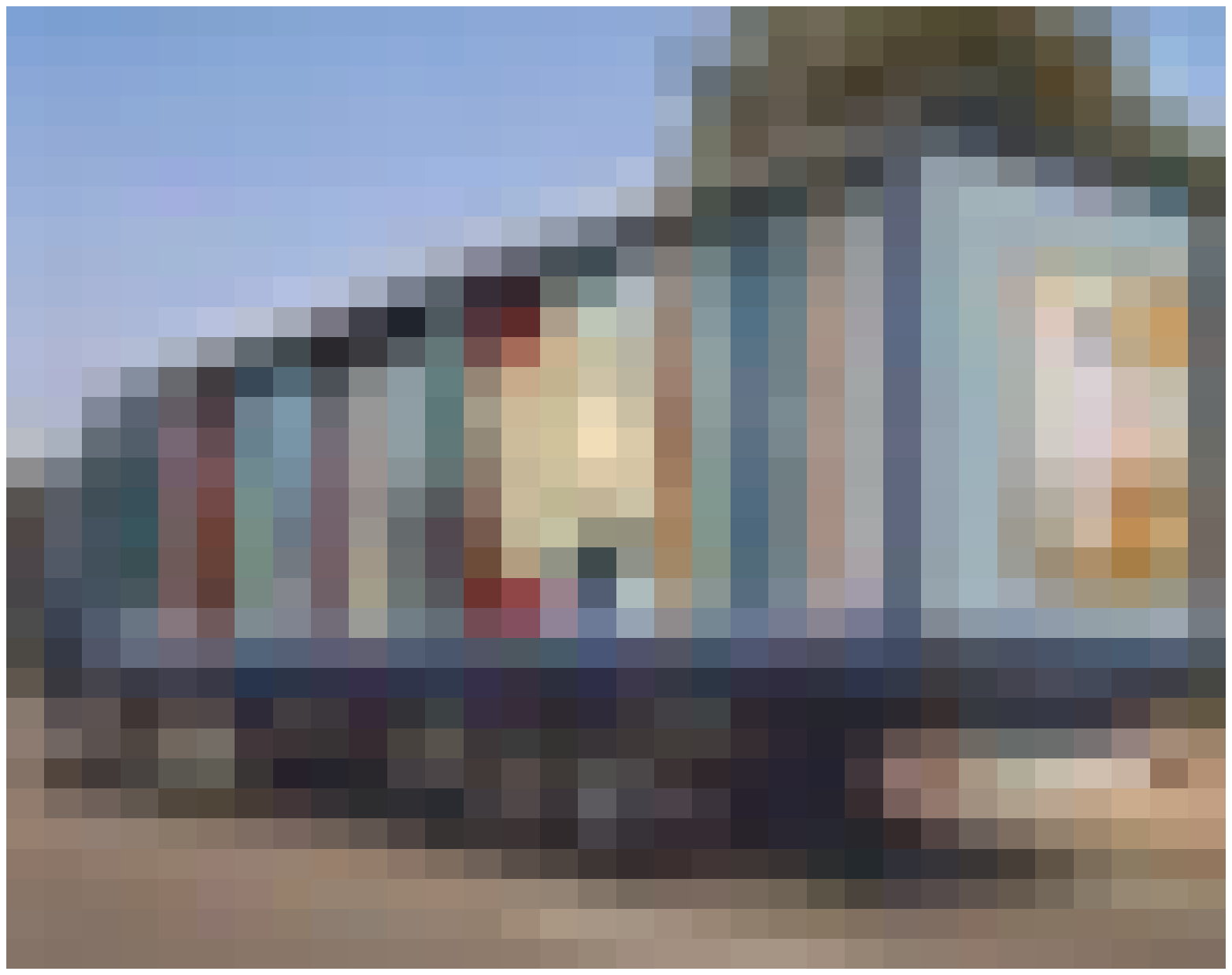}&\includegraphics[width=.1\linewidth]{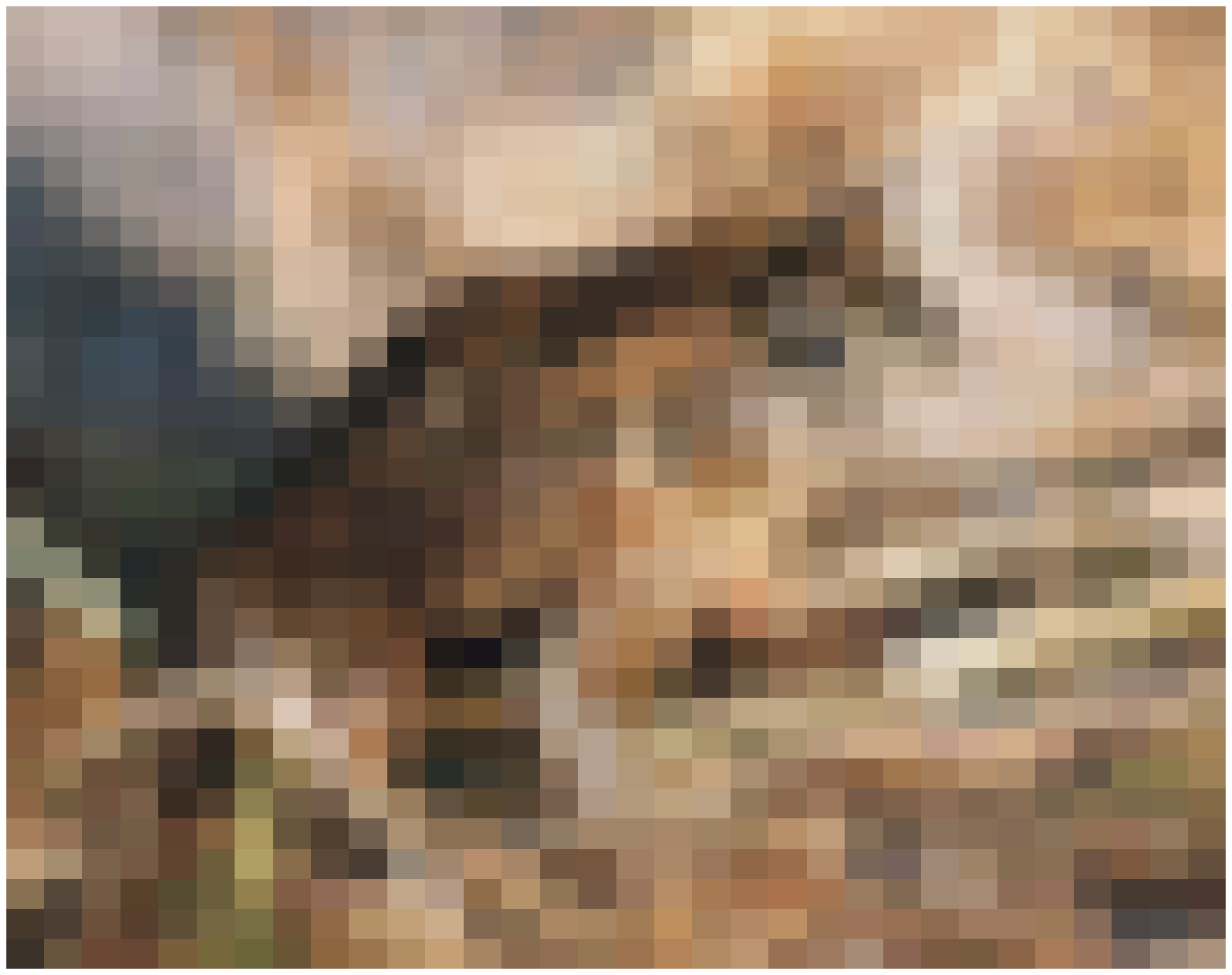}&\includegraphics[width=.1\linewidth]{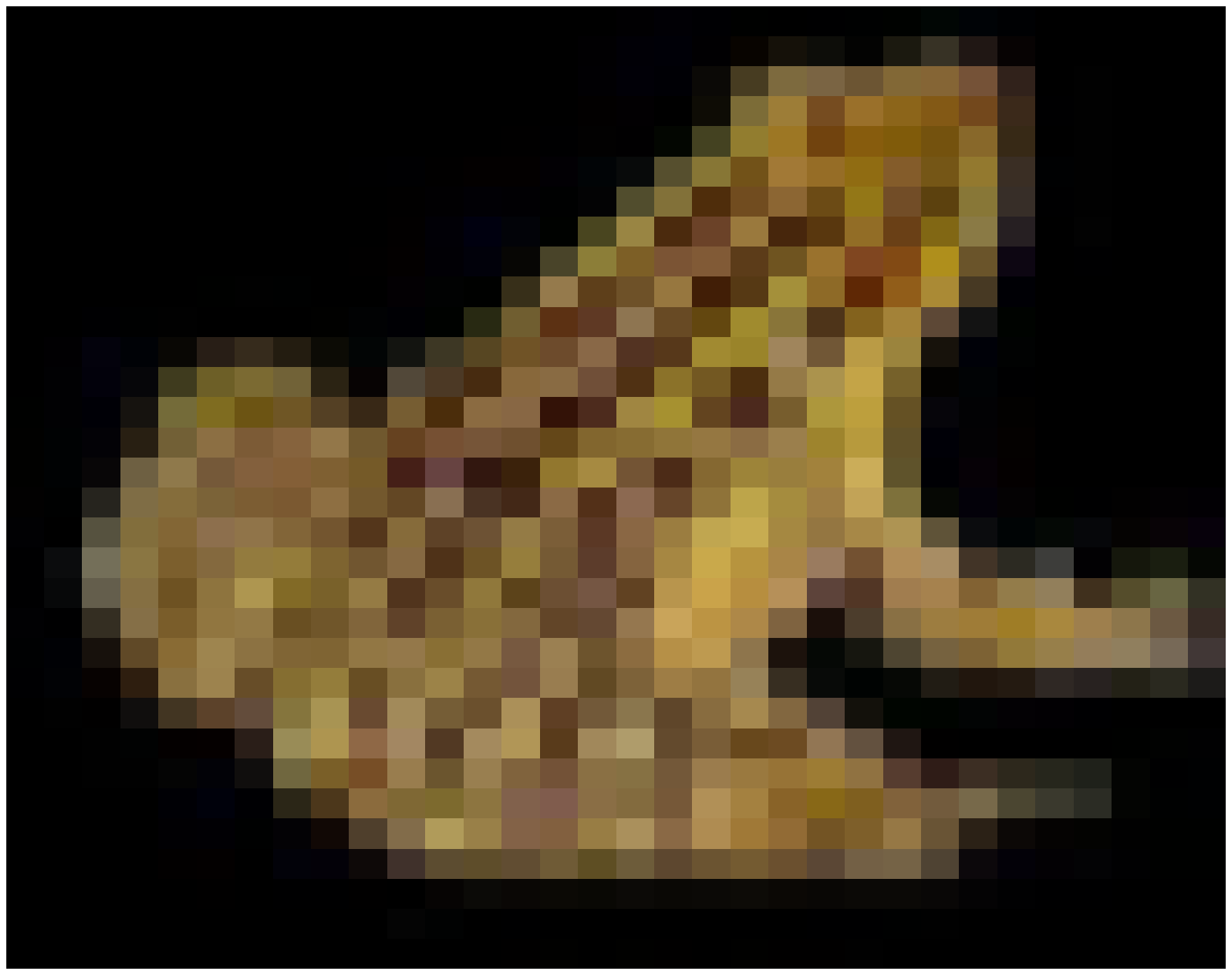}&\includegraphics[width=.1\linewidth]{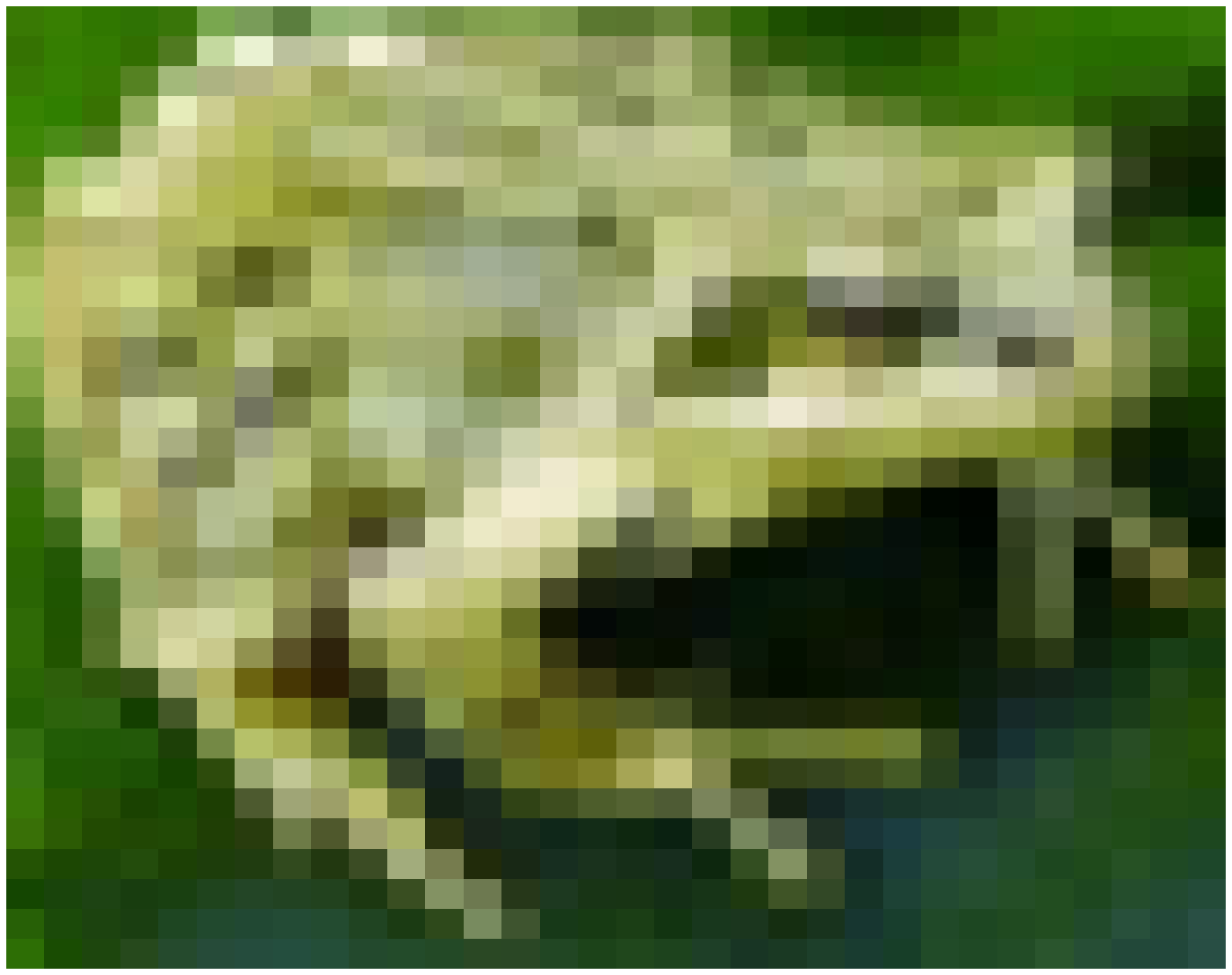}&\includegraphics[width=.1\linewidth]{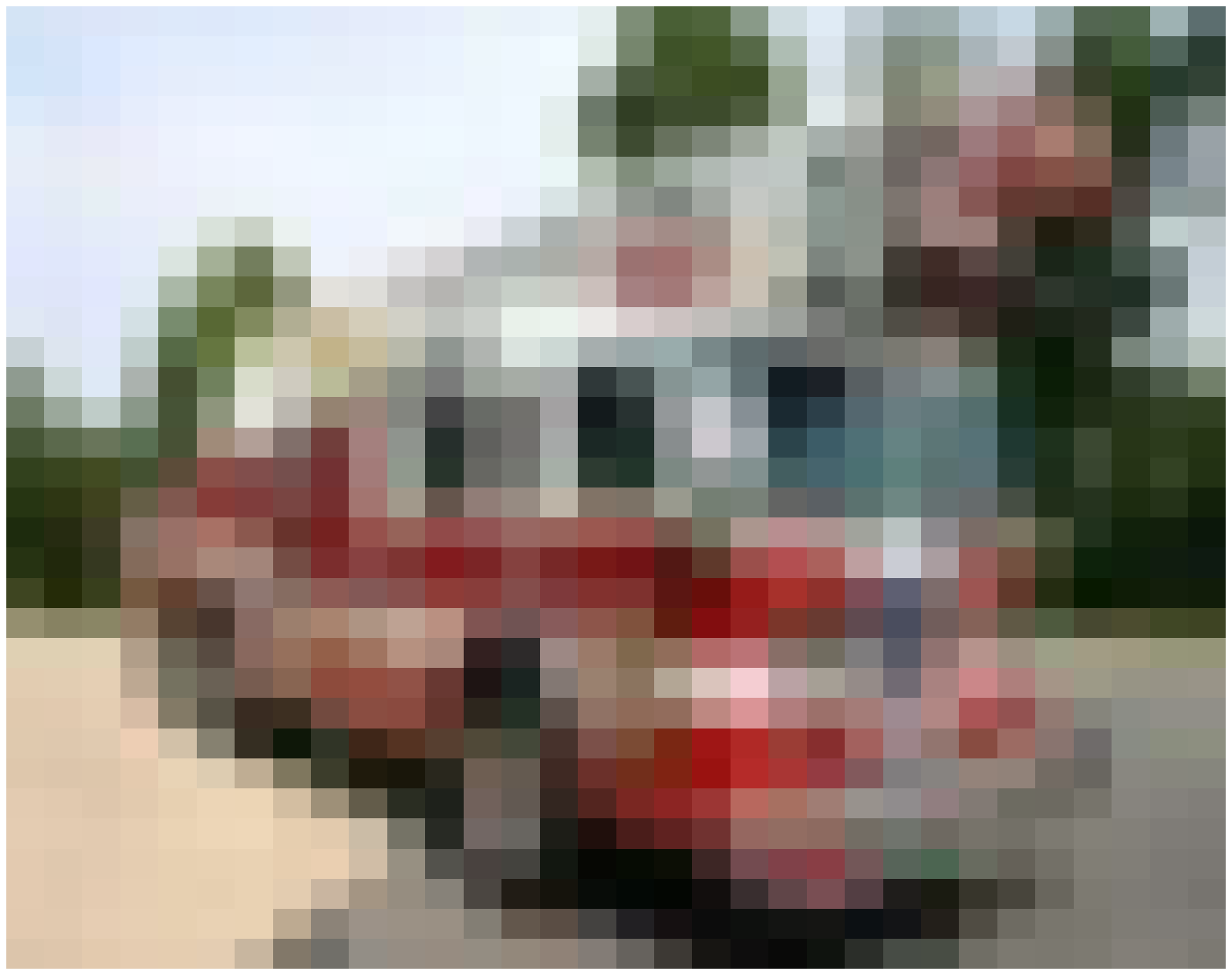}&\includegraphics[width=.1\linewidth]{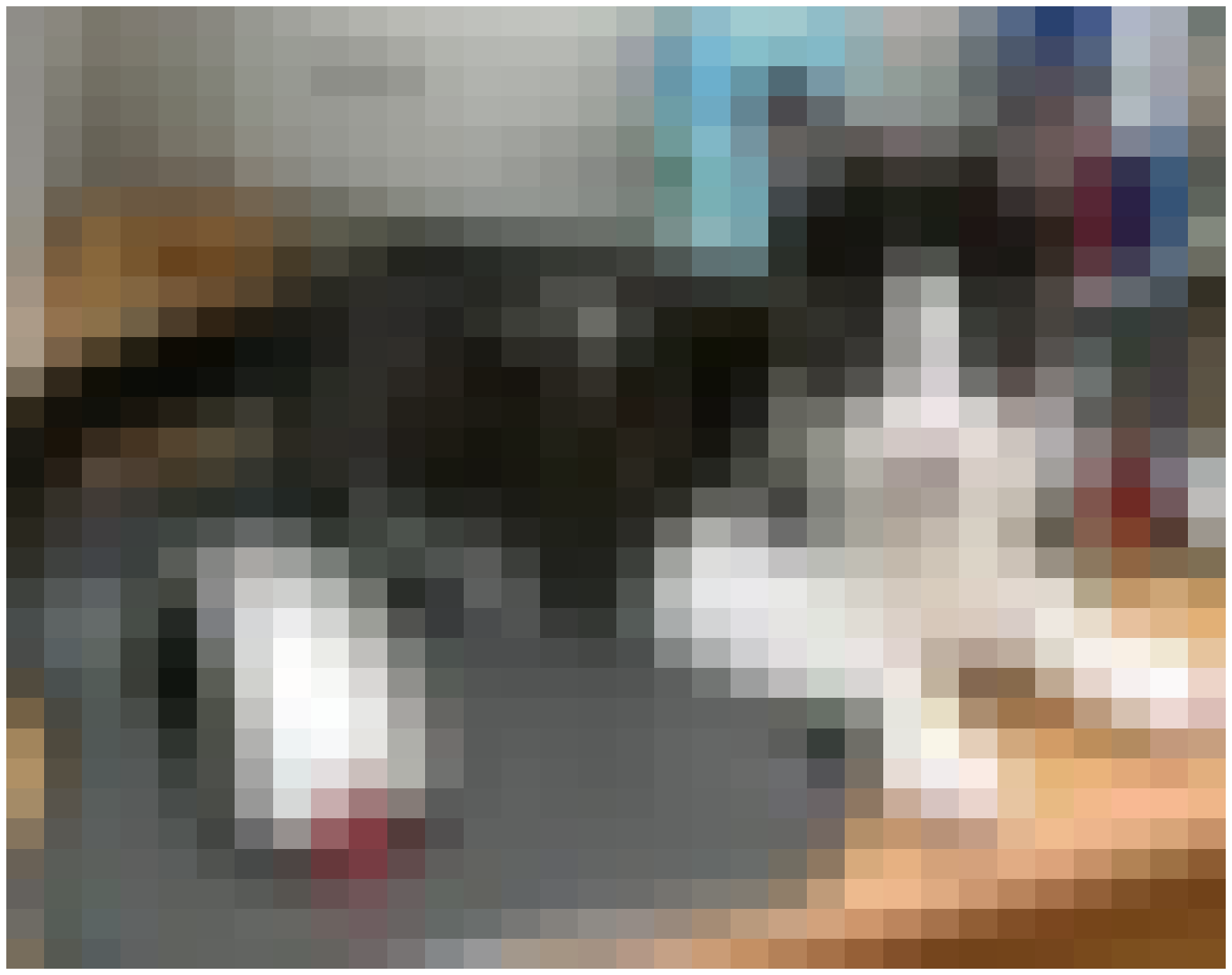}\\
    \includegraphics[width=.1\linewidth]{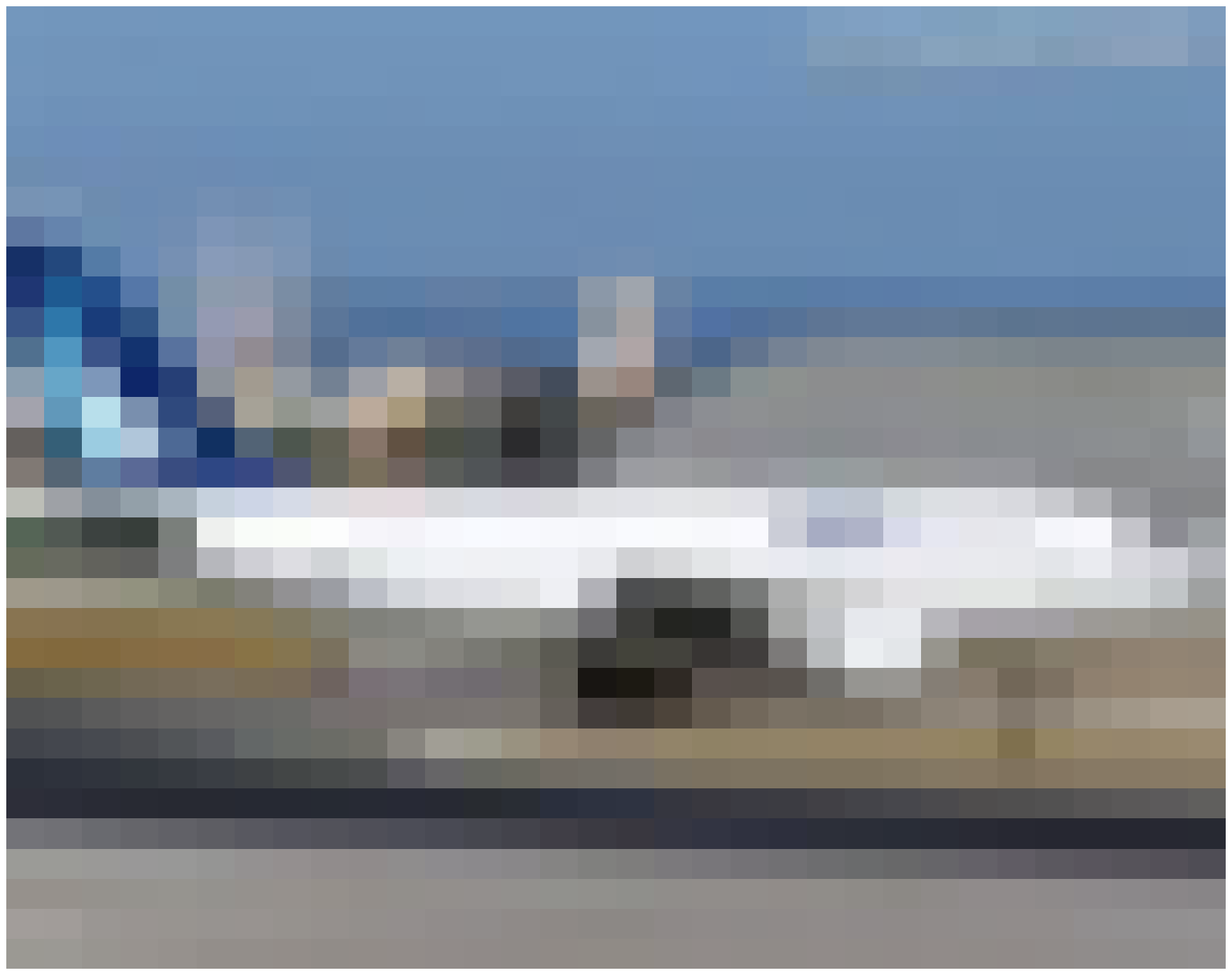}&\includegraphics[width=.1\linewidth]{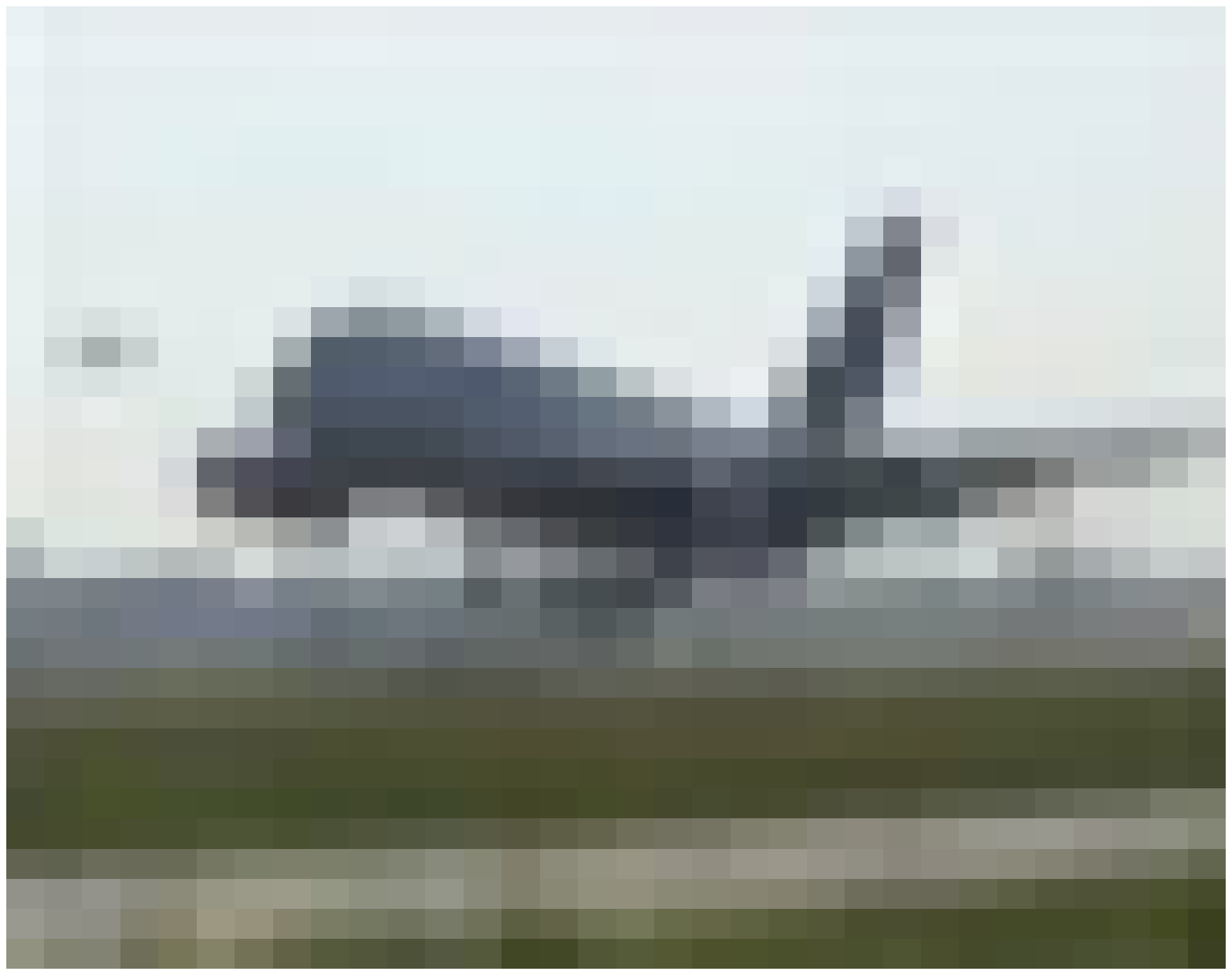}&\includegraphics[width=.1\linewidth]{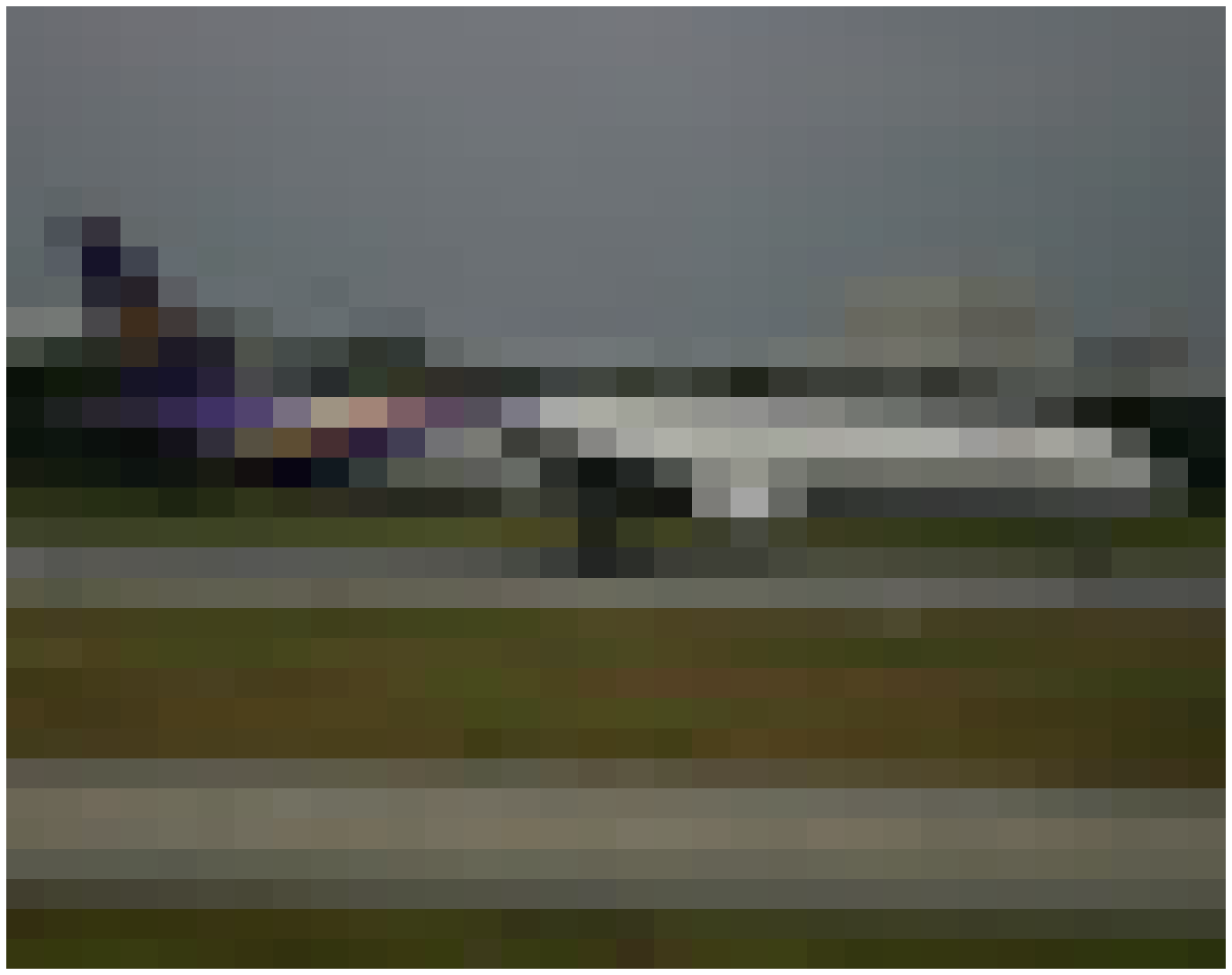}&\includegraphics[width=.1\linewidth]{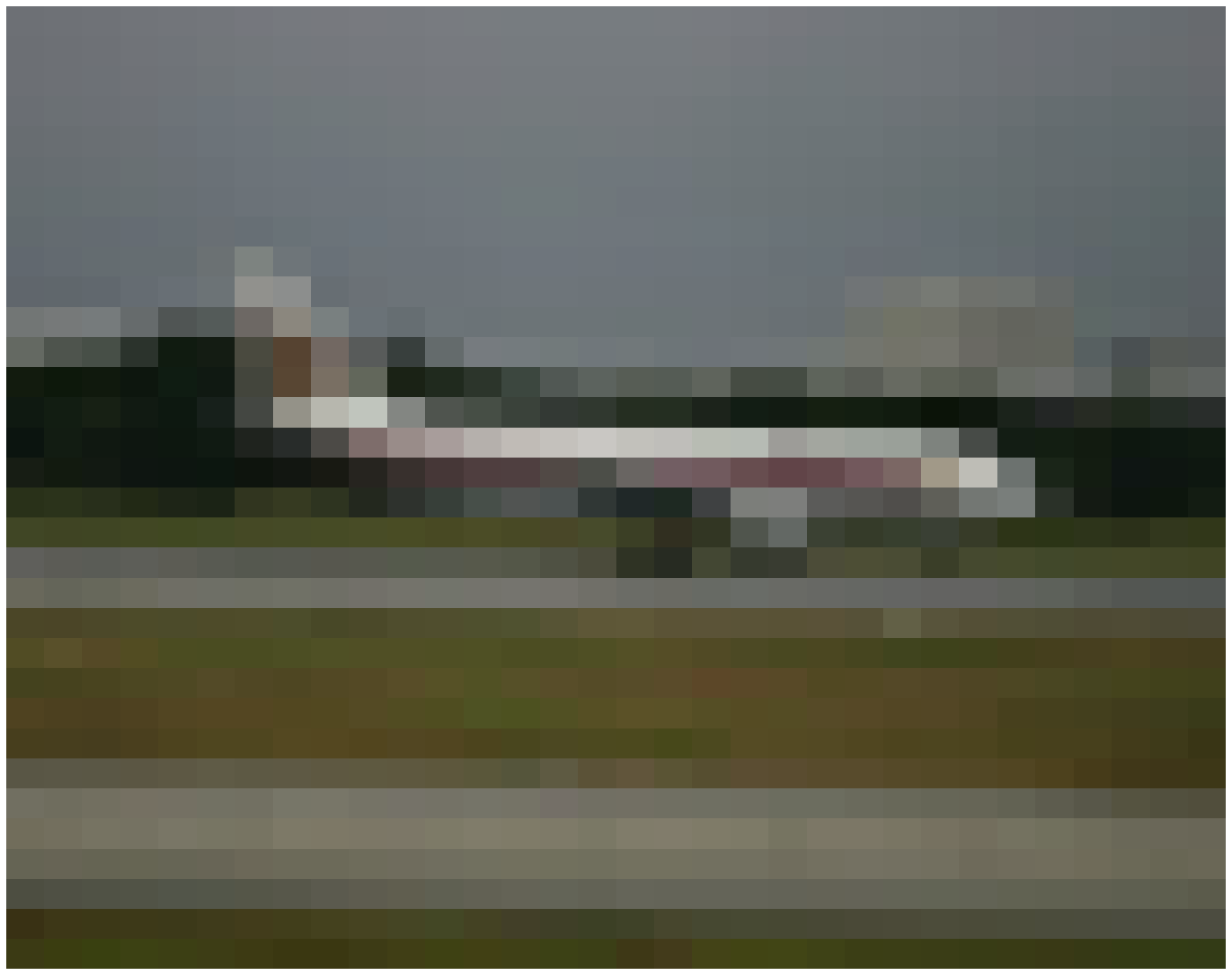}&\includegraphics[width=.1\linewidth]{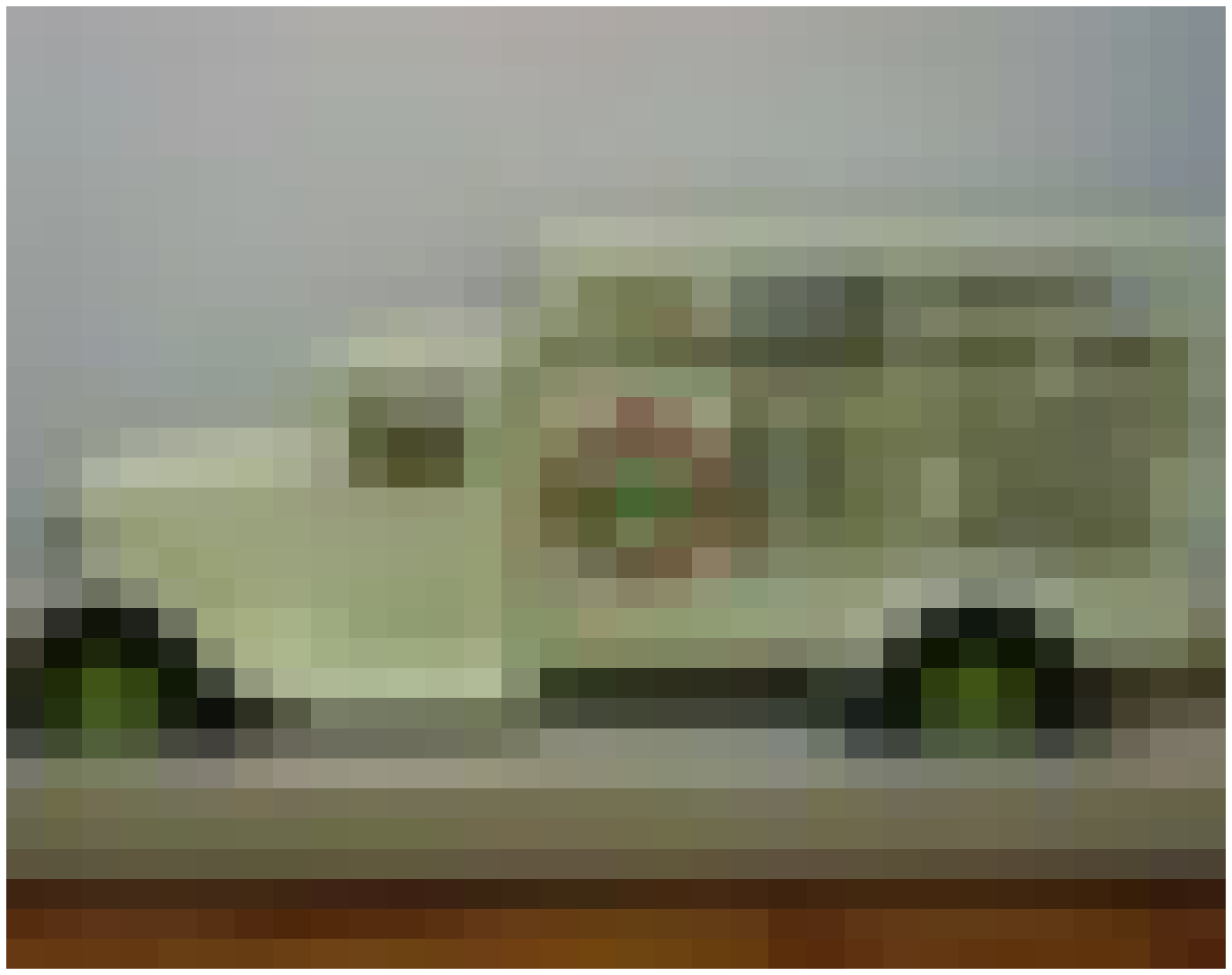}&\includegraphics[width=.1\linewidth]{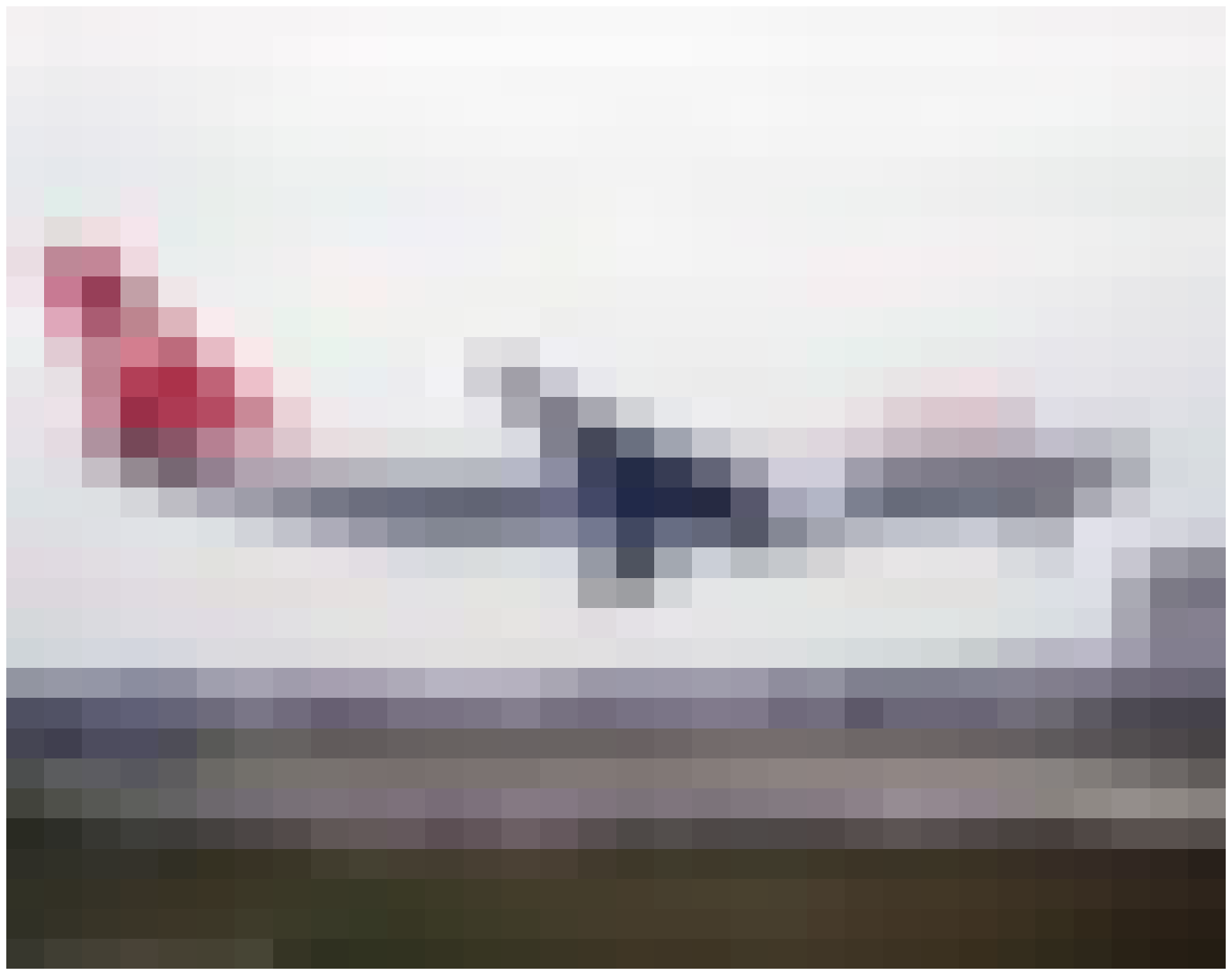}&\includegraphics[width=.1\linewidth]{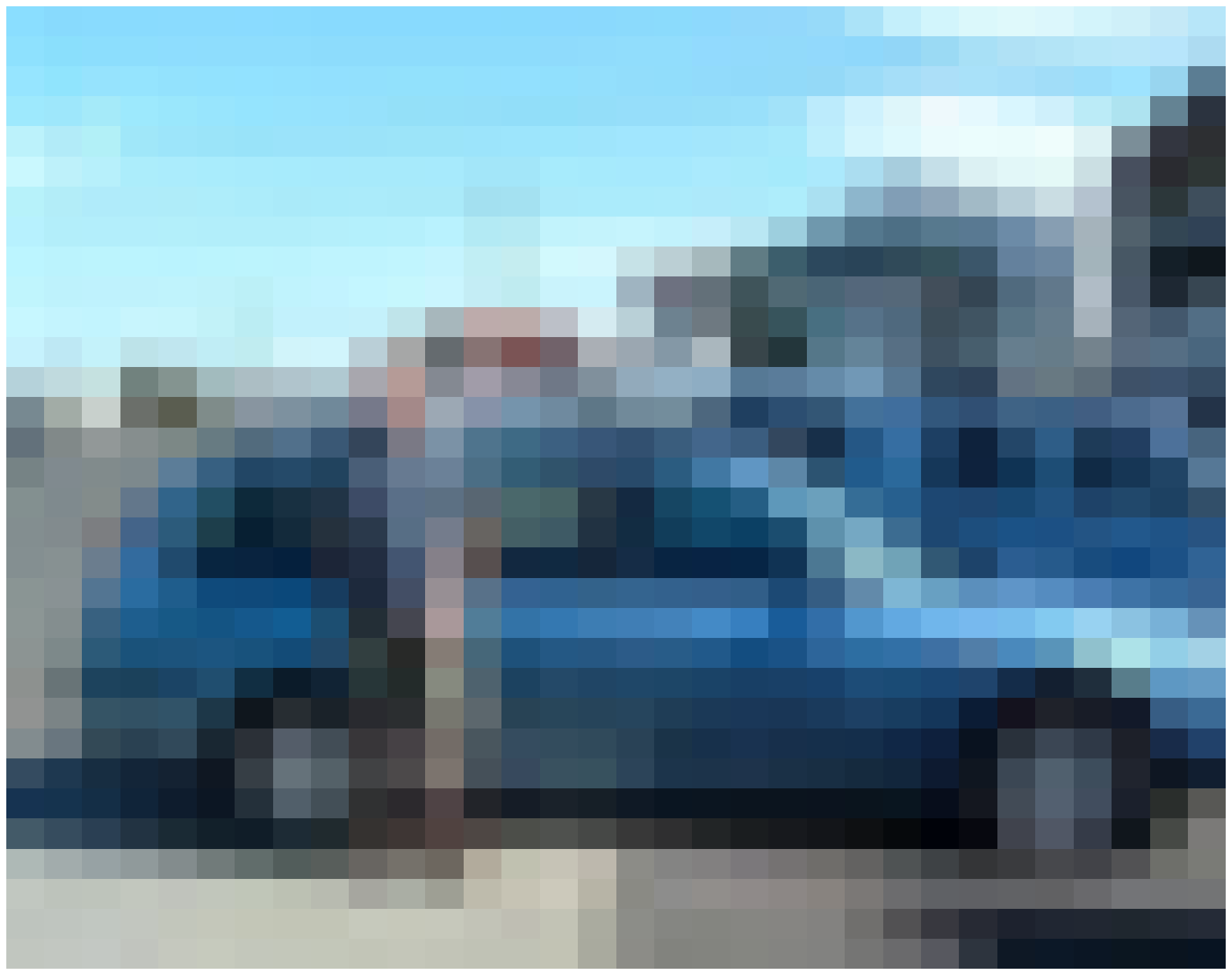}&\includegraphics[width=.1\linewidth]{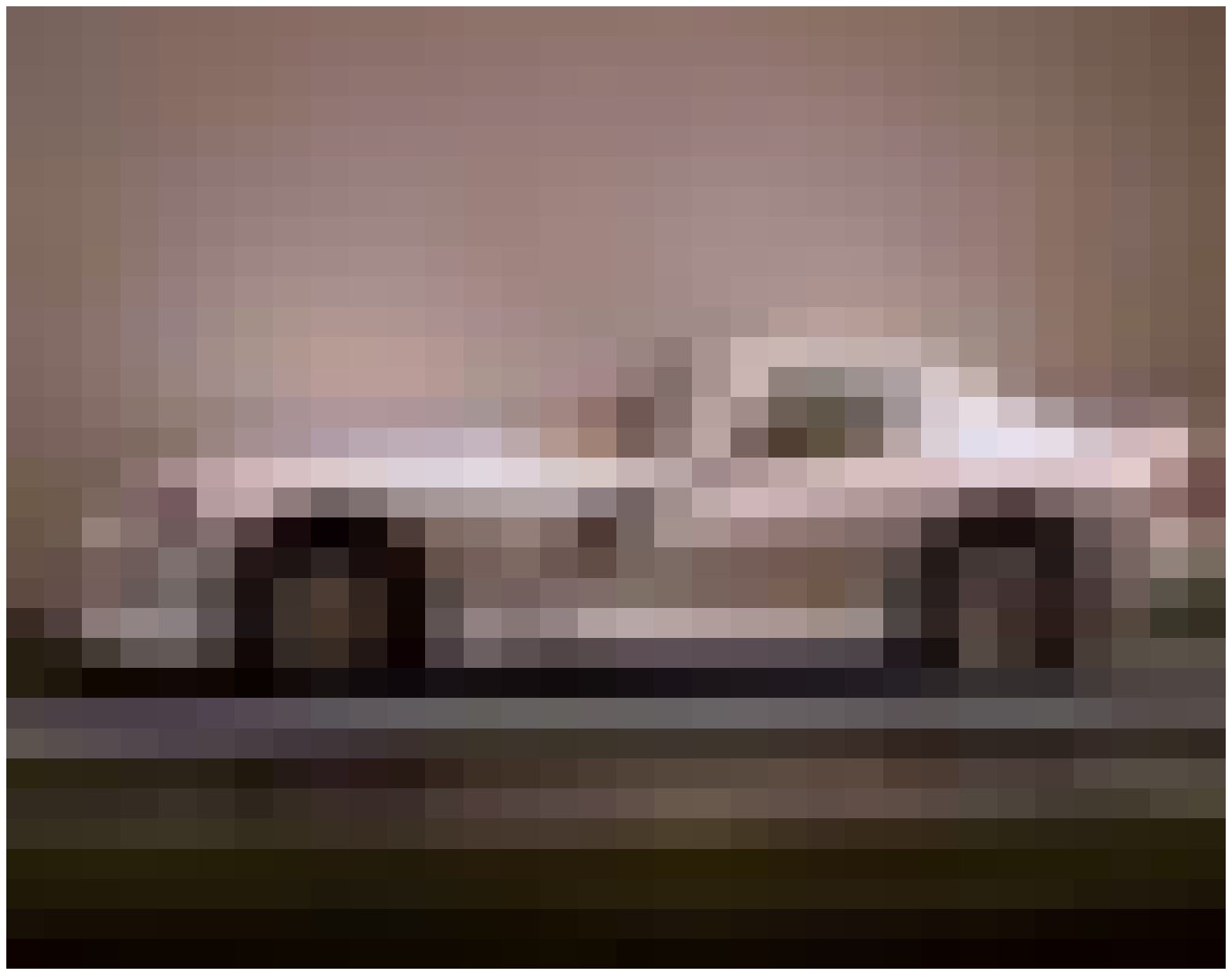}&\includegraphics[width=.1\linewidth]{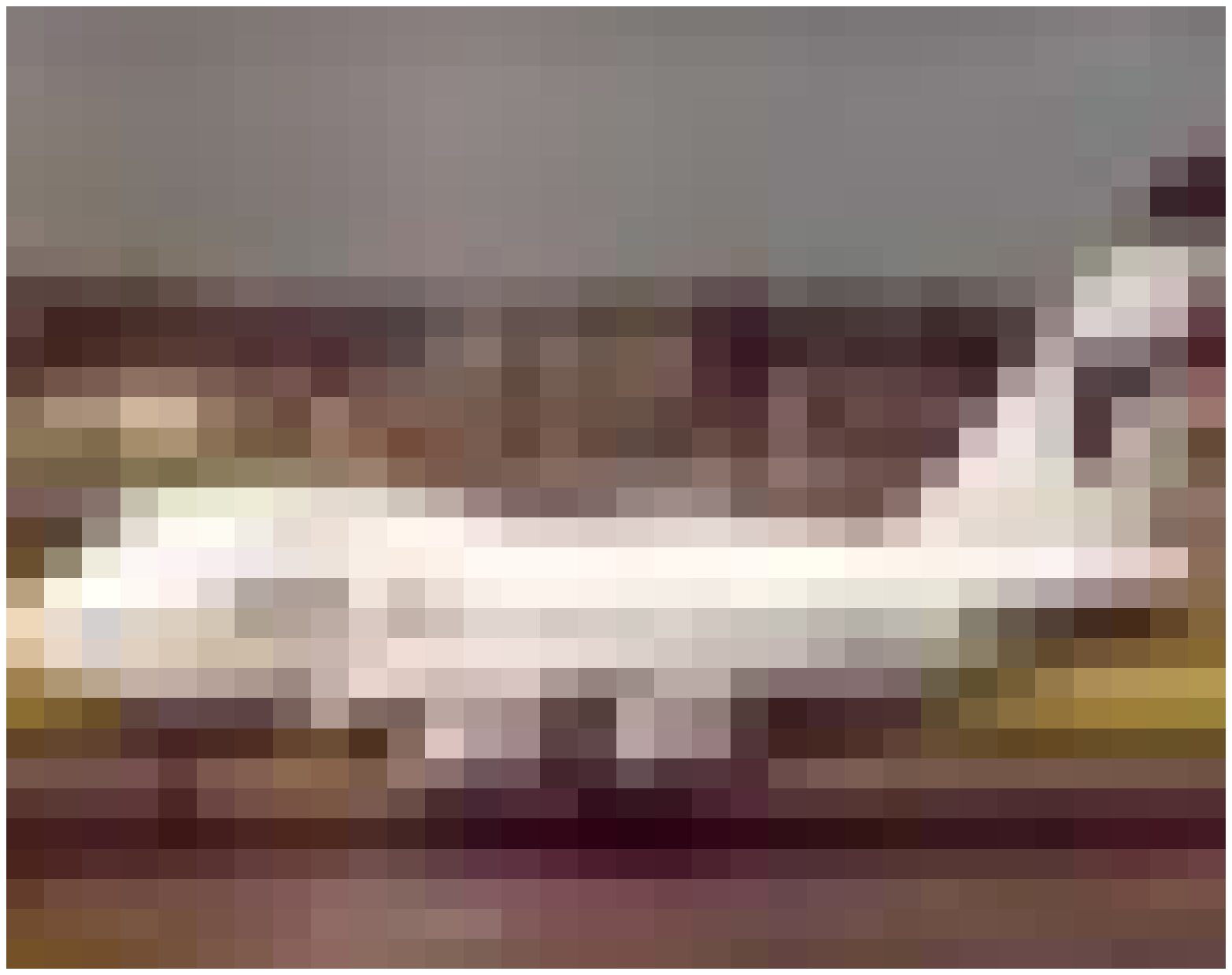}&\includegraphics[width=.1\linewidth]{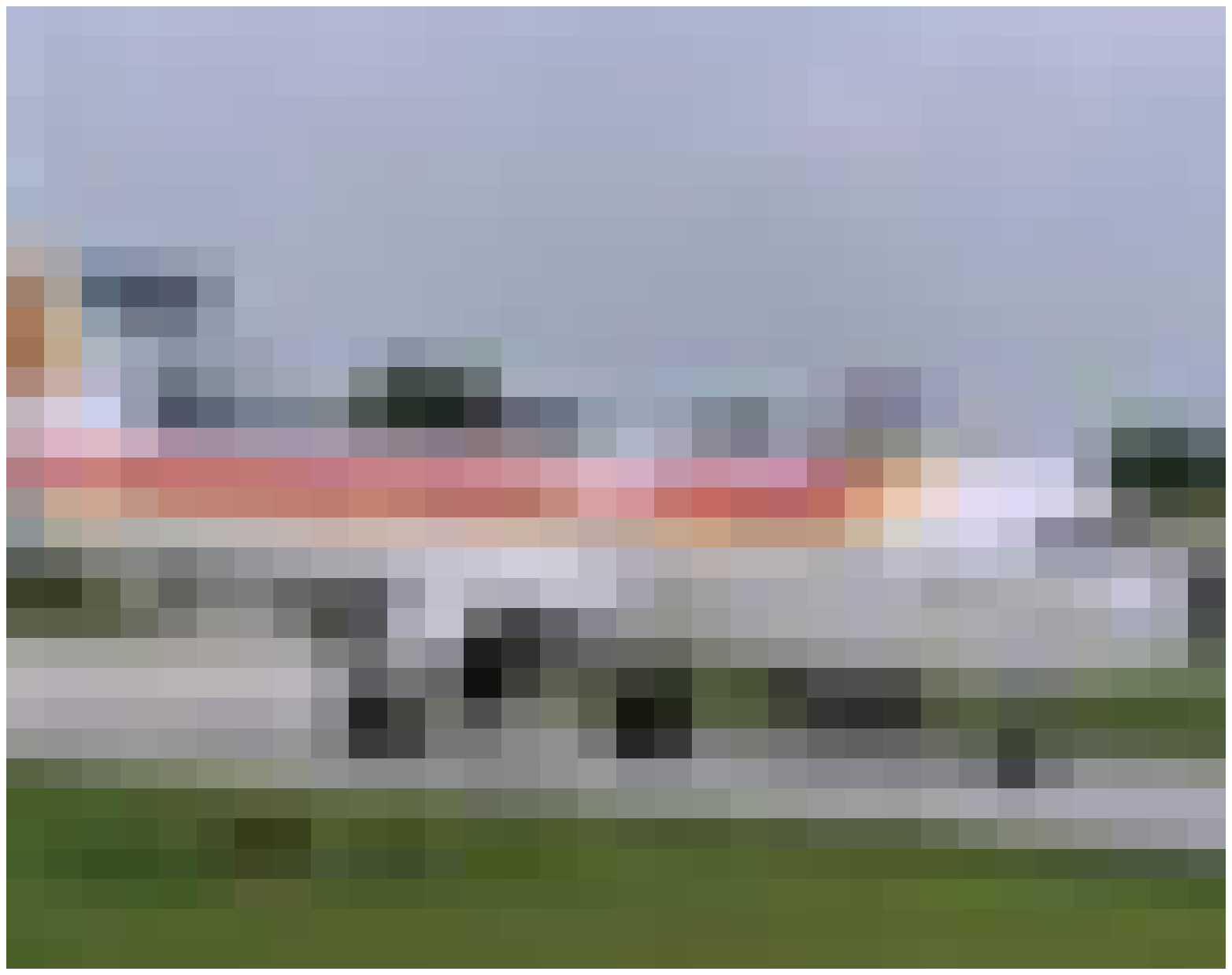}\\
    \includegraphics[width=.1\linewidth]{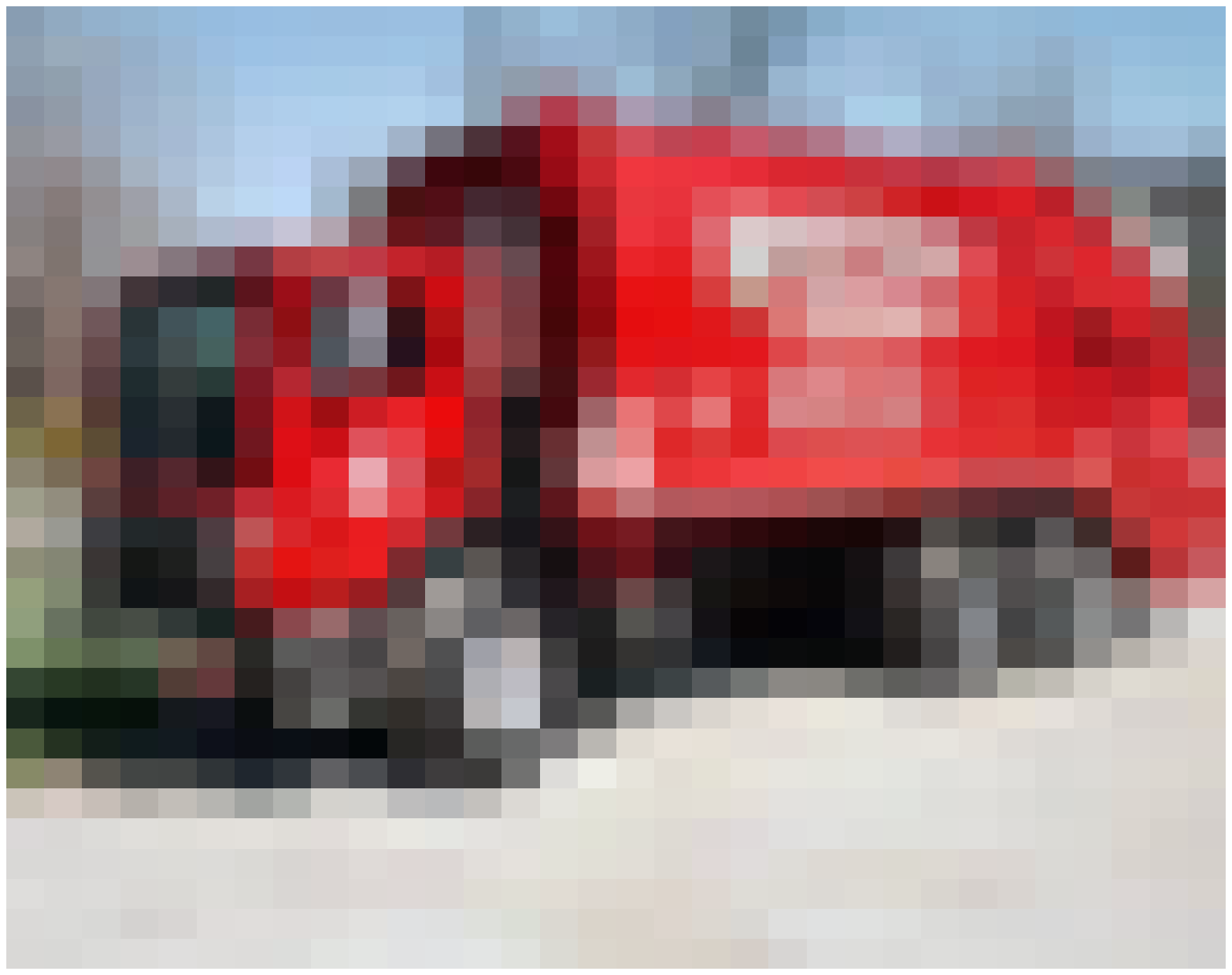}&\includegraphics[width=.1\linewidth]{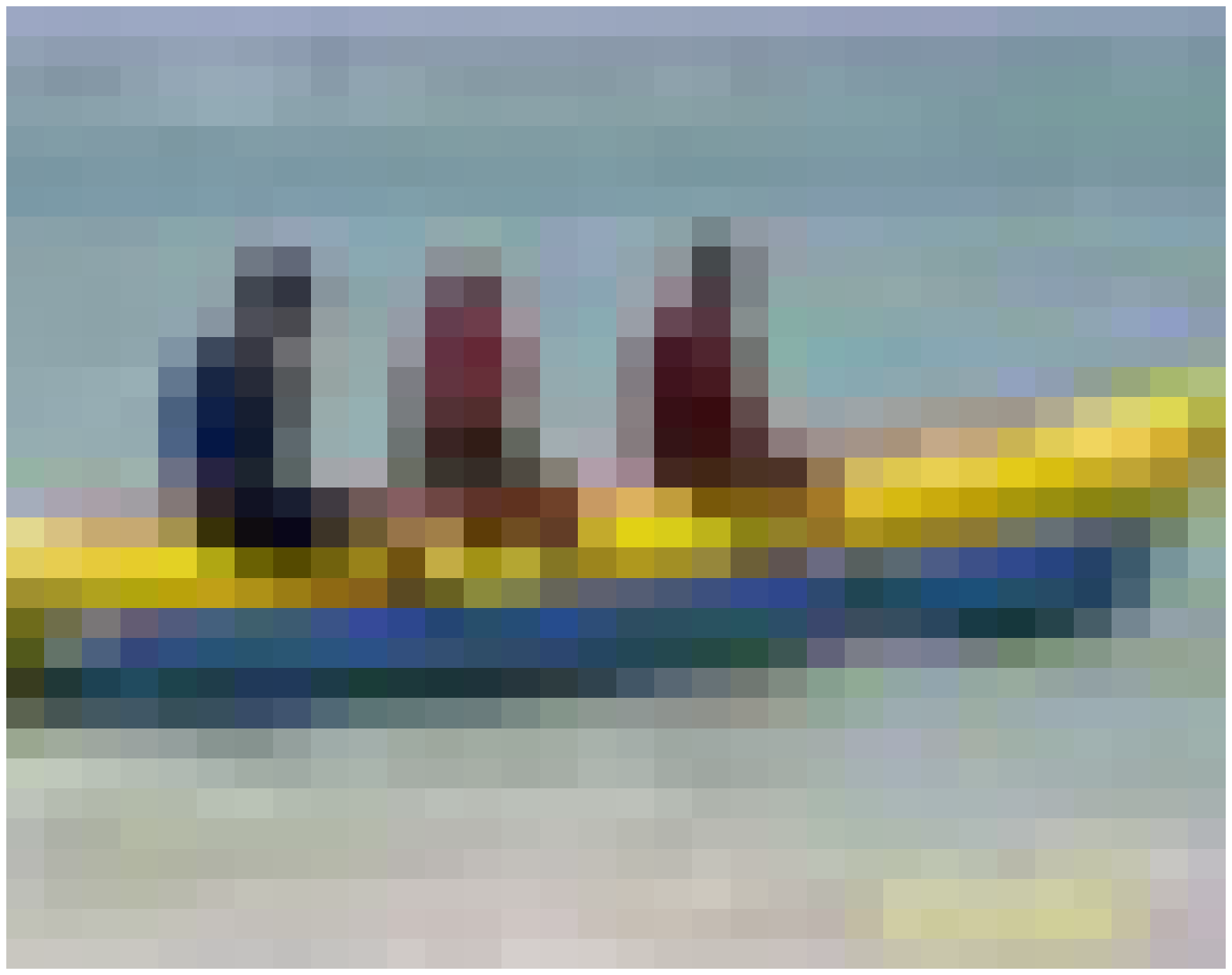}&\includegraphics[width=.1\linewidth]{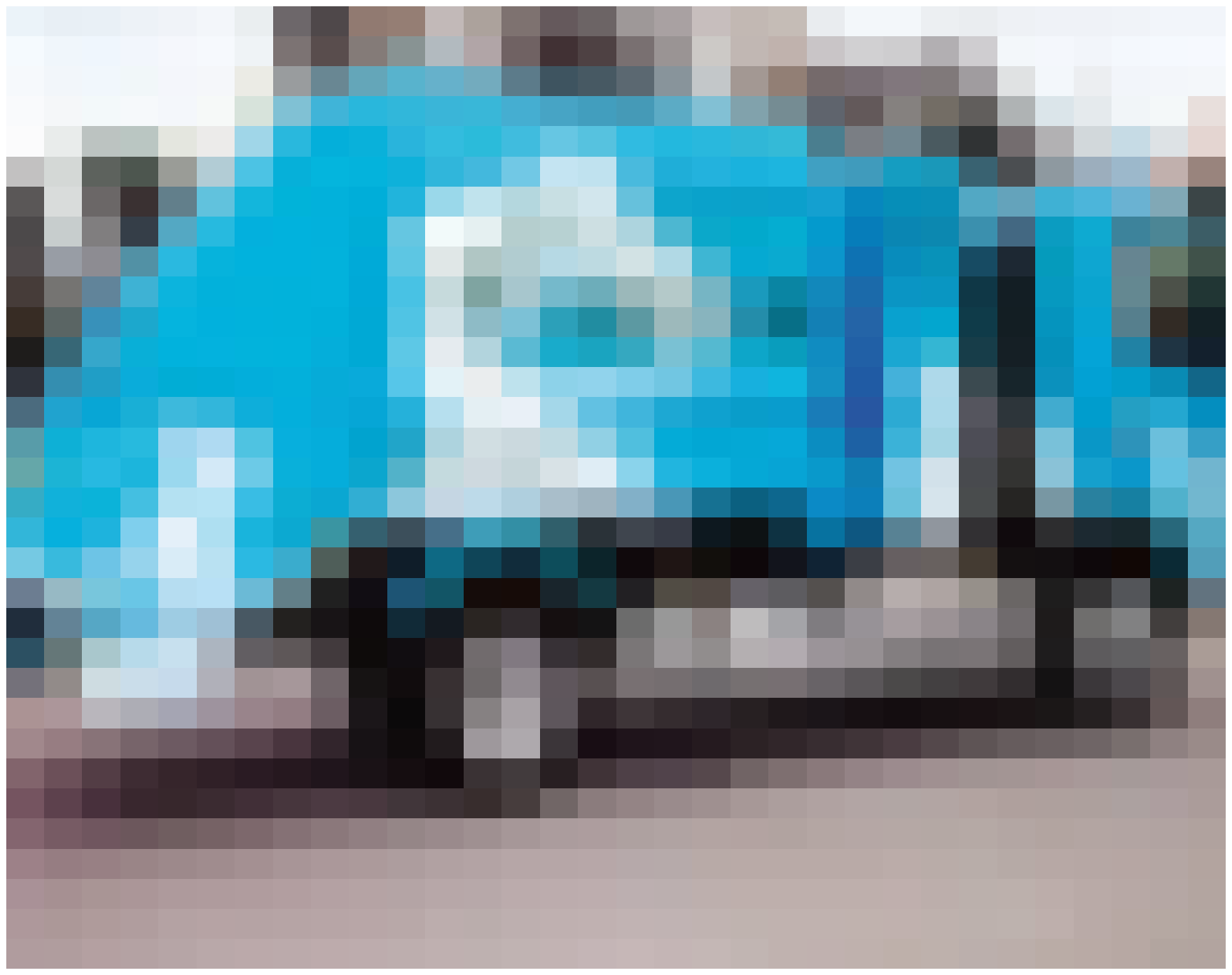}&\includegraphics[width=.1\linewidth]{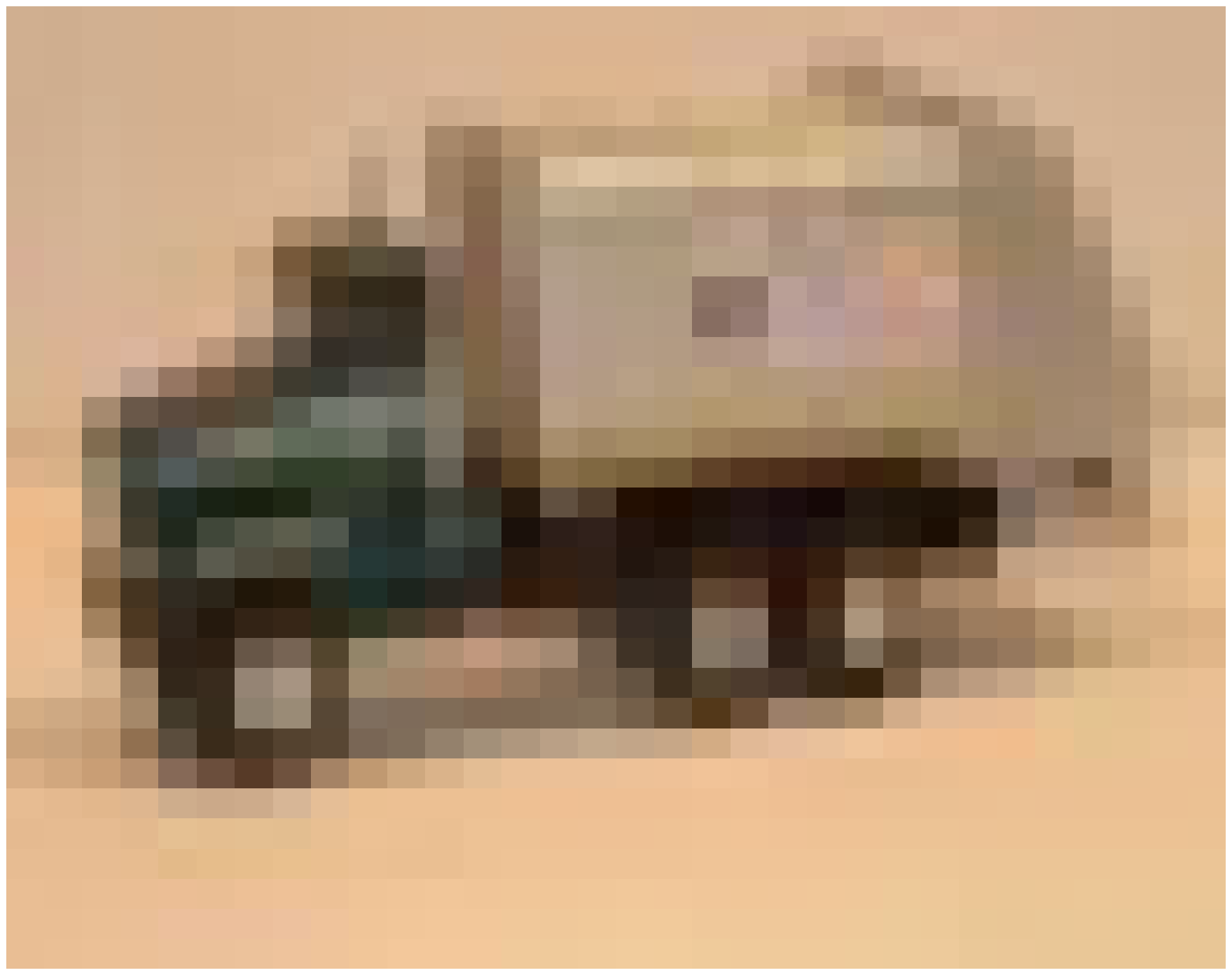}&\includegraphics[width=.1\linewidth]{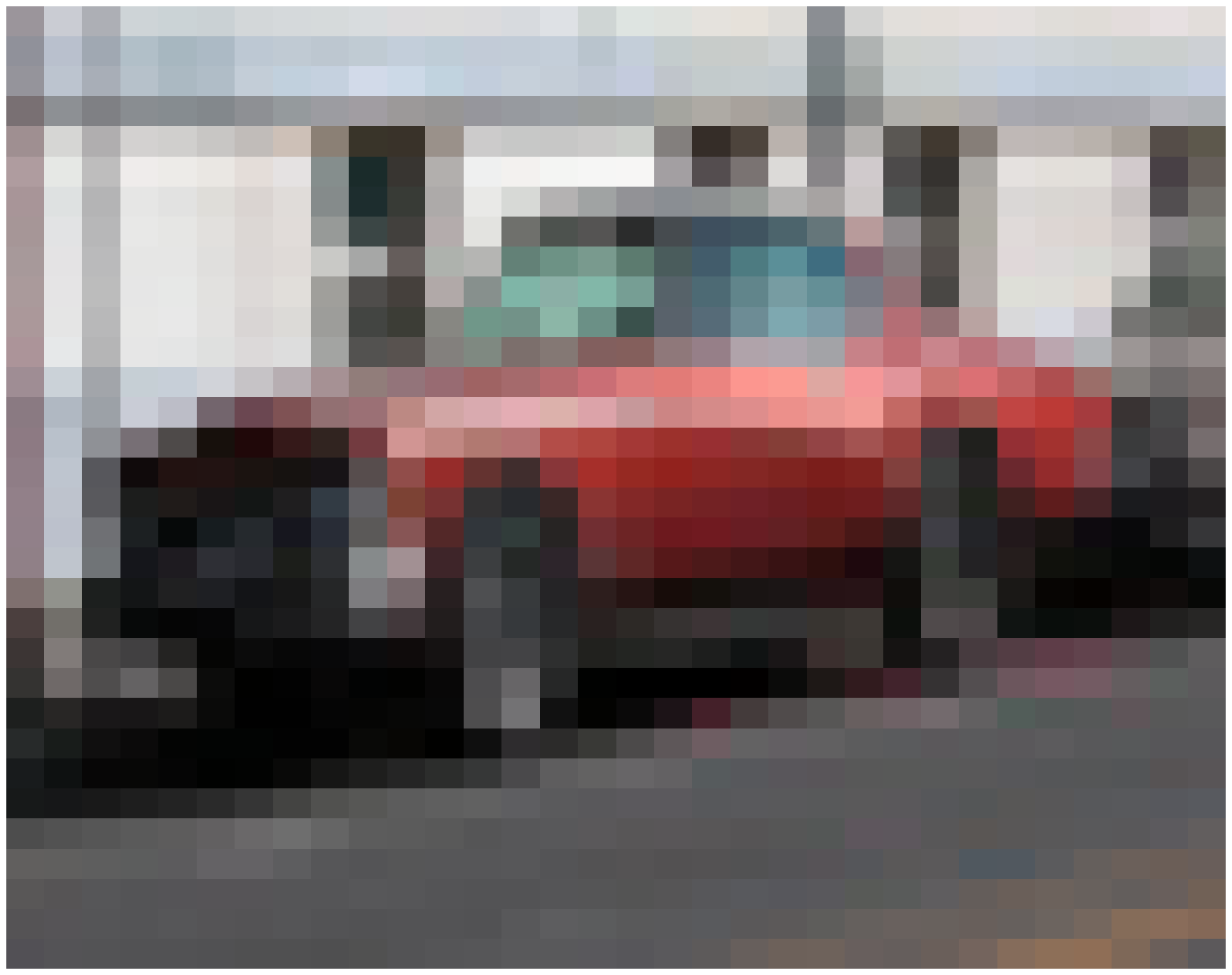}&\includegraphics[width=.1\linewidth]{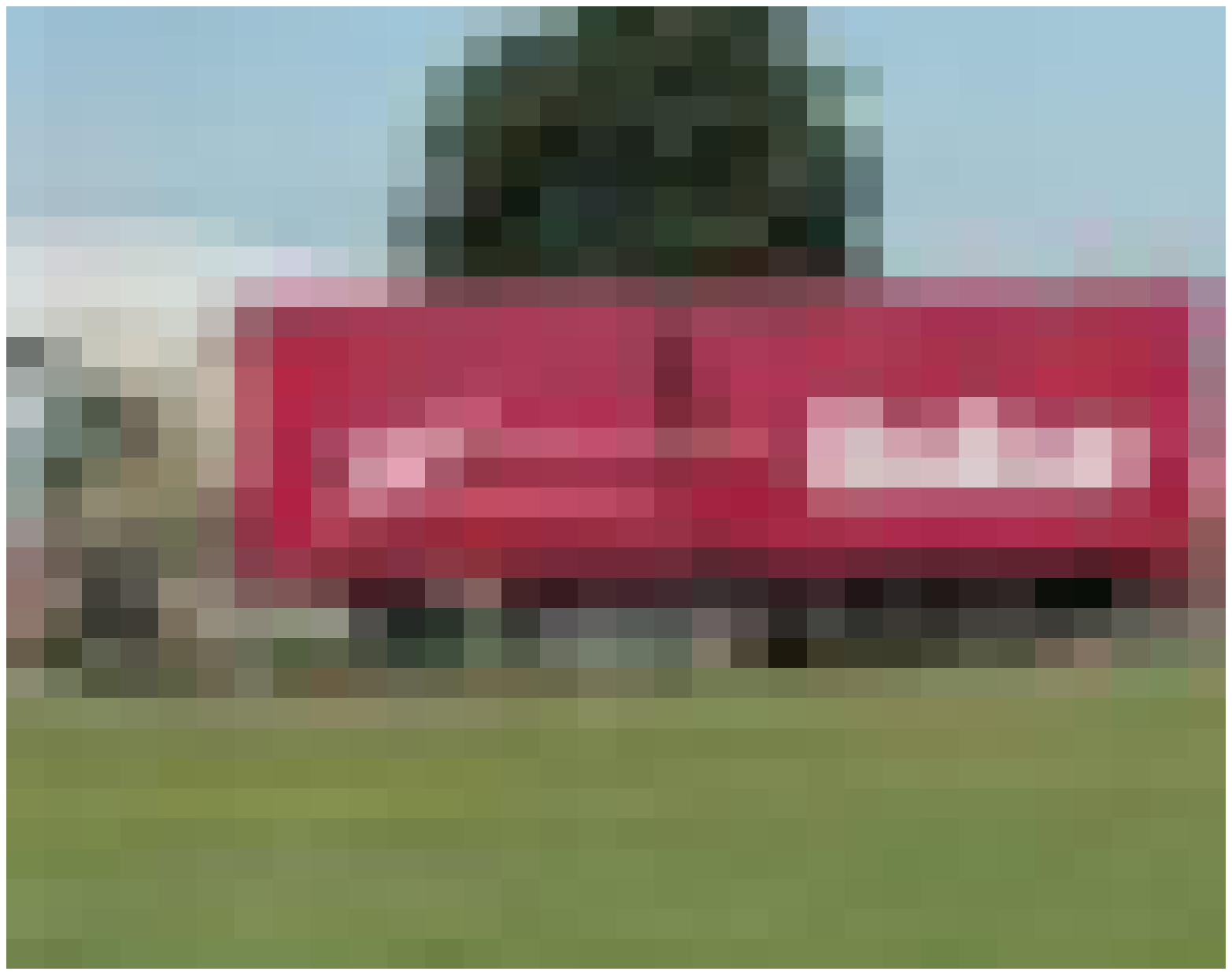}&\includegraphics[width=.1\linewidth]{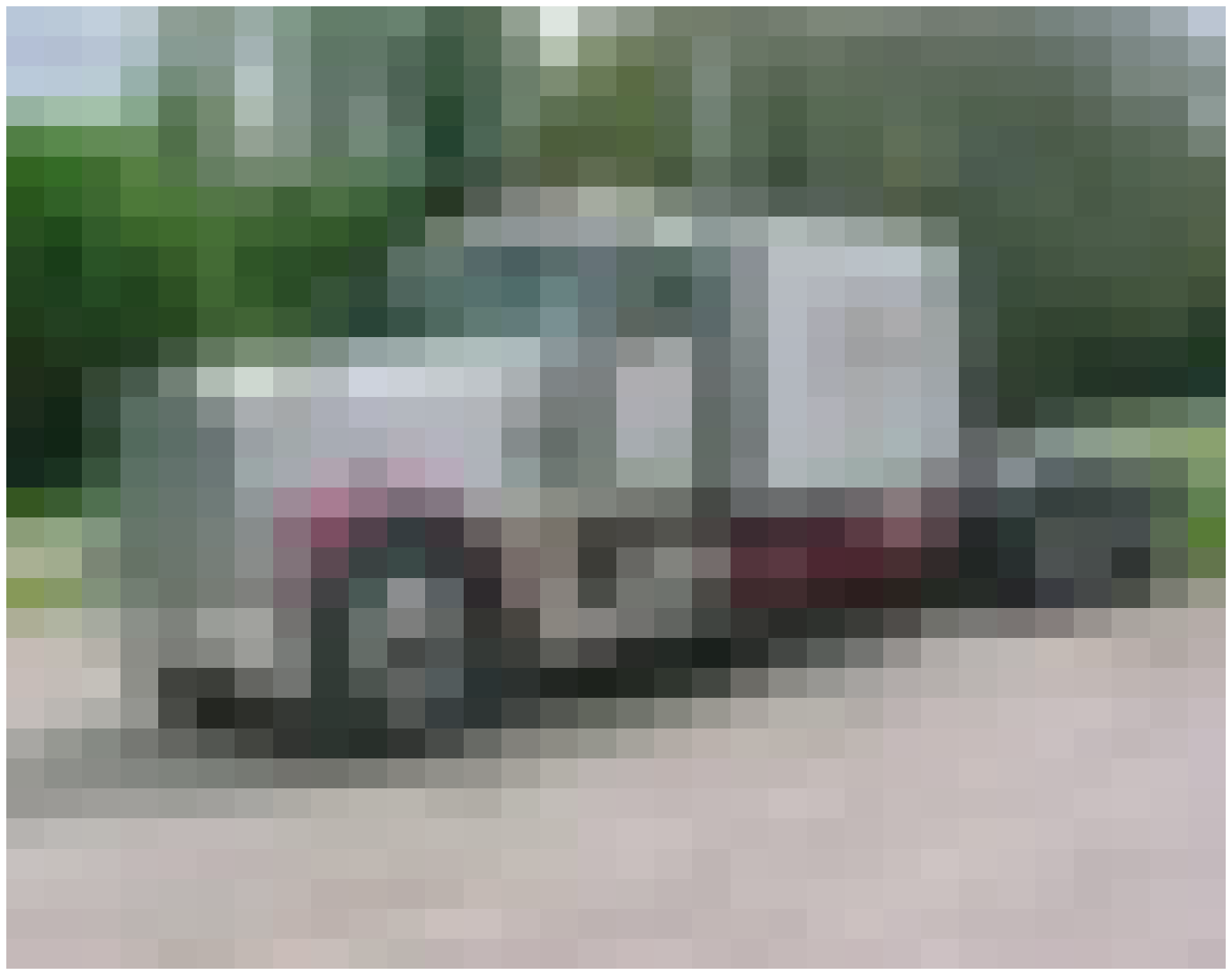}&\includegraphics[width=.1\linewidth]{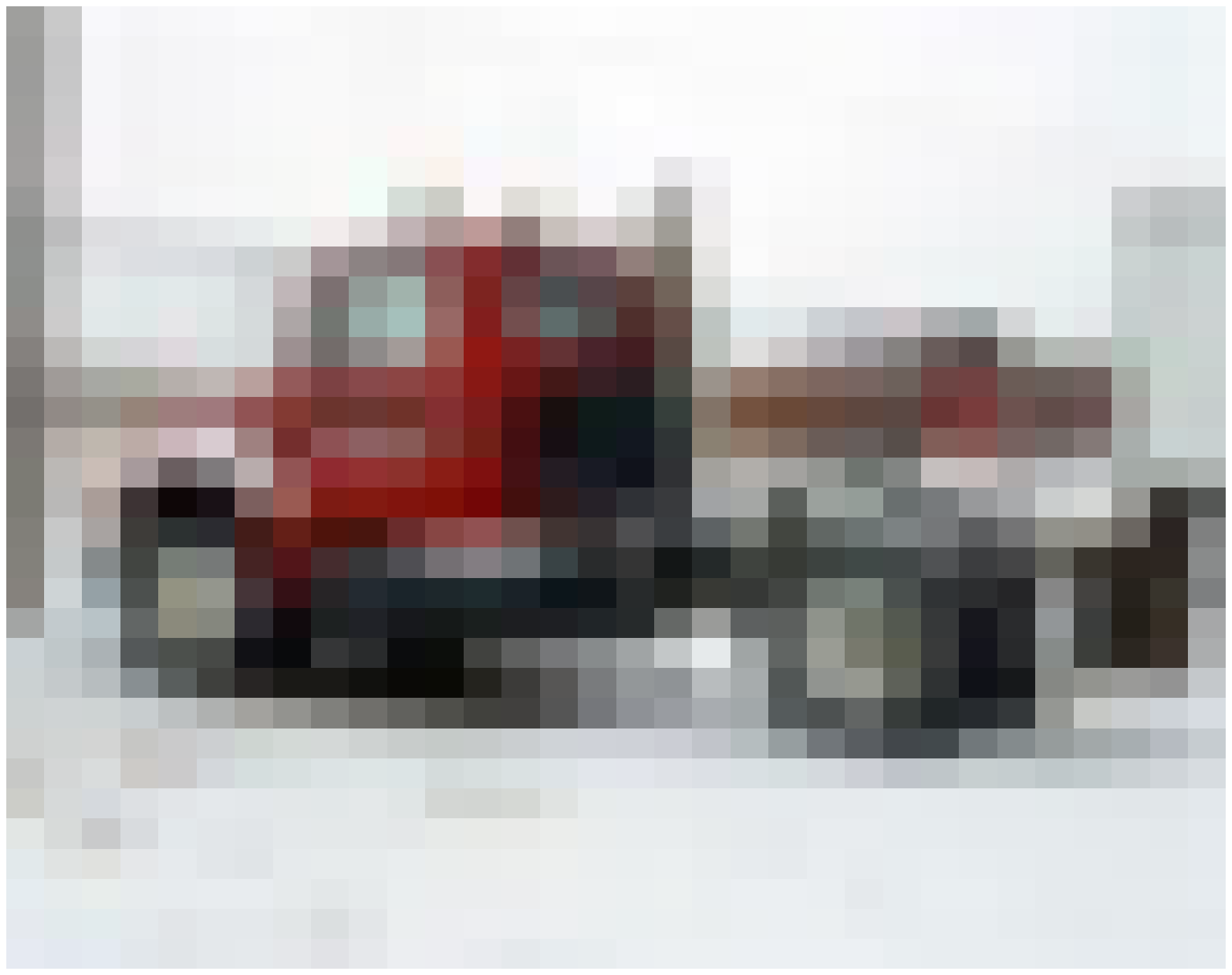}&\includegraphics[width=.1\linewidth]{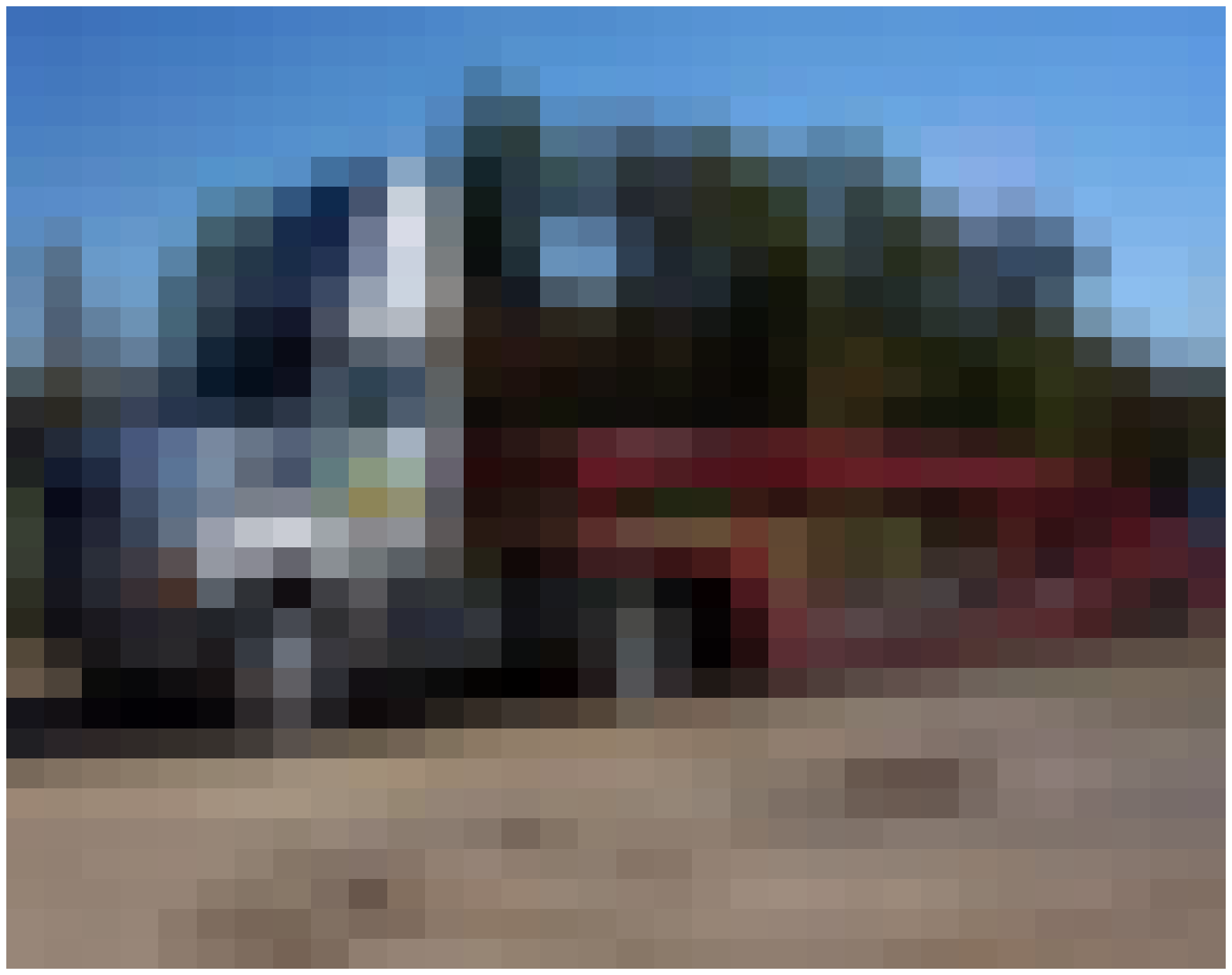}&\includegraphics[width=.1\linewidth]{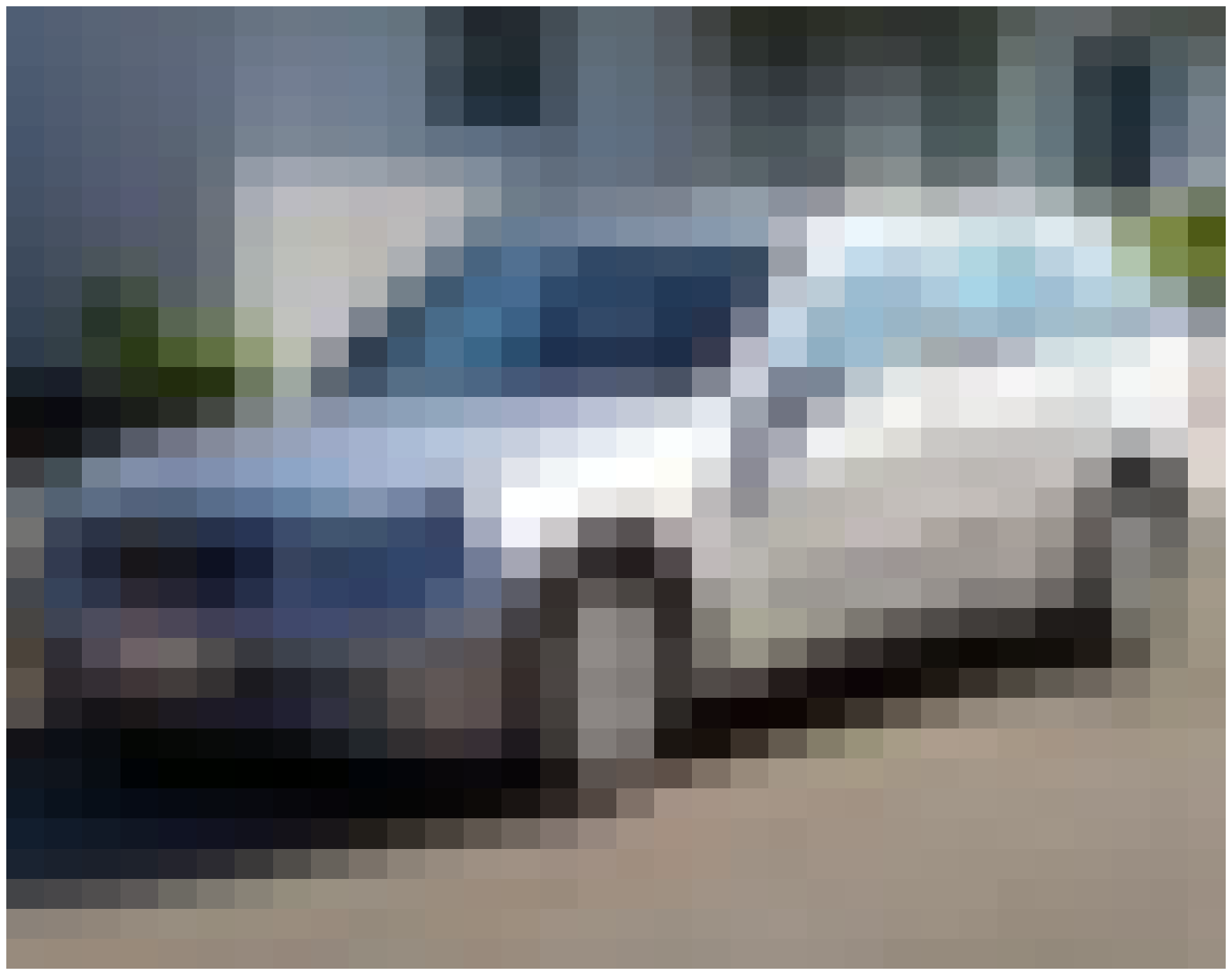}\\
    \includegraphics[width=.1\linewidth]{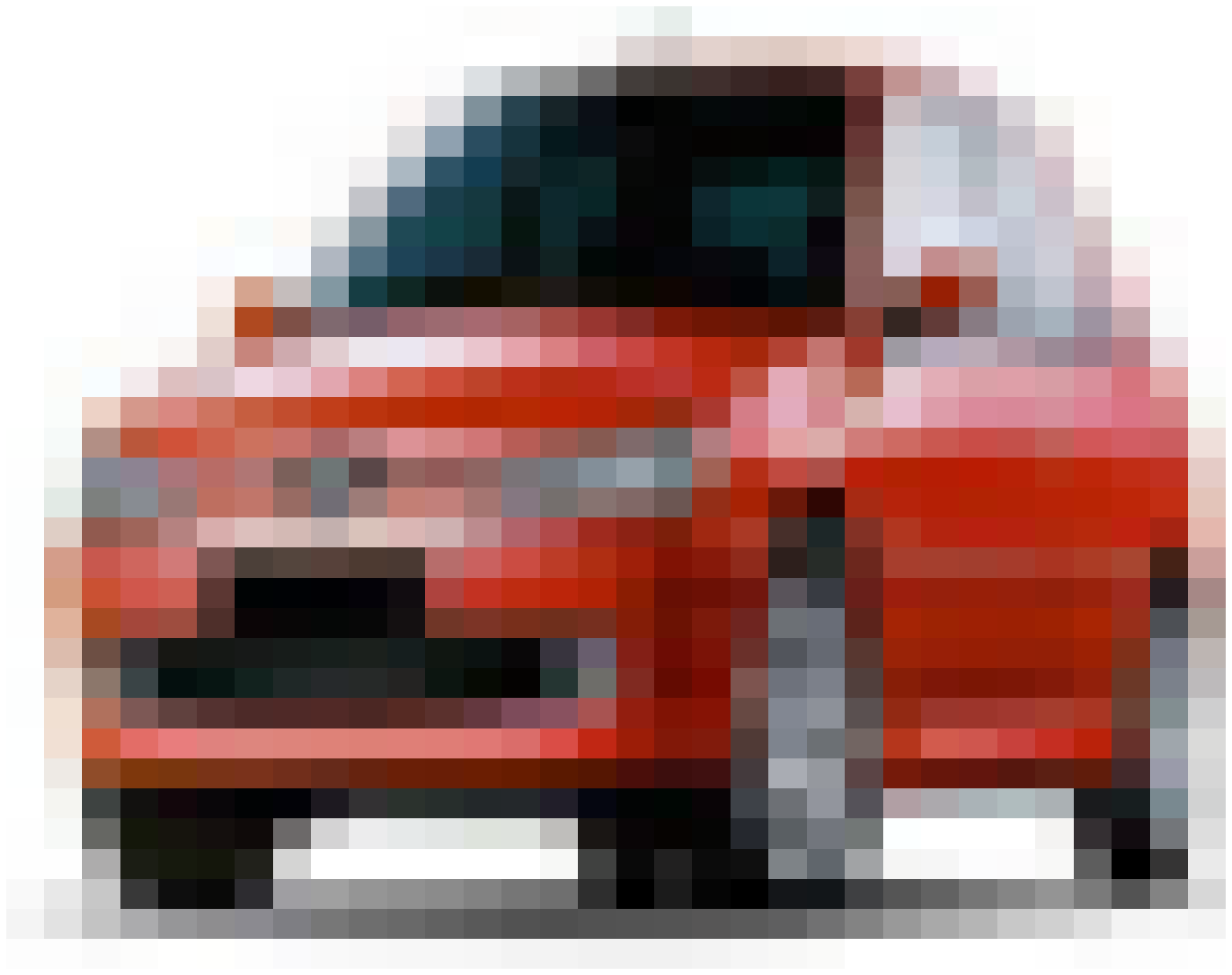}&\includegraphics[width=.1\linewidth]{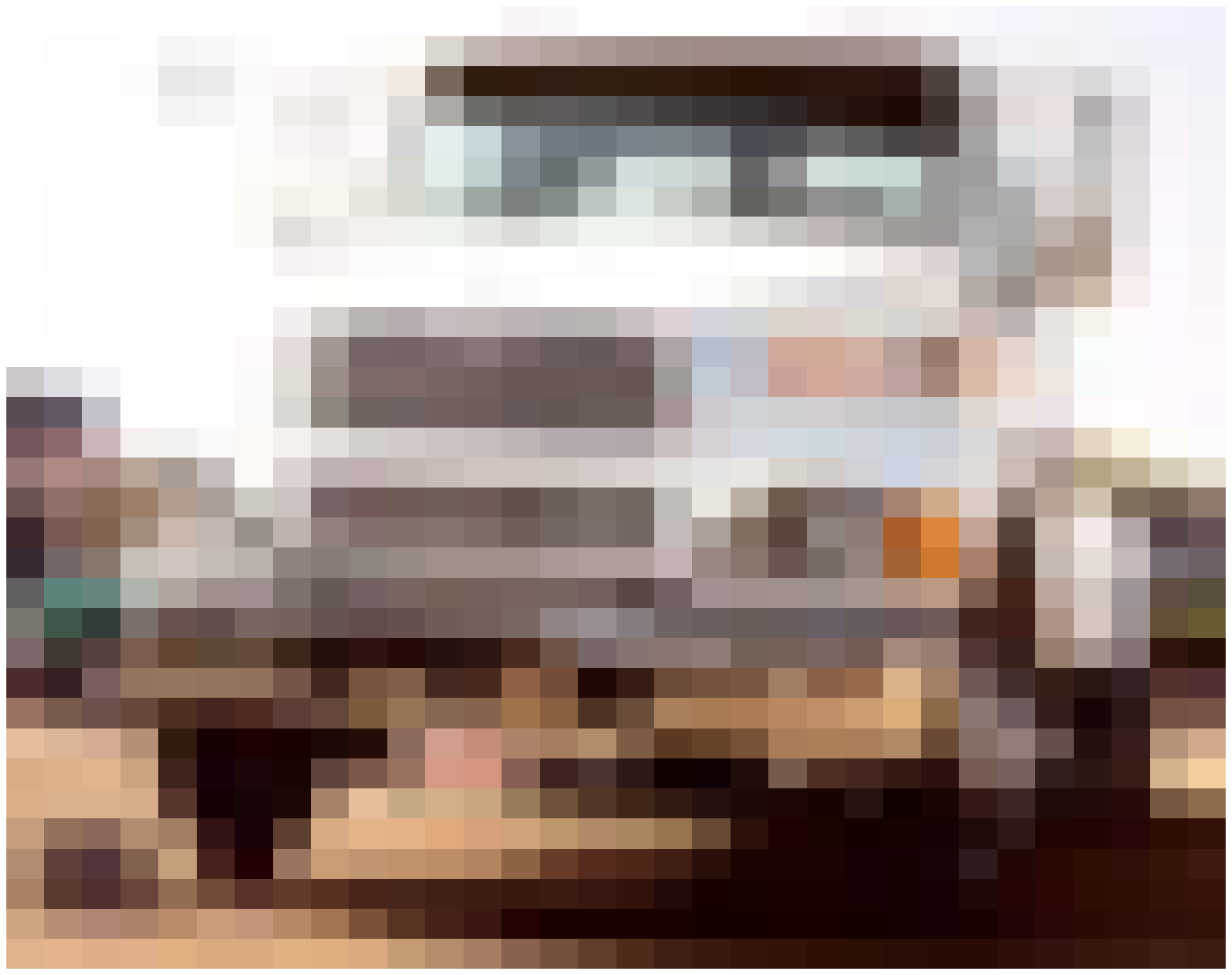}&\includegraphics[width=.1\linewidth]{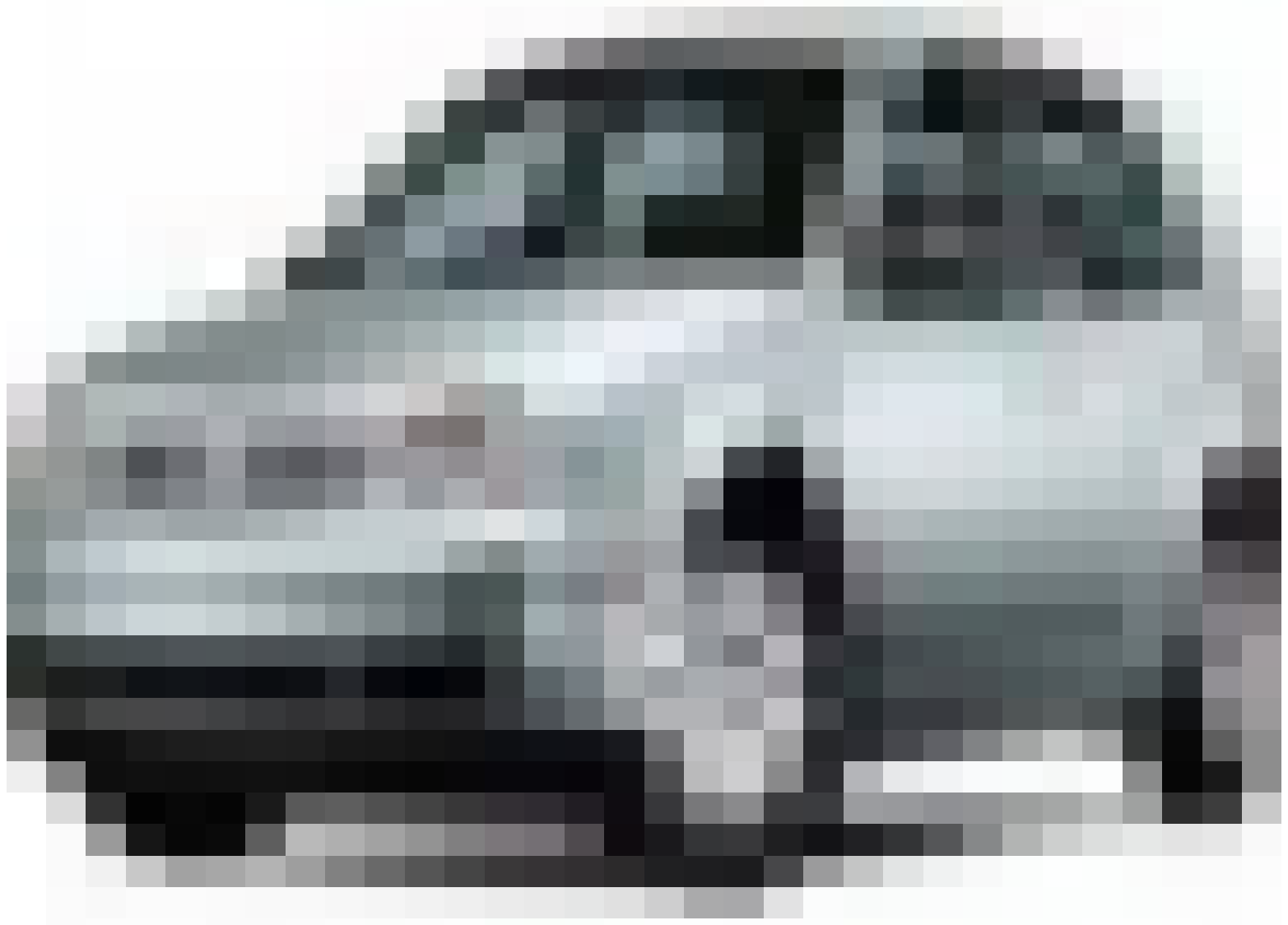}&\includegraphics[width=.1\linewidth]{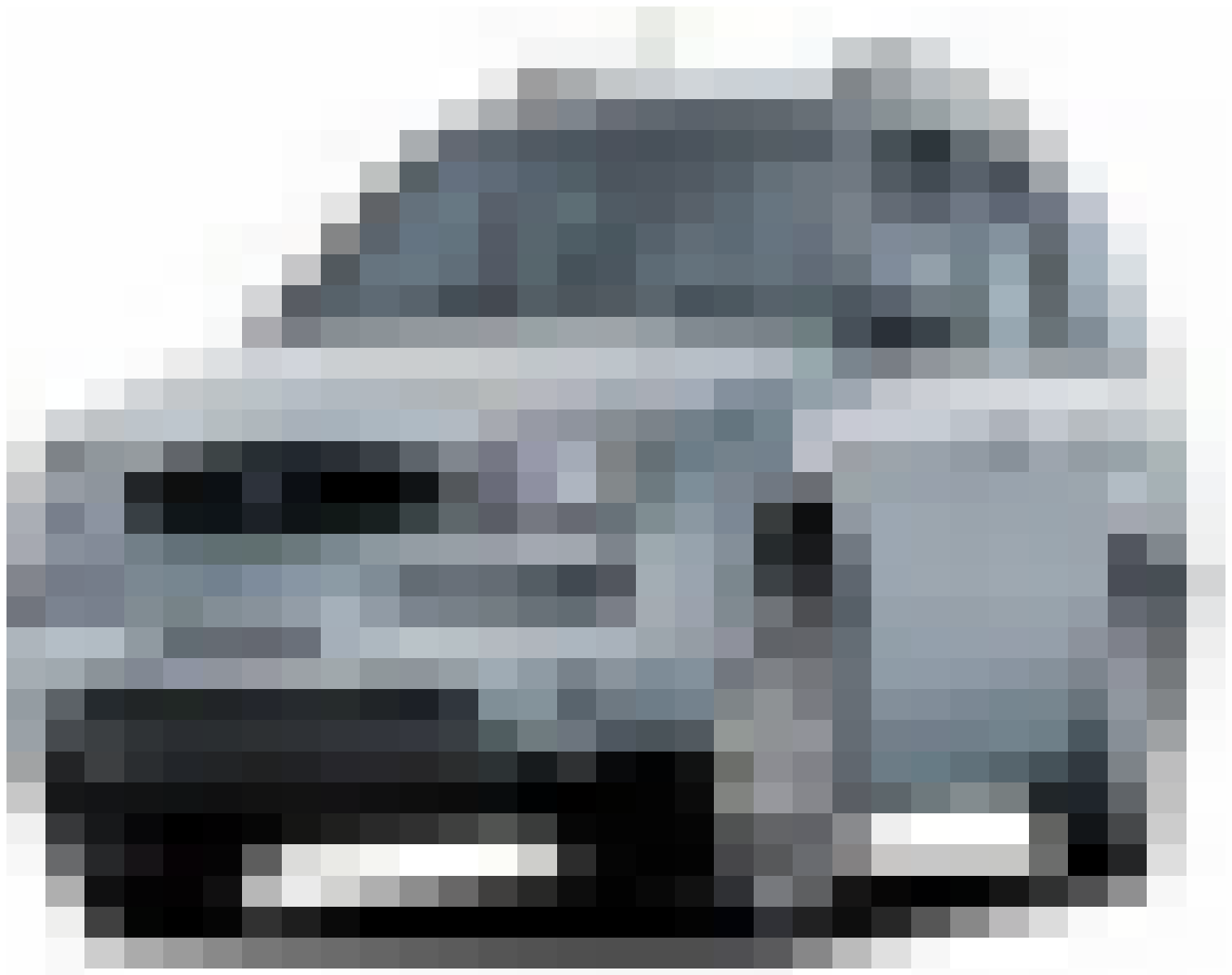}&\includegraphics[width=.1\linewidth]{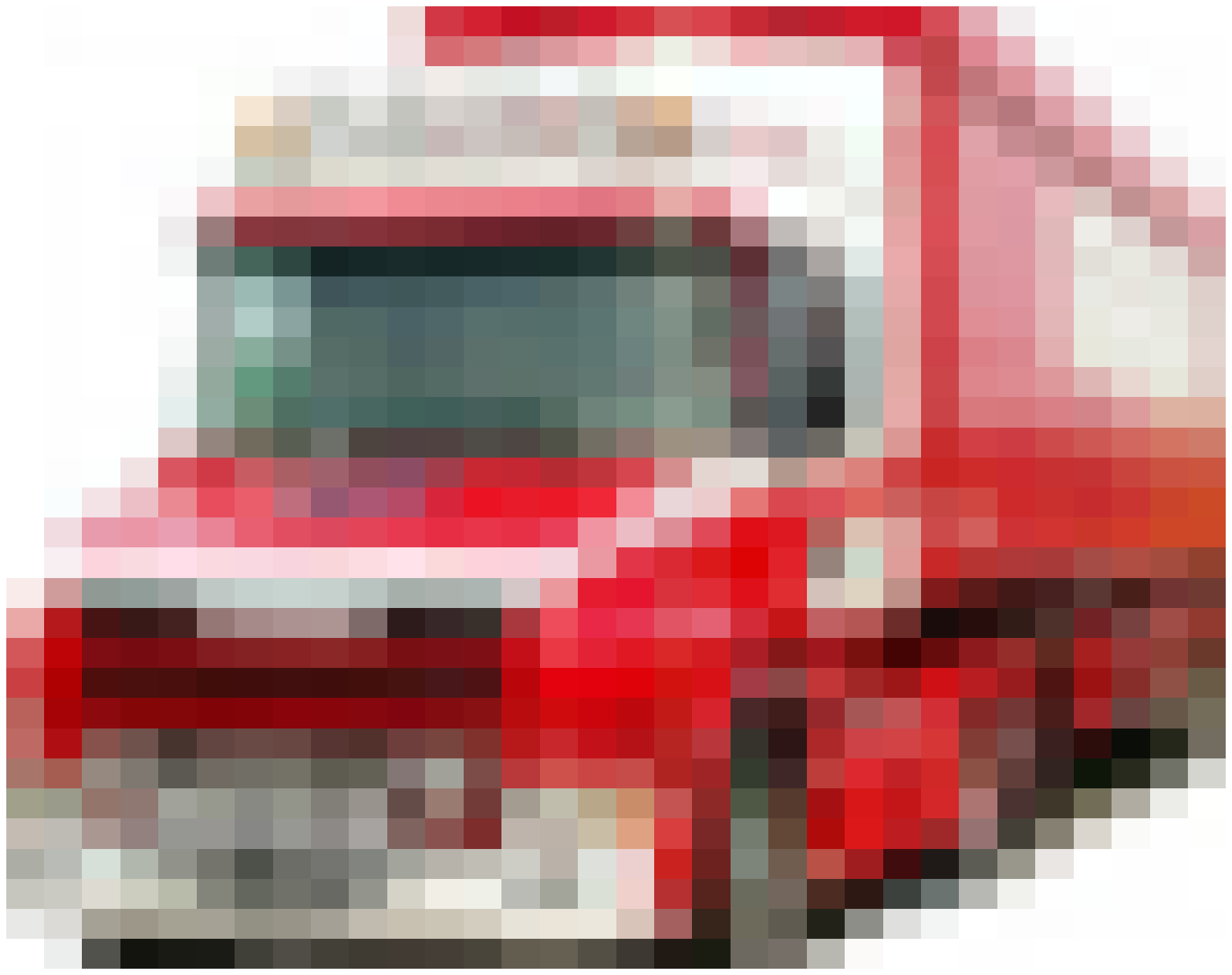}&\includegraphics[width=.1\linewidth]{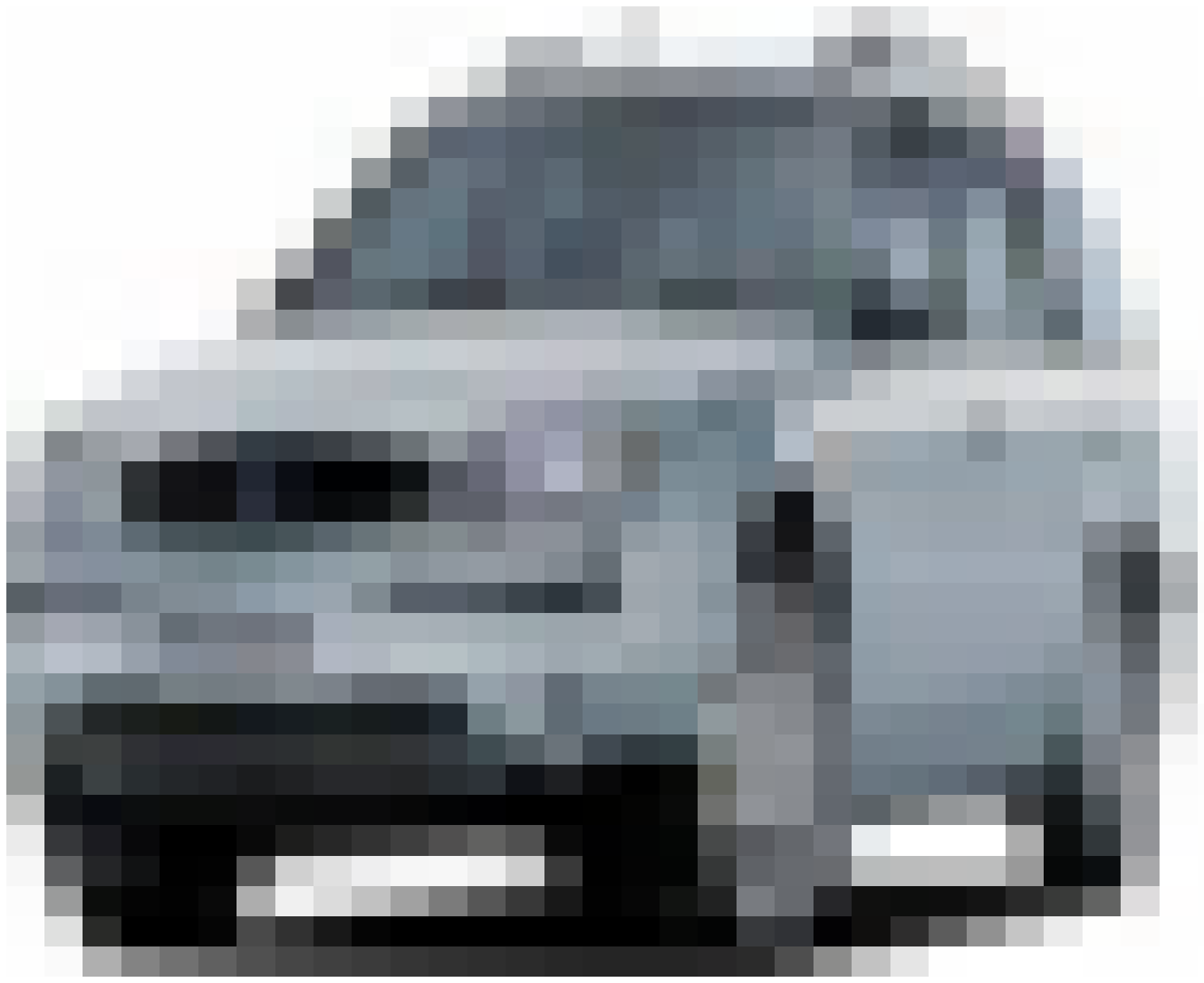}&\includegraphics[width=.1\linewidth]{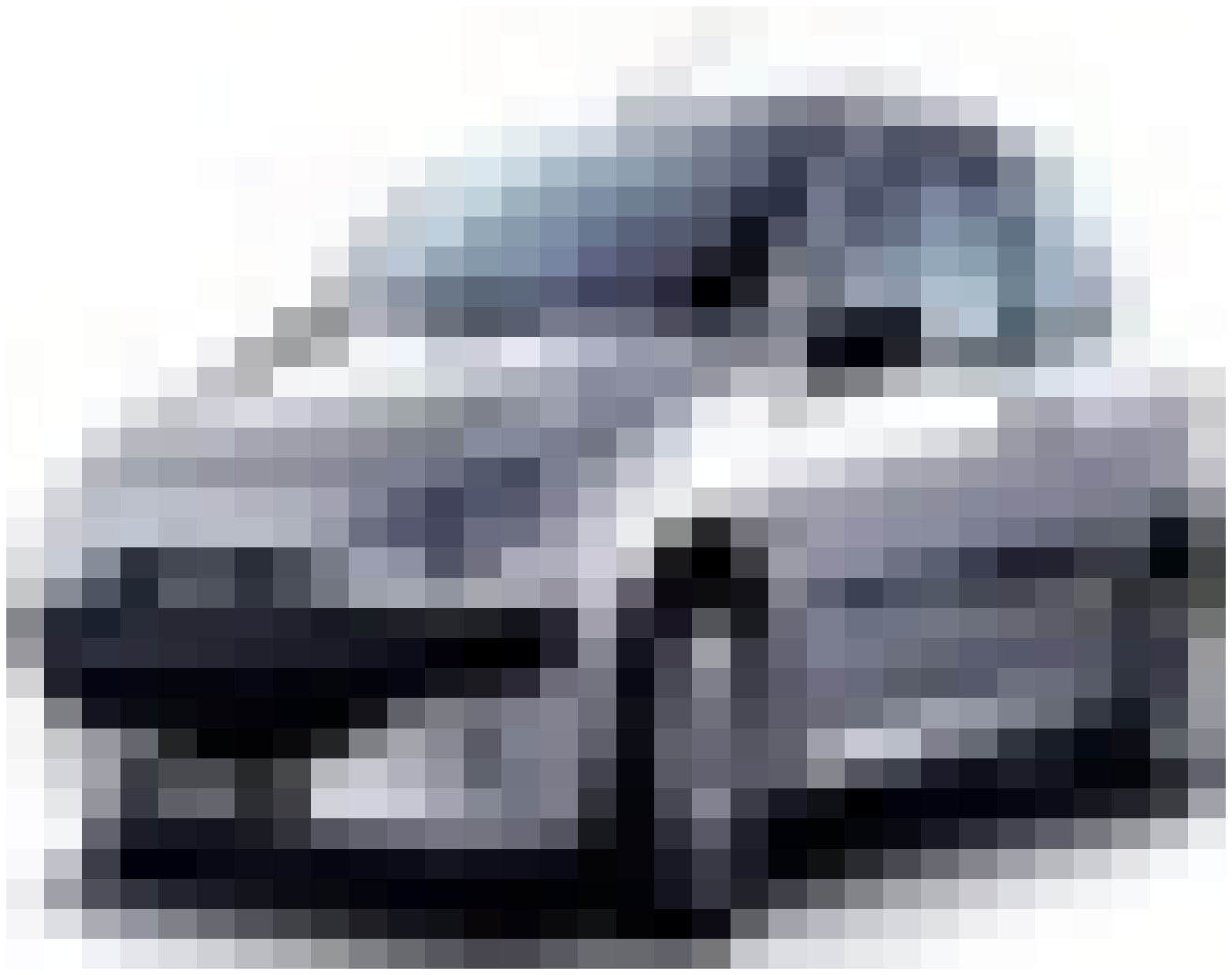}&\includegraphics[width=.1\linewidth]{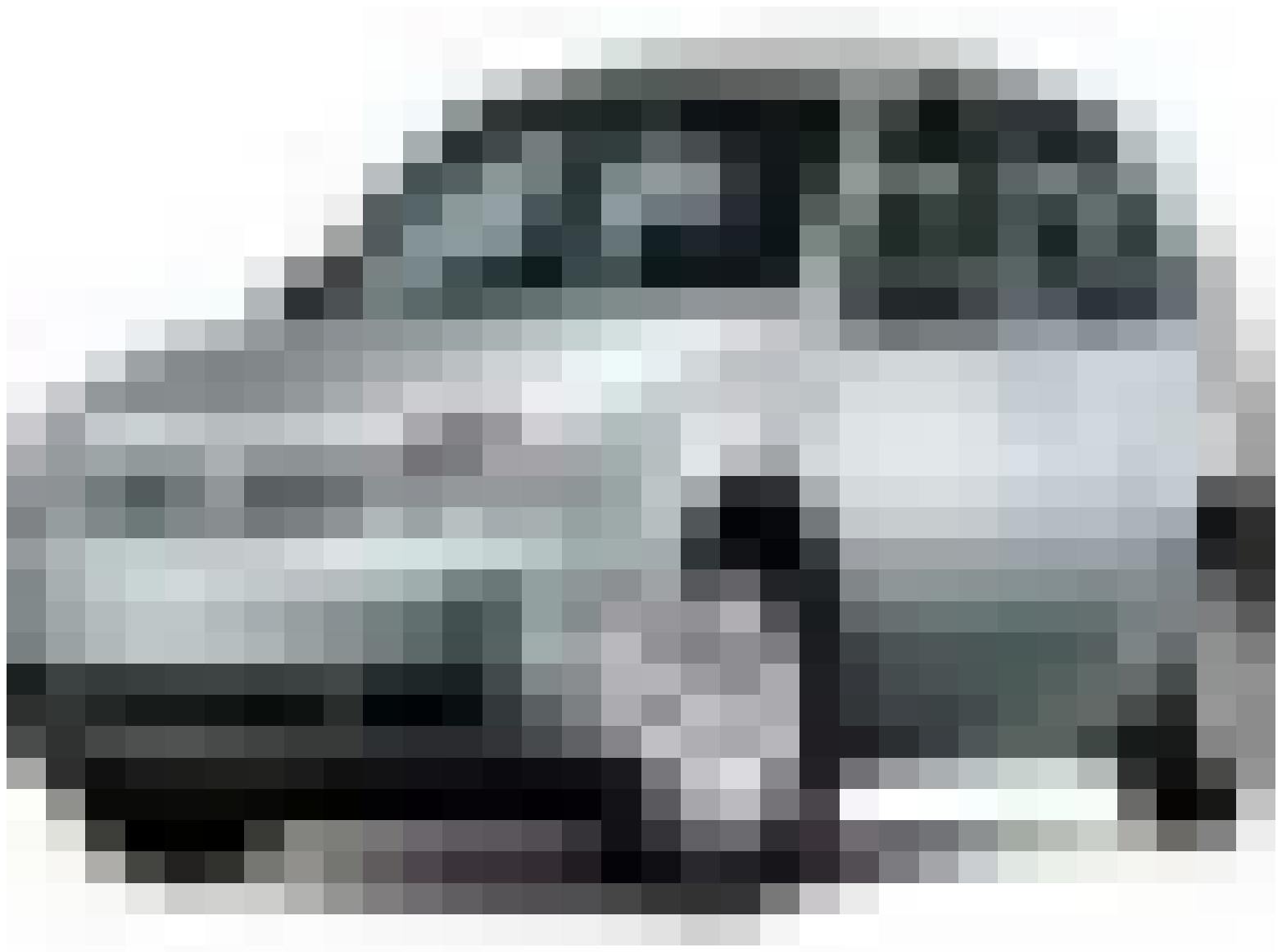}&\includegraphics[width=.1\linewidth]{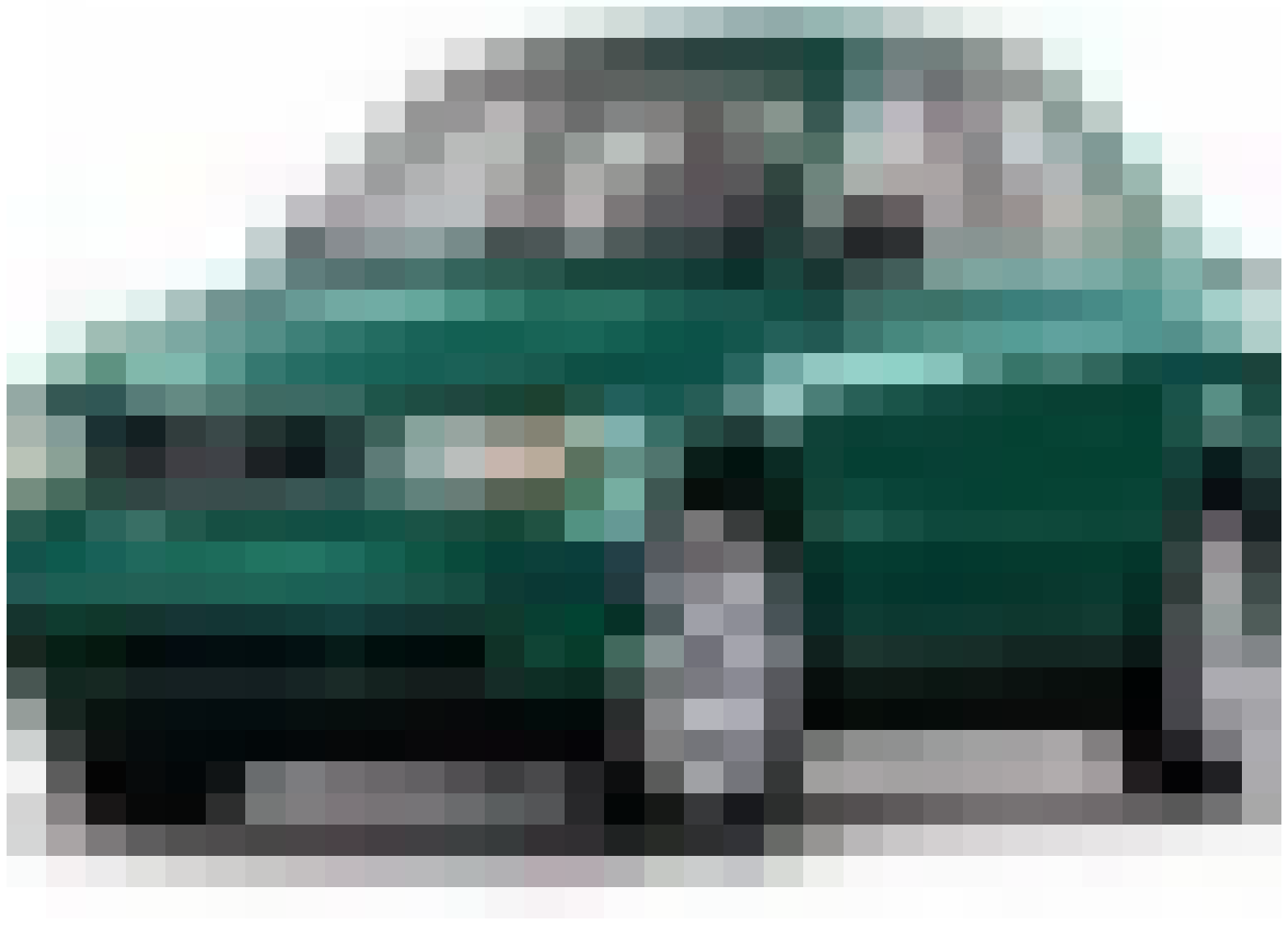}&\includegraphics[width=.1\linewidth]{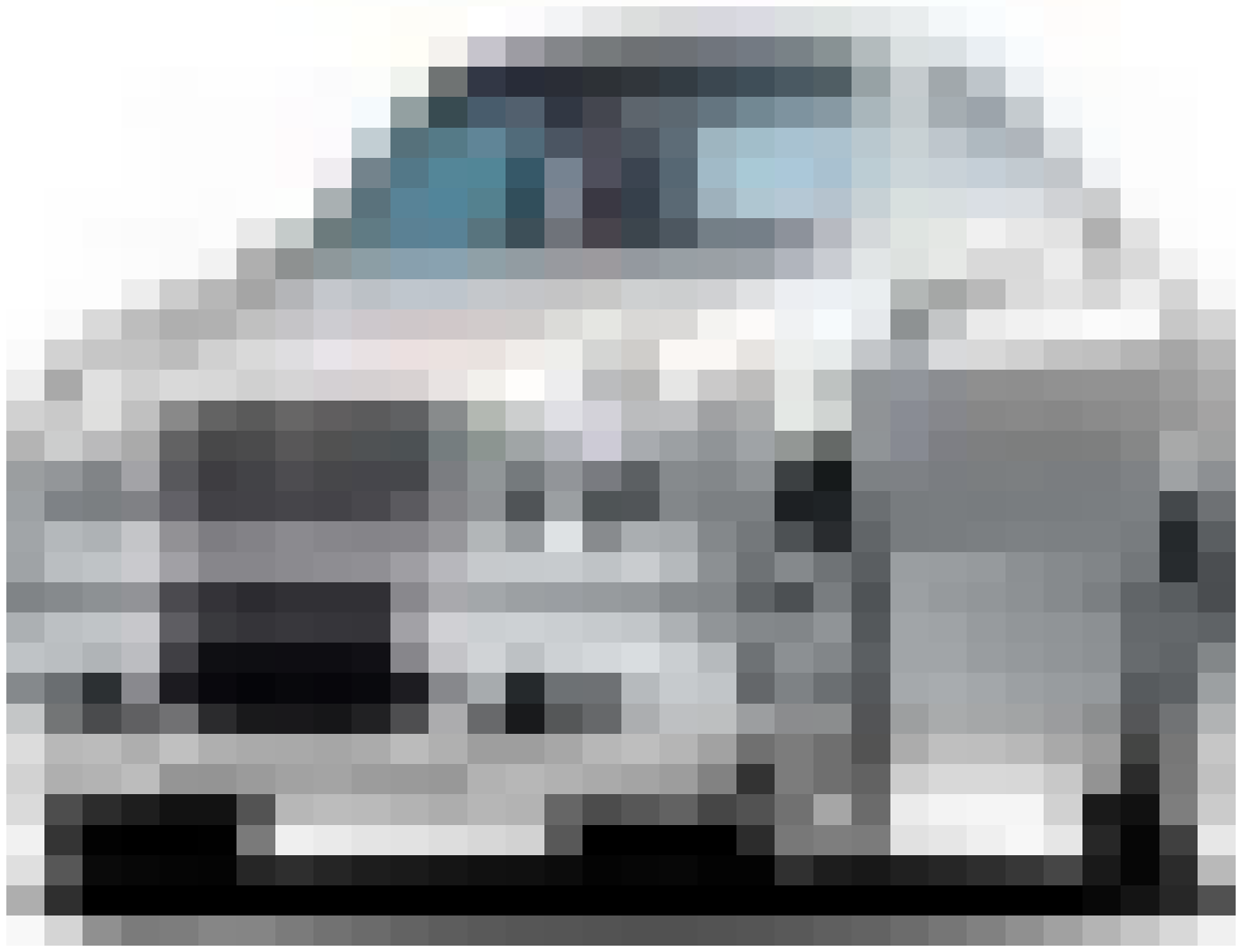}\\
    \includegraphics[width=.1\linewidth]{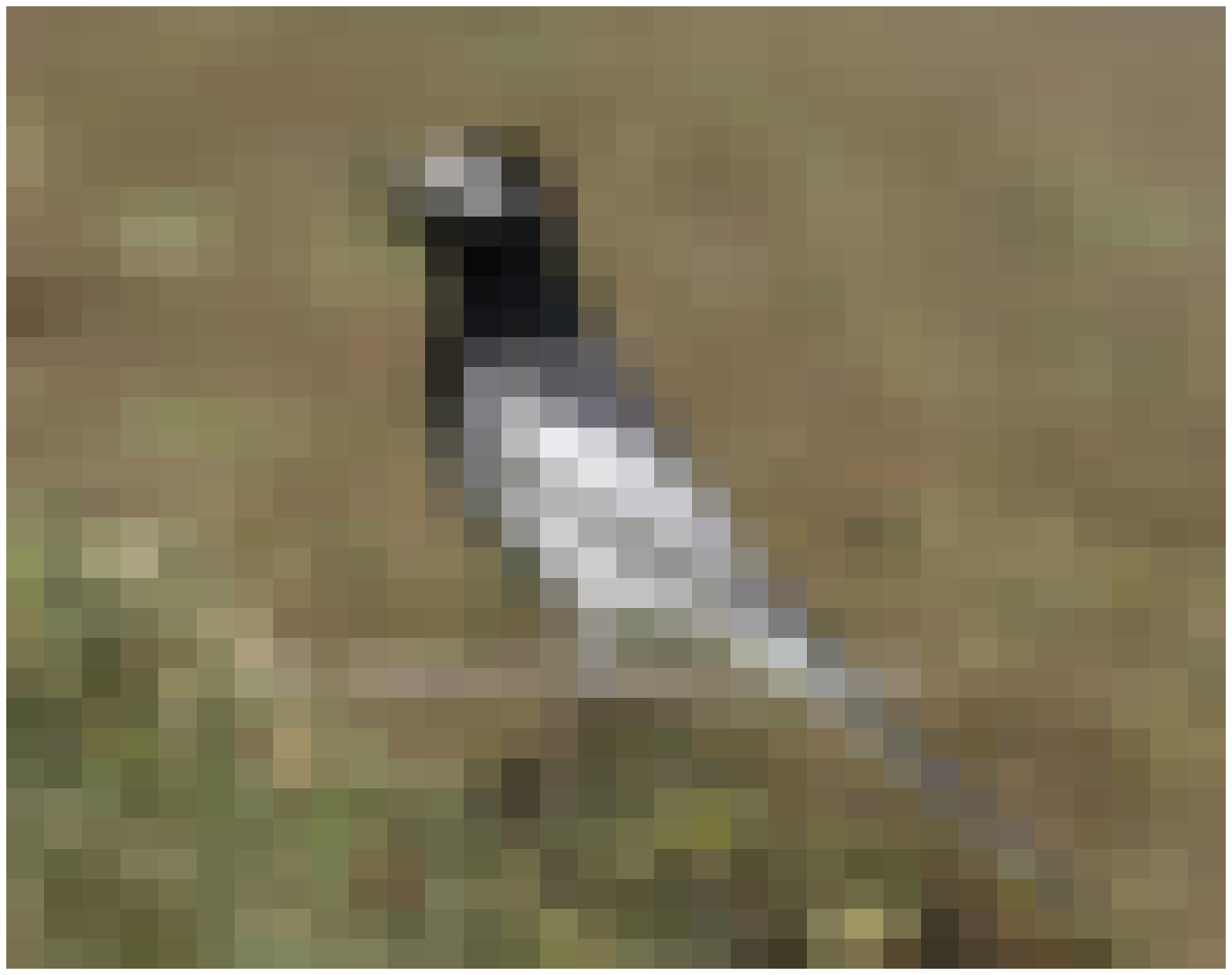}&\includegraphics[width=.1\linewidth]{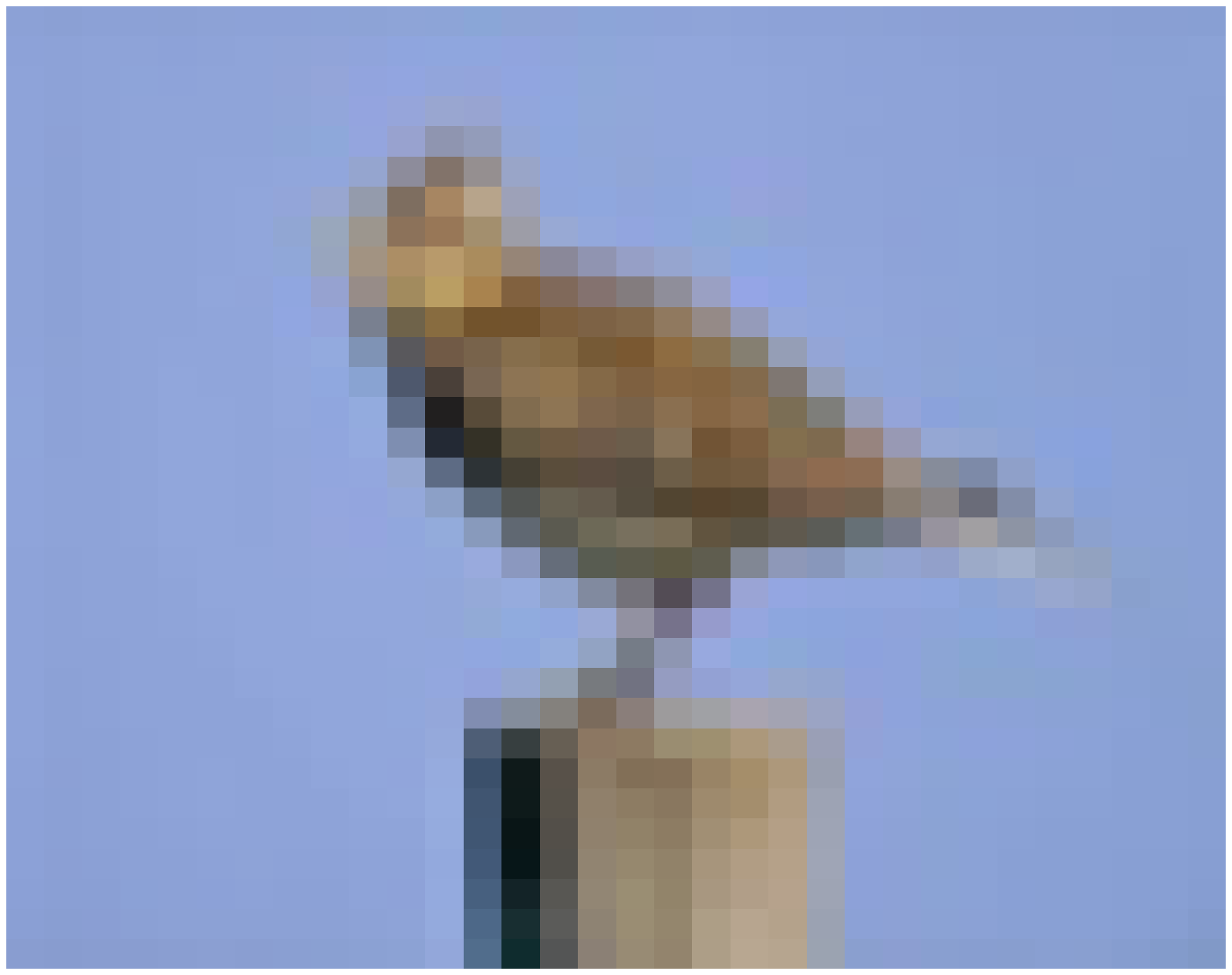}&\includegraphics[width=.1\linewidth]{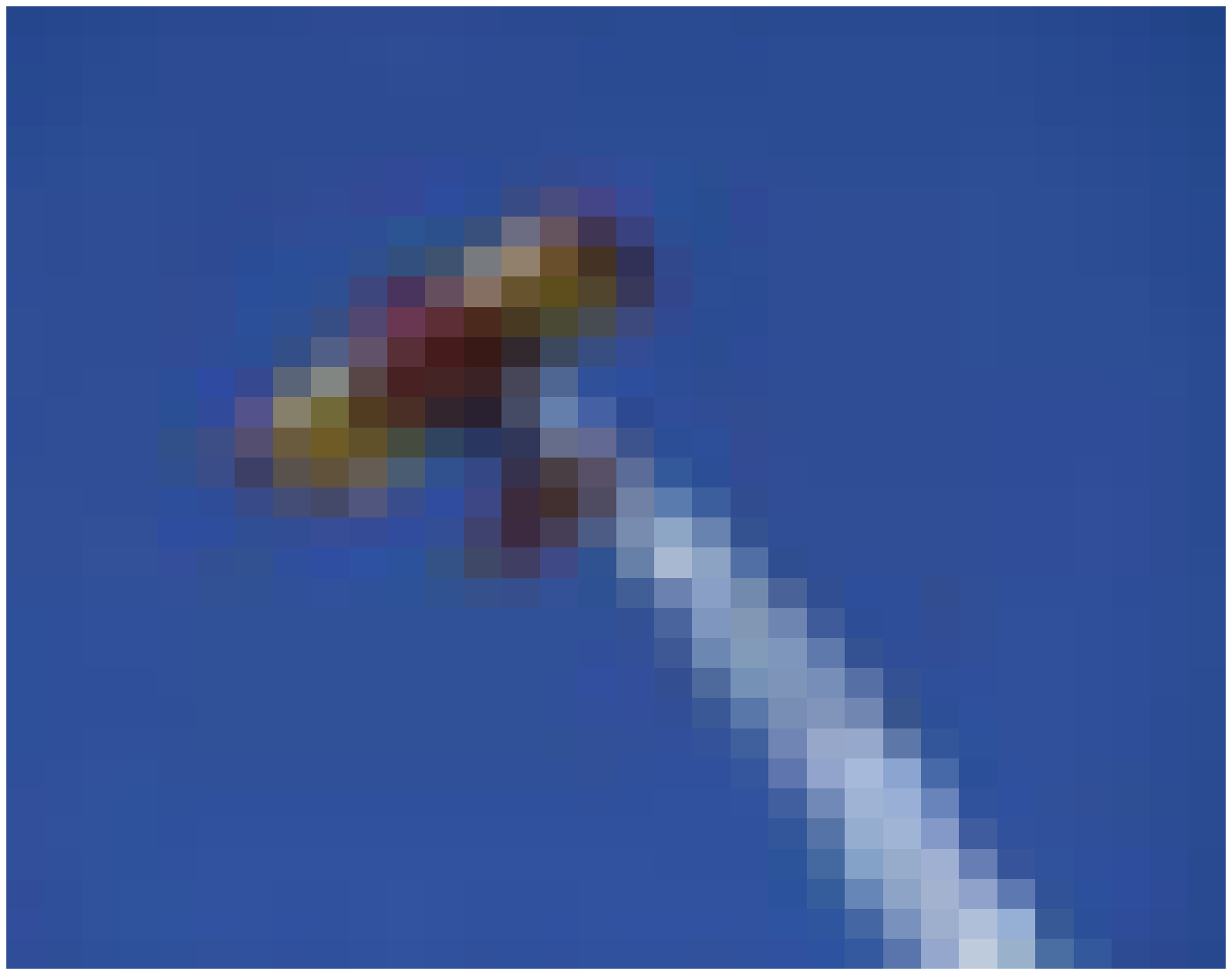}&\includegraphics[width=.1\linewidth]{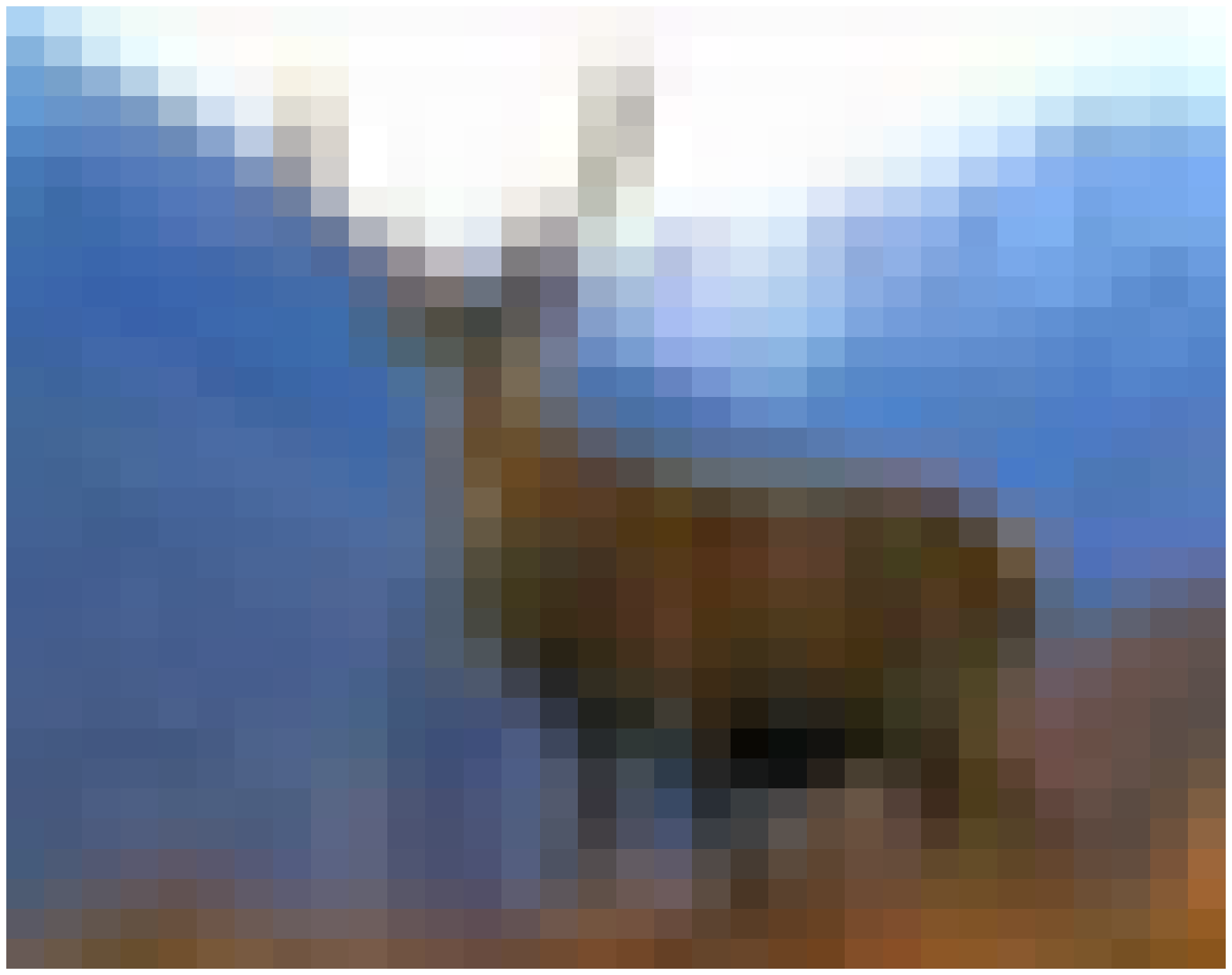}&\includegraphics[width=.1\linewidth]{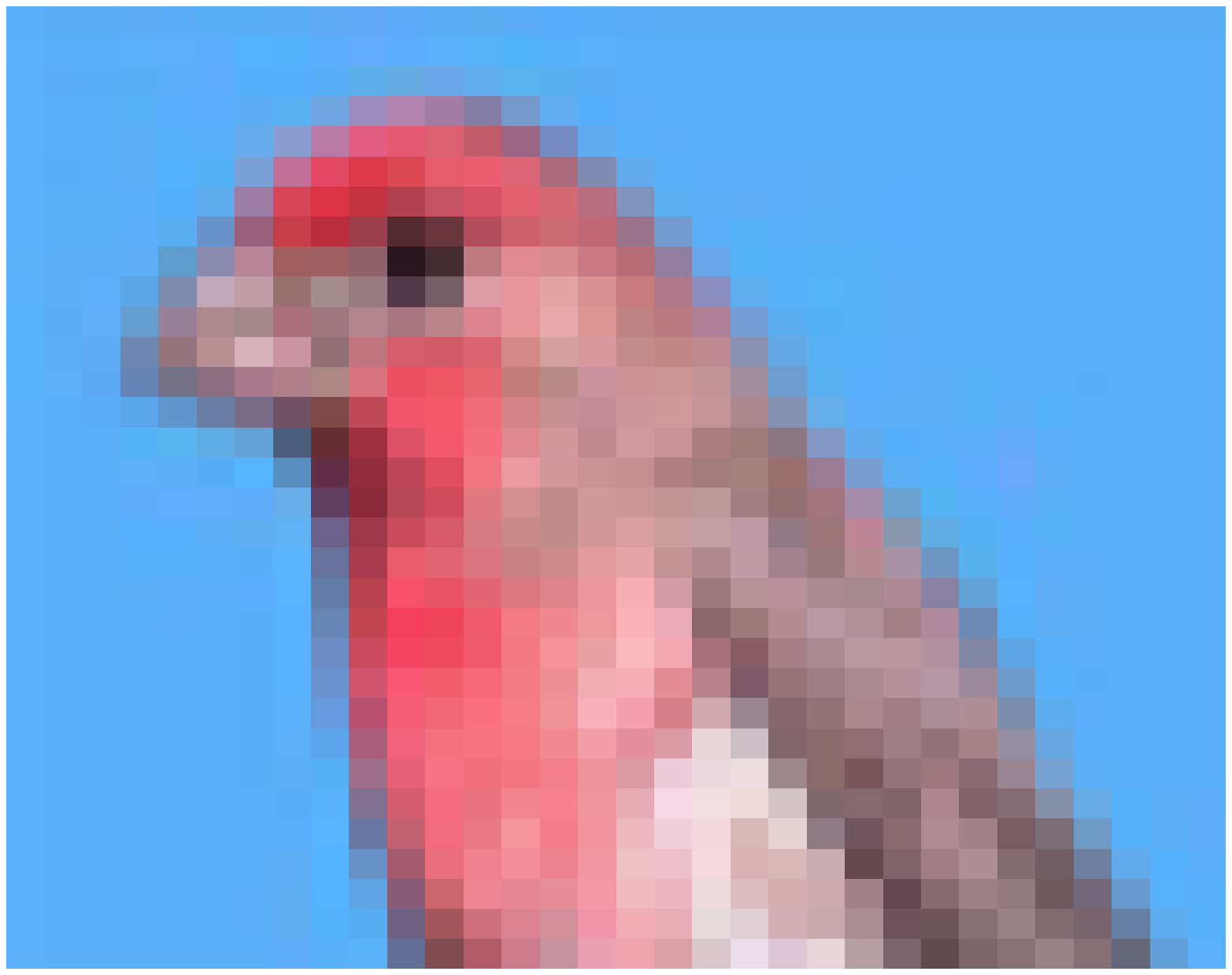}&\includegraphics[width=.1\linewidth]{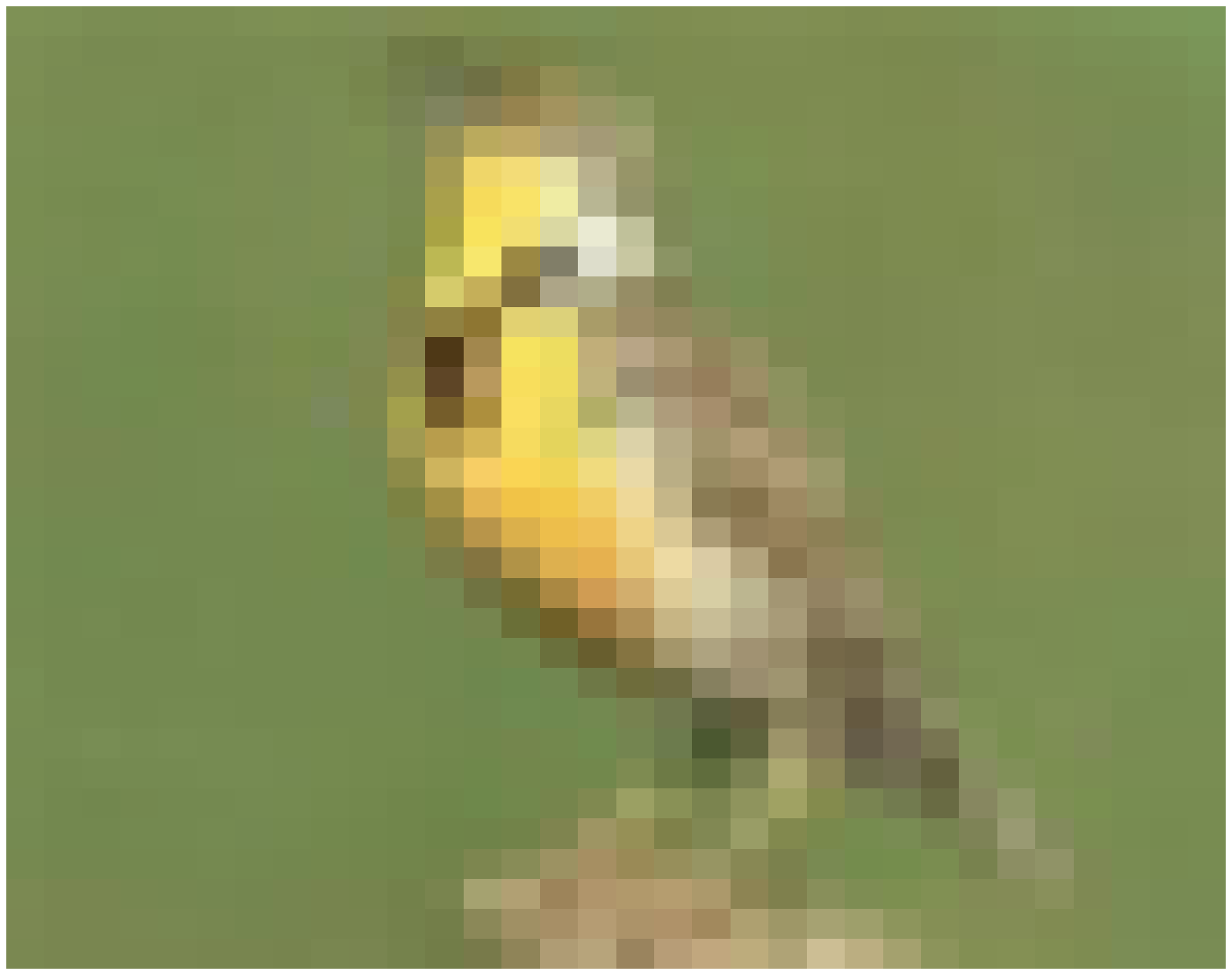}&\includegraphics[width=.1\linewidth]{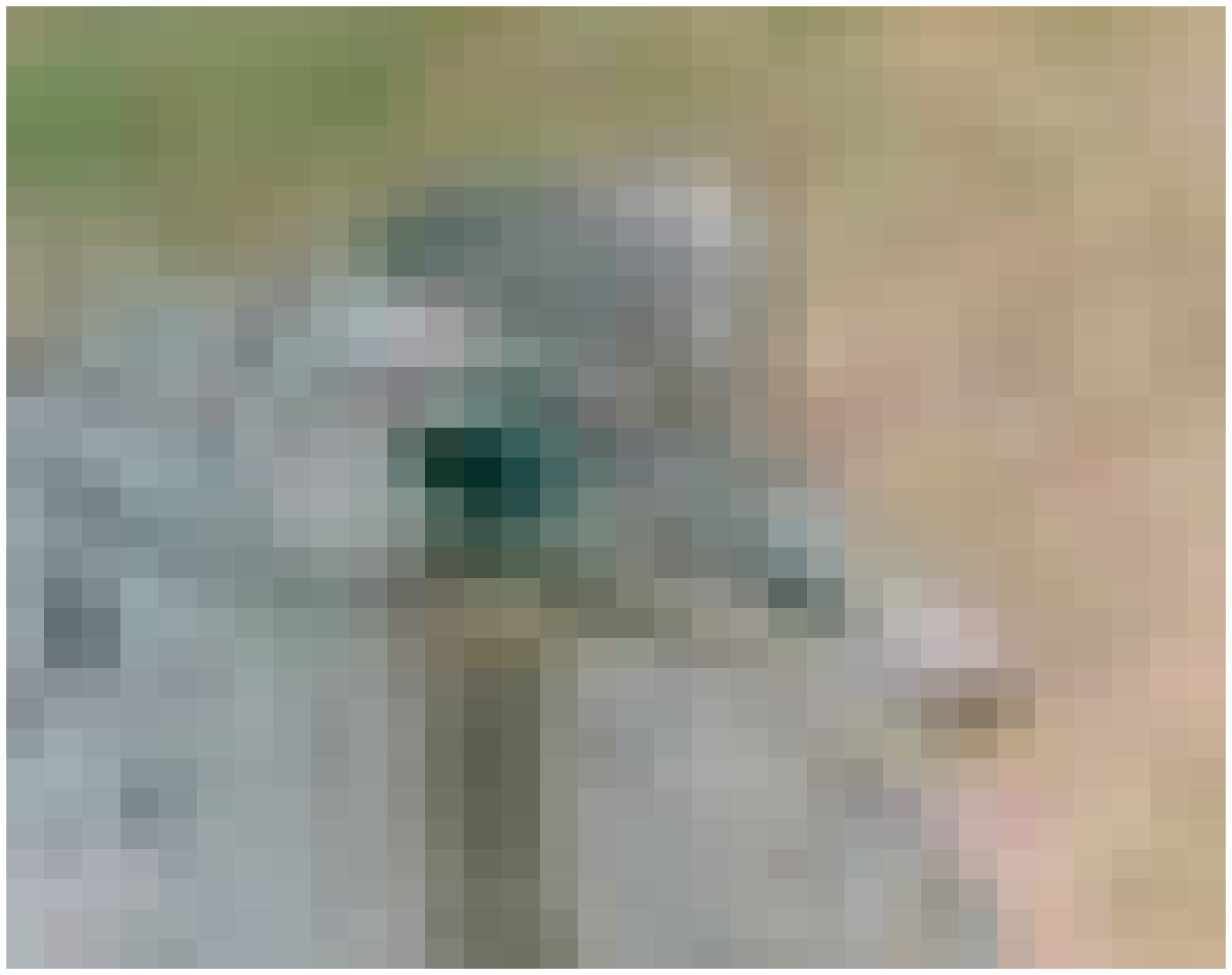}&\includegraphics[width=.1\linewidth]{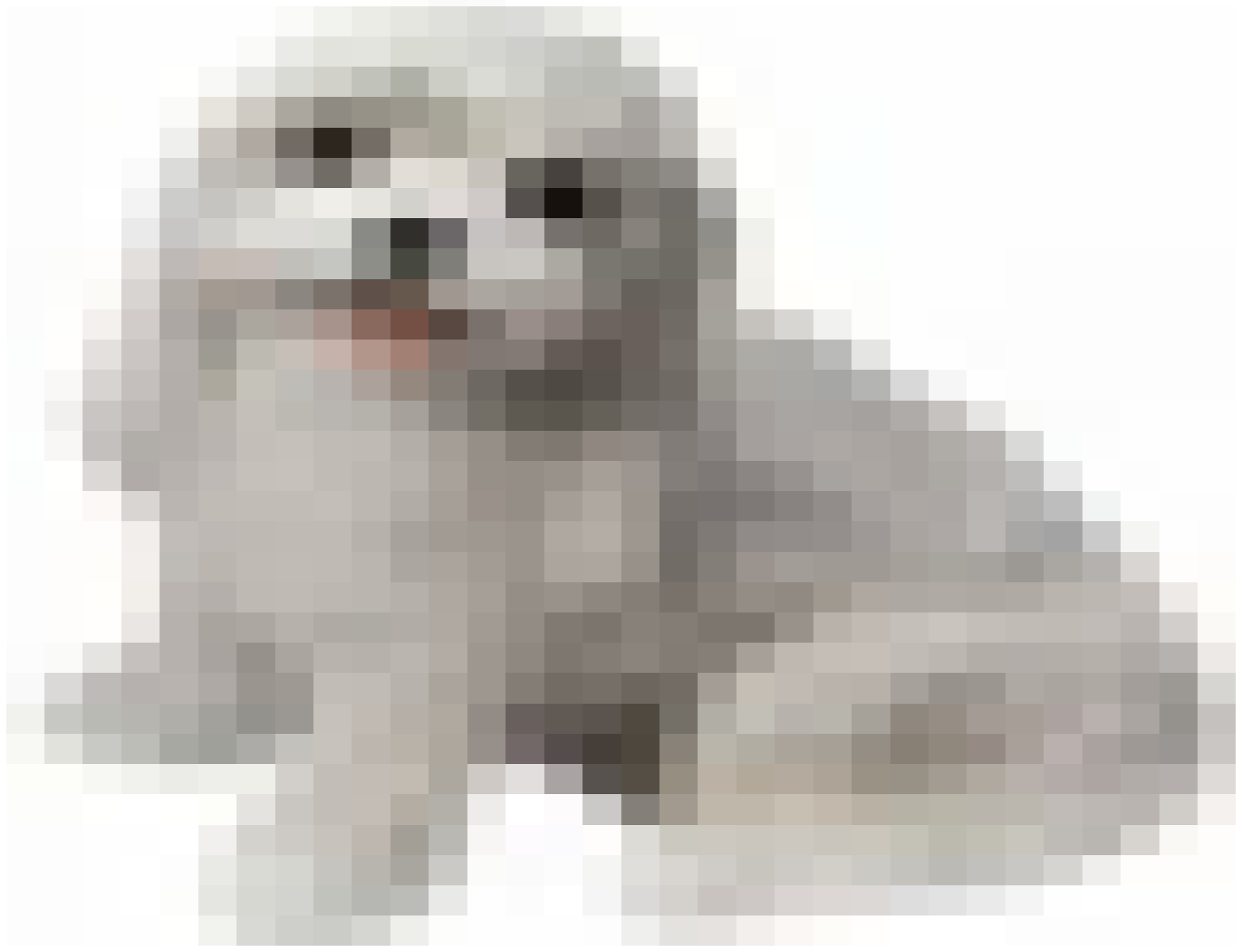}&\includegraphics[width=.1\linewidth]{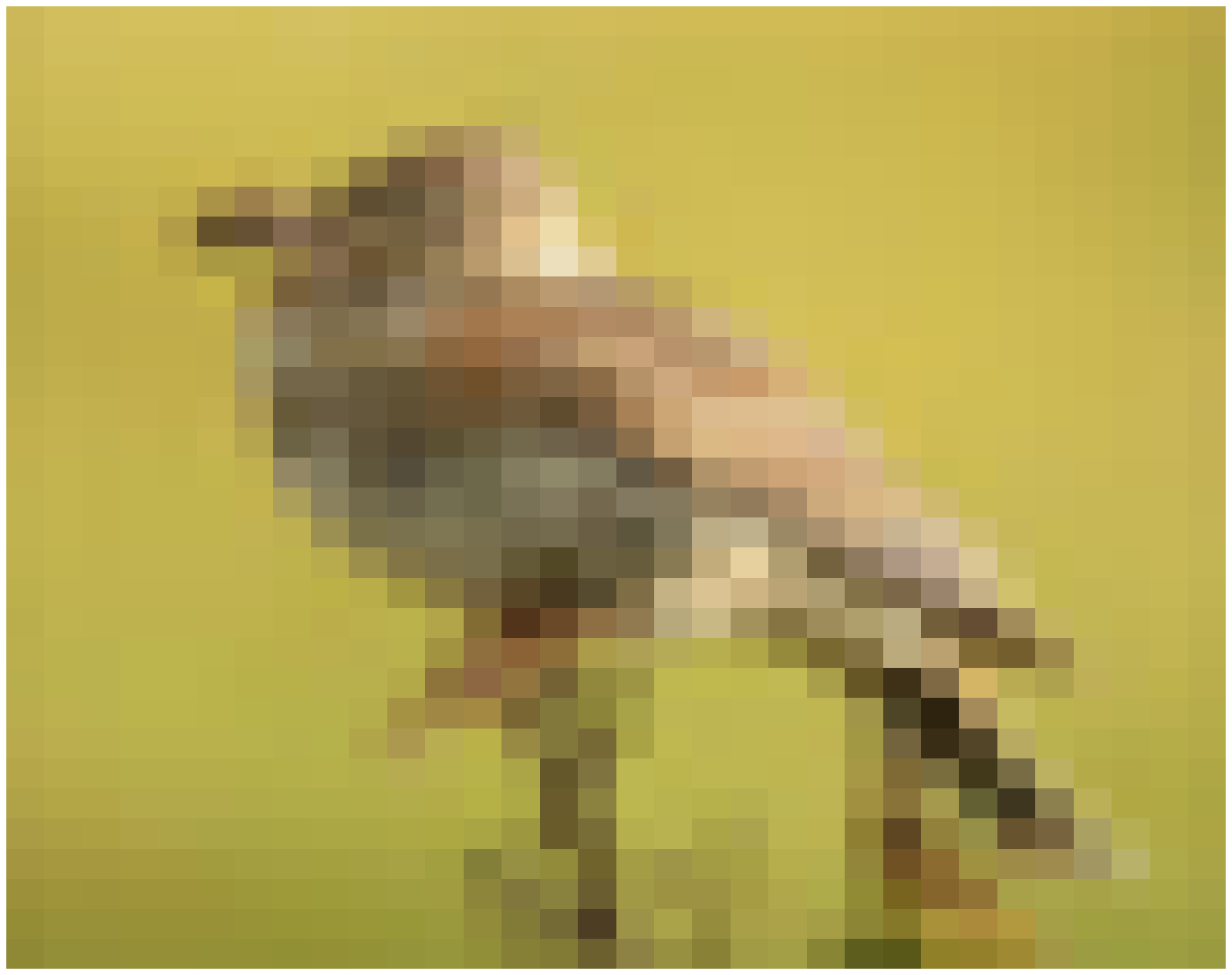}&\includegraphics[width=.1\linewidth]{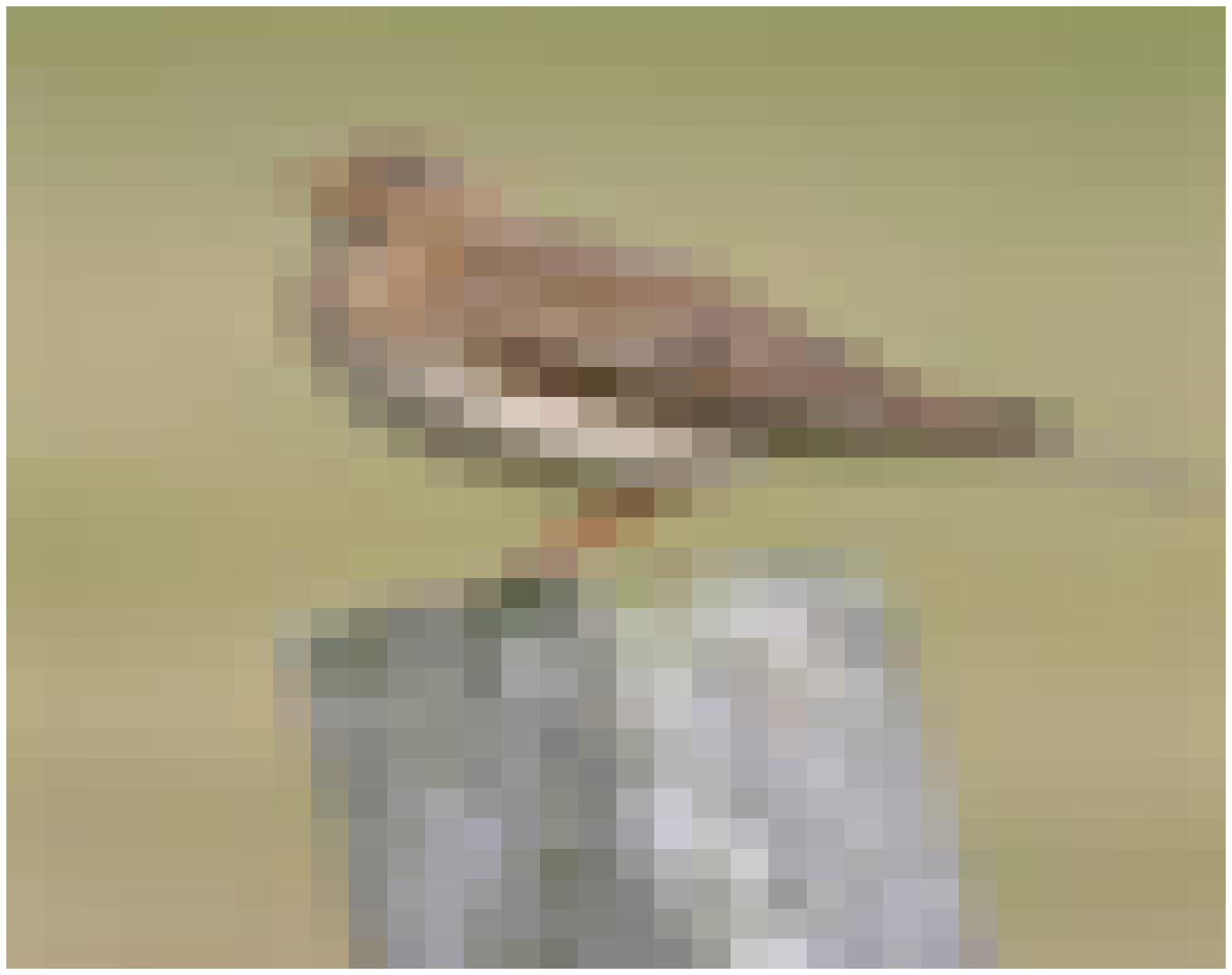}\\
    \includegraphics[width=.1\linewidth]{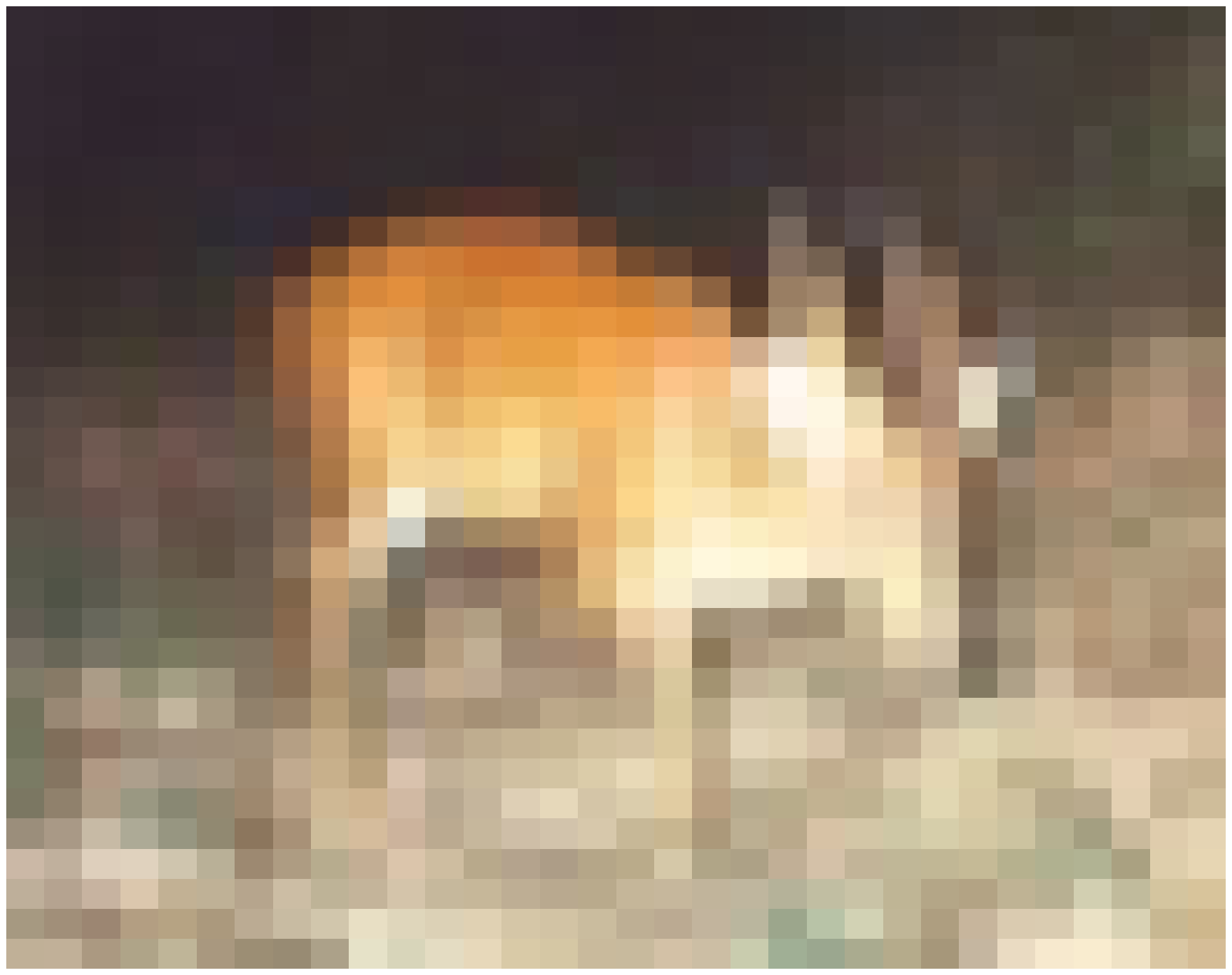}&\includegraphics[width=.1\linewidth]{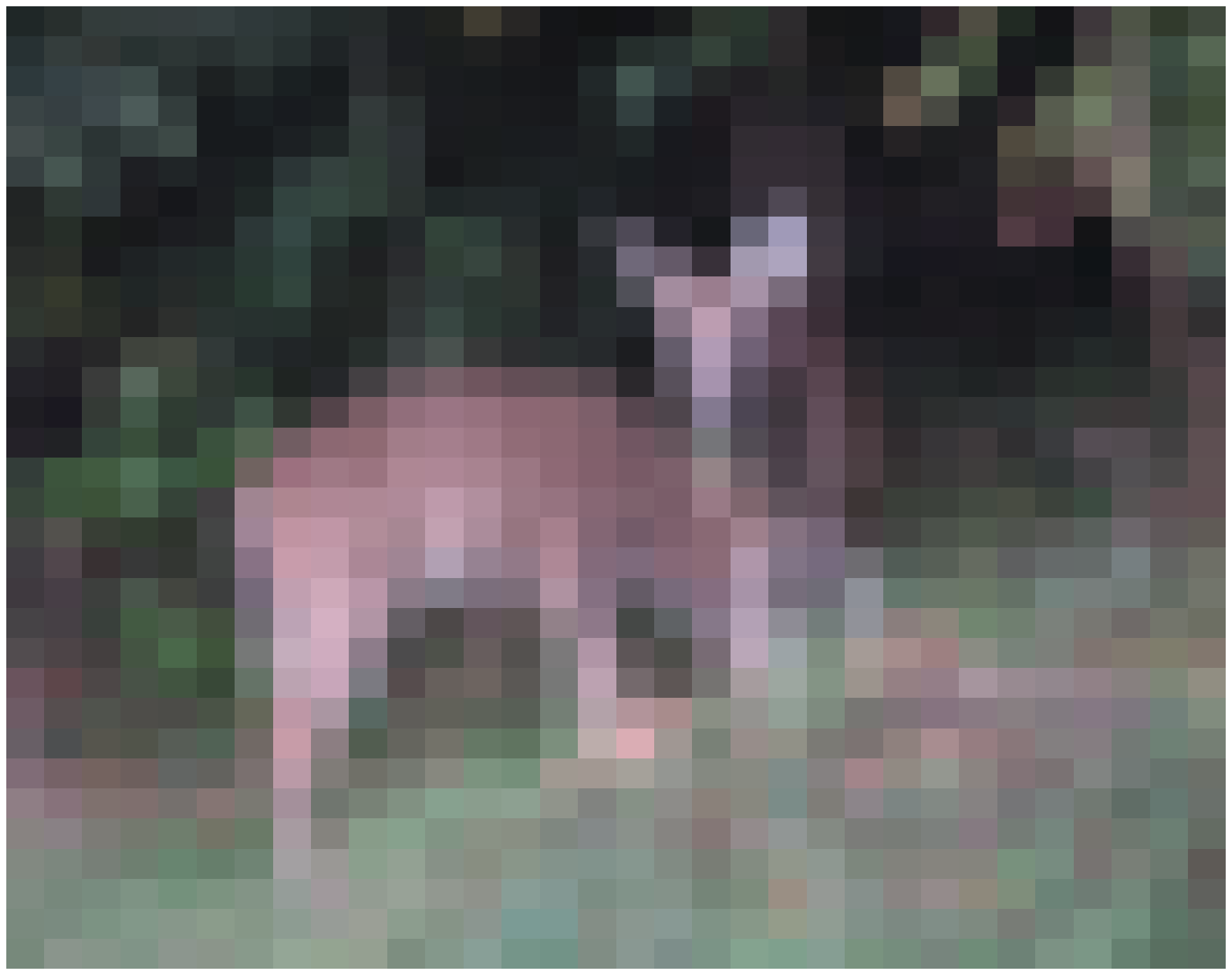}&\includegraphics[width=.1\linewidth]{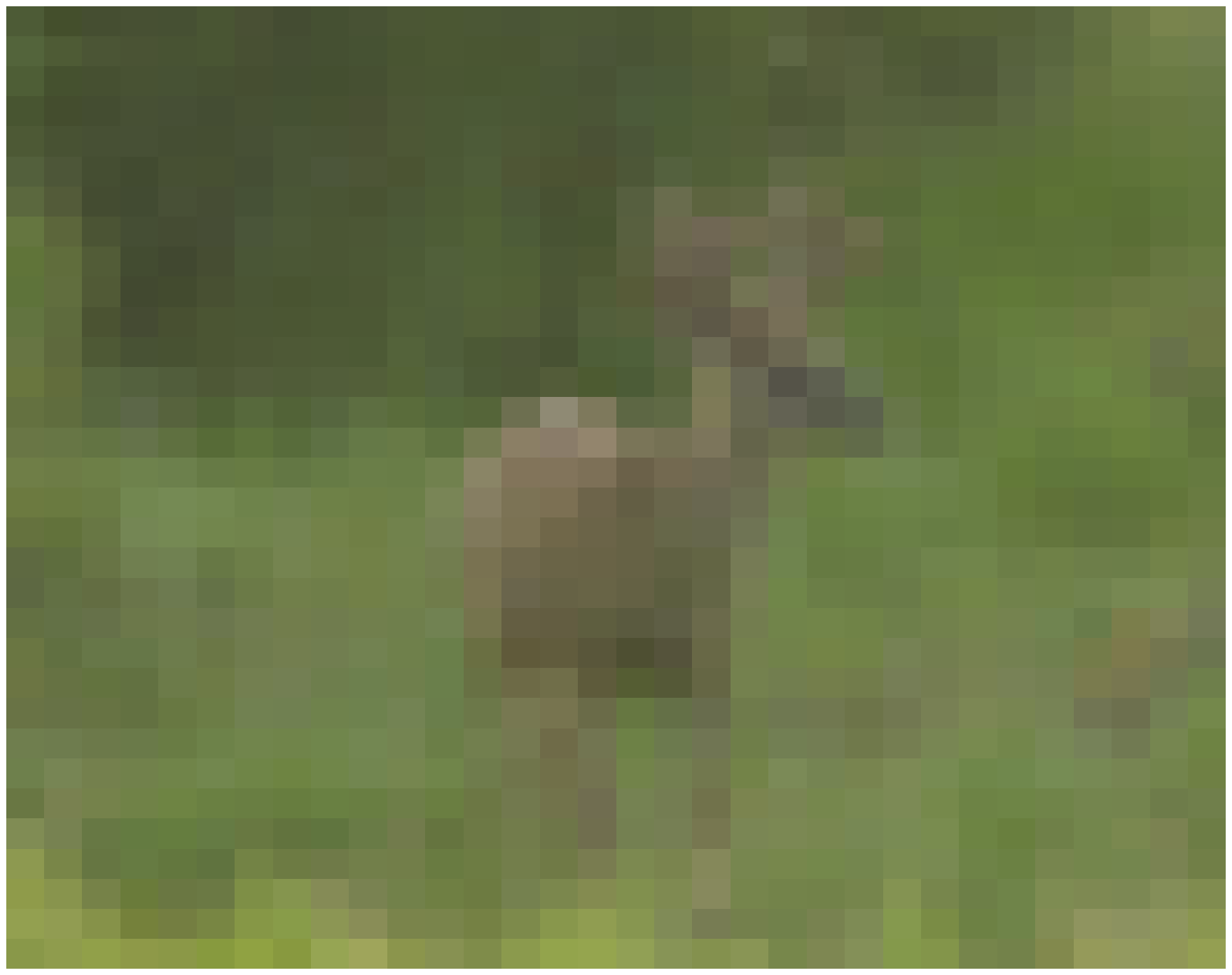}&\includegraphics[width=.1\linewidth]{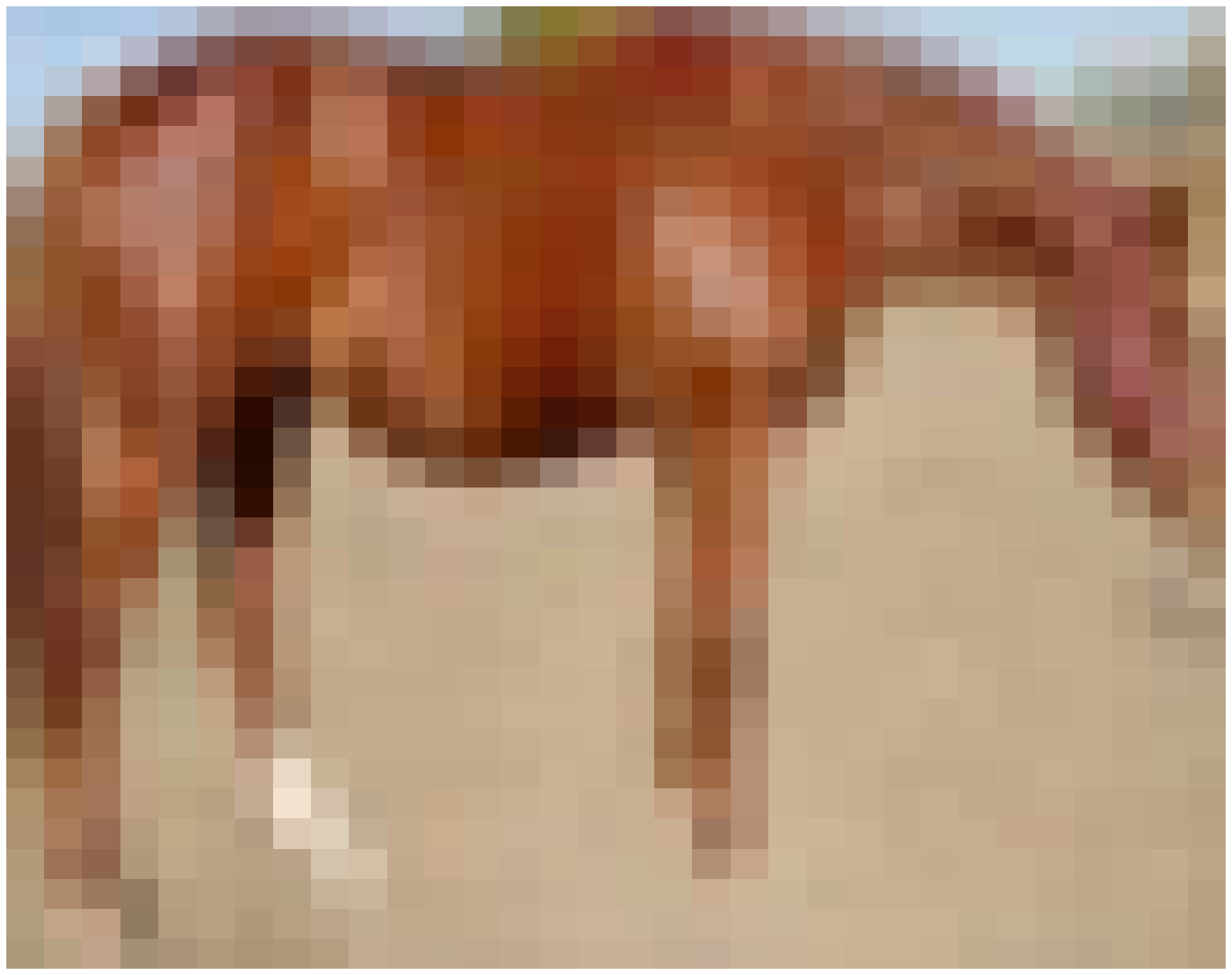}&\includegraphics[width=.1\linewidth]{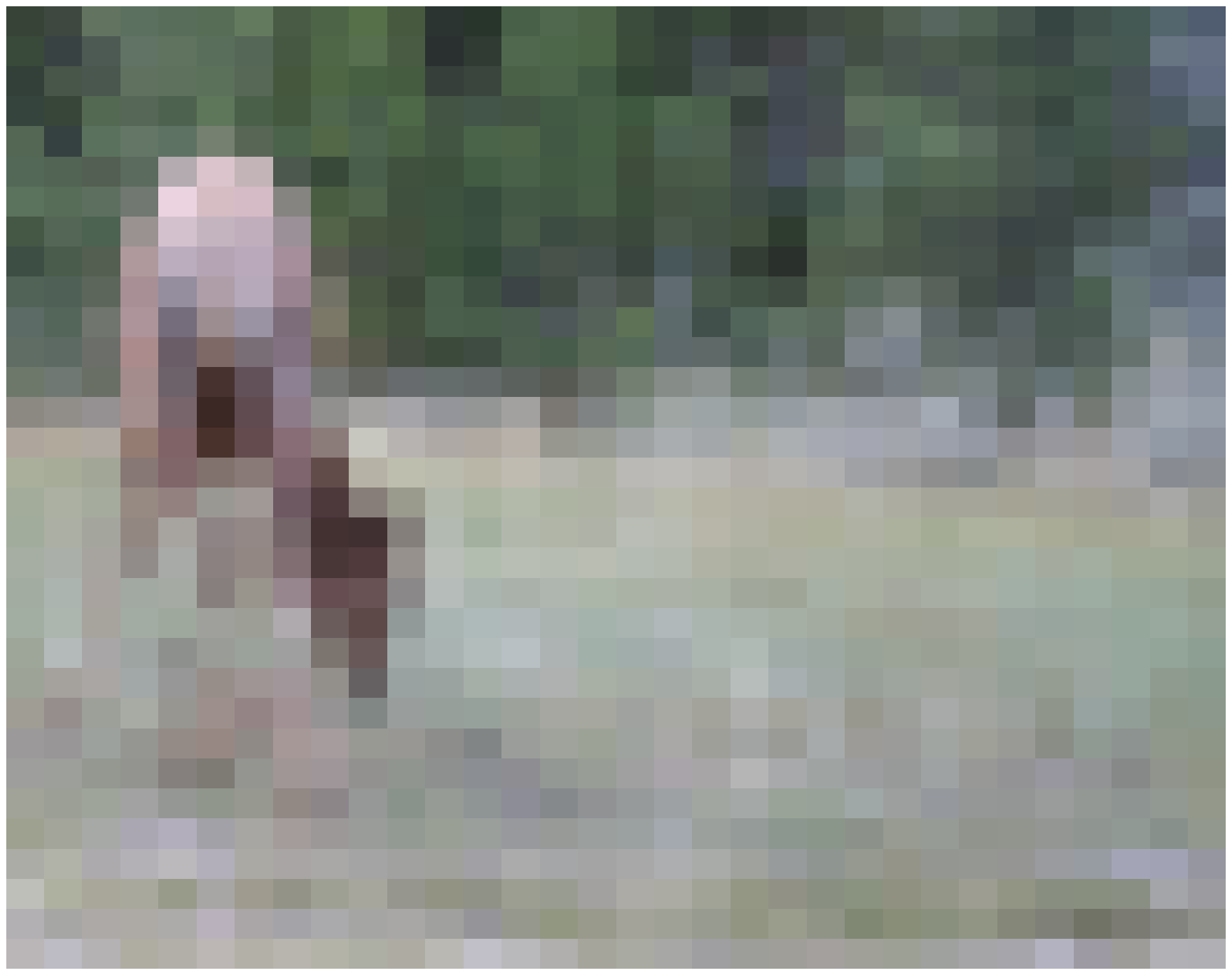}&\includegraphics[width=.1\linewidth]{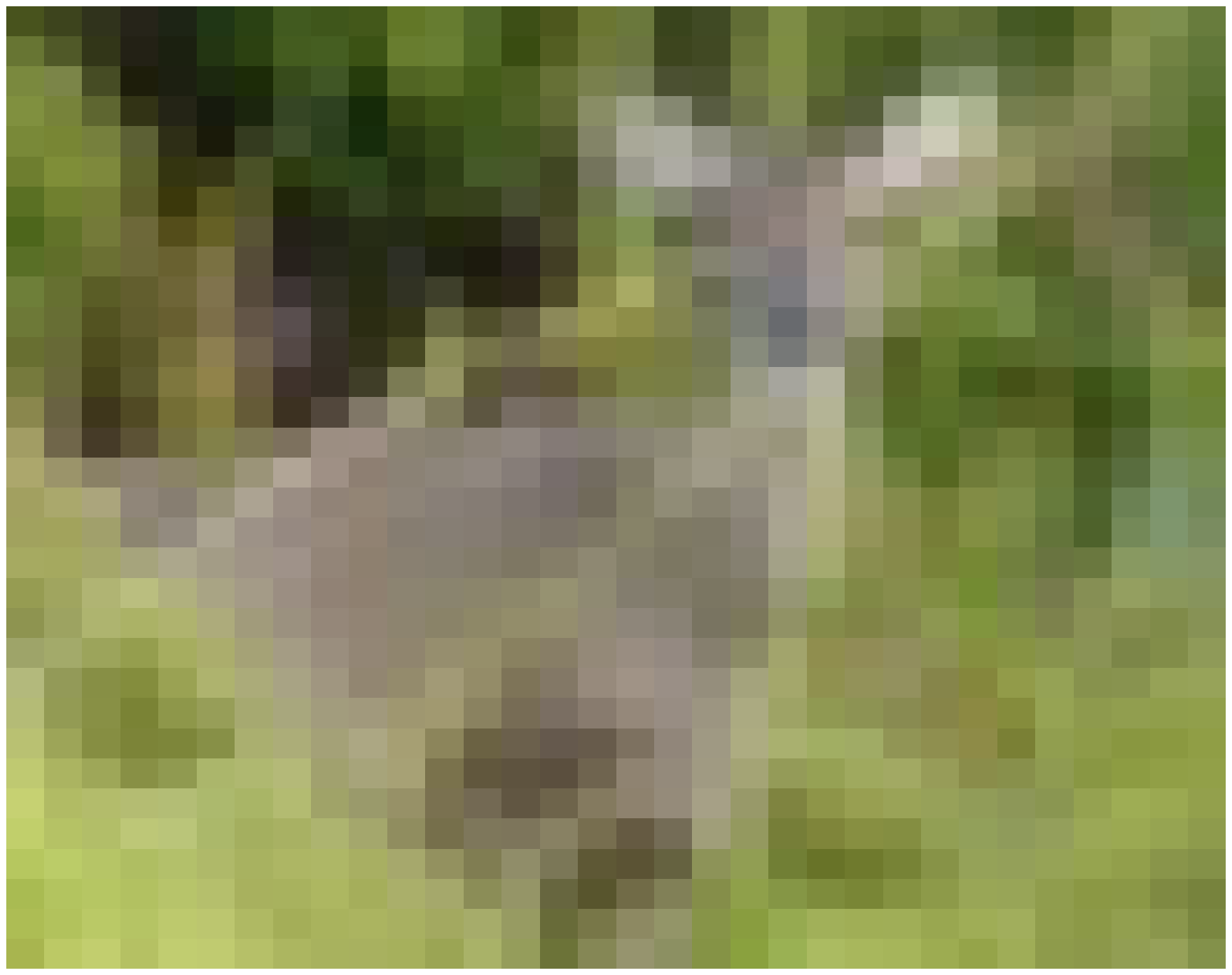}&\includegraphics[width=.1\linewidth]{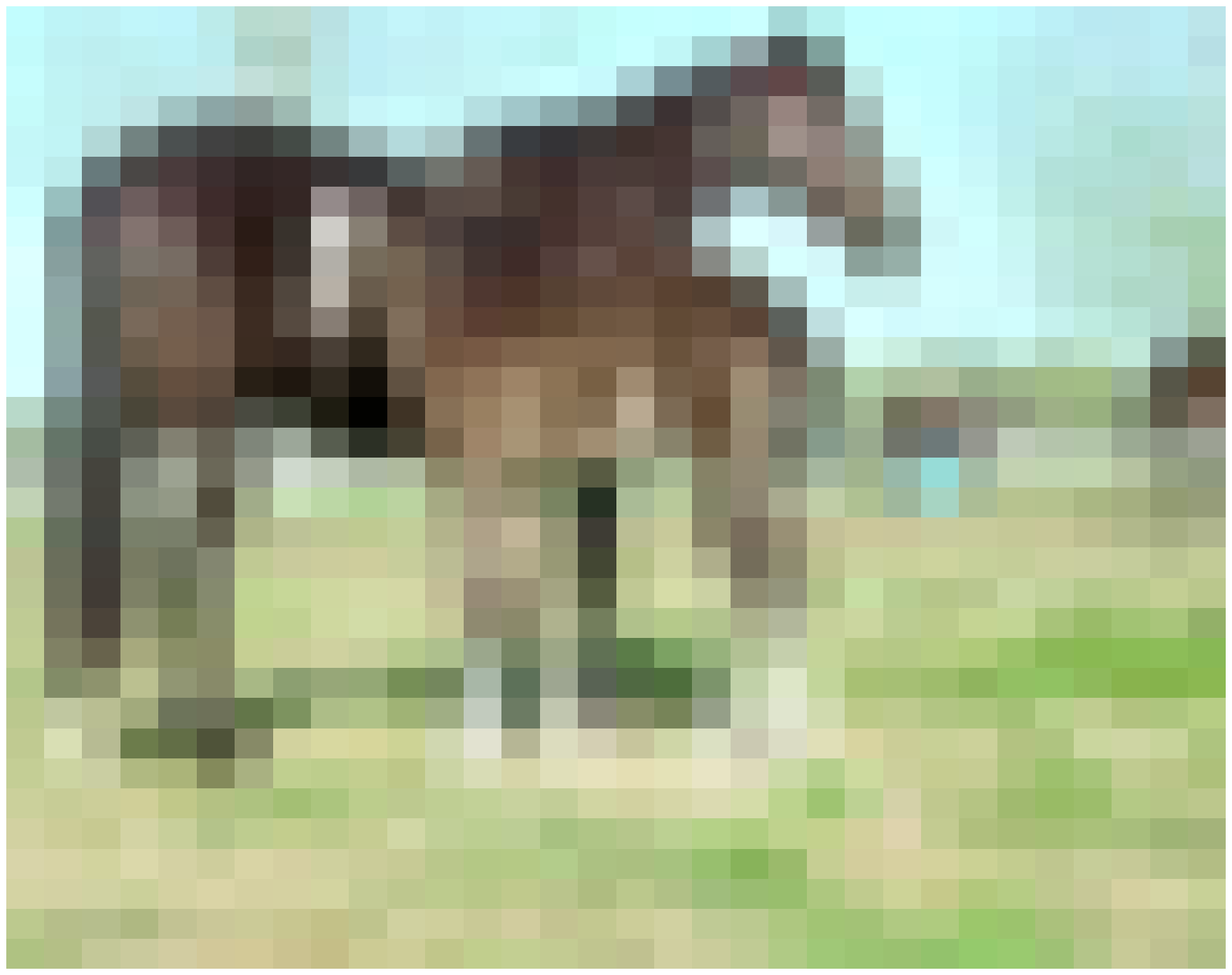}&\includegraphics[width=.1\linewidth]{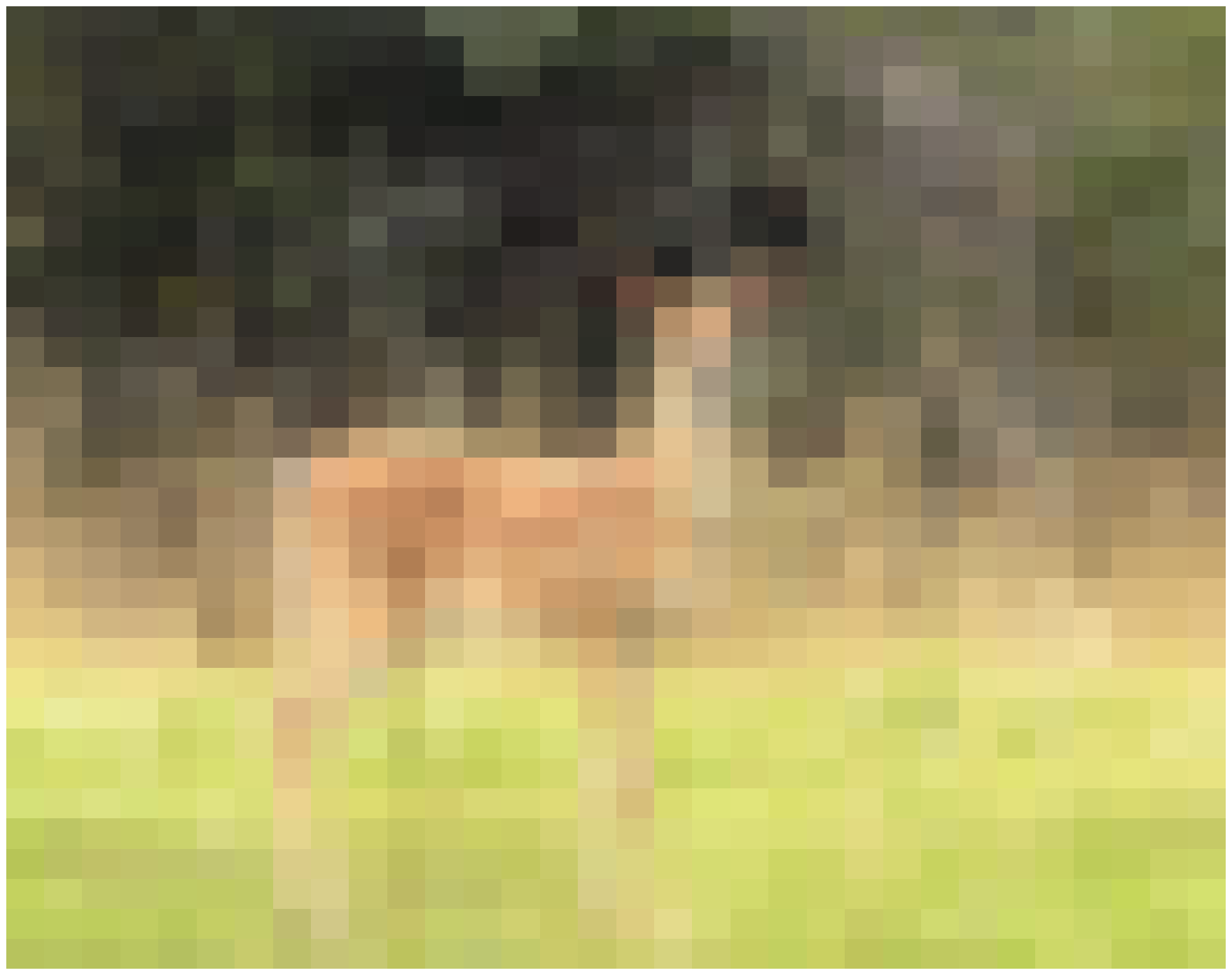}&\includegraphics[width=.1\linewidth]{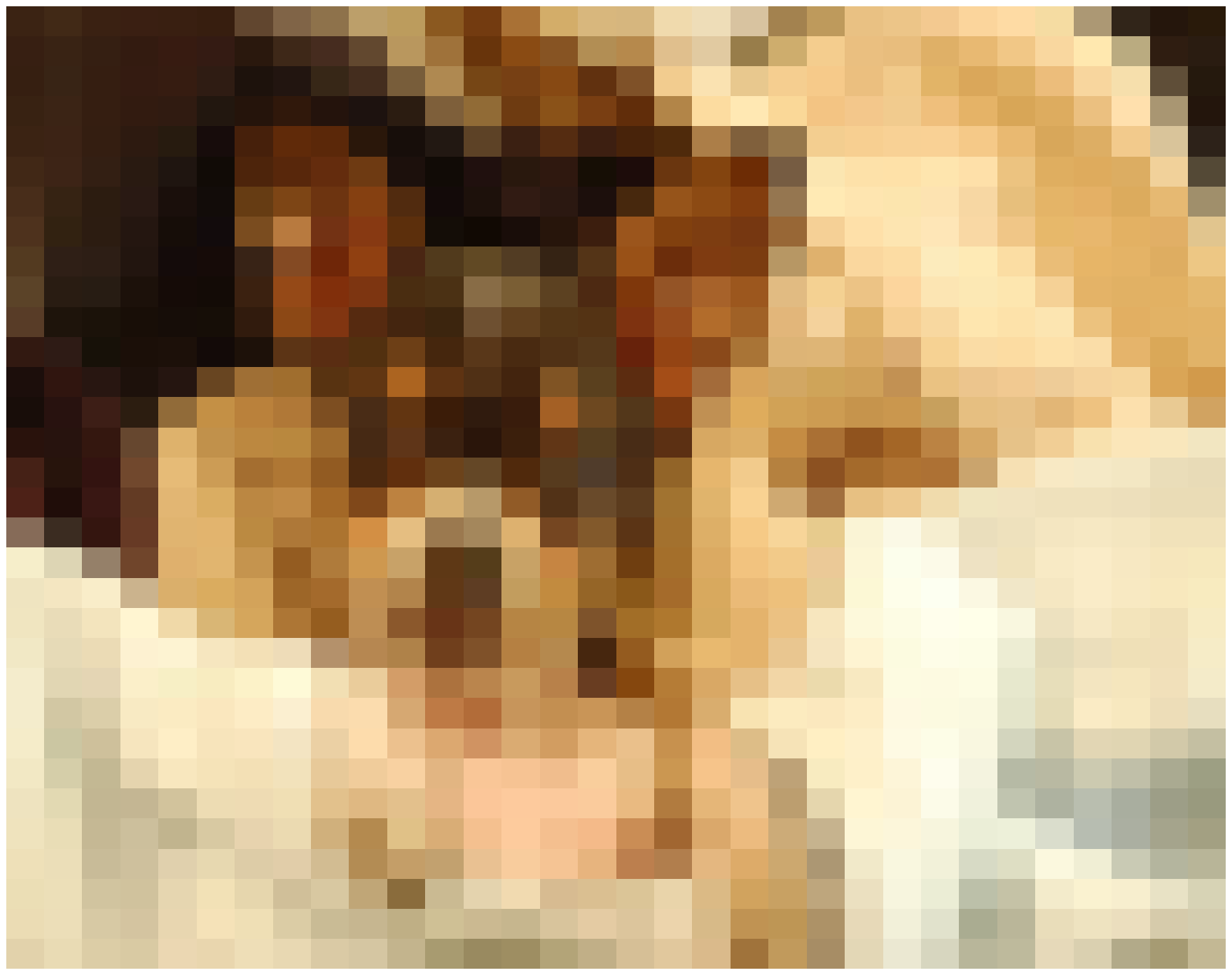}&\includegraphics[width=.1\linewidth]{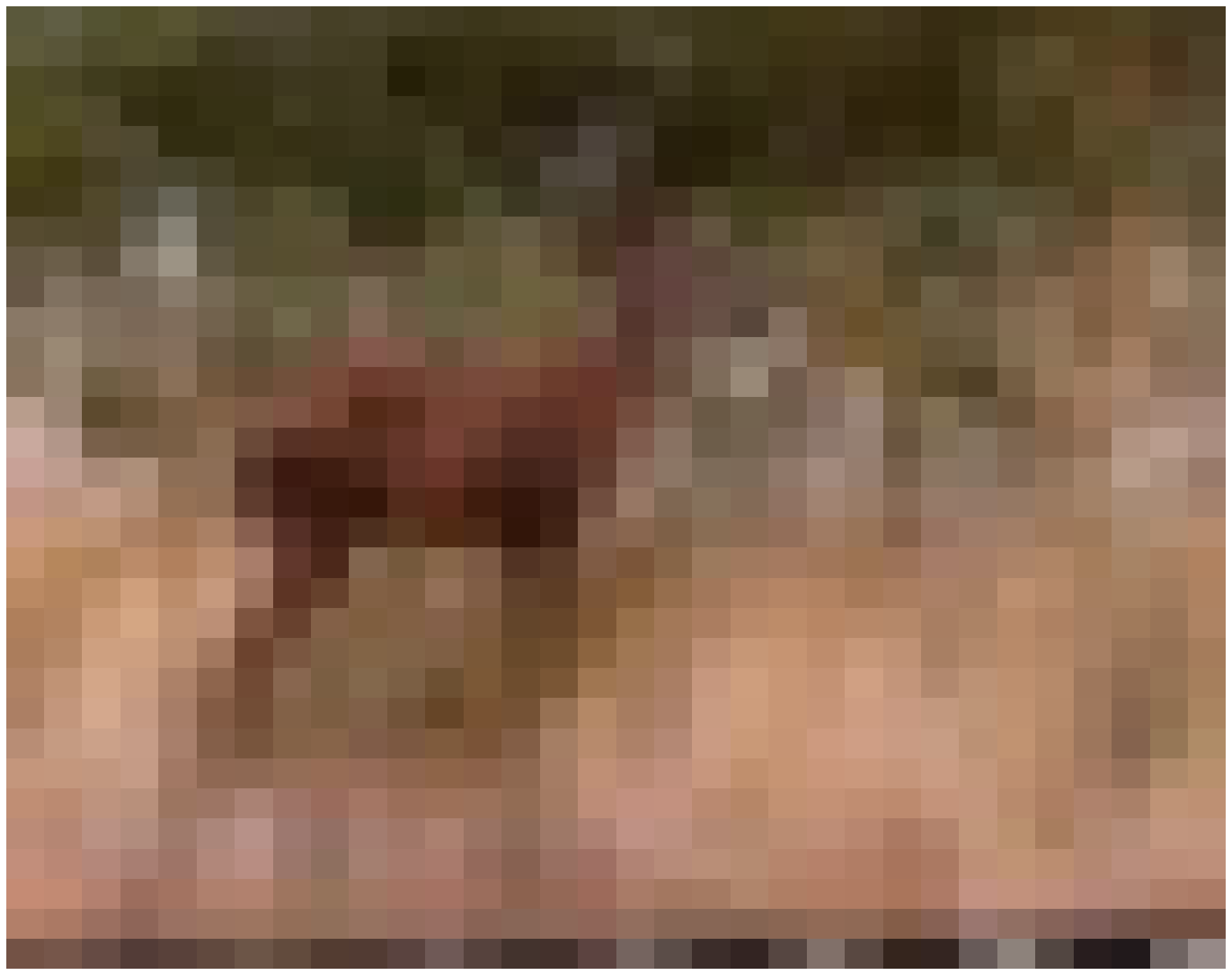}\\
    \includegraphics[width=.1\linewidth]{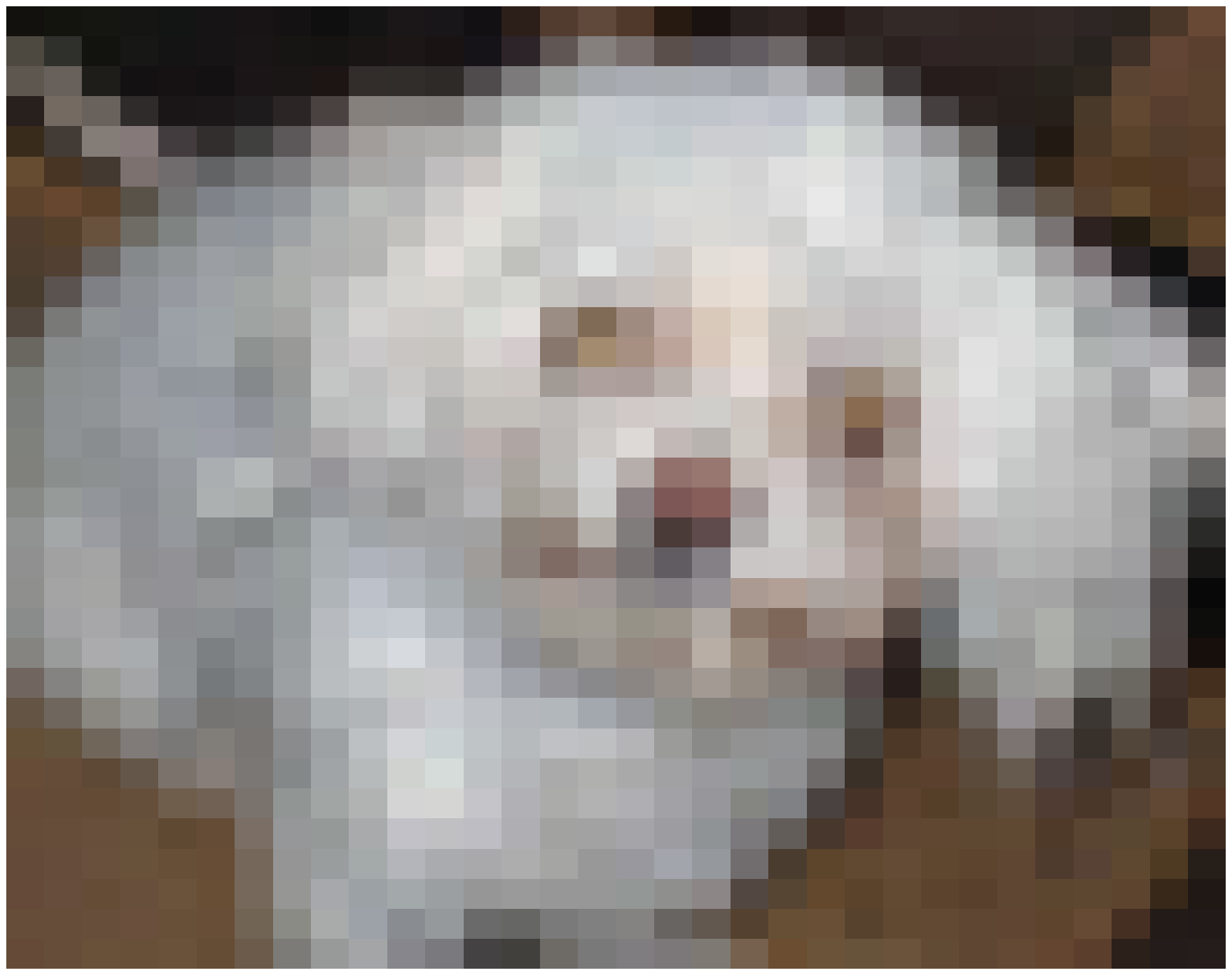}&\includegraphics[width=.1\linewidth]{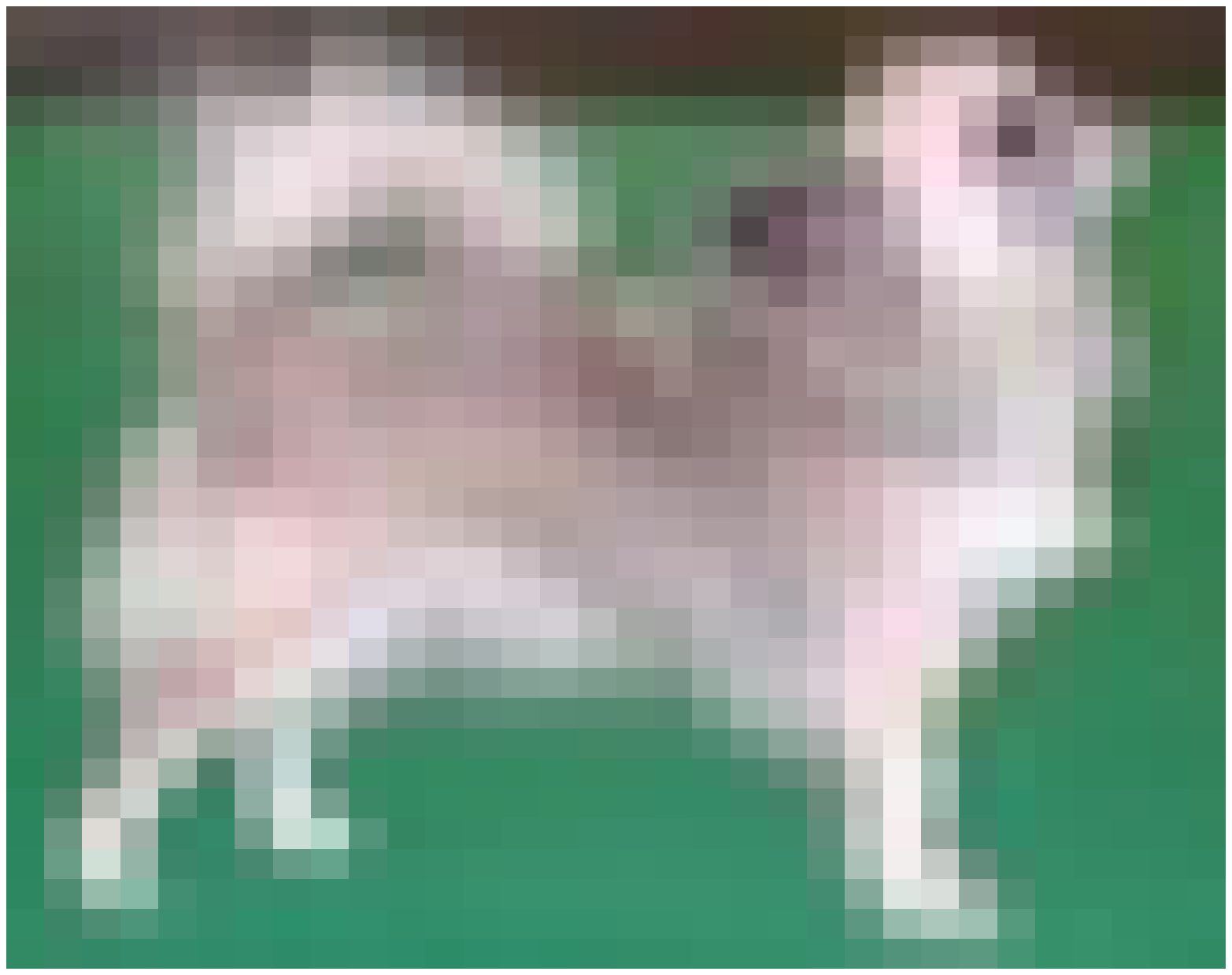}&\includegraphics[width=.1\linewidth]{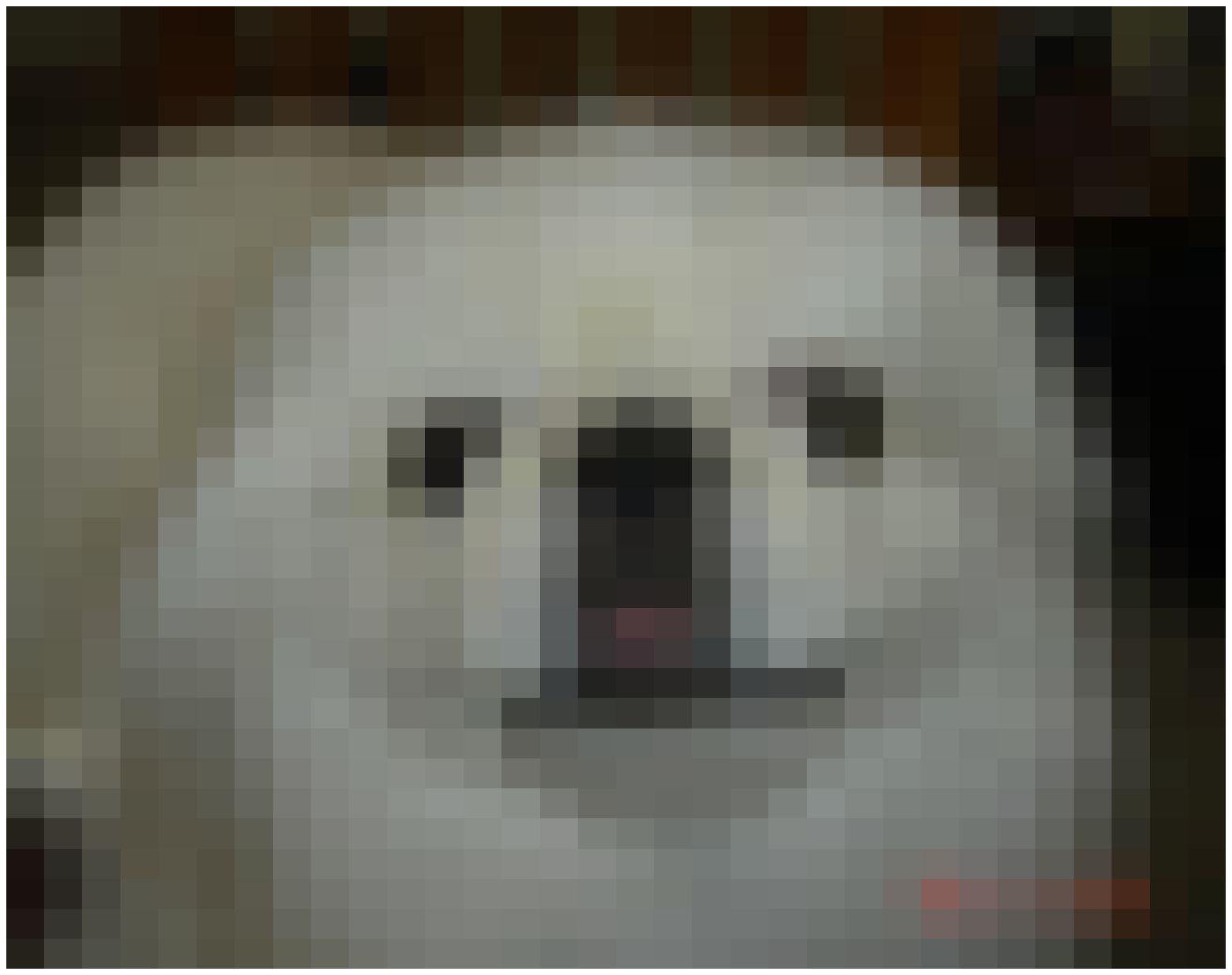}&\includegraphics[width=.1\linewidth]{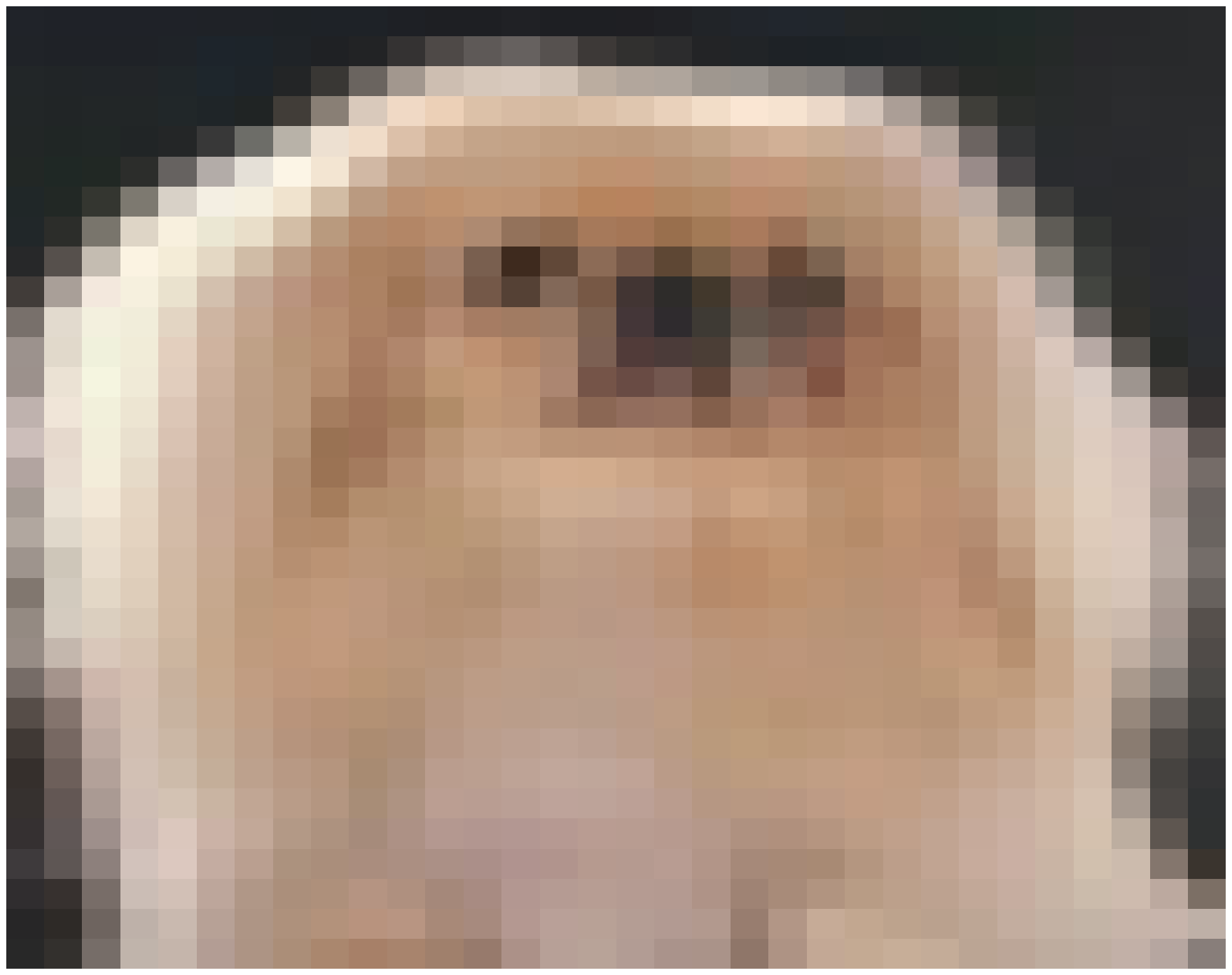}&\includegraphics[width=.1\linewidth]{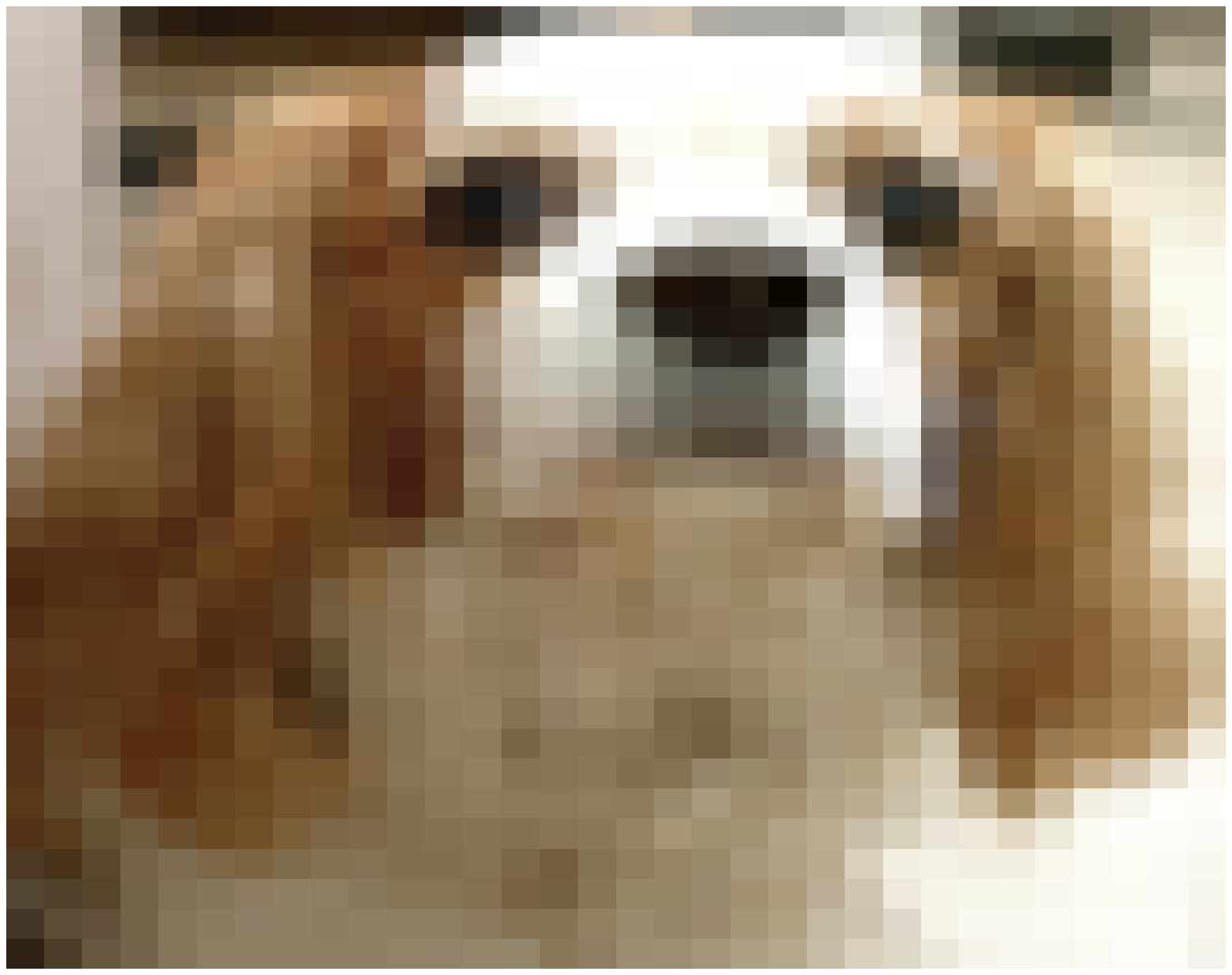}&\includegraphics[width=.1\linewidth]{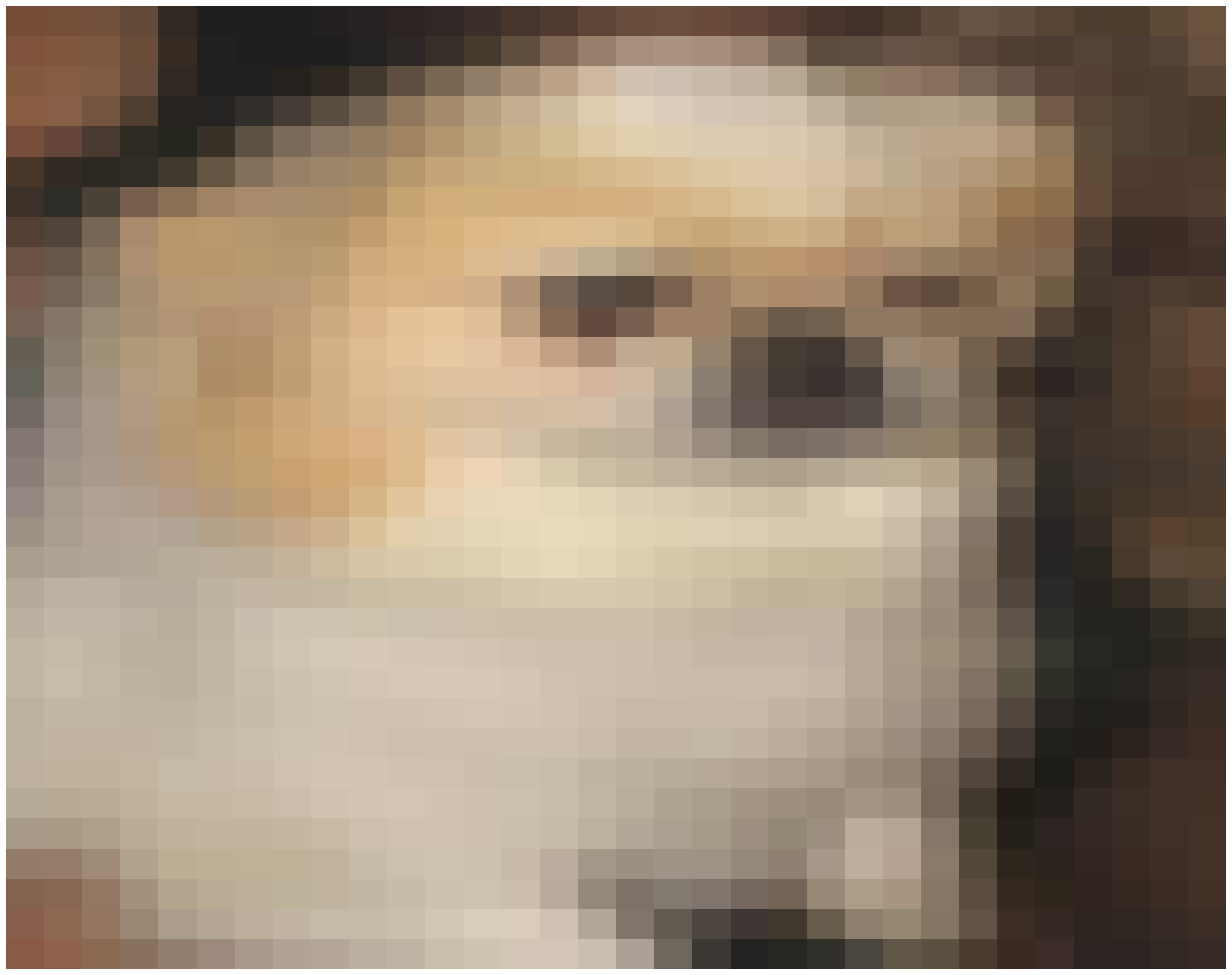}&\includegraphics[width=.1\linewidth]{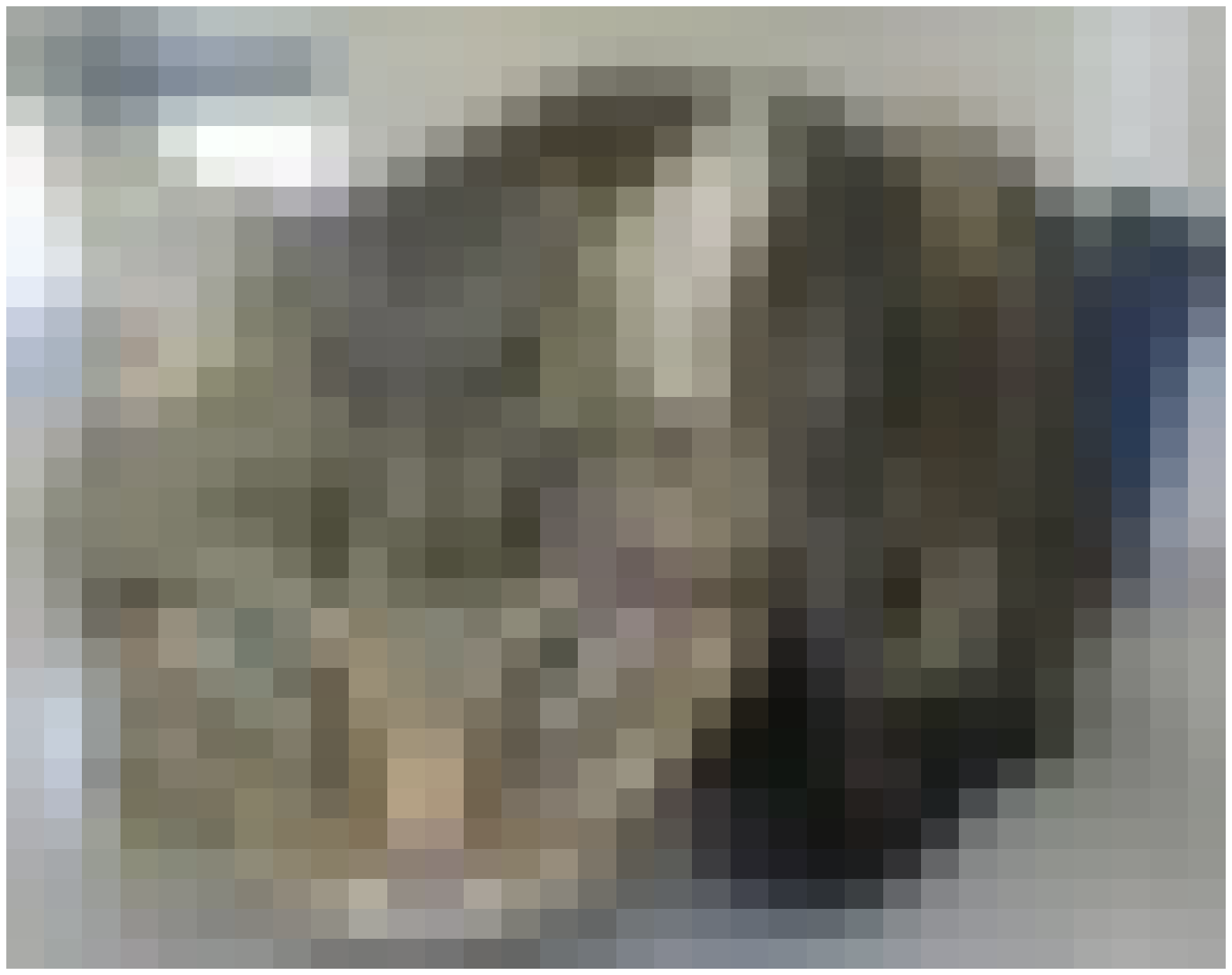}&\includegraphics[width=.1\linewidth]{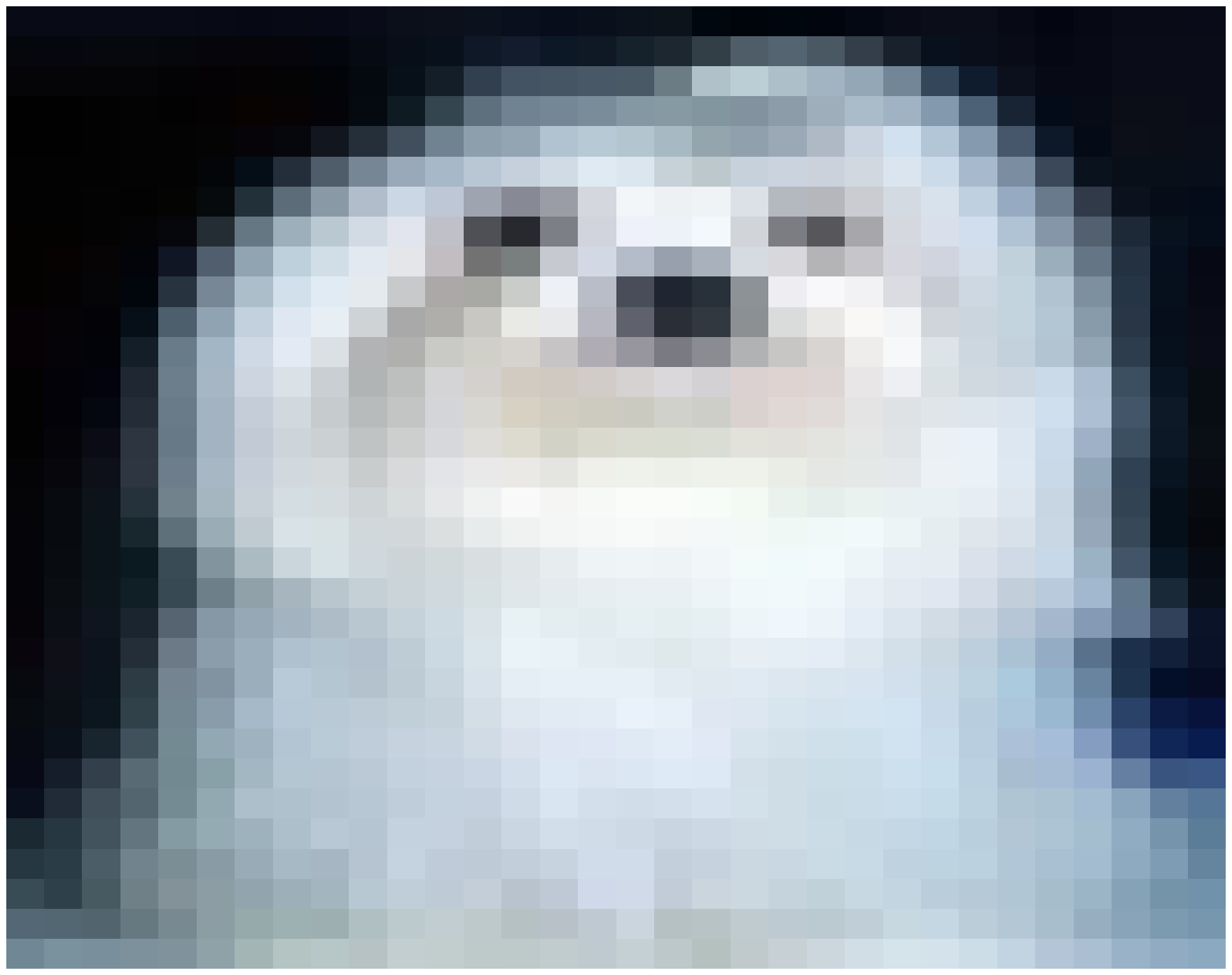}&\includegraphics[width=.1\linewidth]{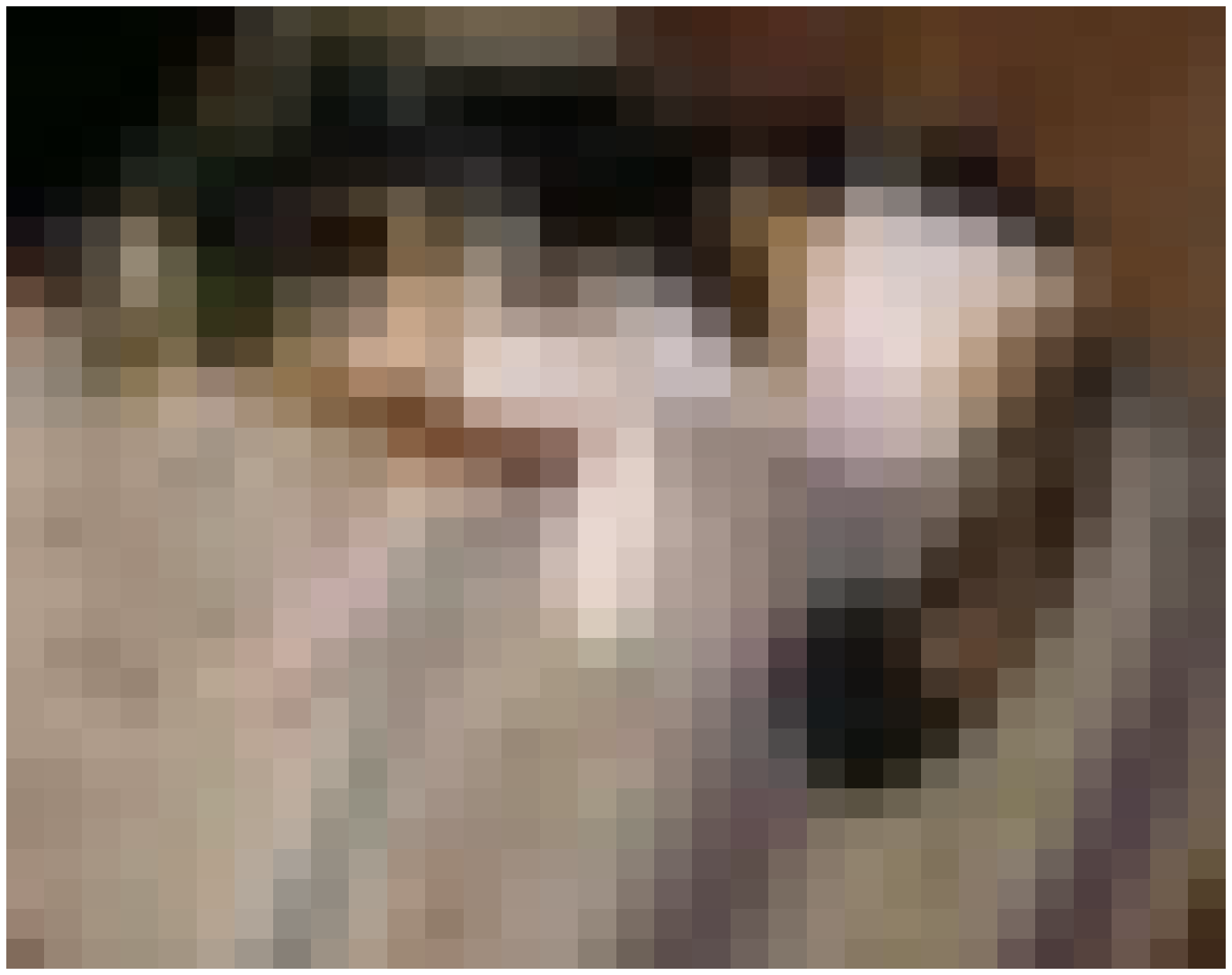}&\includegraphics[width=.1\linewidth]{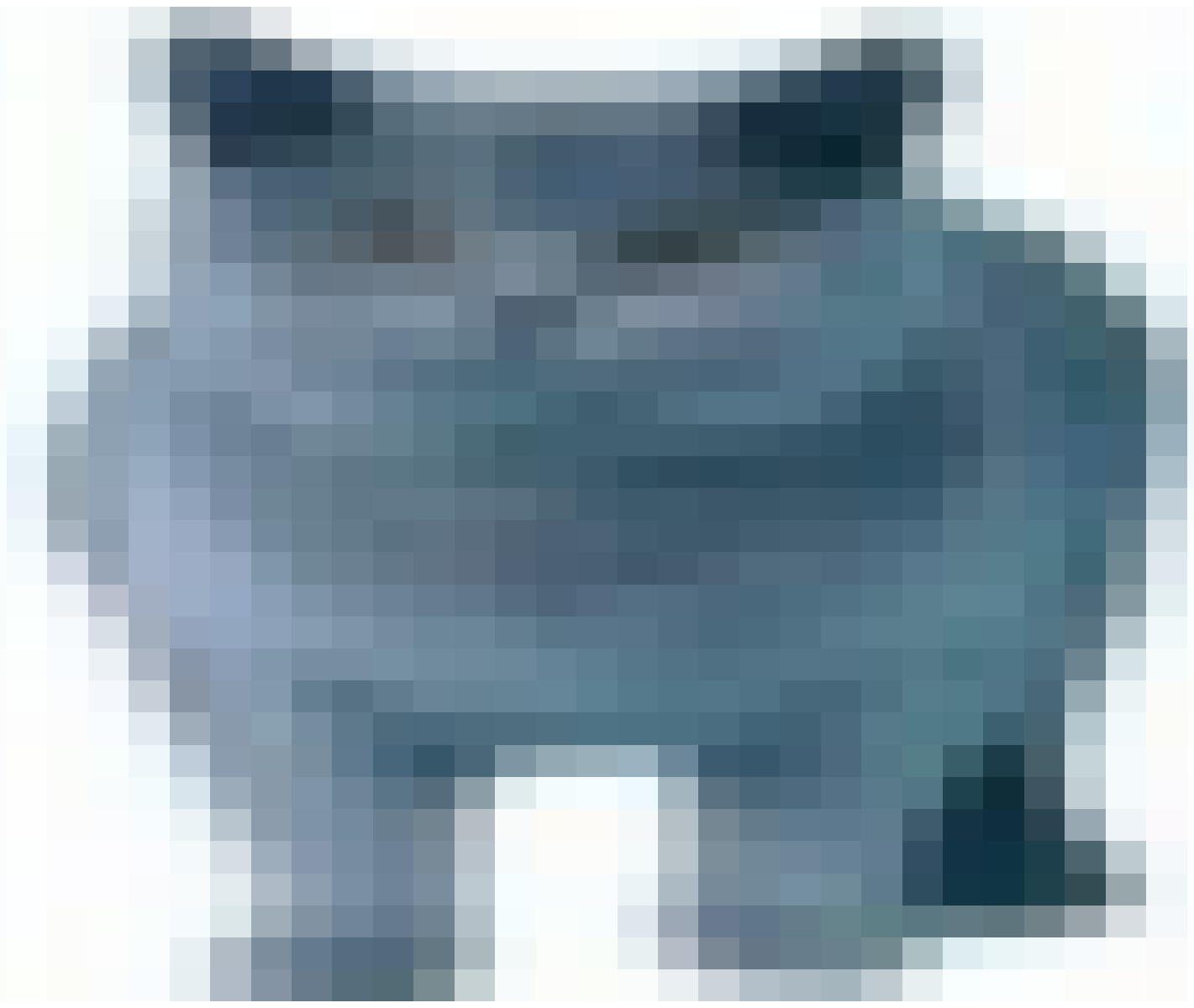}\\
  \end{tabular}
  \caption{Top retrieved images for sample test images from the CIFAR dataset using $L=32$ bits.}
  \label{f:CIFAR_sample}
\end{figure}

\begin{figure}[t]
  \centering
  \psfrag{precision}[][t]{precision}
  \psfrag{recall}[][b]{recall}
  \psfrag{bits}{$L$}
  \begin{tabular}{@{}c@{\hspace{0\linewidth}}c@{\hspace{0\linewidth}}c@{\hspace{0\linewidth}}c@{}}
    $K=50$ neighbors & Hamming distance $\le 1$ & Hamming distance $\le 2$ & Hamming distance $\le 3$ \\
    \includegraphics[width=0.265\linewidth]{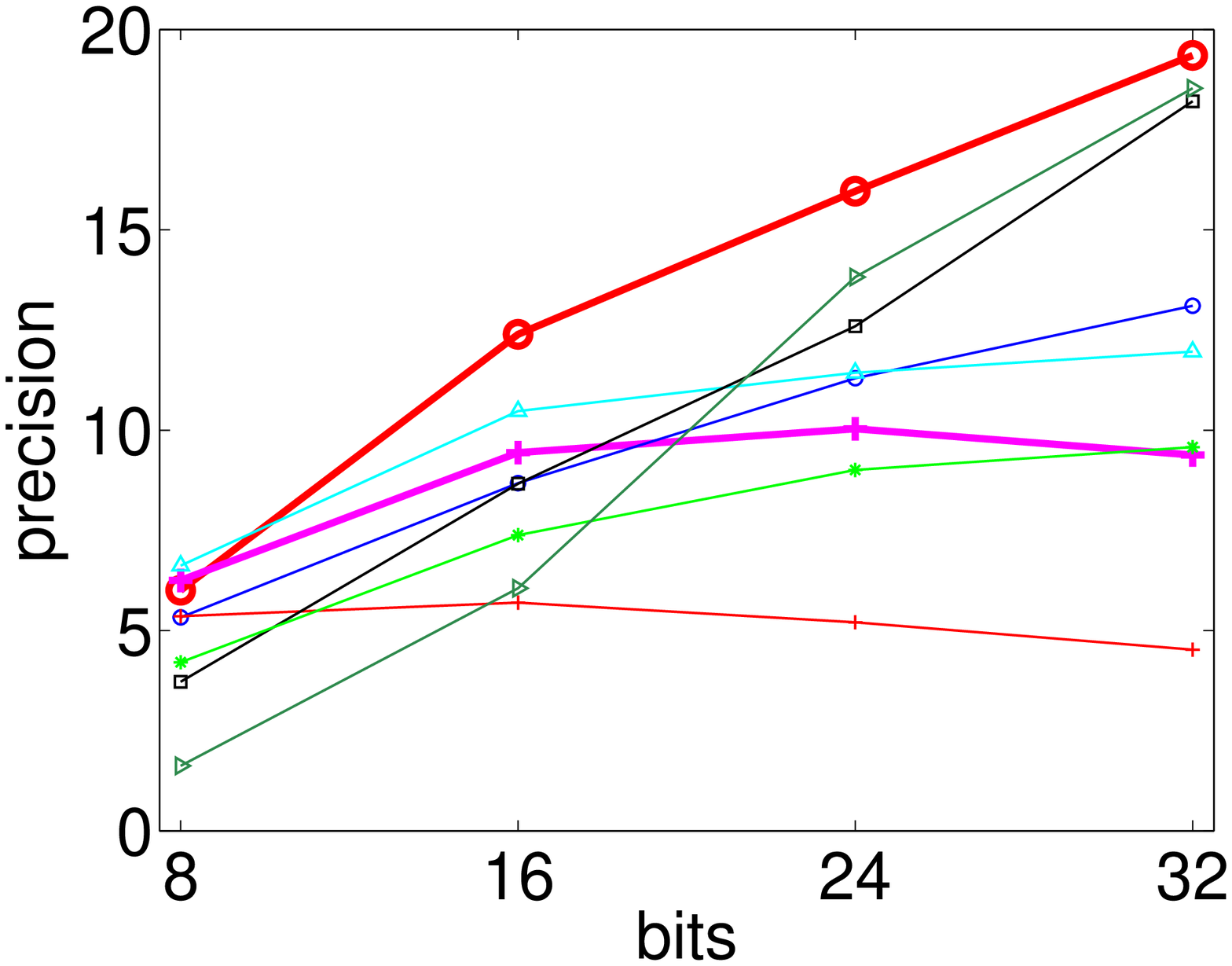} &
    \includegraphics[width=0.245\linewidth,height=0.21\linewidth]{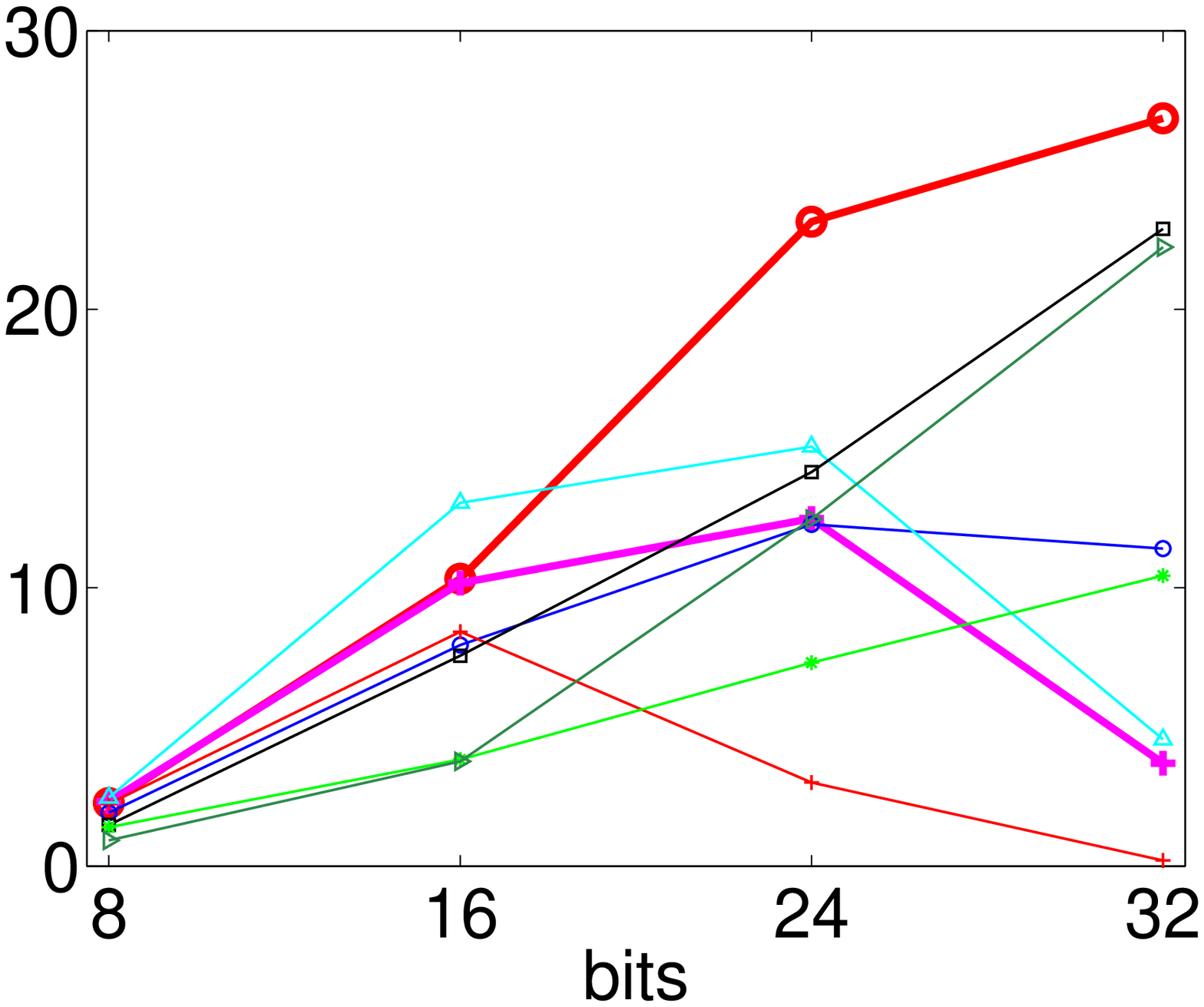} &
    \includegraphics[width=0.245\linewidth,height=0.21\linewidth]{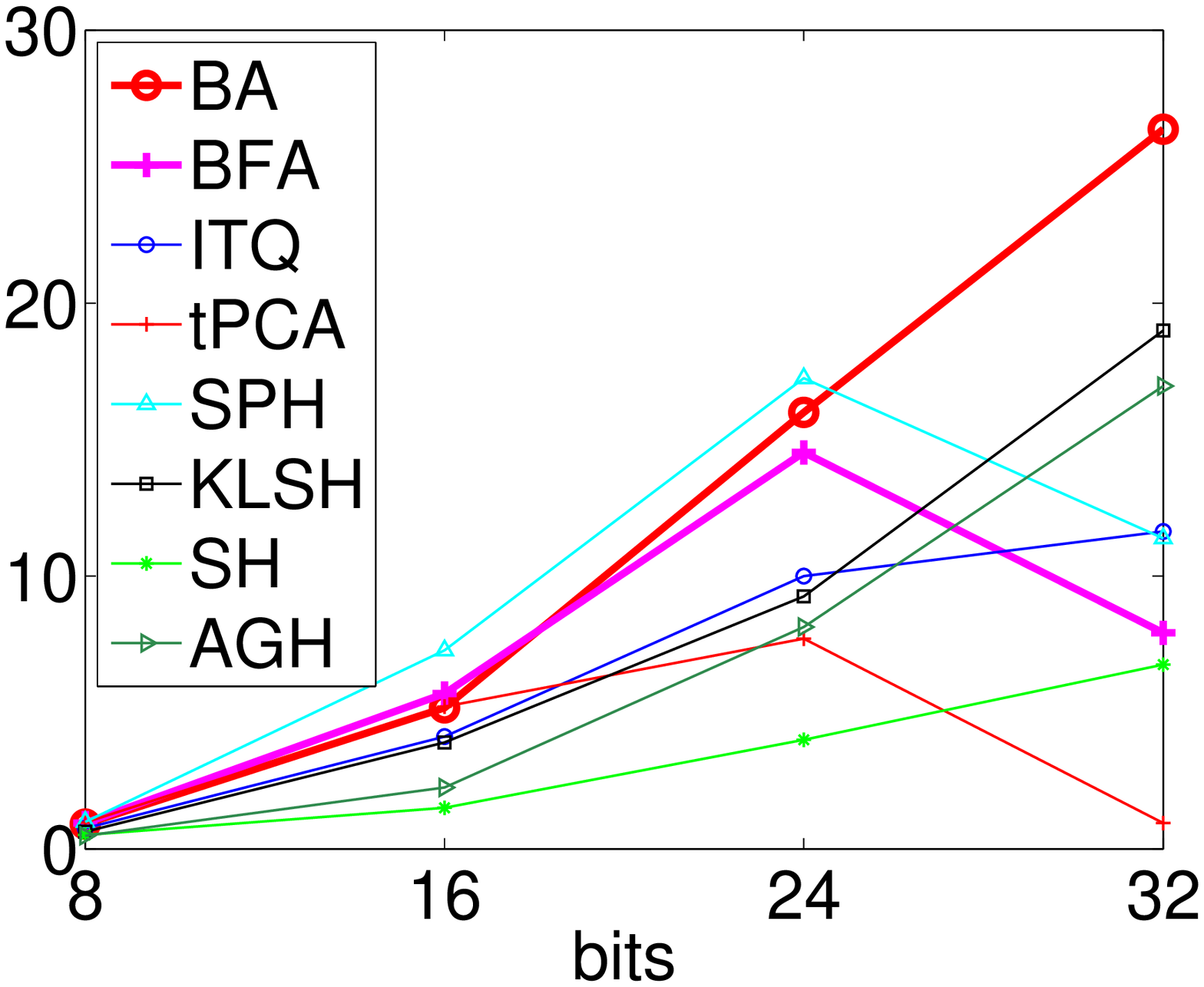} &
    \includegraphics[width=0.245\linewidth,height=0.21\linewidth]{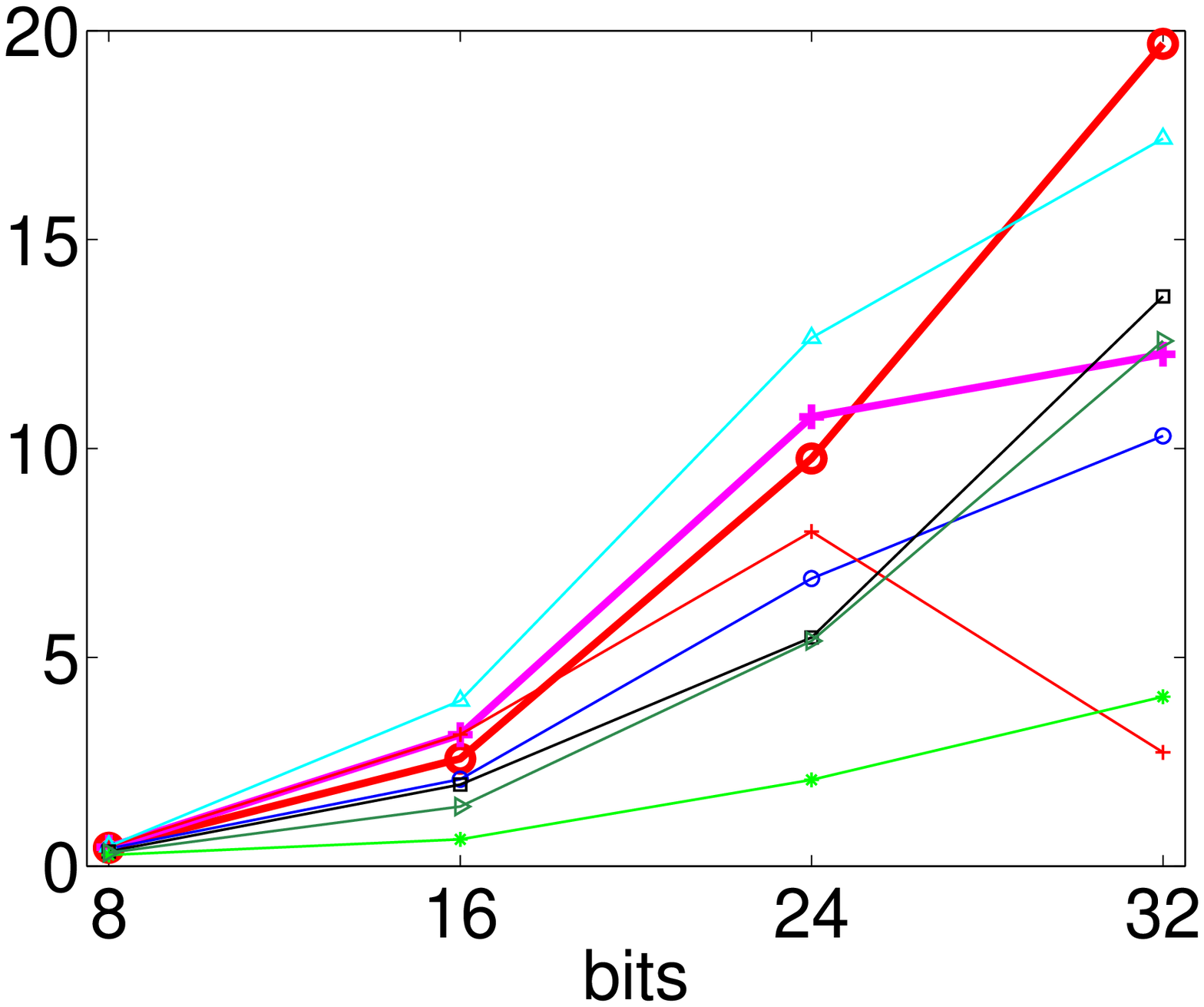} \\
    $L=8$ & $L=16$ & $L=24$ & $L=32$ \\
    \includegraphics[width=.265\linewidth]{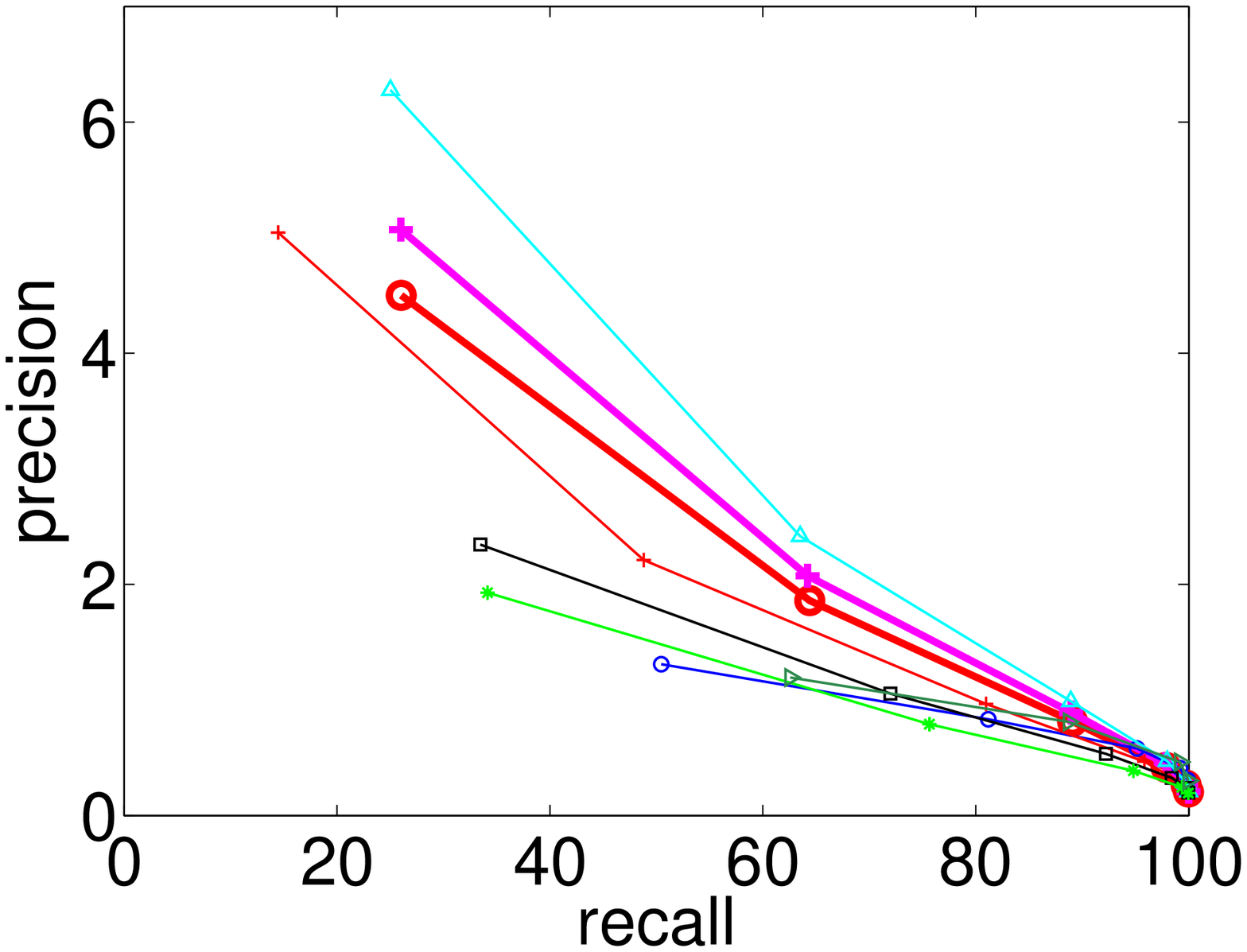} &
    \includegraphics[width=.245\linewidth,height=0.205\linewidth]{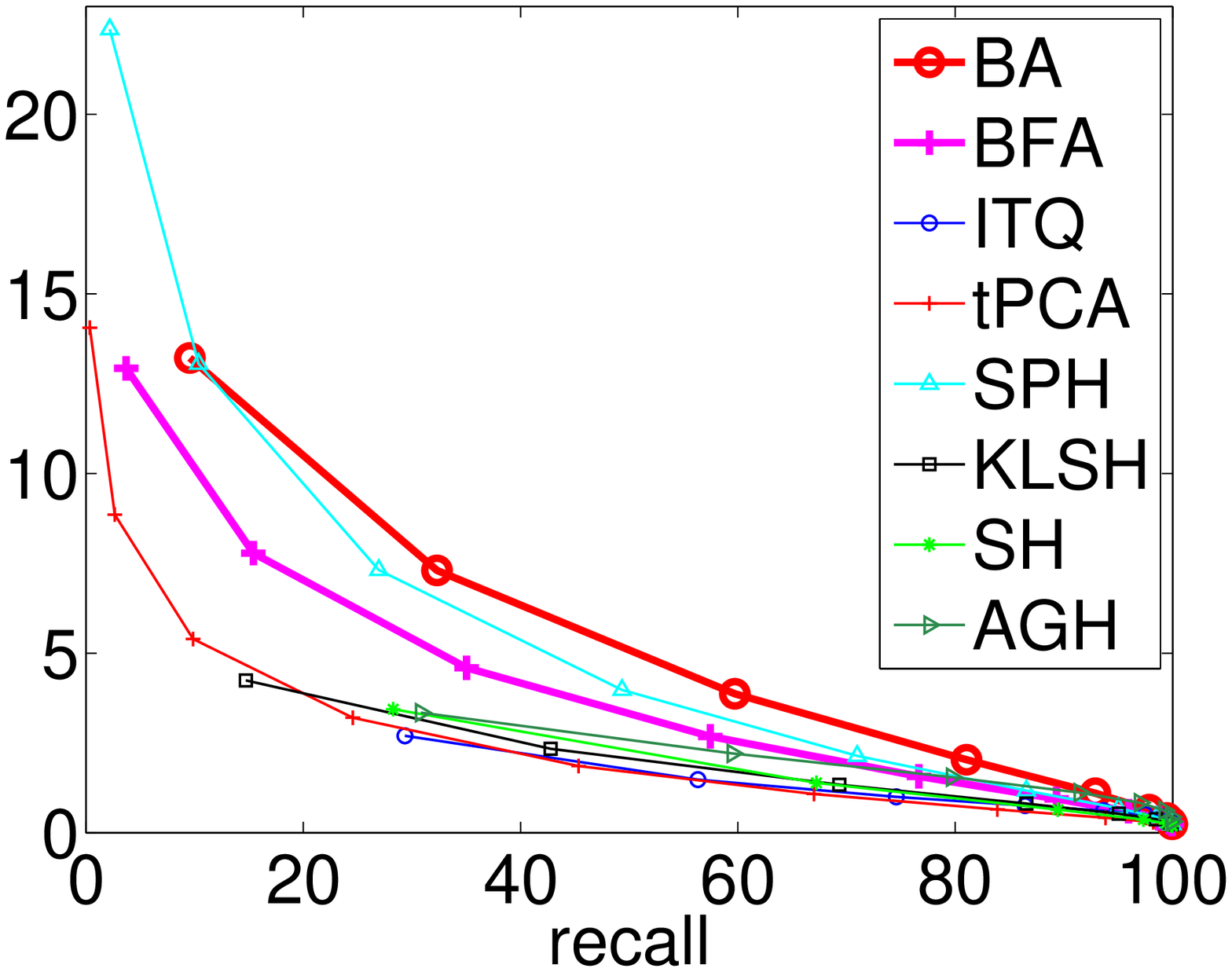} &
    \includegraphics[width=.245\linewidth,height=0.205\linewidth]{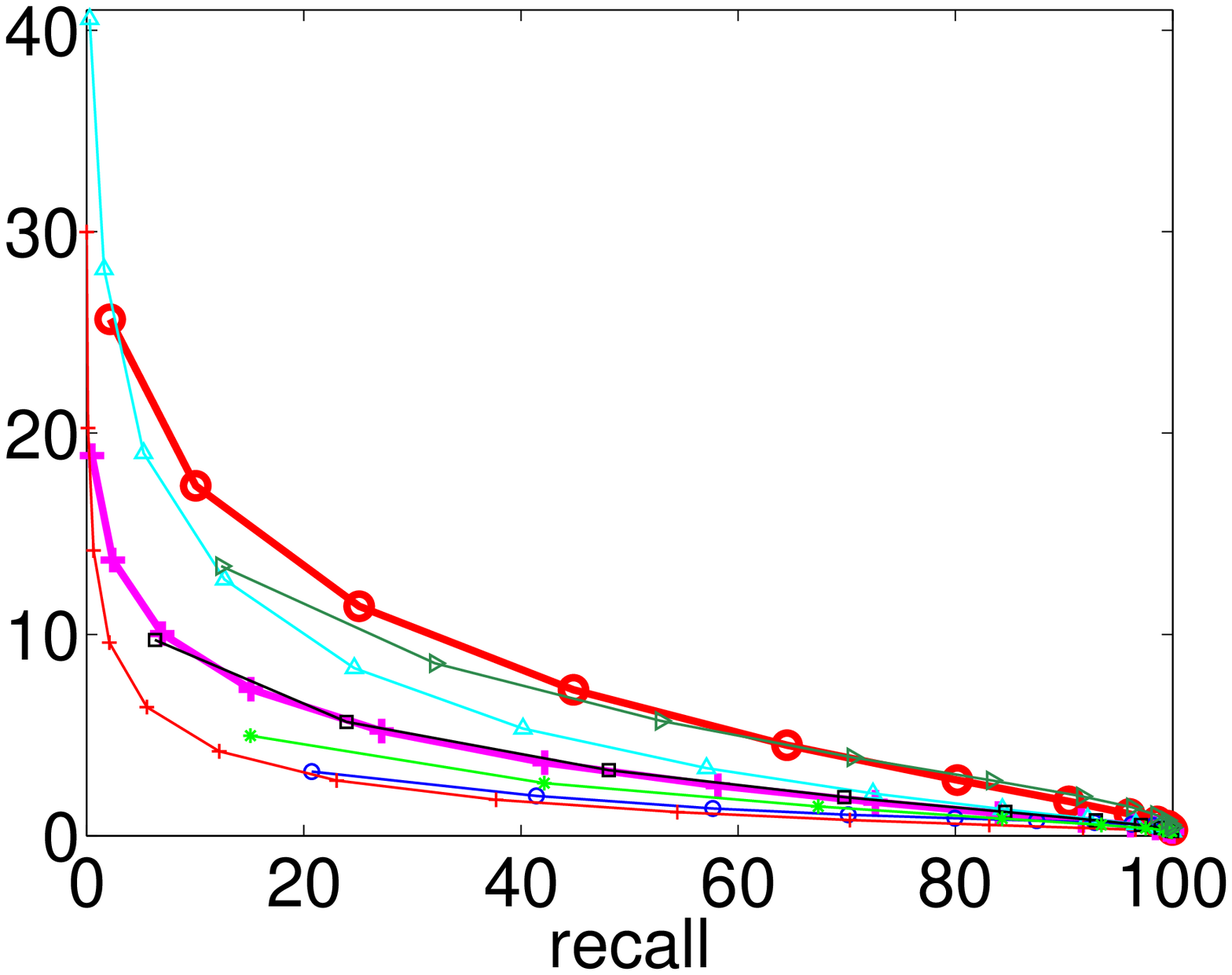} &
    \includegraphics[width=.245\linewidth,height=0.205\linewidth]{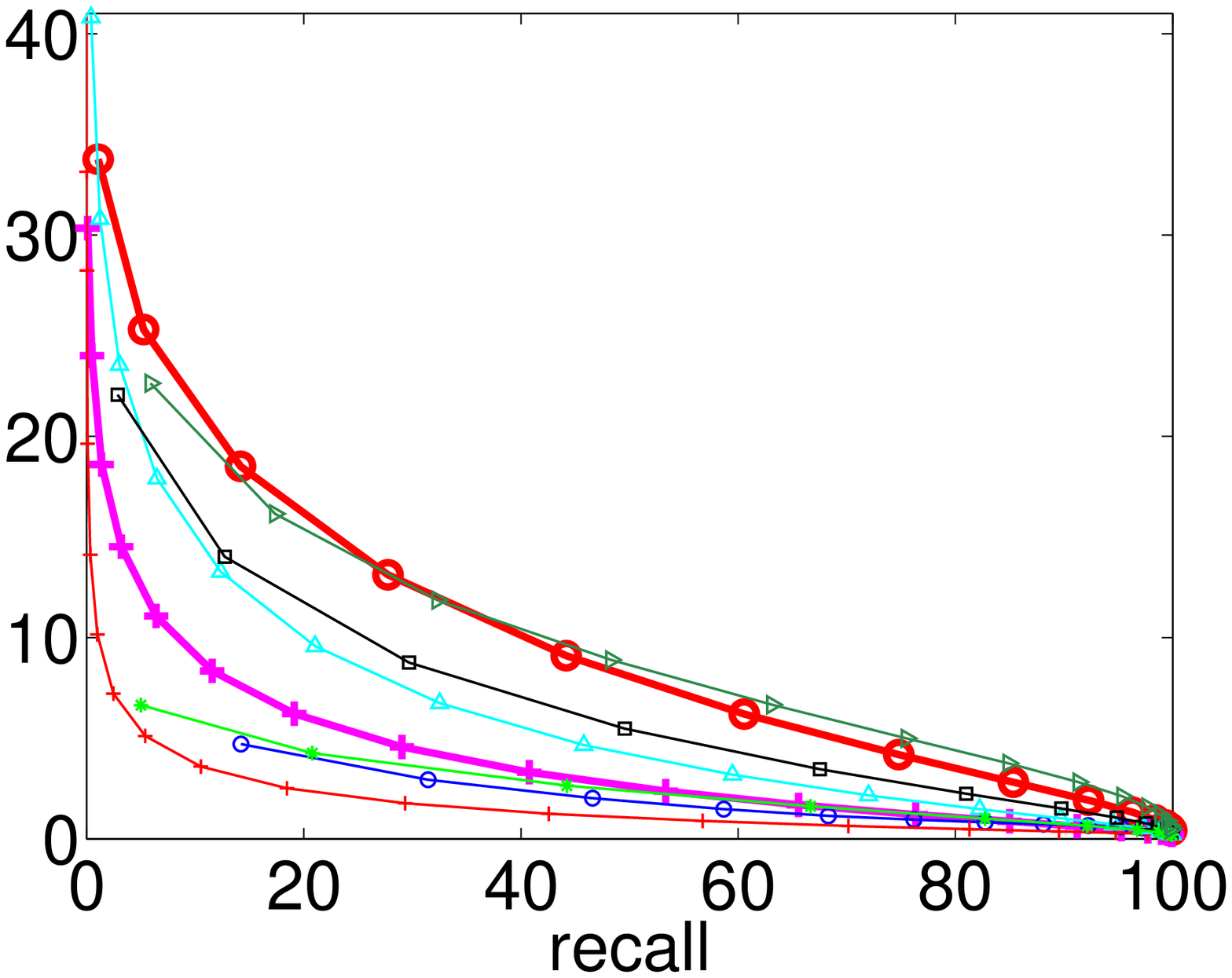}
  \end{tabular}
  \caption{Precision and precision/recall in NUS-WIDE-LITE dataset (plotted as in fig.~\ref{f:NUS-WIDE}). Ground truth: $K=50$ nearest images to the query image in the training set. \emph{Top}: precision depending on the retrieved set ($k=50$ nearest neighbors in Hamming distance, or images at Hamming distance $\le 1$ or $2$ or $3$). \emph{Bottom}: precision/recall curves for a retrieved set of images at Hamming distance $\le r$, using $L = 8$ to $32$ bits.}
  \label{f:NUS-WIDE-LITE}
\end{figure}

\begin{figure}[t]
  \centering
  \psfrag{bins}[t][]{\# used binary codes}
  \psfrag{counts}[][t]{\# vectors per code}
  \psfrag{bits}[][b]{$L$}
  \begin{tabular}{@{}c@{\hspace{0.03\linewidth}}c@{\hspace{0\linewidth}}c@{}}
    \multicolumn{2}{c}{\dotfill NUS-WIDE\dotfill} & \dotfill ANNSIFT-1M\dotfill \\
    \raisebox{0.5ex}{\includegraphics[width=0.31\linewidth]{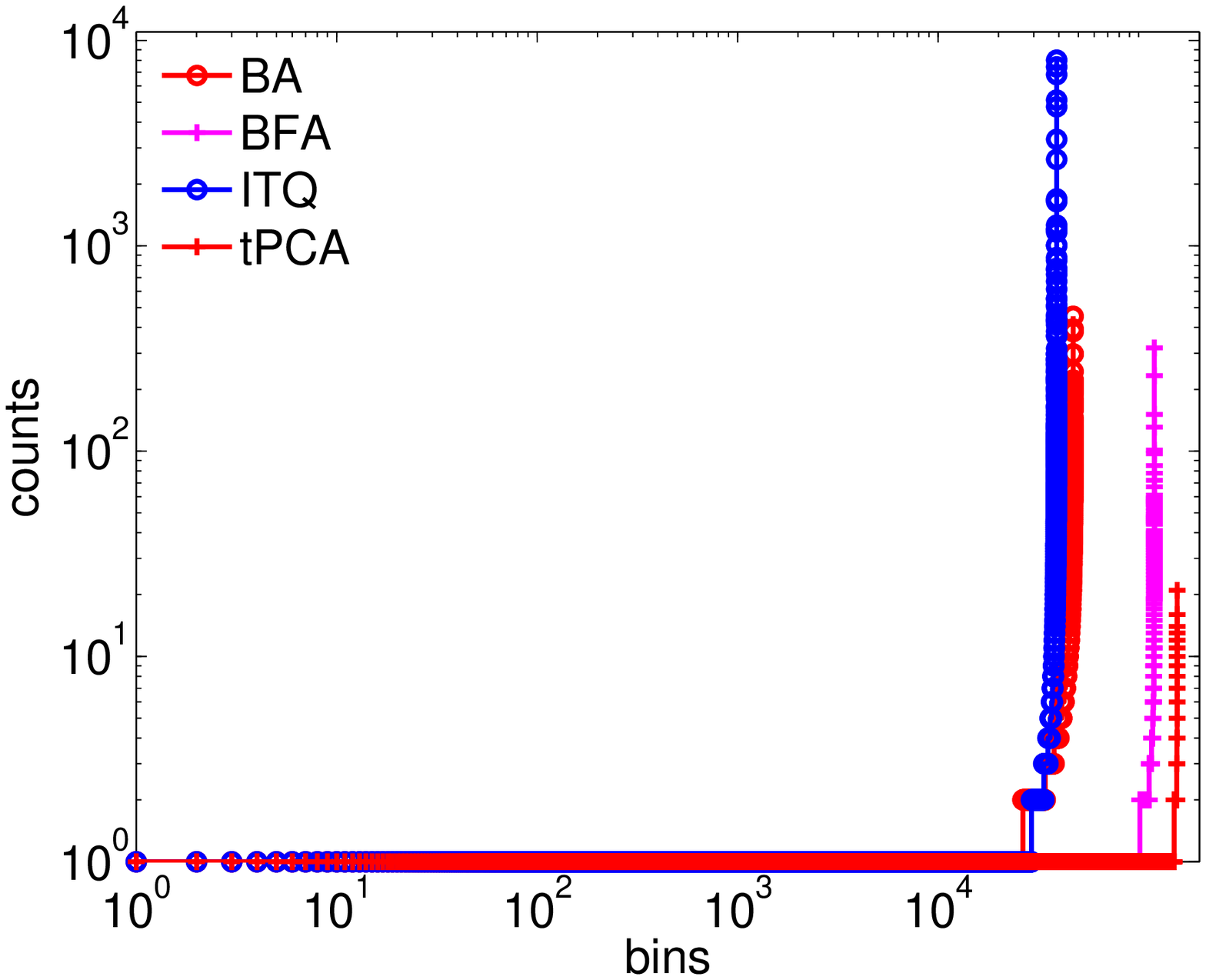}} &
    \psfrag{entropy}[][t]{entropy $L_{\text{eff}}$}
    \includegraphics[width=0.33\linewidth]{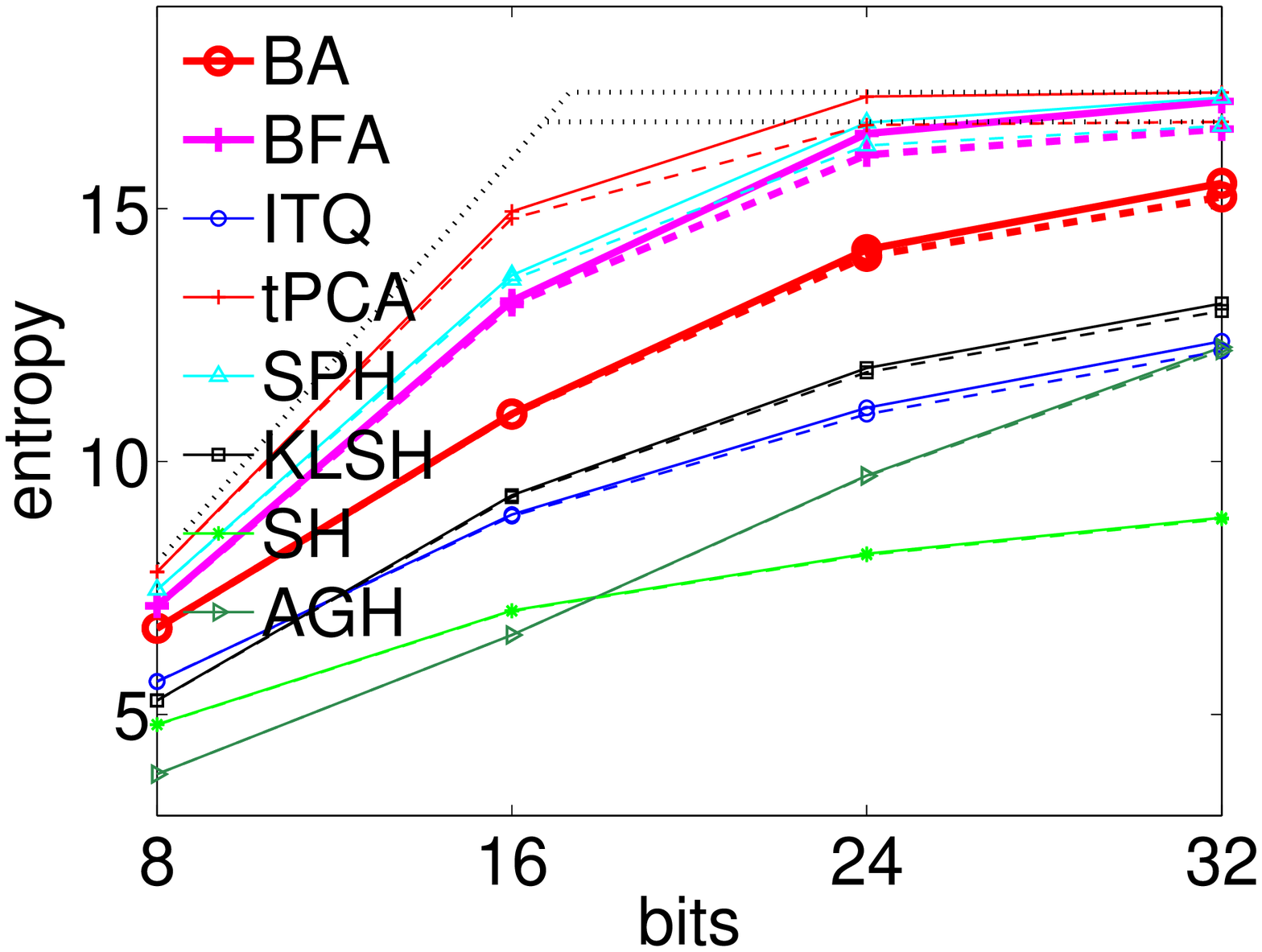} &
    \psfrag{entropy}{}
    \includegraphics[width=0.33\linewidth]{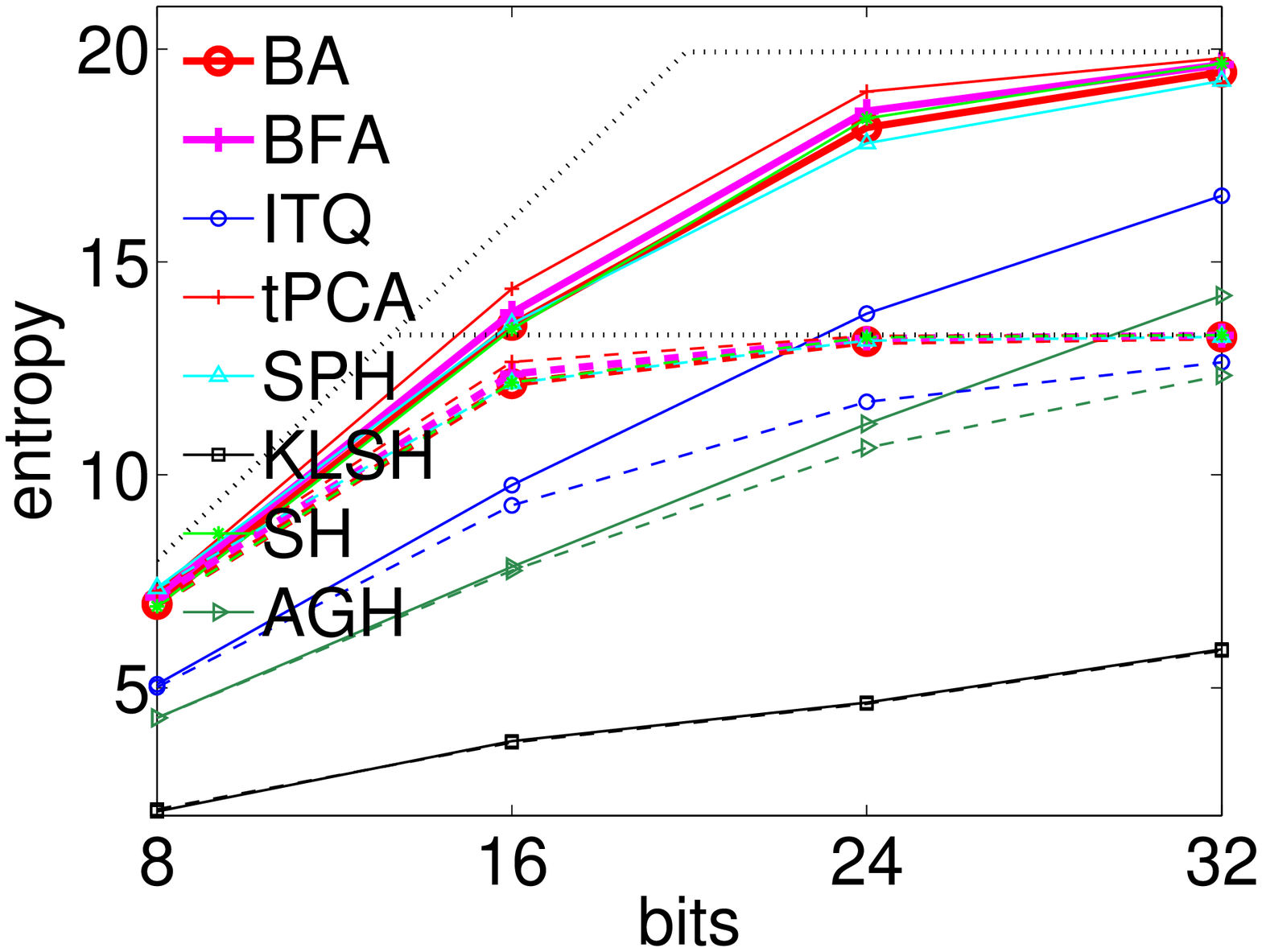}
  \end{tabular}
  \caption{Code utilization in effective number of bits $L_{\text{eff}}$ (entropy of code distribution) of different hashing algorithms, using $L = 8$ to $32$ bits, for the NUS-WIDE (\emph{left}) and ANNSIFT-1M (\emph{right}) datasets. The plots correspond to the codes obtained by the algorithms in figures~\ref{f:NUS-WIDE} and~\ref{f:ANNSIFT-1M} (top row, initialized from AGH), respectively. The hashing algorithms are color-coded, with solid lines for the training set and dashed lines for the test set. The two diagonal-horizontal black dotted lines give the upper bound (maximal code utilization) on $L_{\text{eff}}$ of any algorithm for the training and test sets. The leftmost plot, for NUS-WIDE, shows the histogram of high-dimensional vector counts per binary code, sorted by count, for selected methods. Only codes with at least one vector mapping to them are shown. The entropy of this distribution gives the $L_{\text{eff}}$ value in the middle plot.}
  \label{f:entropy}
\end{figure}

\end{document}